\documentclass{article}[11pt,onecolumn,letter]
\usepackage{fixltx2e}
\usepackage{float}
\usepackage{amsmath,amssymb,amsfonts,amsthm}
\usepackage{fullpage}
\usepackage{mymath}
\usepackage{ifthen,color}
\usepackage{graphics,graphicx,subfigure,psfrag}
\usepackage{float}
\usepackage{wrapfig}
\usepackage{rotating}
\usepackage[margin=10pt,font=small,labelfont=bf,format=plain,width=.8\textwidth]{caption}  
\usepackage{algorithm,algorithmic}

\usepackage{authblk,url}

\newcommand{\BlackBox}{\rule{1.5ex}{1.5ex}}  \renewenvironment{proof}{\par\noindent{\bf Proof\ }}{\hfill\BlackBox\\[2mm]}

\newtheorem{theorem}{Theorem}

\graphicspath{{./}}

\newcommand{\citep}[1]{\cite{#1}}
\newcommand{\citet}[1]{\cite{#1}}

\usepackage{tikz}
\usetikzlibrary{arrows,shapes}
\usetikzlibrary{shapes,arrows}
\tikzstyle{invisible} = [circle,draw=none,fill=none,minimum size=0mm,inner sep=0mm]
\tikzstyle{vertex}    = [circle,draw=black,fill=white,font=\small,minimum size=9mm,inner sep=0mm]
\tikzstyle{vertexobs} = [circle,draw=black,fill=gray,font=\small,minimum size=9mm,inner sep=0mm]
\tikzstyle{edge}      = [   draw,thick,line width=1.25pt]
\tikzstyle{dedge}     = [->,draw,thick,line width=1.25pt]
\tikzstyle{aggregate} = [circle,draw=black,fill=black,minimum size=10mm,scale=0.25]

\newcommand{\sq}[2] {
  \definecolor{thiscol}{gray}{.#2}
  \ifthenelse{#2<50}
  {\color{white}}
  {\color{black}}
  \fcolorbox{black}{thiscol}{\makebox[1.5em]{#1}}
}

\begin{document}

\title{Statistical and Computational Tradeoffs in \\Stochastic Composite Likelihood}
\author{Joshua V Dillon\textsuperscript{*}}
\author{Guy Lebanon}
\affil{
    School of Computational Science \& Engineering \\
    College of Computing \\
    Georgia Institute of Technology \\ Atlanta, Georgia
}
\date{\today}
\maketitle
\thispagestyle{empty}
\let\oldthefootnote\thefootnote
\renewcommand{\thefootnote}{\fnsymbol{footnote}}
\footnotetext[1]{To whom correspondence should be addressed. Email: \url{jvdillon@gatech.edu}}
\let\thefootnote\oldthefootnote

\maketitle

\maketitle
\begin{abstract}
Maximum likelihood estimators are often of limited practical use due to the intensive computation they require. We propose a family of alternative estimators that maximize a stochastic variation of the composite likelihood function. Each of the estimators resolve the computation-accuracy tradeoff differently, and taken together they span a continuous spectrum of computation-accuracy tradeoff resolutions. We prove the consistency of the estimators, provide formulas for their asymptotic variance, statistical robustness, and computational complexity. We discuss experimental results in the context of Boltzmann machines and conditional random fields. The theoretical and experimental studies demonstrate the effectiveness of the estimators when the computational resources are insufficient. They also demonstrate that in some cases reduced computational complexity is associated with robustness thereby increasing statistical accuracy.
\\{\bf Keywords:} Markov random fields, composite likelihood, maximum likelihood estimation
\end{abstract}

\section{Introduction} \label{sec:intro}
Maximum likelihood estimation is by far the most popular point estimation
technique in machine learning and statistics. Assuming that the data consists
of $n,$ $m$-dimensional vectors
\begin{align} \label{eq:data}
   D=(X^{(1)},\ldots,X^{(n)}), \quad X^{(i)}\in\R^m,   
\end{align}
and is sampled iid from a parametric distribution $p_{\theta_0}$ with $\theta_0 \in \Theta\subset \R^r$, a maximum likelihood estimator  (mle) $\hat\theta_n^{\text{ml}}$ is a maximizer of the loglikelihood function
\begin{align} \label{eq:l}
   \ell_n(\theta\,; D) &= \sum_{i=1}^n \log p_{\theta}(X^{(i)})
\\ \hat\theta_n^{\text{ml}} &= \argmax_{\theta\in\Theta} \ell_n(\theta\,;D).
\end{align}

The use of the mle is motivated by its consistency\footnote{The consistency $\hat\theta_n^{\text{ml}}\to \theta_0$ with probability 1 is sometimes called strong consistency in order to differentiate it from the weaker notion of consistency in probability $P(|\hat\theta_n^{\text{ml}}- \theta_0|<\epsilon)\to 0$.}, i.e.
$\hat\theta_n^{\text{ml}}\to \theta_0$ as $n\to\infty$ with probability 1 \citep{Ferguson1996}.  The consistency property ensures that as the number $n$ of samples grows, the estimator will converge to the true parameter $\theta_0$ governing the data generation process.

An even stronger motivation for the use of the mle is that it has an
asymptotically normal distribution with mean vector $\theta_0$ and variance matrix $(nI(\theta_0))^{-1}$. More formally, we have the following convergence in distribution as $n\to\infty$ \citep{Ferguson1996}
\begin{align} \label{eq:mleEff}
   \sqrt{n}\,(\hat\theta_n^{\text{ml}}-\theta_0)\tood N(0,I^{-1}(\theta_0)),
\end{align}
where $I(\theta)$ is the $r\times r$ Fisher information matrix
\begin{align}
   I(\theta)&=\E_{p_{\theta}} \{\nabla \log p_{\theta}(X) (\nabla \log p_{\theta}(X))^{\top}\}
\end{align}
with $\nabla f$ representing the $r\times 1$ gradient vector of $f(\theta)$ with respect to $\theta$.  The convergence \eqref{eq:mleEff} is especially striking since according to the Cramer-Rao lower bound, the asymptotic variance $(n I(\theta_0))^{-1}$ of the mle is the smallest possible variance for any estimator. Since it achieves the lowest possible asymptotic variance, the mle (and other estimators which share this property) is said to be asymptotically efficient.

The consistency and asymptotic efficiency of the mle motivate its use in many circumstances. Unfortunately, in some situations the maximization or even evaluation of the loglikelihood \eqref{eq:l} and its derivatives is impossible due to computational considerations. For instance this is the situation in many high dimensional exponential family distributions, including Markov random fields whose graphical structure contains cycles. This has lead to the proposal of alternative estimators under the premise that a loss of asymptotic efficiency is acceptable--in return for reduced computational complexity.

In contrast to asymptotic efficiency, we view consistency as a less negotiable property and prefer to avoid inconsistent estimators if at all possible. This common viewpoint in statistics is somewhat at odds with recent advances in the machine learning literature promoting non-consistent estimators, for example using variational techniques \citep{Jordan1999}. Nevertheless, we feel that there is a consensus regarding the benefits of having consistent estimators over non-consistent ones. 

In this paper, we propose a family of estimators, for use in situations where the computation of the mle is intractable. In contrast to many previously proposed approximate estimators, our estimators are statistically consistent and admit a precise quantification of both computational complexity and statistical accuracy through their asymptotic variance. Due to the continuous parameterization of the estimator family, we obtain an effective framework for achieving a predefined problem-specific balance between computational tractability and statistical accuracy. We also demonstrate that in some cases reduced computational complexity may in  fact act as a regularizer, increasing robustness and therefore accomplishing both reduced computation and increased accuracy.  This ``win-win'' situation conflicts with the conventional wisdom stating that moving from the mle to pseudo-likelihood and other related estimators result in a computational win but a statistical loss. Nevertheless we show that this occurs in some practical situations.
 
For the sake of concreteness, we focus on the case of estimating the parameters associated with Markov random fields. In this case, we provide a detailed discussion of the accuracy--complexity tradeoff. We include experiments on both simulated and real world data for several models including the Boltzmann machine, conditional random fields, and the Boltzmann linear chain model.

\section{Related Work} \label{sec:related}
There is a large body of work dedicated to tractable learning techniques. Two popular categories are Markov chain Monte Carlo (MCMC) and variational methods. MCMC is a general purpose technique for approximating expectations and can be used to approximate the normalization term and other intractable portions of the loglikelihood and its gradient \citep{Casella2004}.  Variational methods are
techniques for conducting inference and learning based on tractable bounds \citep{Jordan1999}. 

Despite the substantial work on MCMC and variational methods, there are little practical results concerning the convergence and approximation rate of the resulting parameter estimators. Variational techniques are sometimes inconsistent and it is hard to analyze their asymptotic statistical behavior. In the case of MCMC, a number of asymptotic results exist \citep{Casella2004}, but since MCMC plays a role inside each gradient descent or EM iteration it is hard to analyze the asymptotic behavior of the resulting parameter estimates.  An advantage of our framework is that we are able to directly characterize the asymptotic behavior of the estimator and relate it to the amount of computational savings.

Our work draws on the composite likelihood method for parameter estimation proposed by \citet{Lindsay1988} which in turn generalized the pseudo likelihood of \citet{Besag1974}. A selection of more recent studies on pseudo and composite likelihood are \citep{Arnold1991, Liang2003, Varin2005b, Sutton2007, Hjort2008}.  Most of the
recent studies in this area examine the behavior of the pseudo or composite likelihood in a particular modeling situation. We believe that the present paper is the first to systematically examine statistical and computational tradeoffs in a general quantitative framework.  Possible exceptions are \citep{Zhu2002} which is an experimental study on texture generation, \citep{Xing2003a} which is focused on inference rather than parameter estimation, and \citep{Liang2008} which compares discriminative and generative risks.

\section{Stochastic Composite Likelihood} \label{sec:scl}
In many cases, the absence of a closed form expression for the normalization term prevents the computation of the loglikelihood \eqref{eq:l} and its derivatives thereby severely limiting the use of the mle. A popular example is Markov random fields, wherein the computation of the normalization term is often intractable (see Section~\ref{sec:comp} for more details). In this paper we propose alternative estimators based on the maximization of a stochastic
variation of the composite likelihood. 

We denote multiple samples using superscripts and individual dimensions using subscripts. Thus $X^{(r)}_j$ refers to the $j$-dimension of the $r$ sample. Following standard convention we refer to random variables (RV) using uppercase letters and their corresponding values using lowercase letters. We also use the standard notations for extracting a subset of the dimensions of a random variable
\begin{align} \label{eq:X_S}
X_S \defeq \{X_i:i\in S\}, \qquad X_{-j} \defeq \{X_i:i\neq j\}.
\end{align}

We start by reviewing the pseudo loglikelihood function \citep{Besag1974} associated with the data $D$ \eqref{eq:data},
\begin{align}
   p\ell_n(\theta\,; D) &\defeq \sum_{i=1}^n \sum_{j=1}^m\log p_{\theta}(X^{(i)}_j|X^{(i)}_{-j})\label{eq:pl}.
\end{align}
The maximum pseudo likelihood estimator (mple) $\hat\theta_n^\text{mpl}$ is
consistent i.e., $\hat\theta_n^\text{mpl}\to\theta_0$ with probability 1, but possesses considerably higher asymptotic variance than the mle's $(nI(\theta_0))^{-1}$. Its main advantage is that it does not require the computation of the normalization term as it cancels out in the probability ratio defining conditional distributions
\begin{eqnarray}
p_{\theta}(X_j|X_{-j})=p_{\theta}(X_j|\{X_k:k\neq j\})=\frac{p_{\theta}(X)}{\sum_{x_j} p_{\theta}(X_1,\ldots,X_{j-1},X_j=x_j,X_{j+1},\ldots,X_m)}.
\end{eqnarray}

The mle and mple represent two different ways of resolving the tradeoff between asymptotic variance and computational complexity. The mle has low asymptotic variance but high computational complexity while the mple has higher asymptotic variance but low computational complexity. It is desirable to obtain additional estimators realizing alternative resolutions of the accuracy complexity tradeoff.  To this end we define the stochastic composite likelihood whose maximization provides a family of consistent estimators with statistical accuracy and computational complexity spanning the entire accuracy-complexity spectrum.

Stochastic composite likelihood generalizes the likelihood and pseudo
likelihood functions by constructing an objective function that is a stochastic sum of likelihood objects. We start by defining the notion of $m$-pairs and likelihood objects and then proceed to stochastic composite likelihood.
\begin{defn}
An $m$-pair $(A,B)$ is a pair of sets $A,B\subset\{1,\ldots,m\}$ satisfying $A\neq\emptyset=A\cap B$. The likelihood object associated with an $m$-pair $(A,B)$ and $X$ is $S_{\theta}(A,B)\defeq\log p_{\theta}(X_A|X_B)$ where $X_S$ is defined in \eqref{eq:X_S}. The composite loglikelihood function \citep{Lindsay1988}
is a collection of likelihood objects defined by a finite sequence of $m$-pairs $(A_1,B_1),\ldots,(A_k,B_k)$
\begin{align}
  c\ell_n(\theta\,;D) 
  &\defeq \sum_{i=1}^n \sum_{j=1}^k \log     p_{\theta}(X^{(i)}_{A_j}|X^{(i)}_{B_j}).\label{eq:cl}
\end{align}
\end{defn}

There is a certain lack of flexibility associated with the composite likelihood framework as each likelihood object is either selected or not for the entire sample $X^{(1)},\ldots,X^{(n)}$. There is no allowance for some objects to be selected more frequently than others. For example, available computational resources may allow the computation of the loglikelihood for 20\% of the samples, and the pseudo-likelihood for the remaining 80\%. In the case of composite likelihood if we select the full-likelihood component (or the pseudo-likelihood or any other likelihood object) then this component is applied to all samples indiscriminately. 

In SCL, different likelihood objects  $S_{\theta}(A_j,B_j)$ may be selected for different samples with the possibility of some likelihood objects being selected for only a small fraction of the data samples. The selection may be non-coordinated, in which case each component is selected or not independently of the other components. Or it may be coordinated in which case the selection of one component depends on the selection of the other ones. For example, we may wish to avoid selecting a pseudo likelihood component for a certain sample $X^{(i)}$ if the full likelihood component was already selected for it. 

Another important advantage of stochastic selection is that the discrete parameterization of \eqref{eq:cl} defined by the sequence $(A_1,B_1),\ldots,(A_k,B_k)$ is less convenient for theoretical analysis. Each component is either selected or not, turning the problem of optimally selecting components into a hard combinatorial problem. The stochastic composite likelihood, which is defined below, enjoys continuous parameterization leading to more convenient optimization techniques and convergence analysis. 

\begin{defn} \label{def:scl}
Consider a finite sequence of $m$-pairs $(A_1,B_1),\ldots,(A_k,B_k)$, a dataset $D=(X^{(1)},\ldots,X^{(n)})$, $\beta\in\R^k_+$, and $m$ iid   binary random vectors $Z^{(1)},\ldots,Z^{(m)}\iid P(Z)$ with $\lambda_j\defeq \E(Z_j)>0$.
The stochastic composite loglikelihood (scl) is
\begin{align} \label{eq:scl}
sc\ell_n(\theta\,;D) &\defeq 
	\frac{1}{n} \sum_{i=1}^n m_{\theta}(X^{(i)},Z^{(i)}), \quad \text{where}\quad \\ 
m_{\theta}(X,Z) &\defeq\sum_{j=1}^k \beta_j Z_j  \log p_{\theta}(X_{A_j}|X_{B_j}). \label{eq:mDef}
\end{align}
\end{defn}
In other words, the scl is a stochastic extension of \eqref{eq:cl} where for each sample $X^{(i)}, i=1,\ldots,n$, the likelihood objects $S(A_1,B_1),\ldots,S(A_k,B_k)$ are either selected or not, depending on the values of the binary random variables $Z^{(i)}_1,\ldots,Z^{(i)}_m$ and weighted by the constants $\beta_1,\ldots,\beta_m$. Note that $Z^{(i)}_j$ may in general depend on $Z^{(i)}_r$ but not on $Z^{(l)}_r$ or on $X^{(i)}$.

When we focus on examining different models for $P(Z)$ we sometimes parameterize it, for example by $\lambda$ i.e., $P_{\lambda}(Z)$. This reuse of $\lambda$ (it is also used in Definition~\ref{def:scl}) is a notational abuse. We accept it, however, as in most of the cases that we consider $\lambda_1,\ldots,\lambda_k$ from Definition~\ref{def:scl}  either form the parameter vector for $P(Z)$ or are part of it. 

Some illustrative examples follow. 
\begin{description}
\item[Independence.] Factorizing $P_{\lambda}(Z_1,\ldots,Z_k)=\prod_j P_{\lambda_j}(Z_j)$ leads to $Z^{(i)}_j\sim \text{Ber}(\lambda_j)$ with complete independence among the indicator variables. For each sample $X^{(i)}$, each likelihood object $S(A_j,B_j)$ is selected or not independently with probability $\lambda_j$. 
\item[Multinomial.] A multinomial model $Z\sim \text{Mult}(1,\lambda)$ implies that for each sample $Z^{(i)}$ a multivariate Bernoulli experiment is conducted with precisely one likelihood object being selected depending on the selection probabilities $\lambda_1,\ldots,\lambda_k$.
\item[Product of Multinomials.] A product of multinomials is formed by a partition of the dimensions to $l$ disjoint subsets $\{1,\ldots,m\}=C_1\cup \cdots C_l$ where $Z_{C_i}\sim \text{Mult}(1,(\lambda_j:j\in C_i))$ i.e., 
\[ P(Z)=\prod_{i=1}^c P_i\left(\{Z_j:j\in C_i\}\right), \quad \text{ where }P_i \text{ is }\text{Mult}(1,(\lambda_j:j\in C_l)).\]
\item[Loglinear Models.] The distribution $P(Z)$ follows a hierarchical loglinear model \citep{Bishop1975}. This case subsumes the other cases above.
\end{description}

In analogy to the mle and the mple, the maximum scl estimator (mscle)
$\hat\theta_n^{\text{msl}}$ estimates $\theta_0$ by maximizing the scl
function. In contrast to the loglikelihood and pseudo loglikelihood functions, the scl function and its maximizer are random variables that depend on the indicator variables $Z^{(1)},\ldots,Z^{(n)}$ in addition to the data $D$. As such, its behavior should be summarized by examining the limit $n\to\infty$. Doing so eliminates the dependency on particular realizations of $Z^{(1)},\ldots,Z^{(n)}$ in favor of the the expected frequencies $\lambda_j=\E_{P(Z)}Z_j$ which are non-random constants.

The statistical accuracy and computational complexity of the msl estimator are continuous functions of the parameters $(\beta,\lambda)$ (components weights and selection probabilities respectively) which vary continuously throughout their domain $(\lambda,\beta)\in \Lambda\times \R_+^k$. Choosing appropriate values of $(\lambda,\beta)$ retrieves the special cases of mle, mple, maximum composite likelihood with each selection being associated with a distinct statistical accuracy and computational complexity. The scl framework allows selections of many more values of $(\lambda,\beta)$ realizing a wide continuous spectrum of estimators, each resolving the accuracy-complexity tradeoff differently. 

We include below a demonstration of the scl framework in a simple low dimensional case. In the following sections we discuss in detail the statistical behavior of the mscle and its computational complexity. We conclude the paper with several experimental studies. 

\subsection{Boltzmann Machine Example}
Before proceeding we illustrate the SCL framework using a simple example involving a Boltzmann machine \citep{Jordan1999}. We   consider in detail three SCL policies: full likelihood (FL), pseudo-likelihood (PL), and a stochastic combination of first and second order pseudo-likelihood  with the first order components ($p(X_i|X_{-i})$) selected with probability $\lambda$ and the second order components ($p(X_i,X_j|X_{\{i,j\}^c})$) with probability $1-\lambda$.

Denoting the number of (binary) graph nodes by $m$, the number of examples by $n$, the computational complexity of the FL function (FLOP\footnote{FLOP stands for the number of floating point operations.} counts) is $O\left(\begin{pmatrix}m\\2\end{pmatrix}(2^m+n)\right)$ (loglikelihood) and $O\left(\begin{pmatrix}m\\2\end{pmatrix}^22^m+n\begin{pmatrix}m\\2\end{pmatrix}\right)$ (loglikelihood gradient). The exponential growth in $m$ prevents such computations for large graphs. 

The $k$-order PL function offers a practical alternative to FL (1-order PL correspond to the traditional pseudo-likelihood and 2-order is its analog with second order components $p(X_{\{i,j\}}|X_{\{i,j\}^c})$). The complexity of computing the corresponding SCL function is 
$O\left(\begin{pmatrix}m\\2\end{pmatrix} \left(\begin{pmatrix}m\\k\end{pmatrix}2^k+n\right)\right)$ (for the objective function) and $O\left(\begin{pmatrix}m\\k\end{pmatrix}\begin{pmatrix}m\\2\end{pmatrix}^22^k+n\begin{pmatrix}m\\2\end{pmatrix}\right)$ (for the gradient). The slower complexity growth of the $k$-order PL (polynomial in $m$ instead of exponential) is offset by its reduced statistical accuracy, which we measure using the normalized asymptotic variance 
\begin{align} \label{eq:relEff}
\text{eff}(\hat\theta_n) = \frac{\det (\text{Asymp Var}(\hat\theta_n))}{\det (\text{Asymp Var}(\hat\theta_n^{\text{mle}}))}
\end{align}
which is bounded from below by 1 (due to Cramer Rao lower bound) and its deviation from 1 reflects its inefficiency relative to the MLE.

The MLE thus achieves the best accuracy but it is computationally intractable. The first order and second order PL have higher asymptotic variance but are easier to compute. The SCL framework enables adding many more estimators filling in the gaps between ML, 1-order PL, 2-order PL, etc. 

We illustrate three SCL functions in the context of a simple Boltzmann machine (five binary nodes, fourteen samples $X^{(1)},\ldots,X^{(14)}$, $\theta^{\text{true}}=(-1,-1,-1,-1,-1,1,1,1,1,1)$) in Figure~\ref{fig:policies}. The top box refers to the full likelihood policy. For each of the fourteen samples, the FL component is computed and their aggregation forms the SCL function which in this case equals the loglikelihood. The selection of the FL component for each sample is illustrated using a diamond box. The numbers under the boxes reflect the FLOP counts needed to compute the components and the total complexity associated with computing the entire SCL or loglikelihood is listed on the right. As mentioned above, the normalized asymptotic variance \eqref{eq:relEff} is 1. 

The pseudo-likelihood function \eqref{eq:pl} is illustrated in the second box where each row correspond to one of the five PL components. As each of the five PL component is selected for each of the samples we have diamond boxes covering the entire $5\times 14$ array. The shade of the diamond boxes reflects the complexity required to compute them enabling an easy comparison to the FL components in the top of the figure (note how the FL boxes are much darker than the PL boxes). The numbers at the bottom of each column reflect the FLOP marginal count for each of the fourteen samples and the numbers to the right of the rows reflect the FLOP marginal count for each of the PL components. In this case the FLOP count is less than half the FLOP count of the FL in top box (this reduction in complexity obtained by replacing FL with PL will increase dramatically for graphs with more than 5 nodes) but the asymptotic variance is 83\% higher\footnote{The asymptotic variance of SCL functions is computed using formulas derived in the next section}. 

The third SCL policy reflects a stochastic combination of first and second order pseudo likelihood components. Each first order component is selected with probability $\lambda$ and each second order component is selected with probability $1-\lambda$. The result is a collection of 5 1-order PL components and 10 2-order components with only some of them selected for each of the fourteen samples. Again diamond boxes correspond to selected components which are shaded according to their FLOP complexity. The per-component FLOP marginals and per example FLOP marginals are listed as the bottom row and right-most column. The total complexity is somewhere between FL and PL and the asymptotic variance is reduced from the PL's 183\% to 148\%.

\begin{figure}
\centering
\setlength{\tabcolsep}{0pt}
{ \scriptsize \begin{tabular}{|l|*{14}{c}cc|}\hline
 & $X^{(1)}$& $X^{(2)}$& $X^{(3)}$& $X^{(4)}$& $X^{(5)}$& $X^{(6)}$& $X^{(7)}$& $X^{(8)}$& $X^{(9)}$& $X^{(10)}$& $X^{(11)}$& $X^{(12)}$& $X^{(13)}$& $X^{(14)}$&& \\\hline \hline
 FL &&&&&&&&&&&&&&&&\\
$X_1,,\ldots,X_5$ & 
\sq{$\diamond$}{1} & \sq{$\diamond$}{1} &
\sq{$\diamond$}{1} & \sq{$\diamond$}{1} &
\sq{$\diamond$}{1} & \sq{$\diamond$}{1} &
\sq{$\diamond$}{1} & \sq{$\diamond$}{1} &
\sq{$\diamond$}{1} & \sq{$\diamond$}{1} &
\sq{$\diamond$}{1} & \sq{$\diamond$}{1} &
\sq{$\diamond$}{1} & \sq{$\diamond$}{1} & & 4620\\ 
& 
330 & 330 & 
330 & 330 & 
330 & 330 & 
330 & 330 & 
330 & 330 & 
330 & 330 & 
330 & 330 & & 4620\\
&&&&&&&&&&&&&&&&\\
Complexity  & 4620&&&&&&&&&&&&&&&\\ 
Norm Asym Var  & 1&&&&&&&&&&&&&&&\\ 
\hline \hline
PL &&&&&&&&&&&&&&&&\\
$X_1|X_{-1}$ & 
\sq{$\diamond$}{93} & \sq{$\diamond$}{93} &
\sq{$\diamond$}{93} & \sq{$\diamond$}{93} &
\sq{$\diamond$}{93} & \sq{$\diamond$}{93} &
\sq{$\diamond$}{93} & \sq{$\diamond$}{93} &
\sq{$\diamond$}{93} & \sq{$\diamond$}{93} &
\sq{$\diamond$}{93} & \sq{$\diamond$}{93} &
\sq{$\diamond$}{93} & \sq{$\diamond$}{93} & & 308\\
$X_2|X_{-2}$ & 
\sq{$\diamond$}{93} & \sq{$\diamond$}{93} &
\sq{$\diamond$}{93} & \sq{$\diamond$}{93} &
\sq{$\diamond$}{93} & \sq{$\diamond$}{93} &
\sq{$\diamond$}{93} & \sq{$\diamond$}{93} &
\sq{$\diamond$}{93} & \sq{$\diamond$}{93} &
\sq{$\diamond$}{93} & \sq{$\diamond$}{93} &
\sq{$\diamond$}{93} & \sq{$\diamond$}{93} & & 308\\
$X_3|X_{-3}$ & 
\sq{$\diamond$}{93} & \sq{$\diamond$}{93} &
\sq{$\diamond$}{93} & \sq{$\diamond$}{93} &
\sq{$\diamond$}{93} & \sq{$\diamond$}{93} &
\sq{$\diamond$}{93} & \sq{$\diamond$}{93} &
\sq{$\diamond$}{93} & \sq{$\diamond$}{93} &
\sq{$\diamond$}{93} & \sq{$\diamond$}{93} &
\sq{$\diamond$}{93} & \sq{$\diamond$}{93} & & 308\\
$X_4|X_{-4}$ & 
\sq{$\diamond$}{93} & \sq{$\diamond$}{93} &
\sq{$\diamond$}{93} & \sq{$\diamond$}{93} &
\sq{$\diamond$}{93} & \sq{$\diamond$}{93} &
\sq{$\diamond$}{93} & \sq{$\diamond$}{93} &
\sq{$\diamond$}{93} & \sq{$\diamond$}{93} &
\sq{$\diamond$}{93} & \sq{$\diamond$}{93} &
\sq{$\diamond$}{93} & \sq{$\diamond$}{93} & & 308\\
$X_5|X_{-5}$ & 
\sq{$\diamond$}{93} & \sq{$\diamond$}{93} &
\sq{$\diamond$}{93} & \sq{$\diamond$}{93} &
\sq{$\diamond$}{93} & \sq{$\diamond$}{93} &
\sq{$\diamond$}{93} & \sq{$\diamond$}{93} &
\sq{$\diamond$}{93} & \sq{$\diamond$}{93} &
\sq{$\diamond$}{93} & \sq{$\diamond$}{93} &
\sq{$\diamond$}{93} & \sq{$\diamond$}{93} & & 308\\ 
& 
110 & 110 & 
110 & 110 & 
110 & 110 & 
110 & 110 & 
110 & 110 & 
110 & 110 & 
110 & 110 & &1540\\ 
&&&&&&&&&&&&&&&&\\
Complexity  & 1540&&&&&&&&&&&&&&&\\ 
Norm Asym Var  & 1.83&&&&&&&&&&&&&&&\\ 
\hline \hline
0.7PL+0.3PL2 &&&&&&&&&&&&&&&&\\
$X_1|X_{-1}$ &  
  & \sq{$\diamond$}{93} &
\sq{$\diamond$}{93} &\sq{$\diamond$}{93} &
\sq{$\diamond$}{93} & & 
& & 
\sq{$\diamond$}{93} & \sq{$\diamond$}{93} &
\sq{$\diamond$}{93} & \sq{$\diamond$}{93} &
& & & 176 \\
$X_2|X_{-2}$ & 
& \sq{$\diamond$}{93} &
& \sq{$\diamond$}{93} &
\sq{$\diamond$}{93} & &
\sq{$\diamond$}{93} & \sq{$\diamond$}{93} &
\sq{$\diamond$}{93} & \sq{$\diamond$}{93} & 
\sq{$\diamond$}{93} & & 
\sq{$\diamond$}{93} & \sq{$\diamond$}{93} & &220\\
$X_3|X_{-3}$ & 
\sq{$\diamond$}{93} & \sq{$\diamond$}{93} &
& & 
& \sq{$\diamond$}{93} &
\sq{$\diamond$}{93} & \sq{$\diamond$}{93} &
& \sq{$\diamond$}{93} &
\sq{$\diamond$}{93} & \sq{$\diamond$}{93} & 
\sq{$\diamond$}{93} & \sq{$\diamond$}{93} & &220\\
$X_{4}|X_{-4}$ &
& &
\sq{$\diamond$}{93} & &
& \sq{$\diamond$}{93} &
\sq{$\diamond$}{93} & & 
& & 
\sq{$\diamond$}{93} & \sq{$\diamond$}{93} & 
\sq{$\diamond$}{93} & \sq{$\diamond$}{93} & &154\\
$X_5|X_{-5}$ & 
\sq{$\diamond$}{93} & &
&&
\sq{$\diamond$}{93} & \sq{$\diamond$}{93} &
\sq{$\diamond$}{93} & \sq{$\diamond$}{93} &
\sq{$\diamond$}{93} & &
\sq{$\diamond$}{93} & \sq{$\diamond$}{93} &
& \sq{$\diamond$}{93} & &198\\
$X_{\{1,2\}}|X_{\{1,2\}^c}$ &
& &
\sq{$\diamond$}{75} & \sq{$\diamond$}{75} &
& \sq{$\diamond$}{75} & 
&&
&&
\sq{$\diamond$}{75} &  &
&&& 164 \\
$X_{\{1,3\}}|X_{\{1,3\}^c}$ &
\sq{$\diamond$}{75} & &
\sq{$\diamond$}{75} & &
\sq{$\diamond$}{75} & &
&&
\sq{$\diamond$}{75} & &
& \sq{$\diamond$}{75} &
&&&205\\
$X_{\{1,4\}}|X_{\{1,4\}^c}$ &
&&&&
\sq{$\diamond$}{75} & \sq{$\diamond$}{75} & \sq{$\diamond$}{75} & 
&& \sq{$\diamond$}{75} &
&&&&&164\\
$X_{\{1,5\}}|X_{\{1,5\}^c}$ &
\sq{$\diamond$}{75} &  & 
&&&&
\sq{$\diamond$}{75} &  & & & 
\sq{$\diamond$}{75} &  \sq{$\diamond$}{75} &  && & 164\\
$X_{\{2,3\}}|X_{\{2,3\}^c}$ &
&&&\sq{$\diamond$}{75} &  \sq{$\diamond$}{75} &  & \sq{$\diamond$}{75} & 
&& \sq{$\diamond$}{75} & \sq{$\diamond$}{75} &  &&& & 205\\
$X_{\{2,4\}}|X_{\{2,4\}^c}$ &
& \sq{$\diamond$}{75} &  \sq{$\diamond$}{75} &  &&& \sq{$\diamond$}{75} &  \sq{$\diamond$}{75} &  & \sq{$\diamond$}{75} &  \sq{$\diamond$}{75} &  \sq{$\diamond$}{75} &  & && 287\\
$X_{\{2,5\}}|X_{\{2,5\}^c}$ &
\sq{$\diamond$}{75} &  & & \sq{$\diamond$}{75} &  & \sq{$\diamond$}{75} &  & \sq{$\diamond$}{75} &  &&&&&&&164\\
$X_{\{3,4\}}|X_{\{3,4\}^c}$ &
\sq{$\diamond$}{75} &  &&&&&&\sq{$\diamond$}{75} & &&&&&&&82\\
$X_{\{3,5\}}|X_{\{3,5\}^c}$ &
&&\sq{$\diamond$}{75} &&\sq{$\diamond$}{75} &\sq{$\diamond$}{75} &&\sq{$\diamond$}{75} &&&&&&& &164\\
$X_{\{4,5\}}|X_{\{4,5\}^c}$ &
&&&&&& \sq{$\diamond$}{75} & \sq{$\diamond$}{75} & \sq{$\diamond$}{75} & \sq{$\diamond$}{75} & & \sq{$\diamond$}{75} & &&& 205\\
& 208 & 107 & 208 & 167 & 230 & 230 & 293 & 271 & 148 & 230 & 274 & 252 & 66 & 88 & &2772 \\
&&&&&&&&&&&&&&&&\\
Complexity  & 2772&&&&&&&&&&&&&&&\\ 
Norm Asym Var  & 1.48&&&&&&&&&&&&&&&\\ \hline
\end{tabular}}
\caption{Sample runs of three different SCL policies for 14 examples  $X^{(1)},\ldots,X^{(14)}$ drawn from a 5 binary node Boltzmann machine ($\theta^{\text{true}}=(-1,-1,-1,-1,-1,1,1,1,1,1)$). The policies are full likelihood (FL, top), pseudo-likelihood (PL, middle), and a stochastic combination of first and second order pseudo-likelihood  with the first order components selected with probability 0.7 and the second order components with probability 0.3 (bottom). 
 \vspace{0.1in} \newline
The sample runs for the policies are illustrated by placing a diamond box in table entries corresponding to selected likelihood objects (rows corresponding to likelihood objects and columns to $X^{(1)},\ldots,X^{(14)}$). The FLOP counts of each likelihood object determines the shade of the diamond boxes while the total FLOP counts per example and per likelihood objects are displayed as table marginals (bottom row and right column for each policy). We also display the total FLOP count and the normalized asymptotic variance \eqref{eq:relEff}.
\vspace{0.1in} \newline 
Even in the simple case of 5 nodes, FL is the most complex policy with PL requiring a third of the FL computation. 0.7PL+0.3PL2 is somewhere in between. The situation is reversed for the estimation accuracy-FL achieves the lowest possible normalized asymptotic variance of 1, PL is almost twice that, and 0.7PL+0.3PL2 somewhere in the middle. The SCL framework spans the accuracy-complexity spectrum. Choosing the  right $\lambda$ value obtains an estimator that is suits  available computational resources and required accuracy.}
\label{fig:policies}
\end{figure}
  
Additional insight may be gained at this point by considering Figure~\ref{fig:compAccPlot} which plots several SCL estimators as points in the plane whose $x$ and $y$ coordinates correspond to normalized asymptotic variance and computational complexity respectively. We turn at this point to considering the statistical properties of the SCL estimators. 

\section{Consistency and Asymptotic Variance of $\hat\theta_n^{\text{msl}}$} \label{sec:stat}
A nice property of the SCL framework is enabling mathematical characterization of the statistical properties of the estimator $\hat\theta_n^{\text{msl}}$. In this section we examine the conditions for consistency of the mscle and its asymptotic distribution and in the next section we consider robustness. The propositions below constitute novel generalizations of some well-known results in classical statistics. Proofs may be found in Appendix~\ref{sec:proofs}. For simplicity, we assume that $X$ is discrete and $p_{\theta}(x)>0$.
  
\begin{defn} \label{def:identifiability}
A sequence of $m$-pairs $(A_1,B_1),\ldots,(A_k,B_k)$ ensures identifiability of $p_{\theta}$ if the map $\{p_{\theta}(X_{A_j}|X_{B_j}): j=1,\ldots,k\}\mapsto p_{\theta}(X)$ is injective. In other words, there exists only a single collection of conditionals $\{p_{\theta}(X_{A_j}|X_{B_j}): j=1,\ldots,k\}$ that
does not contradict the joint $p_{\theta}(X)$.
\end{defn}

\begin{prop} \label{prop:consistency}
Let $\Theta\subset \R^r$ be an open set, $p_{\theta}(x)>0$ and  continuous and smooth in $\theta$, and $(A_1,B_1),\ldots,(A_k,B_k)$ be a sequence of $m$-pairs for which $\{(A_j,B_j):\forall j \text{ such that } \lambda_j>0\}$ ensures identifiability. Then the sequence of SCL maximizers is strongly consistent i.e.,
\begin{align}
P\left(\lim_{n\to\infty} \hat\theta_n=\theta_0\right)=1.
\end{align}
\end{prop}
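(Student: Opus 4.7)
The plan is to treat $\hat\theta_n$ as a standard $M$-estimator and apply a Wald-style consistency argument. Using the independence of $Z$ from $X$ noted after Definition~\ref{def:scl}, the expected criterion takes the form
\begin{align*}
M(\theta) \defeq \E\, m_\theta(X,Z) = \sum_{j=1}^k \beta_j\lambda_j\, \E_{\theta_0}\!\left[\log p_\theta(X_{A_j}\mid X_{B_j})\right].
\end{align*}
The argument decomposes into three parts: (i) showing that $\theta_0$ is the unique maximizer of $M$; (ii) promoting the pointwise strong law $sc\ell_n(\theta;D)\to M(\theta)$ a.s.\ into uniform convergence on compact sets; and (iii) combining the two via the standard argmax theorem to deduce that any sequence of maximizers converges to $\theta_0$ almost surely.

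Step (i) is the heart of the proof and is where the identifiability hypothesis pays off. By Gibbs' inequality applied conditionally on each value of $X_{B_j}$, we have $\E_{\theta_0}[\log p_\theta(X_{A_j}\mid X_{B_j})] \le \E_{\theta_0}[\log p_{\theta_0}(X_{A_j}\mid X_{B_j})]$, with equality iff $p_\theta(\cdot\mid x_{B_j}) = p_{\theta_0}(\cdot\mid x_{B_j})$ for every $x_{B_j}$ in the support. Since the weights $\beta_j\lambda_j$ are nonnegative, $M(\theta)\le M(\theta_0)$ for all $\theta$, and equality forces pointwise equality of the conditionals for every $j$ with $\beta_j\lambda_j>0$. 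Definition~\ref{def:identifiability} then forces $p_\theta\equiv p_{\theta_0}$ as a joint distribution, and parametric identifiability of the model (implicit in the problem setup, since $\theta_0$ is a well-defined parameter) gives $\theta=\theta_0$. This is the main obstacle: one must verify that restricting the Gibbs inequality to indices with $\lambda_j>0$ is enough to trigger the identifiability assumption stated in the proposition.

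For step (ii), continuity of $\theta\mapsto m_\theta(X,Z)$ follows from the smoothness of $p_\theta(x)>0$. On any compact $K\subset\Theta$ and for each $x$ in the (finite) support of $X$, the continuity and strict positivity of $p_\theta(x)$ give a uniform lower bound $p_\theta(x)\ge\epsilon(K)>0$, and hence $|m_\theta(X,Z)|$ is bounded on $K$ by an $X$-free constant. The uniform SLLN for a continuous family with a bounded envelope then yields
\begin{align*}
\sup_{\theta\in K}\bigl| sc\ell_n(\theta;D) - M(\theta) \bigr| \xrightarrow{\text{a.s.}} 0.
\end{align*}
Because $\Theta$ is open rather than compact, step (iii) also requires a localization: choose a compact neighborhood $K$ of $\theta_0$; by continuity of $M$ and the unique-maximizer property from step (i), the strict separation $\sup_{\theta\in K,\,\|\theta-\theta_0\|\ge\varepsilon} M(\theta) < M(\theta_0)$ holds for every $\varepsilon>0$. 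Combining this separation with uniform convergence on $K$ then rules out $\hat\theta_n$ escaping from any $\varepsilon$-ball of $\theta_0$, yielding $\hat\theta_n\to\theta_0$ almost surely in the standard way.
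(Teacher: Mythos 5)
Your proposal is correct and follows essentially the same route as the paper: the paper's Lemma~\ref{prop:compKL} is exactly your step (i) (Gibbs'/Shannon's inequality applied conditionally, with identifiability forcing $\theta=\theta_0$ at equality), and the paper likewise combines a uniform strong law on a compact set with a well-separated-maximum argument, differing only cosmetically in that it centers the criterion by subtracting $\lambda_j\log p_{\theta_0}(X_{A_j}|X_{B_j})$ so the limit becomes a negative weighted sum of KL divergences, and works on an annulus $\{c_1\le\|\theta-\theta_0\|\le c_2\}$ rather than a compact neighborhood. No substantive difference.
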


The above proposition indicates that to guarantee consistency, the sequence of $m$-pairs needs to satisfy Definition~\ref{def:identifiability}. It can be shown that a selection equivalent to the pseudo likelihood function, i.e.,
\begin{align} 
\mathcal{S}=\{(A_1,B_1),\ldots,(A_m,B_m)\} \quad \text{where} \quad
A_i=\{i\}, B_i=\{1,\ldots,m\}\setminus A_i \label{eq:plPieces}
\end{align} 
ensure identifiability and consequently the consistency of the mscle estimator. Furthermore, every selection of $m$-pairs that subsumes $\mathcal{S}$ in  \eqref{eq:plPieces} similarly guarantees identifiability and consistency.

The proposition below establishes the asymptotic normality of the mscle $\hat\theta_n$. The asymptotic variance enables the comparison of scl functions with different parameterizations $(\lambda,\beta)$. 

\begin{prop} \label{prop:asympVar} 
Making the assumptions of Proposition \ref{prop:consistency} as well as convexity of $\Theta\subset\R^r$ we have the following convergence in distribution
\begin{align}
   \sqrt{n}(\hat\theta_n^{\text{msl}}-\theta_0) \tood N\left(0,\Upsilon \Sigma \Upsilon\right)
\end{align}
where
\begin{align}
\Upsilon^{-1}&=\sum_{j=1}^k \beta_j\lambda_j \Var_{\theta_0} (\nabla S_{\theta_0}(A_j,B_j)) \\
\Sigma&=\Var_{\theta_0}\left(\sum_{j=1}^k\beta_j\lambda_j \nabla S_{\theta_0}(A_j,B_j)\right).
\end{align}
\end{prop}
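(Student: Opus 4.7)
The plan is to follow the standard M-estimator recipe for asymptotic normality. Because $\hat\theta_n^{\text{msl}}$ maximizes the smooth scl function over the open convex set $\Theta$ and is strongly consistent by Proposition~\ref{prop:consistency}, it eventually satisfies the first-order condition $\nabla sc\ell_n(\hat\theta_n^{\text{msl}}) = 0$ with probability one. Applying the mean value theorem coordinatewise to this gradient around $\theta_0$ (valid because $\Theta$ is convex, so the segment joining $\hat\theta_n^{\text{msl}}$ and $\theta_0$ lies in $\Theta$) yields
\begin{align*}
0 \;=\; \nabla sc\ell_n(\theta_0) \,+\, \nabla^2 sc\ell_n(\bar\theta_n)\,(\hat\theta_n^{\text{msl}} - \theta_0),
\end{align*}
for some intermediate $\bar\theta_n$, which I rearrange to
\begin{align*}
\sqrt{n}\,(\hat\theta_n^{\text{msl}} - \theta_0) \;=\; -\bigl[\nabla^2 sc\ell_n(\bar\theta_n)\bigr]^{-1}\,\sqrt{n}\,\nabla sc\ell_n(\theta_0),
\end{align*}
and then handle the two factors separately before combining with Slutsky's theorem.

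For the score factor, $\sqrt{n}\,\nabla sc\ell_n(\theta_0) = n^{-1/2}\sum_{i=1}^n \nabla m_{\theta_0}(X^{(i)},Z^{(i)})$ is an iid sum, so the first task is to check each summand has mean zero. This reduces to the conditional score identity $\E_{p_{\theta_0}}[\nabla \log p_{\theta_0}(X_A|X_B)]=0$, which I would derive from $\log p_\theta(X_A|X_B) = \log p_\theta(X_A,X_B) - \log p_\theta(X_B)$ together with the usual unconditional score identity applied to the joint and the marginal. Independence of $Z^{(i)}$ from $X^{(i)}$ and $\E[Z_j]=\lambda_j$ then give $\E\,\nabla m_{\theta_0}(X,Z)=0$, and the multivariate CLT provides $\sqrt{n}\,\nabla sc\ell_n(\theta_0)\tood N(0,\Sigma)$, with $\Sigma$ arising directly from the covariance of $\nabla m_{\theta_0}(X,Z)$.

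For the Hessian factor, strong consistency forces $\bar\theta_n\to\theta_0$ almost surely, so a uniform law of large numbers combined with the smoothness assumption yields $\nabla^2 sc\ell_n(\bar\theta_n) \to \E\bigl[\nabla^2 m_{\theta_0}(X,Z)\bigr]$ almost surely. The key lemma to verify is the conditional information identity
\begin{align*}
-\E_{p_{\theta_0}}\bigl[\nabla^2 \log p_{\theta_0}(X_A|X_B)\bigr] \;=\; \Var_{p_{\theta_0}}\bigl(\nabla \log p_{\theta_0}(X_A|X_B)\bigr),
\end{align*}
which I would obtain from the same additive decomposition together with the orthogonality relation $\E[\nabla \log p_{\theta_0}(X_A|X_B)\,\nabla \log p_{\theta_0}(X_B)^\top]=0$ (proved by conditioning on $X_B$ and invoking the conditional score identity). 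Combined with the linearity of $Z_j$ in $m_\theta$ and $\E[Z_j]=\lambda_j$, this identifies the Hessian limit as $-\Upsilon^{-1}$, and Slutsky then delivers $\sqrt{n}(\hat\theta_n^{\text{msl}}-\theta_0)\tood N(0,\Upsilon\Sigma\Upsilon)$.

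The main technical obstacle I anticipate is the conditional information identity: a naive Hessian expansion produces cross terms between the joint and marginal scores that look asymmetric, and only after applying the conditioning-orthogonality above do they collapse into the variance of the conditional score and match the stated $\Upsilon^{-1}$. A secondary concern is invertibility of $\Upsilon^{-1}$, which should follow from the identifiability hypothesis already imposed on the $m$-pairs with $\lambda_j>0$, ensuring the component variances aggregate to a positive-definite matrix.
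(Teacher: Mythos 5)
Your proposal is correct and follows essentially the same route as the paper's own proof: first-order condition plus mean value theorem on the gradient, consistency to localize the intermediate point, LLN for the Hessian factor, CLT for the score, and Slutsky to combine. The only difference is one of detail—you explicitly plan to prove the conditional score and information identities (via the decomposition $\log p_\theta(X_A|X_B)=\log p_\theta(X_A,X_B)-\log p_\theta(X_B)$ and the orthogonality $\E[\nabla\log p_{\theta_0}(X_A|X_B)\,\nabla\log p_{\theta_0}(X_B)^\top]=0$), whereas the paper simply asserts $\E_{\theta_0}\nabla^2 S_{\theta_0}(A_j,B_j)=-\Var_{\theta_0}(\nabla S_{\theta_0}(A_j,B_j))$ without justification.
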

The notation $\Var_{\theta_0}(Y)$ represents the covariance matrix of the random vector $Y$ under $p_{\theta_0}$ while the notations $\toop,\tood$ in the proof below denote convergences in probability and in distribution \citep{Ferguson1996}. $\nabla$ represents the gradient vector with respect to $\theta$. 

When $\theta$ is a vector the asymptotic variance is a matrix. To facilitate comparison between different estimators we follow the convention of using the determinant, and in some cases the trace, to measure the statistical accuracy. See \citep{Serfling1980} for some heuristic arguments for doing so. Figures~\ref{fig:policies},\ref{fig:boltzmann},\ref{fig:compAccPlot} provide the asymptotic variance for some SCL estimators and describe how it can be used to gain insight into which estimator to use.

The statistical accuracy of the SCL estimator depends on $\beta$ (weight parameters) and $\lambda$ (selection parameter). It is thus desirable to use the results in this section in determining what values of $\beta,\lambda$ to use. Directly using the asymptotic variance is not possible in practice as it depends on the unknown quantity $\theta_0$. However, it is possible to estimate the asymptotic variance using the training data. We describe this in Section~\ref{sec:beta}.

\section{Robustness of $\hat\theta_n^{\text{msl}}$}\label{sec:robust}

We observed in our experiments (see Section~\ref{sec:experiments}) that the SCL estimator sometimes performs better on a held-out test set than did the maximum likelihood estimator. This phenomenon seems to be in contradiction to the fact that the asymptotic variance of the MLE is lower than that of the SCL maximizer. This is explained by the fact that in some cases the true model generating the data does not lie within the parametric family $\{p_{\theta}:\theta\in\Theta\}$ under consideration. For example, many graphical models (HMM, CRF, LDA, etc.) make conditional independence assumptions that are often violated in practice. In such cases the SCL estimator acts as a regularizer achieving better test set performance than the non-regularized MLE. We provide below a theoretical account of this phenomenon using the language of $m$-estimators and statistical robustness. Our notation follows the one in \citep{Vaart1998}.

We assume that the model generating the data is outside the model family $P(X)\not\in \{p_{\theta}:\theta\in\Theta\}$ and we augment $m_{\theta}(X,Z)$ in \eqref{eq:mDef} with
\begin{align*}
\psi_{\theta}(X,Z) &\defeq \nabla m_{\theta}(X,Z)\\
\dot\psi_{\theta}(X,Z) &\defeq \nabla^2 m_{\theta}(X,Z) \quad \text{(matrix of second order derivatives)}\\
\Psi_n(\theta) &\defeq\frac{1}{n}\sum_{i=1}^n \psi_{\theta}(X^{(i)},Z^{(i)}).
\end{align*}

Proposition~\ref{prop:robustConsist} below generalizes the consistency result by asserting that $\hat\theta_n\to\theta_0$ where $\theta_0$ is the point on $\{p_{\theta}:\theta\in\Theta\}$ that is closest to the true model $P$, as defined by 
\begin{align}\label{eq:KLproj}
\theta_0=\argmax_{\theta\in\Theta}M(\theta) \quad \text{where} \quad M(\theta)\defeq-\sum_{j=1}^k \beta_j\lambda_j D(P(X_{A_j}|X_{B_j})||p_{\theta}(X_{A_j}|X_{B_j})),
\end{align}
or equivalently, $\theta_0$ satisfies
\begin{align} \label{eq:defTheta0}
\E_{P(X)}\E_{P(Z)} \psi_{\theta_0}(X,Z)=0.
\end{align}
When the scl function reverts to the loglikelihood function, $\theta_0$ becomes the KL projection of the true model $P$ onto the parametric family $\{p_{\theta}:\theta\in\Theta\}$.

\begin{prop} \label{prop:robustConsist}
Assuming the conditions in Proposition~\ref{prop:consistency} as well as $\sup_{\theta:\|\theta-\theta_0\|\geq \epsilon} M(\theta)<M(\theta_0)$ for all $\epsilon>0$  we have 
$\hat\theta_n^{\text{msl}}\to\theta_0$ as $n\to\infty$ with probability 1. 
\end{prop}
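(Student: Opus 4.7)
The plan is to adapt the standard Wald-style argument for consistency of misspecified $M$-estimators to the SCL objective. Since $X^{(1)},\ldots,X^{(n)}$ are i.i.d.\ from $P$ and the selection vectors $Z^{(1)},\ldots,Z^{(n)}$ are i.i.d.\ and independent of the data, the pairs $(X^{(i)},Z^{(i)})$ are i.i.d., and $sc\ell_n(\theta)=\frac{1}{n}\sum_{i=1}^n m_\theta(X^{(i)},Z^{(i)})$ is an empirical mean indexed by $\theta$. Its population mean is
\begin{align*}
\E_{P(X)}\E_{P(Z)}m_\theta(X,Z)=\sum_{j=1}^k\beta_j\lambda_j\,\E_P\log p_\theta(X_{A_j}|X_{B_j}),
\end{align*}
which differs from $M(\theta)$ in \eqref{eq:KLproj} only by a $\theta$-independent entropy constant, so $\theta_0$ defined by \eqref{eq:KLproj} is indeed the population maximizer and, by differentiating under the expectation, satisfies \eqref{eq:defTheta0}.

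The backbone of the argument is a uniform strong law of large numbers: for every compact $K\subset\Theta$,
\begin{align*}
\sup_{\theta\in K}\bigl|sc\ell_n(\theta)-M(\theta)\bigr|\longrightarrow 0 \quad\text{a.s.}
\end{align*}
This is the main technical obstacle. Because $X$ is discrete with $p_\theta(x)>0$ continuous and smooth in $\theta$, the summand $m_\theta(X,Z)$ is continuous in $\theta$ and admits an integrable envelope $\sup_{\theta\in K}|m_\theta(X,Z)|$ over any compact $K$; pointwise SLLN together with equicontinuity then yields uniform convergence on $K$ by a standard covering argument. A secondary but necessary step is to prevent $\hat\theta_n^{\text{msl}}$ from escaping to the boundary of $\Theta$, which is handled by combining the well-separated-maximum hypothesis with an argument analogous to the one underlying Proposition~\ref{prop:consistency} to confine $\hat\theta_n^{\text{msl}}$ eventually to a fixed compact neighborhood of $\theta_0$.

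Given the uniform SLLN, the conclusion is a short deterministic argument. Fix $\epsilon>0$ and set $\delta:=M(\theta_0)-\sup_{\|\theta-\theta_0\|\geq\epsilon}M(\theta)$, which is strictly positive by hypothesis. Work on the almost-sure event on which $\sup_{\theta\in K}|sc\ell_n(\theta)-M(\theta)|<\delta/3$ for all $n$ sufficiently large. Since $\hat\theta_n^{\text{msl}}$ maximizes $sc\ell_n$, $sc\ell_n(\hat\theta_n^{\text{msl}})\geq sc\ell_n(\theta_0)$, and combining with the uniform bound gives $M(\hat\theta_n^{\text{msl}})\geq M(\theta_0)-2\delta/3>\sup_{\|\theta-\theta_0\|\geq\epsilon}M(\theta)$, which forces $\|\hat\theta_n^{\text{msl}}-\theta_0\|<\epsilon$. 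Since $\epsilon$ was arbitrary, $\hat\theta_n^{\text{msl}}\to\theta_0$ with probability one.
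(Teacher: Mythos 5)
Your proposal is correct and follows essentially the same route as the paper: a uniform strong law of large numbers for $sc\ell_n$ over compact sets (exploiting that $(X^{(i)},Z^{(i)})$ are i.i.d.\ and that $M$ differs from the population objective only by a constant), combined with the well-separated-maximum hypothesis to force the maximizer into an arbitrarily small ball around $\theta_0$. The only cosmetic difference is the final step: the paper phrases it as excluding the maximizer from annuli $\{c_1\le\|\theta-\theta_0\|\le c_2\}$ on which the limiting objective is strictly below its value at $\theta_0$, whereas you use the standard comparison $sc\ell_n(\hat\theta_n^{\text{msl}})\ge sc\ell_n(\theta_0)$ to conclude $M(\hat\theta_n^{\text{msl}})>\sup_{\|\theta-\theta_0\|\ge\epsilon}M(\theta)$; both arguments share the same ingredients and the same (lightly treated) issue of confining the estimator to a compact set.
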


The added condition maintains that $\theta_0$ is a well separated  maximum point of $M$. In other words it asserts that only values close to $\theta_0$ may yield a value of $M$ that is close to the maximum $M(\theta_0)$. This condition is satisfied in the case of most exponential family models. 

\begin{prop} \label{prop:robustVar1}
Assuming the conditions of Proposition~\ref{prop:asympVar} as well as 
$\E_{P(X)}\E_{P(Z)} \|\psi_{\theta_0}(X,Z)\|^2 < \infty$, $\E_{P(X)}\E_{P(Z)} \dot\psi_{\theta_0}(X)$ exists and is non-singular, $|\ddot\Psi_{ij}| = |\partial^2\psi_{\theta}(x)/\partial\theta_i\theta_j|<g(x)$ for all $i,j$ and $\theta$ in a neighborhood of $\theta_0$ for some integrable $g$, we have 
\begin{align} \label{eq:asymp}
\sqrt{n}(\hat\theta_n-\theta_0) &= -(\E_{P(X)}\E_{P(Z)} \dot\psi_{\theta_0})^{-1} \frac{1}{\sqrt{n}}\sum_{i=1}^n\psi_{\theta_0}(X^{(i)},Z^{(i)})+o_P(1)\\
&\text{or equivalently} \nonumber \\
\hat\theta_n &= \theta_0  -(\E_{P(X)}\E_{P(Z)} \dot\psi_{\theta_0})^{-1} \frac{1}{n}\sum_{i=1}^n\psi_{\theta_0}(X^{(i)},Z^{(i)}) + o_P\left(\frac{1}{\sqrt{n}}\right). \label{eq:asymp2}
\end{align}
\end{prop}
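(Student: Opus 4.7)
The plan is to carry out the standard Z-estimator Taylor expansion. Since $\Theta$ is open and $\hat\theta_n$ maximizes $sc\ell_n$, and since Proposition~\ref{prop:robustConsist} gives $\hat\theta_n\to\theta_0\in\Theta$ with probability one, eventually $\hat\theta_n$ is an interior critical point, so $\Psi_n(\hat\theta_n)=0$ with probability tending to $1$. I would then write the first-order Taylor expansion with integral remainder,
\begin{align*}
0=\Psi_n(\hat\theta_n)=\Psi_n(\theta_0)+A_n\,(\hat\theta_n-\theta_0),\qquad A_n:=\int_0^1 \dot\Psi_n\bigl(\theta_0+t(\hat\theta_n-\theta_0)\bigr)\,dt,
\end{align*}
and then show $A_n\toop \E\dot\psi_{\theta_0}$ so that one can invert and isolate $\sqrt{n}(\hat\theta_n-\theta_0)$.

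To establish $A_n\toop \E\dot\psi_{\theta_0}$, it suffices to show that $\sup_{\|\theta-\theta_0\|\le\delta}\|\dot\Psi_n(\theta)-\E\dot\psi_{\theta_0}\|\toop 0$ for some $\delta>0$, because $\hat\theta_n\toop\theta_0$ forces the integration path into such a neighborhood with probability tending to $1$. The domination $|\ddot\Psi_{ij}|<g(x)$ with $g$ integrable yields a componentwise mean-value bound
\begin{align*}
\|\dot\Psi_n(\theta)-\dot\Psi_n(\theta_0)\|\le C\,\|\theta-\theta_0\|\cdot \frac{1}{n}\sum_{i=1}^n g(X^{(i)})
\end{align*}
for a constant $C$ depending only on the dimension, which is $O_P(\|\theta-\theta_0\|)$ by the strong law applied to $g$. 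Combined with $\dot\Psi_n(\theta_0)\toop \E\dot\psi_{\theta_0}$ (the strong law applied to $\dot\psi_{\theta_0}$, whose mean exists by the same domination) and continuity of $\theta\mapsto \E\dot\psi_\theta$ at $\theta_0$ (via dominated convergence), this gives the desired uniform convergence and hence $A_n\toop \E\dot\psi_{\theta_0}$.

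By non-singularity of $\E\dot\psi_{\theta_0}$ and the continuous mapping theorem, $A_n$ is invertible with probability tending to $1$ and $A_n^{-1}\toop (\E\dot\psi_{\theta_0})^{-1}$. Solving the Taylor expansion and scaling by $\sqrt{n}$ gives
\begin{align*}
\sqrt{n}(\hat\theta_n-\theta_0)=-A_n^{-1}\,\sqrt{n}\,\Psi_n(\theta_0).
\end{align*}
By \eqref{eq:defTheta0} the summands $\psi_{\theta_0}(X^{(i)},Z^{(i)})$ are i.i.d.\ with mean zero, and by the finite second-moment hypothesis they satisfy the multivariate CLT, so $\sqrt{n}\,\Psi_n(\theta_0)=O_P(1)$. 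Writing $A_n^{-1}=(\E\dot\psi_{\theta_0})^{-1}+o_P(1)$ and applying Slutsky yields \eqref{eq:asymp}; dividing by $\sqrt{n}$ produces the equivalent form \eqref{eq:asymp2}.

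The main obstacle is the uniform convergence of $\dot\Psi_n$ in a neighborhood of $\theta_0$. A pointwise law of large numbers at $\theta_0$ alone is insufficient, because $A_n$ averages $\dot\Psi_n$ along a random path terminating at the random point $\hat\theta_n$. This is precisely why the domination condition on $\ddot\Psi_{ij}$ is imposed: it collapses the uniform-convergence statement into two ordinary LLNs, one for $\dot\psi_{\theta_0}$ and one for $g$. The remaining steps---first-order condition, matrix inversion via the continuous mapping theorem, the CLT, and Slutsky's lemma---are standard.
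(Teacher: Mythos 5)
Your proposal is correct and follows essentially the same route as the paper's proof: Taylor-expand the estimating equation $\Psi_n$ about $\theta_0$, use the domination condition on $\ddot\Psi$ to control the derivative terms near $\theta_0$, invert the limiting matrix $\E\dot\psi_{\theta_0}$, and conclude with the CLT for $\sqrt{n}\,\Psi_n(\theta_0)$ and Slutsky. The only difference is technical---you use a first-order integral-form remainder plus a uniform LLN where the paper uses a second-order Lagrange remainder bounded directly by $g$---and if anything your version handles the vector-valued mean-value step more carefully.
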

Above, $f_n=o_P(g_n)$ means $f_n/g_n$ converges to 0 with probability 1. 
\begin{cor}  \label{corr:robustVar} Assuming the conditions specified in Proposition~\ref{prop:robustVar1} we have
\begin{align}
\sqrt{n}(\hat\theta_n-\theta_0) &\tood N(0,(\E_{P(X)}\E_{P(Z)} \dot\psi_{\theta_0})^{-1} (\E_{P(X)}\E_{P(Z)} \psi_{\theta_0}\psi_{\theta_0}^{\top})(\E_{P(X)}\E_{P(Z)} \dot\psi_{\theta_0})^{-1}). \label{eq:normality}
\end{align}
\end{cor}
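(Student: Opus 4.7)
The plan is to derive the corollary directly from the stochastic expansion in Proposition~\ref{prop:robustVar1} by applying a multivariate central limit theorem to the iid sum on its right-hand side, and then using Slutsky's theorem together with the continuous mapping theorem to absorb the $o_P(1)$ remainder and the deterministic matrix factor.

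First, I would observe that since $(X^{(i)}, Z^{(i)})$ are iid across $i$, so are the summands $\psi_{\theta_0}(X^{(i)}, Z^{(i)})$. By the defining equation \eqref{eq:defTheta0} these summands are centered, and by the hypothesis $\E_{P(X)}\E_{P(Z)} \|\psi_{\theta_0}(X,Z)\|^2 < \infty$ they have a finite covariance matrix
\begin{align*}
V \defeq \E_{P(X)}\E_{P(Z)}[\psi_{\theta_0}(X,Z)\psi_{\theta_0}(X,Z)^{\top}].
\end{align*}
The multivariate Lindeberg--L\'evy CLT then yields
\begin{align*}
\frac{1}{\sqrt{n}} \sum_{i=1}^n \psi_{\theta_0}(X^{(i)}, Z^{(i)}) \tood N(0, V).
\end{align*}

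Next, let $A \defeq -(\E_{P(X)}\E_{P(Z)} \dot\psi_{\theta_0})^{-1}$, which is a deterministic matrix, well-defined by the nonsingularity hypothesis. Since $\dot\psi_\theta = \nabla^2 m_\theta$ is a Hessian and hence symmetric, its expectation and the corresponding inverse are symmetric, so $A^{\top} = A$. Multiplication by the fixed matrix $A$ is a continuous linear map, so the continuous mapping theorem gives
\begin{align*}
-(\E \dot\psi_{\theta_0})^{-1}\,\frac{1}{\sqrt{n}}\sum_{i=1}^n \psi_{\theta_0}(X^{(i)}, Z^{(i)}) \;\tood\; N\!\bigl(0,\; AVA\bigr),
\end{align*}
and the two minus signs in $A$ cancel, so $AVA$ collapses to the sandwich matrix $(\E\dot\psi_{\theta_0})^{-1}(\E\psi_{\theta_0}\psi_{\theta_0}^{\top})(\E\dot\psi_{\theta_0})^{-1}$ appearing in \eqref{eq:normality}. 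Combining this with the $o_P(1)$ remainder in the expansion \eqref{eq:asymp} via Slutsky's theorem concludes the argument.

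There is essentially no obstacle remaining at this stage: all the genuinely technical work --- existence of the relevant expectations, nonsingularity of the expected Hessian, and the integrable bound on the second derivatives of $\psi_\theta$ needed to justify the Taylor expansion leading to \eqref{eq:asymp} --- has already been absorbed into the hypotheses of Proposition~\ref{prop:robustVar1}. The only minor bookkeeping point is verifying the symmetry of $(\E \dot\psi_{\theta_0})^{-1}$ so that the sandwich form comes out with a single inverse on each side rather than a transpose on one of them, which follows immediately from the fact that $\dot\psi_{\theta_0}$ is a Hessian matrix.
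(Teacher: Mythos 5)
Your proposal is correct and follows essentially the same route as the paper's own proof: apply the central limit theorem to the centered iid sum $\frac{1}{\sqrt{n}}\sum_i \psi_{\theta_0}(X^{(i)},Z^{(i)})$ and then transfer the limit through the fixed matrix $-(\E_{P(X)}\E_{P(Z)}\dot\psi_{\theta_0})^{-1}$ via Slutsky's theorem. Your extra remark on the symmetry of the expected Hessian, which explains why the sandwich has an untransposed inverse on each side, is a small but welcome clarification the paper leaves implicit.
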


Equation \eqref{eq:asymp2} means that asymptotically, $\hat\theta_n$ behaves as $\theta_0$ plus the average of iid RVs. As mentioned in \citep{Vaart1998} this fact may be used to obtain a convenient expression for the asymptotic influence function, which measures the effect of adding a new observation to an existing large dataset. Neglecting the remainder in \eqref{eq:asymp} we have
\begin{align}
\mathcal{I}(x,z) &\defeq \hat\theta_n(X^{(1)},\ldots,X^{(n-1)},x,Z^{(1)},\ldots,Z^{(n-1)},z)-\hat\theta_{n-1}(X^{(1)},\ldots,X^{(n-1)},Z^{(1)},\ldots,Z^{(n-1)})
\nonumber \\ &\approx
 -(\E_{P(X)}\E_{P(Z)} \dot\psi_{\theta_0})^{-1} \left( \frac{1}{n} \sum_{i=1}^{n-1}\psi_{\theta_0}(X^{(i)},Z^{(i)})+ \frac{1}{n} \psi_{\theta_0}(w,z)-\frac{1}{n-1} \sum_{i=1}^{n-1} \psi_{\theta_0}(X^{(i)},Z^{(i)})\right) \nonumber \\ 
&= -(\E_{P(X)}\E_{P(Z)} \dot\psi_{\theta_0})^{-1}  \frac{1}{n} \psi_{\theta_0}(w,z) +(\E_{P(X)}\E_{P(Z)} \dot\psi_{\theta_0})^{-1} \frac{1}{n(n-1)} \sum_{i=1}^{n-1}\psi_{\theta_0}(X^{(i)},Z^{(i)}) \nonumber \\
&=  - \frac{1}{n} (\E_{P(X)}\E_{P(Z)} \dot\psi_{\theta_0})^{-1}  \psi_{\theta_0}(w,z) + o_P\left(\frac{1}{n}\right). \label{eq:infFun}
\end{align}

Corollary~\ref{corr:robustVar} and Equation~\ref{eq:infFun} measure the statistical behavior of the estimator when the true distribution is outside the model family. In these cases it is possible that a computationally efficient SCL maximizer will result in higher statistical accuracy as well. This ``win-win'' situation of improving in both accuracy and complexity over the MLE is confirmed by our experiments in Section~\ref{sec:experiments}. 

\section{Stochastic Composite Likelihood for Markov Random Fields}
\label{sec:comp}

Markov random fields (MRF) are some of the more popular statistical models for complex high dimensional data. Approaches based on pseudo likelihood and composite likelihood are naturally well-suited in this case due to the cancellation of the normalization term in the probability ratios defining conditional distributions.  More specifically, a MRF with respect to a graph $G=(V,E)$, $V=\{1,\ldots,m\}$
with a clique set $\mathcal{C}$ is given by the following exponential family
model
\begin{align} \label{eq:expModel}
   P_{\theta}(x)&= \exp\left(\sum_{C\in\mathcal{C}} \theta_{C}
   f_C(x_C)-\log Z(\theta)\right),\nonumber\\ & Z(\theta)=\sum_x \exp\left(\sum_{C\in\mathcal{C}} \theta_c f_C(x_C)\right).
\end{align}
The primary bottlenecks in obtaining the maximum likelihood are the
computations $\log Z(\theta)$ and $\nabla \log Z(\theta)$. Their computational
complexity is exponential in the graph's treewidth and for many cyclic graphs,
such as the Ising model or the Boltzmann machine, it is exponential in $|V|=m$.

In contrast, the conditional distributions that form the composite likelihood of \eqref{eq:expModel} are given by (note the cancellation of $Z(\theta)$)
\begin{align} 
   P_{\theta}(x_A|x_{B}) &= \label{eq:condProb} 
   \frac{       \sum\limits_{x_{(A\cup B)^c}'} \exp\left(\sum_{C\in\mathcal{C}} \theta_{C} f_C((x_A,x_B,x_{(A\cup B)^c}')_C) \right)}
   {
       \sum\limits_{x_{(A\cup B)^c}'} \sum\limits_{x_A''} \exp\left(\sum\limits_{C\in\mathcal{C}} \theta_{C} f_C((x_A'',x_B,x_{(A\cup B)^c}')_C)\right)
   }.
\end{align}
whose computation is substantially faster. Specifically, The computation of \eqref{eq:condProb} depends on the size of the sets $A$ and $(A\cup B)^c$ and their intersections with the cliques in $\mathcal{C}$. In general, selecting small $|A_j|$ and $B_j=(A_j)^c$ leads to efficient computation of the composite likelihood and its gradient. For example, in the case of $|A_j|=l, |B_j|=m-l$
with $l\ll m$ we have that $k\leq m!/(l!(m-l)!)$ and the complexity of
computing the $c\ell(\theta)$ function and its gradient may be shown to require time that is at most exponential in $l$ and polynomial in $m$.

\section{Automatic Selection of $\beta$} \label{sec:beta}

As Proposition \ref{prop:asympVar} indicates, the weight vector $\beta$ and
selection probabilities $\lambda$ play an important role in the statistical
accuracy of the estimator through its asymptotic variance. The computational
complexity, on the other hand, is determined by $\lambda$ independently of
$\beta$. Conceptually, we are interested in resolving the accuracy-complexity
tradeoff jointly for both $\beta,\lambda$ before estimating $\theta$ by
maximizing the scl function.  However, since the computational complexity depends only on $\lambda$ we propose the following simplified problem: Select $\lambda$ based on available computational resources, and then given $\lambda$, select the $\beta$ (and $\theta$) that will achieve optimal statistical accuracy. 

Selecting $\beta$ that minimizes the asymptotic variance is somewhat ambiguous
as $\Upsilon\Sigma\Upsilon$ in Proposition~\ref{prop:asympVar} is an $r\times
r$ positive semidefinite matrix. A common solution is to consider the
determinant as a one dimensional measure of the size of the variance matrix\footnote{See \citep{Serfling1980} for a heuristic discussion motivating this measre.}, and minimize
\begin{align}
   J(\beta) &= \log \det (\Upsilon\Sigma\Upsilon)  
    =\log\det\Sigma + 2\log \det \Upsilon.   \label{eq:betaobjective}
\end{align}

A major complication with selecting $\beta$ based on the optimization of \eqref{eq:betaobjective} is that it depends on the true
parameter value $\theta_0$ which is not known at training time. This may be resolved, however, by noting that \eqref{eq:betaobjective} is composed of covariance matrices under $\theta_0$ which may be estimated using empirical covariances over the training set. To facilitate fast computation of the optimal $\beta$ we also propose to 
replace the determinant in \eqref{eq:betaobjective} with the product of the digaonal elements. Such an approximation is motivated by Hadamard's inequality (which states that for symmetric matrices $\det(M)\le\prod_i M_{ii}$) and by Ger\v{s}gorin's circle theorem (see below). This approximation works well in practice as we observe in the experiments section. We also note that the procedure described below involves only simple statisics that may be computed on the fly and does not contribute significant additional computation (nor do they require significant memory). 

More specifically, we denote $K^{(ij)}=\Cov_{\theta_0} (\nabla S_{\theta_0}(A_i,B_i), \nabla
S_{\theta_0}(A_j,B_j))$ with entries $K^{(ij)}_{st}$, and approximate the $\log\det$ terms in \eqref{eq:betaobjective} using 
\begin{align}
\log\det \Upsilon&=-\log\det\sum_{j=1}^k \beta_j\lambda_j K^{(jj)} 
	\approx - 	\sum_{l=1}^r\log\sum_{j=1}^k \beta_j\lambda_j K^{(jj)}_{ll}\label{eq:approxUpsilon}\\
\log\det\Sigma &= \log\det\Var_{\theta_0}\left(\sum_{j=1}^k\beta_j\lambda_j \nabla S_{\theta_0}(A_j,B_j)\right) 
	=\log\det\sum_{i=1}^k\sum_{j=1}^k \beta_i\lambda_i \beta_j\lambda_j K^{(ij)}\nonumber\\
	&\approx	\sum_{l=1}^r\log\sum_{i=1}^k\sum_{j=1}^k \beta_i\lambda_i \beta_j\lambda_j K^{(ij)}_{ll}.\label{eq:approxSigma}
\end{align}

We denote (assuming $A$ is a $n \times n$ matrix) for $i \in \{1, \ldots, n\}$,  $R_i(A) = \sum_{j\neq{i}}
\left|A_{ij}\right|$ and let $D(A_{ii}, R_i(A))$ ($D_i$
where unambiguous) be the closed disc centered at $A_{ii}$ with radius
$R_i(A)$. Such a disc is called a Ger\v{s}gorin disc. The result below states that for matrices that are close to diagonal, the eigenvalues are close to the diagonal elements making our approximation accurate.
\begin{theorem}[Ger\v{s}gorin's circle theorem e.g., \citep{Horn1990}]
Every eigenvalue of $A$ lies within at least one of the Ger\v{s}gorin discs
$D(A_{ii}, R_i(A)).$ Furthermore, if the union of $k$ discs is disjoint from the
union of the remaining $n-k$ discs, then the former union contains exactly $k$ and
the latter $n-k$ eigenvalues of $A.$ \label{thm:gersgorin}
\end{theorem}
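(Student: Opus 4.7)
The plan is to prove the two assertions in turn, using an eigenvector-chasing argument for the containment statement and a continuity/deformation argument for the counting statement.

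First I would prove the containment claim. Let $\lambda$ be any eigenvalue of $A$ with a (nonzero) eigenvector $x$. Pick an index $i$ at which $|x_i|$ attains its maximum, so $|x_i|>0$ and $|x_j|\le|x_i|$ for every $j$. Writing out the $i$-th row of $Ax=\lambda x$ and isolating the diagonal term gives
\begin{equation*}
(\lambda-A_{ii})\,x_i \;=\; \sum_{j\neq i} A_{ij}\,x_j.
\end{equation*}
Taking absolute values and using the triangle inequality together with $|x_j|\le|x_i|$, I would conclude
\begin{equation*}
|\lambda-A_{ii}|\cdot|x_i| \;\le\; \sum_{j\neq i}|A_{ij}|\cdot|x_j| \;\le\; |x_i|\cdot R_i(A),
\end{equation*}
and dividing through by $|x_i|>0$ yields $\lambda\in D(A_{ii},R_i(A))$. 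This proves every eigenvalue lies in at least one Ger\v{s}gorin disc.

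For the counting statement I would use a homotopy argument. Define $A(t)=D+t(A-D)$ for $t\in[0,1]$, where $D$ is the diagonal part of $A$. At $t=0$ the matrix is diagonal, so its eigenvalues are exactly the diagonal entries $A_{ii}$, each trivially located inside its own (radius zero) Ger\v{s}gorin disc. For each $t\in[0,1]$, the Ger\v{s}gorin discs of $A(t)$ are $D(A_{ii},tR_i(A))$, which are nested inside the discs of $A=A(1)$; in particular, if a disjoint union of $k$ of the original discs is separated from the remaining $n-k$, the same separation holds for every $A(t)$, $t\in[0,1]$. Let $U$ denote the union of the $k$ selected discs and $V$ the union of the other $n-k$; these two closed sets are disjoint and by the first part of the theorem every eigenvalue of $A(t)$ lies in $U\cup V$.

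Finally, the eigenvalues of $A(t)$, counted with multiplicity, depend continuously on $t$ (the roots of the characteristic polynomial $\det(zI-A(t))$ vary continuously with its coefficients). Hence the number of eigenvalues lying in the open-and-closed-in-$U\cup V$ component $U$ is a continuous integer-valued function of $t$, so it is constant on $[0,1]$. At $t=0$ it equals exactly $k$, since $k$ of the diagonal entries $A_{ii}$ sit in $U$ by the hypothesis on the discs; therefore it equals $k$ at $t=1$ as well, and the remaining $n-k$ eigenvalues lie in $V$. The main obstacle is the continuity-of-eigenvalues step, which requires a careful justification (e.g., via Rouch\'e's theorem or the continuity of roots of monic polynomials in their coefficients) to ensure that no eigenvalue can ``jump'' between the disjoint components $U$ and $V$ as $t$ varies.
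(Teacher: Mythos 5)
Your proof is correct: the maximum-modulus-component argument for the containment claim and the homotopy $A(t)=D+t(A-D)$ with continuity of eigenvalues for the counting claim are both sound, and you rightly flag the continuity-of-roots step as the one needing careful justification. Note, however, that the paper does not prove this theorem at all --- it quotes Ger\v{s}gorin's circle theorem as a classical result with a citation to Horn and Johnson --- so there is no in-paper argument to compare against; what you have written is the standard textbook proof of the cited result.
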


The following algorithm solves for $\theta,\beta$ jointly using alternating optimization. The second optimization problem with respect to $\beta$ is done using the approximation above and may be computed without much additional computation. In practice we found that such an approach lead to a selection of $\beta$ that is close to the optimal $\beta$ (see Sec.~\ref{sec:chunking} and Figures~\ref{fig:chunk_boltzchain_heuristic},
\ref{fig:chunk_crf_heuristic} for results).  
\begin{algorithm}[H]
\begin{algorithmic}[1]
	\REQUIRE $X$, $\beta_0$, and $\gamma$
	\STATE $i \leftarrow 1$
	\STATE $\beta \leftarrow \beta_0$
	\WHILE{$i<\textrm{MAXITS}$}
		\STATE $\theta \leftarrow \argmin sc\ell(X,\lambda,\beta)$\label{lin:argminscl}
		\IF{converged}
			\RETURN $\theta$
		\ELSE
			\STATE $\beta\leftarrow\argmin \mathcal{J}(X,\lambda,\theta,\gamma)$\label{lin:argminbeta}
			\STATE $i \leftarrow i + 1$
		\ENDIF
	\ENDWHILE
	\RETURN\FALSE
\end{algorithmic}
\caption{Calculate $\hat\theta^{msl}$}
\end{algorithm}

\section{Experiments} \label{sec:experiments}

We demonstrate the asymptotic properties of $\hat\theta_n^{\text{msl}}$ and explore the complexity-accuracy tradeoff for three different models-Boltzmann machine, linear Boltzmann MRF and conditional random fields. In terms of datasets, we consider synthetic data as well as datasets from sentiment prediction and text chunking domains.

\subsection{Toy Example:  Boltzmann Machines}\label{sec:boltzmann}

We illustrate the improvement in asymptotic variance of the mscle associated
with adding higher order likelihood components with increasing probabilities in
context of the Boltzmann machine 
\begin{align} \label{eq:BoltzmannMachine}
p_{\theta}(x)=\exp\left(\sum_{i<j}\theta_{ij}x_i
x_j-\log\psi(\theta)\right),\quad x \in\{0,1\}^m.
\end{align} 
To be able to accurately compute the asymptotic variance we use  $m=5$ with $\theta$ being a ${5 \choose 2}$
dimensional vector with half the components $+1$ and half $-1$. 
Since the asymptotic variance of  $\hat\theta_n^{\text{msl}}$ is a matrix we summarize
its size using either its trace or determinant.

Figure~\ref{fig:boltzmann} displays the asymptotic variance, relative to the
minimal variance of the mle, for the cases of full likelihood (FL), pseudo
likelihood ($|A_j|=1$) $\text{PL}_1$, stochastic combination of pseudo
likelihood and 2nd order pseudo likelihood ($|A_j|=2$) components $\alpha
\text{PL}_2 + (1-\alpha)\text{PL}_1$, stochastic combination of 2nd order
pseudo likelihood and 3rd order pseudo likelihood ($|A_j|=3$) components
$\alpha \text{PL}_3 + (1-\alpha)\text{PL}_2$, and stochastic combination of 3rd
order pseudo likelihood and 4th order pseudo likelihood ($|A_j|=4$) components
$\alpha \text{PL}_4 + (1-\alpha)\text{PL}_3$. 

The graph demonstrates the computation-accuracy tradeoff as follows: (a) pseudo
likelihood is the fastest but also the least accurate, (b) full likelihood is
the slowest but the most accurate, (c) adding higher order components reduces
the asymptotic variance but also requires more computation, (d) the variance
reduces with the increase in the selection probability $\alpha$ of the higher
order component, and (e) adding 4th order components brings the variance very
close the lower limit and with each successive improvement becoming smaller and
smaller according to a law of diminishing returns.

Figure~\ref{fig:compAccPlot} displays the asymptotic accuracy and complexity for different SCL policies for $m=9$. We see how taking different linear combinations of pseudo likelihood orders spans a continuous spectrum of accuracy-complexity resolutions. The lower part of the diagram is the boundary of the achievable region (the optimal but unachievable place is the bottom left corner). SCL policies that lie to the right and top of that boundary may be improved by selecting a policy below and to the left of it.

\begin{figure}
\begin{center}
{
	\psfrag{x1}{$\alpha$}
	\psfrag{x2}{$\alpha$}
	\psfrag{y1}{\scriptsize $\text{tr}(\Var(\hat\theta^{\text{msl}}))\,\,/\,\,\text{tr}(\Var(\hat\theta^{\text{ml}}))$}
	\psfrag{y2}{\scriptsize $\det(\Var(\hat\theta^{\text{msl}}))\,\,/\,\,\det(\Var(\hat\theta^{\text{ml}}))$}
	\includegraphics[scale=0.6]{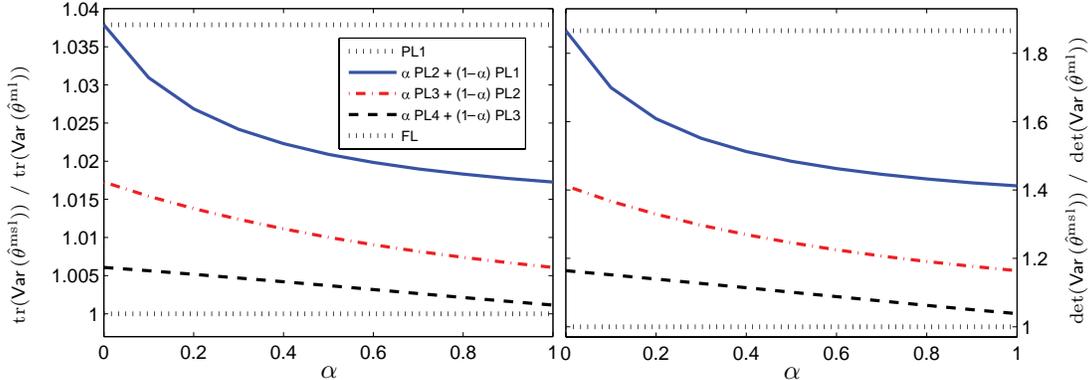}
}
\end{center}
\vspace{-.12in}
\caption{Asymptotic variance matrix, as measured by trace (left) and
determinant (right), as a function of the selection probabilities for different stochastic versions of the scl function.}
\label{fig:boltzmann}
\end{figure}

\begin{figure}\centering
\includegraphics[scale=0.4]{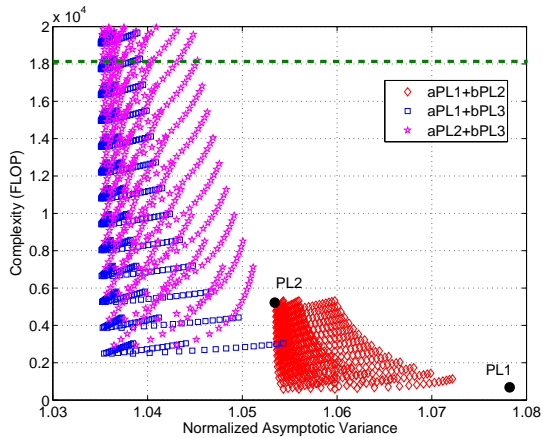}
\caption{Computation-accuracy diagram for three SCL families:$\lambda_1\beta_1\text{PL1}+\lambda_2(1-\beta_1)\text{PL2}$, 
$\lambda_1\beta_1 \text{PL1}+\lambda_2(1-\beta_1)\text{PL3}$, 
$\lambda_1\beta_1 \text{PL2}+\lambda_2(1-\beta_1)\text{PL3}$ (for multiple values of $\lambda_1,\lambda_2,\beta_1$) for the Boltzmann machine with 9 binary nodes. The pure policies PL1 and PL2 are indicated by black circles and the computational complexity of the full likelihood indicated by a dashed line (corresponding normalized asymptotic variance is 1). As the number of nodes increase the computational cost increase dramatically, in particular for the full likelihood and to a lesser extend for the pseudo likelihood policies.
}  
\label{fig:compAccPlot}
\end{figure}

\subsection{Local Sentiment Prediction}\label{sec:localsent}

Our first real world dataset experiment involves local sentiment prediction using a conditional MRF model. The dataset consisted of 249 movie review documents having an average of 30.5 sentences each with an average of 12.3 words from a 12633 word vocabulary.  Each sentence was manually labeled as one of five sentimental designations: very negative, negative, objective, positive, or very positive. As described in \citep{Mao2007b} (where more infomration may be found) we considered the task of predicting the local sentiment flow within these documents using regularized conditional random fields (CRFs) (see Figure~\ref{fig:crfgm} for a graphical diagram of the model in the case of four sentences). 

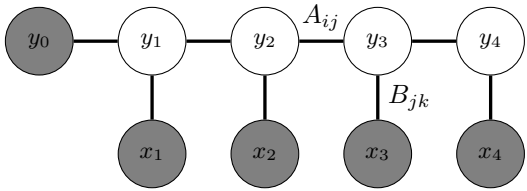
\begin{figure}
\centering
\begin{tikzpicture}[scale=1.5,auto,swap]
	\node[vertexobs] (y0) at (0,0){\small$y_0$};
	\foreach \pre/\cur/\A/\B in {0/1//, 1/2//, 2/3/A_{ij}/B_{jk}, 3/4//} {
		\node[vertex] (y\cur)  at (\cur,0)  {$y_\cur$};
		\path[edge] (y\pre) -- node[above] {$\A$} (y\cur);
		\node[vertexobs] (x\cur)  at (\cur,-1)  {$x_\cur$};
		\path[edge] (y\cur) -- node[right] {$\B$} (x\cur);
	}
\end{tikzpicture}
\caption{ Graphical representation of a four token conditional random
field (CRF).  $A$, $B$ are positive weight matrices and represent
state-to-state transitions and state-to-observation outputs. Shading indicates the variable is conditioned upon while no shading indicates the variable is
generated by the model.}
\label{fig:crfgm}
\end{figure}

Figure \ref{fig:localsent_crf_cont} shows the contour plots of train and test loglikelihood as a function of the scl parameters: weight $\beta$ and selection probability
$\lambda$.  The likelihood components were mixtures of full and pseudo
($|A_j|=1$) likelihood (rows 1,3) and pseudo and 2nd order pseudo $(|A_j|=2$)
likelihood (rows 2,4). $A_j$ identifies a set of labels corresponding to
adjacent sentences over which the probabilistic query is evaluated.  Results
were averaged over 100 cross validation iterations with 50\% train-test split.
We used BFGS quasi-Newton method for maximizing the regularized scl functions. The figure demonstrates how the train loglikelihood increases with
increasing the weight and selection probability of full likelihood in rows 1,3 and of 2nd order pseudo likelihood in rows 2,4. This increase in train loglikelihood is also correlated with an increase in computational complexity as higher order likelihood components require more computation. Note however, that the test set behavior in the third and fourth rows shows an improvement in prediction accuracy associated with decreasing the influence of full likelihood in favor of pseudo likelihood. The fact that this happens for weak regularization $\sigma^2=10$ indicates that lower order pseudo likelihood has a regularization effect which improves prediction accuracy when the model is not regularized enough. We have encountered this phenomenon in other experiments as well and we will discuss it further in the following subsections.

\begin{figure}
\centering
\vspace{-.2in}
\begin{tabular}{cc}
\vspace{-.02in}
   \includegraphics[width=.40\textwidth]{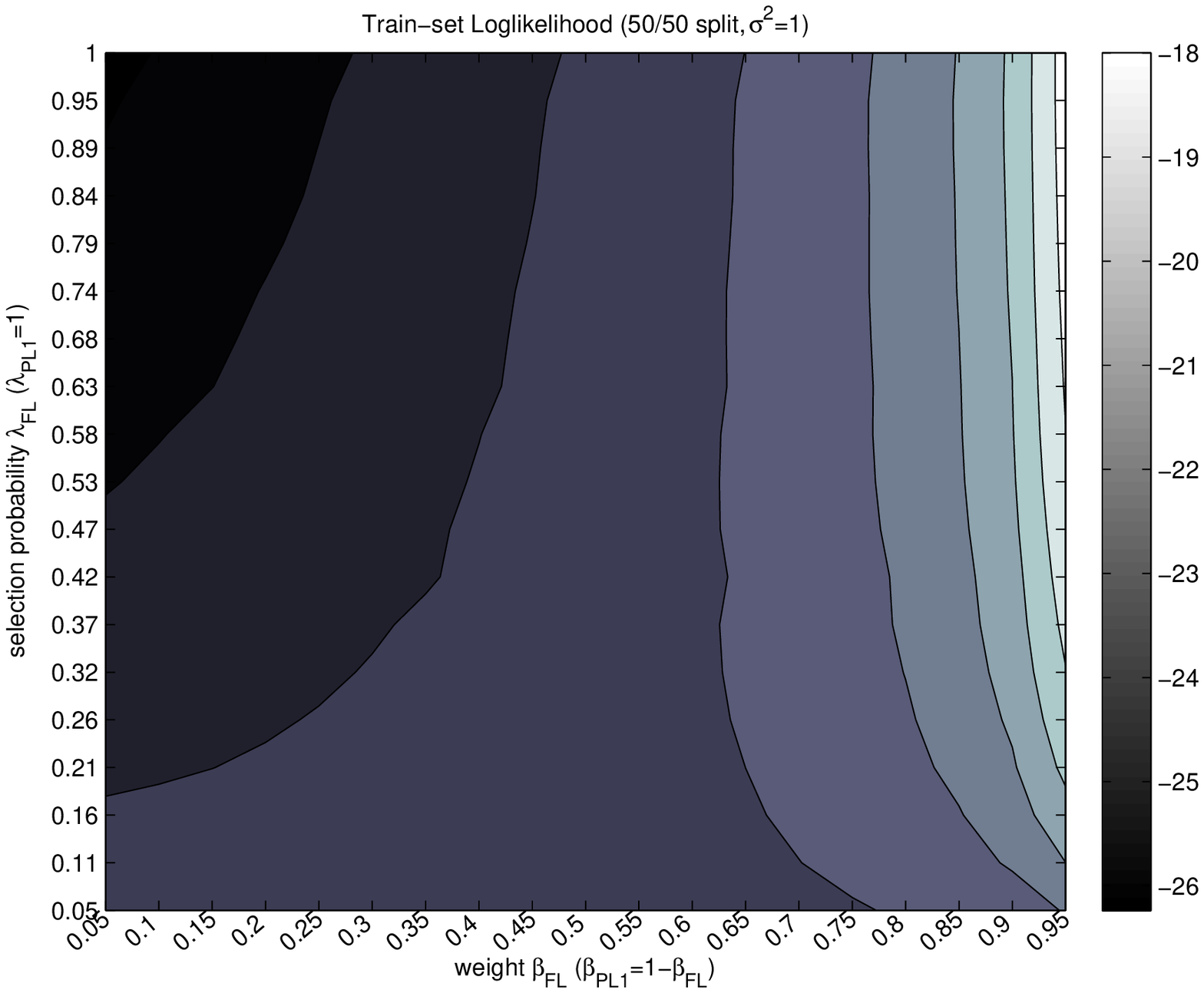} &     \vspace{-.055in}
   \includegraphics[width=.409\textwidth]{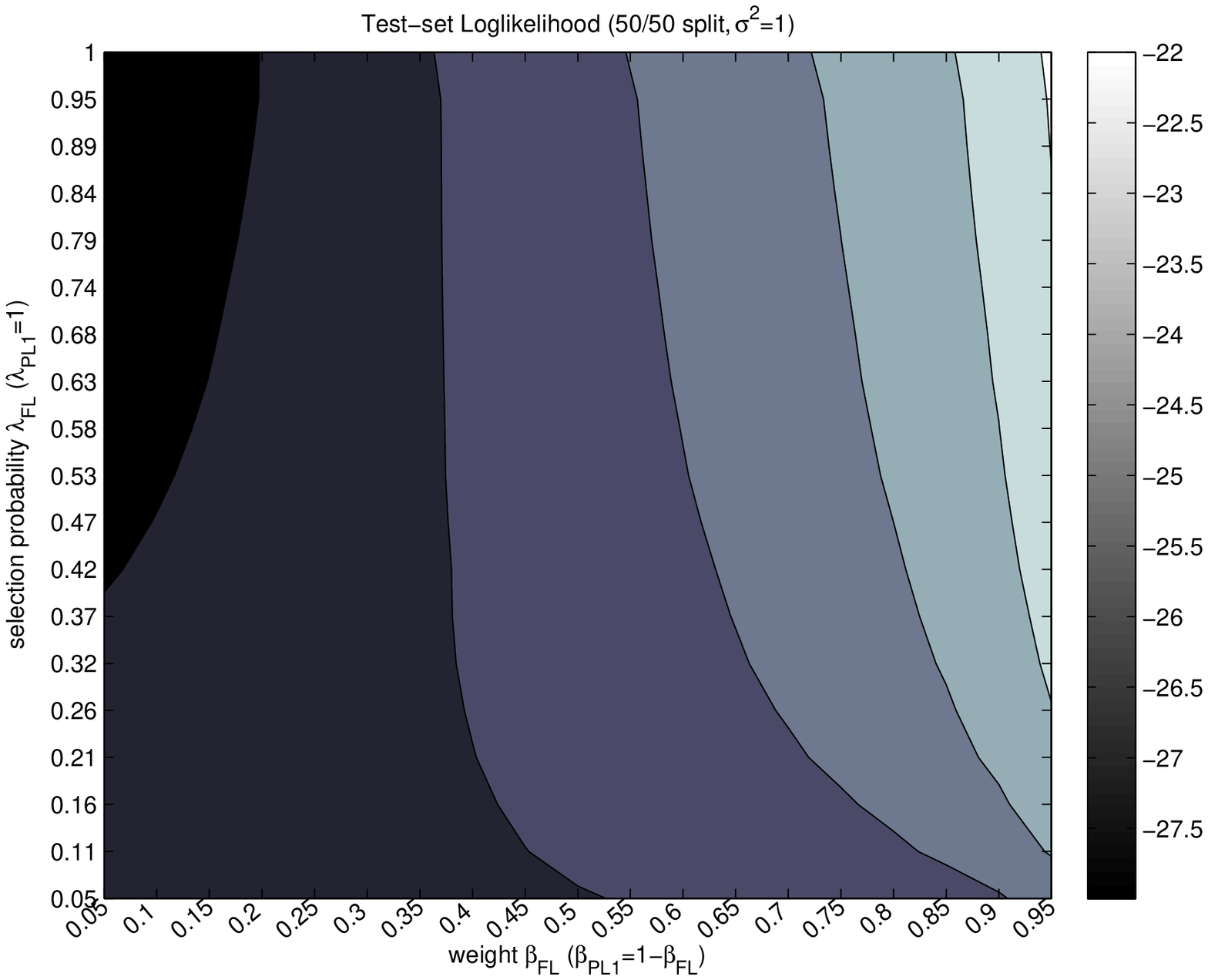}\\	\vspace{-.02in}
   \includegraphics[width=.40\textwidth]{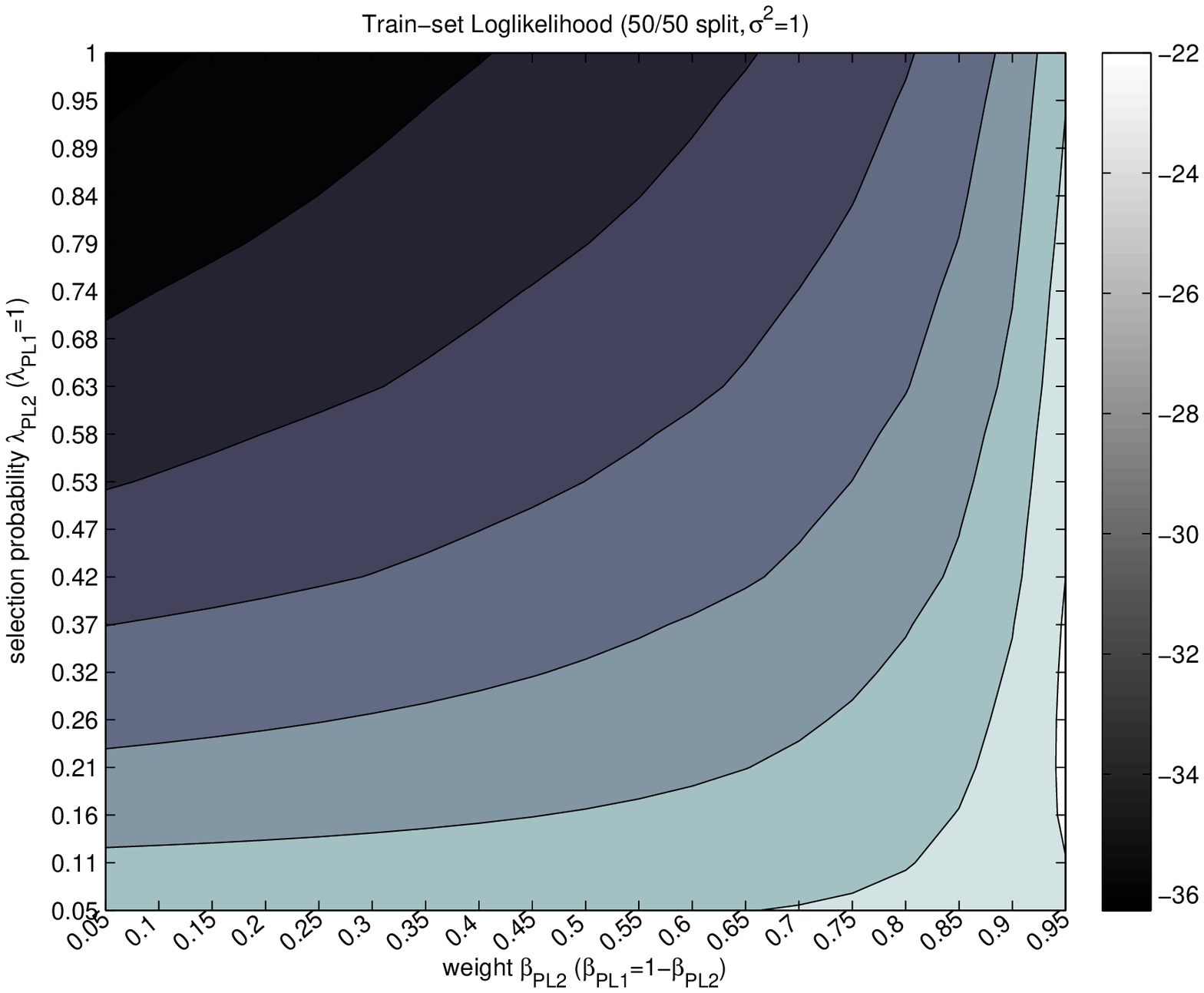} &   \vspace{-.055in}
   \includegraphics[width=.409\textwidth]{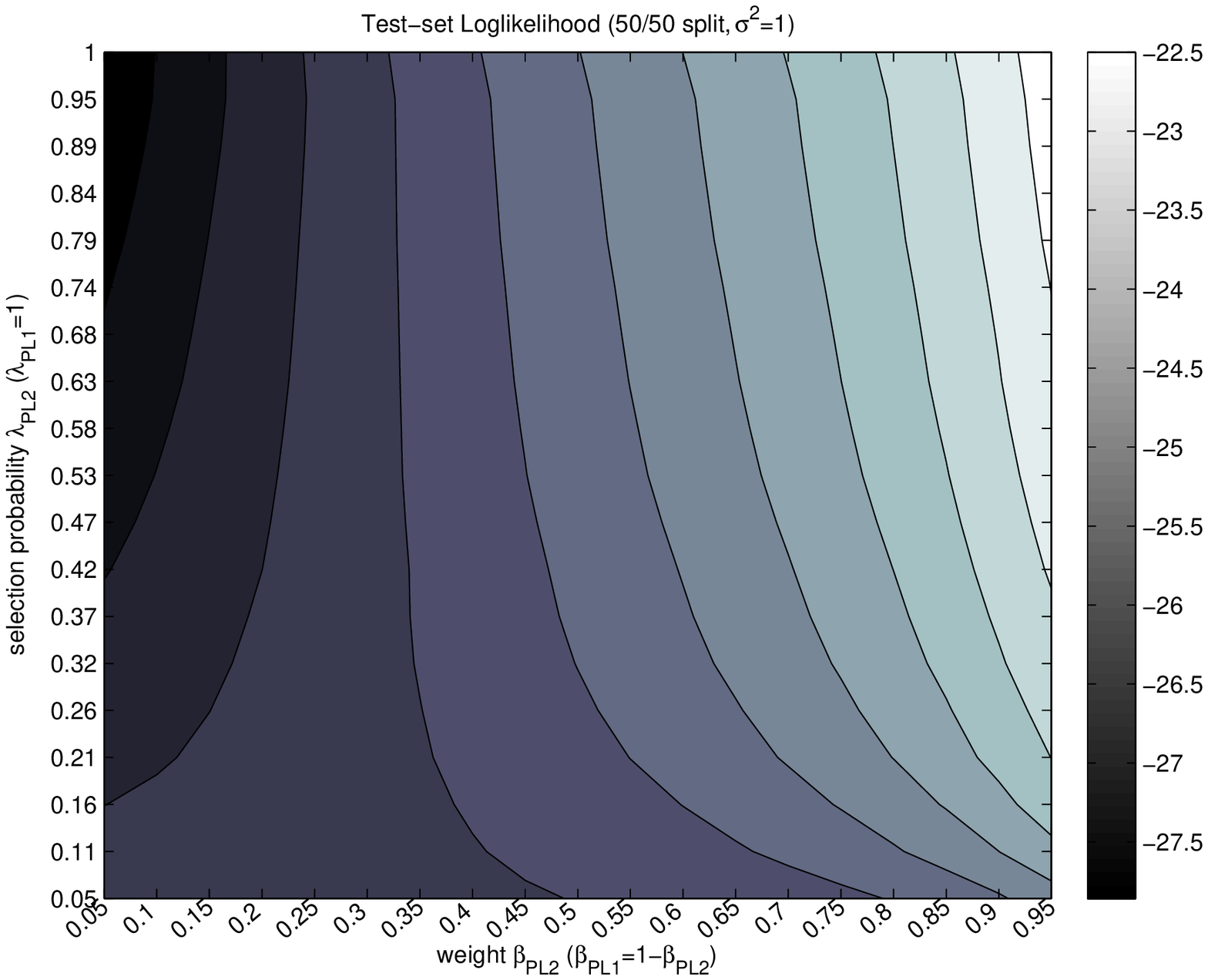}\\	\vspace{-.02in}
   \includegraphics[width=.40\textwidth]{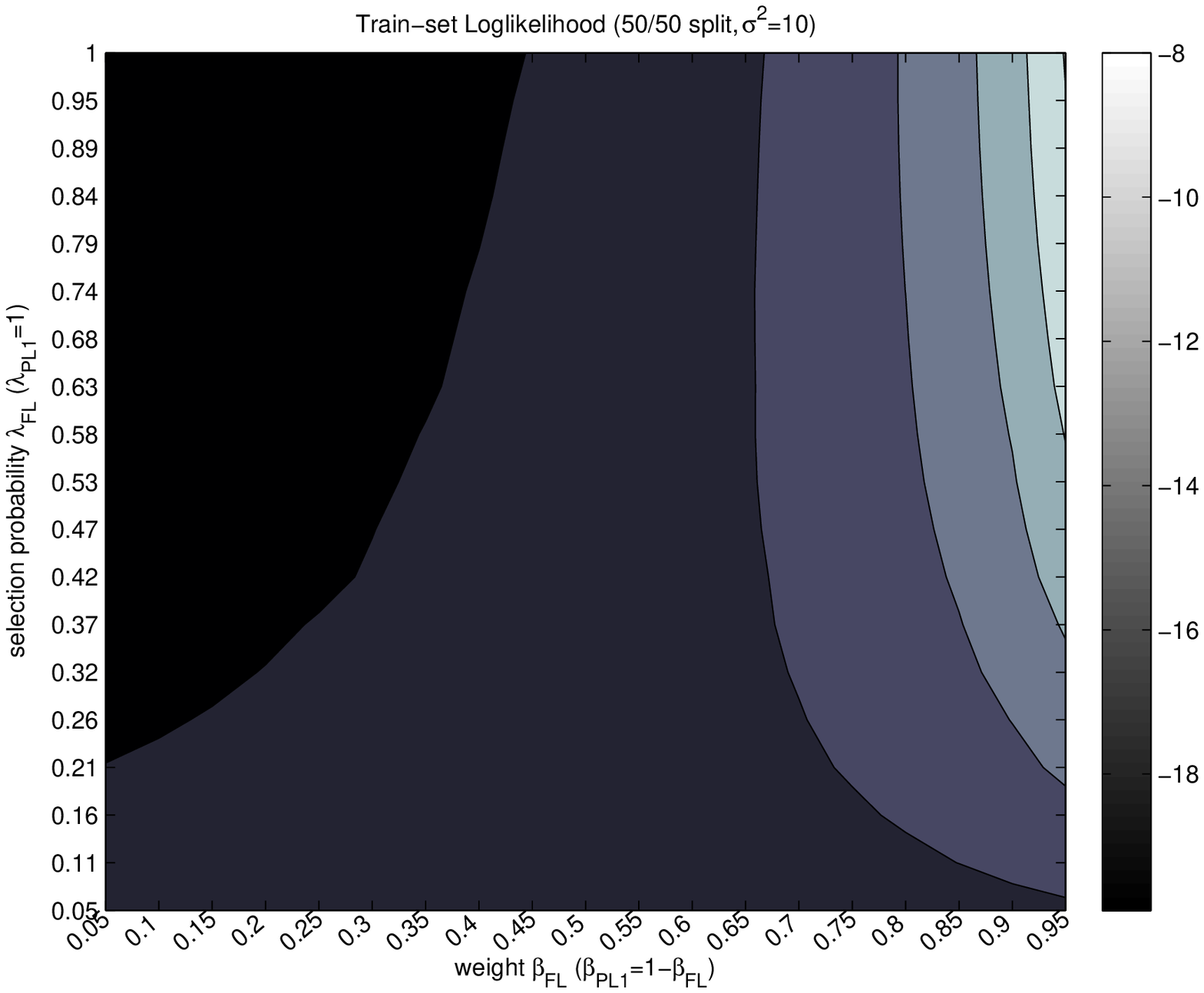} &   \vspace{-.055in}
   \includegraphics[width=.409\textwidth]{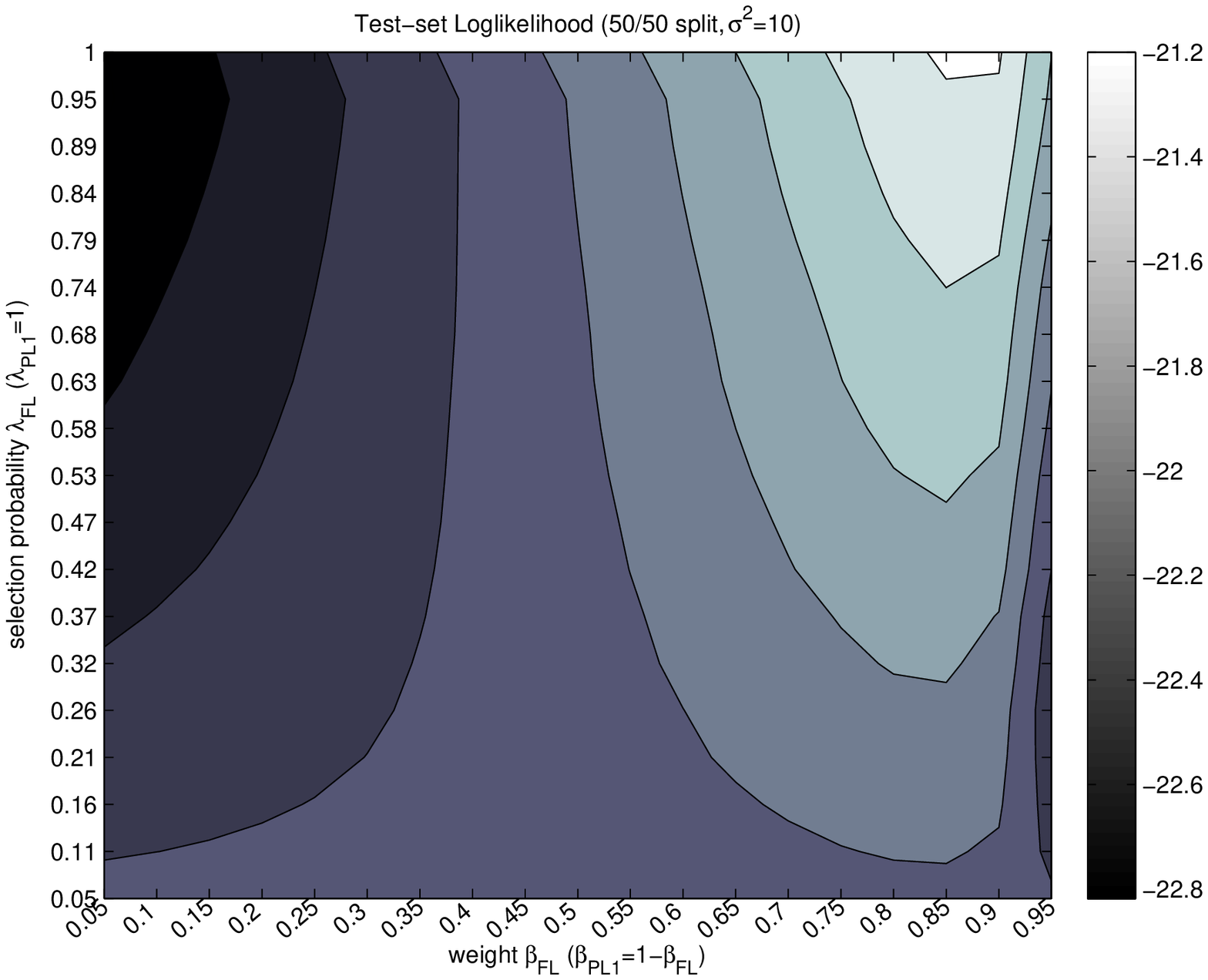}\\	\vspace{-.02in}
   \includegraphics[width=.40\textwidth]{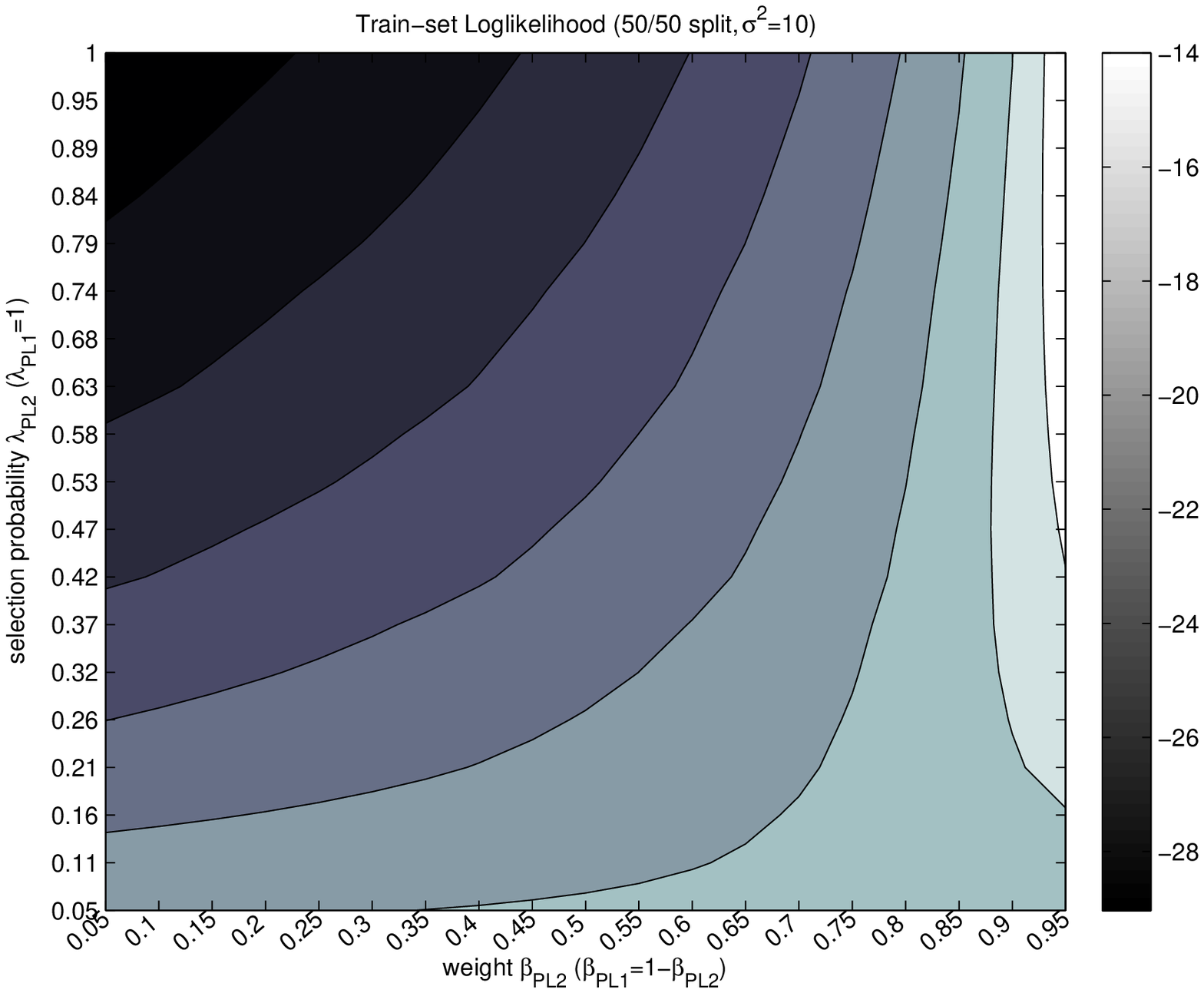} & \vspace{-.055in}
   \includegraphics[width=.409\textwidth]{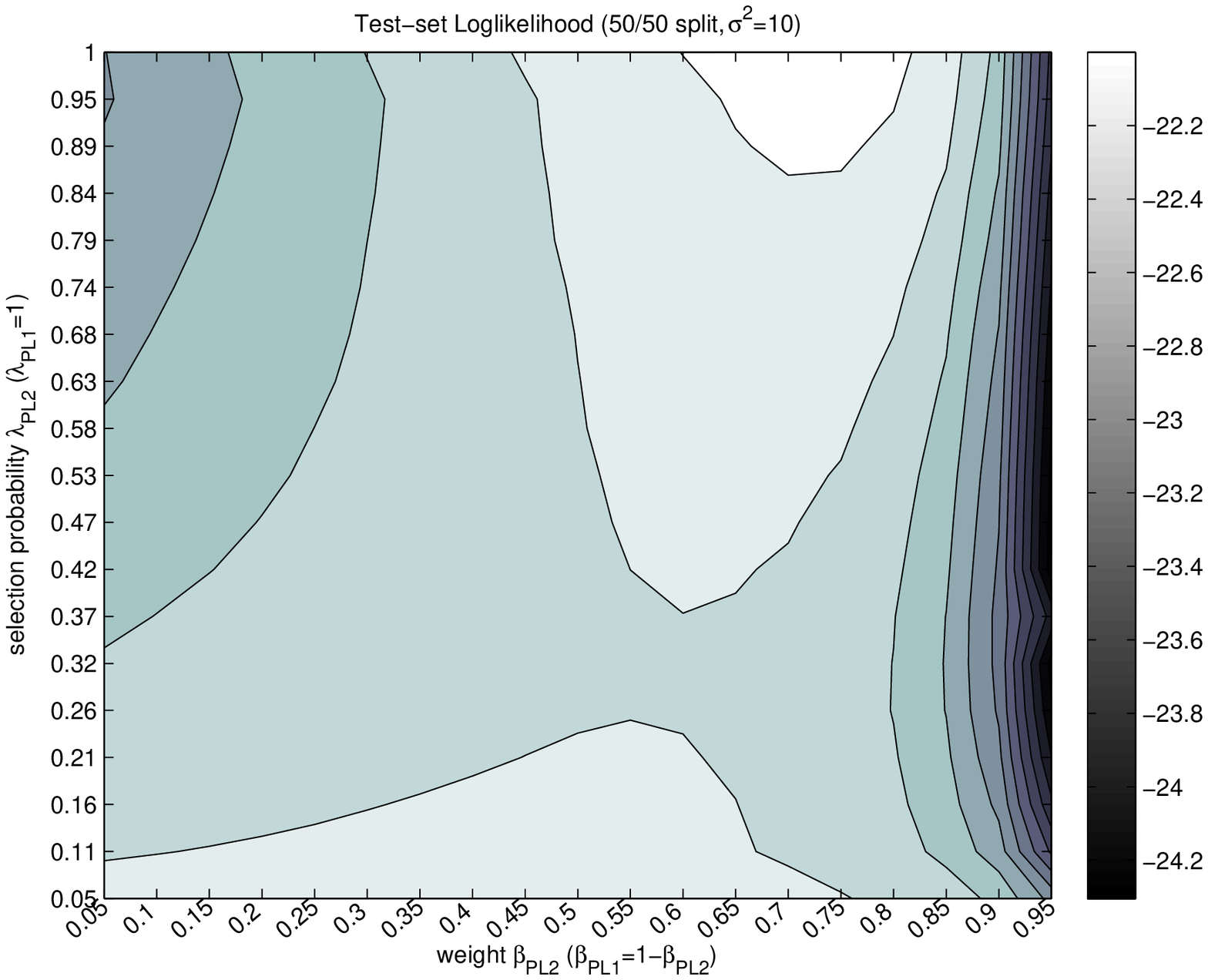}	\vspace{-.02in}
\end{tabular}
\caption{Train (left) and test (right) loglikelihood contours for maximum scl
estimators for the CRF model. $L_2$ regularization parameters are $\sigma^2=1$
(rows 1,2) and $\sigma^2=10$ (rows 3,4). Rows 1,3 are stochastic mixtures
of full (FL) and pseudo (PL1) loglikelihood components while rows 2,4
are PL1 and 2nd order pseudo likelihood (PL2).}
\label{fig:localsent_crf_cont}
\end{figure}

Figure~\ref{fig:tradeoff} displays the complexity and negative loglikelihoods
(left:train, right:test) of different scl estimators, sweeping through
$\lambda$ and $\beta$, as points in a two dimensional space. The shaded area
near the origin is unachievable as no scl estimator can achieve high accuracy
and low computation at the same time. The optimal location in this 2D plane is
the curved boundary of the achievable region with the exact position on that
boundary depending on the required solution of the computation-accuracy
tradeoff. 

\begin{figure}
\centering
\begin{tabular}{cc}
         \includegraphics[width=.5\textwidth]{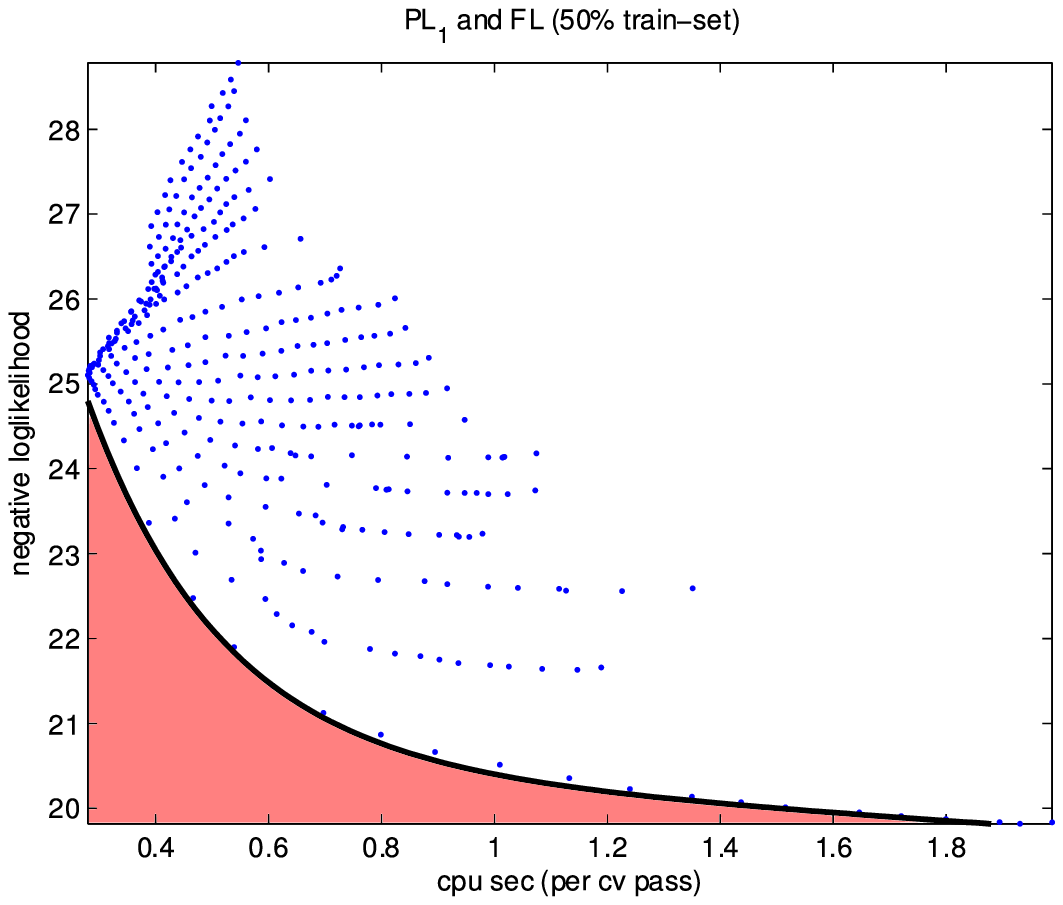} & \hspace{-0.45in}
   \includegraphics[width=.5\textwidth]{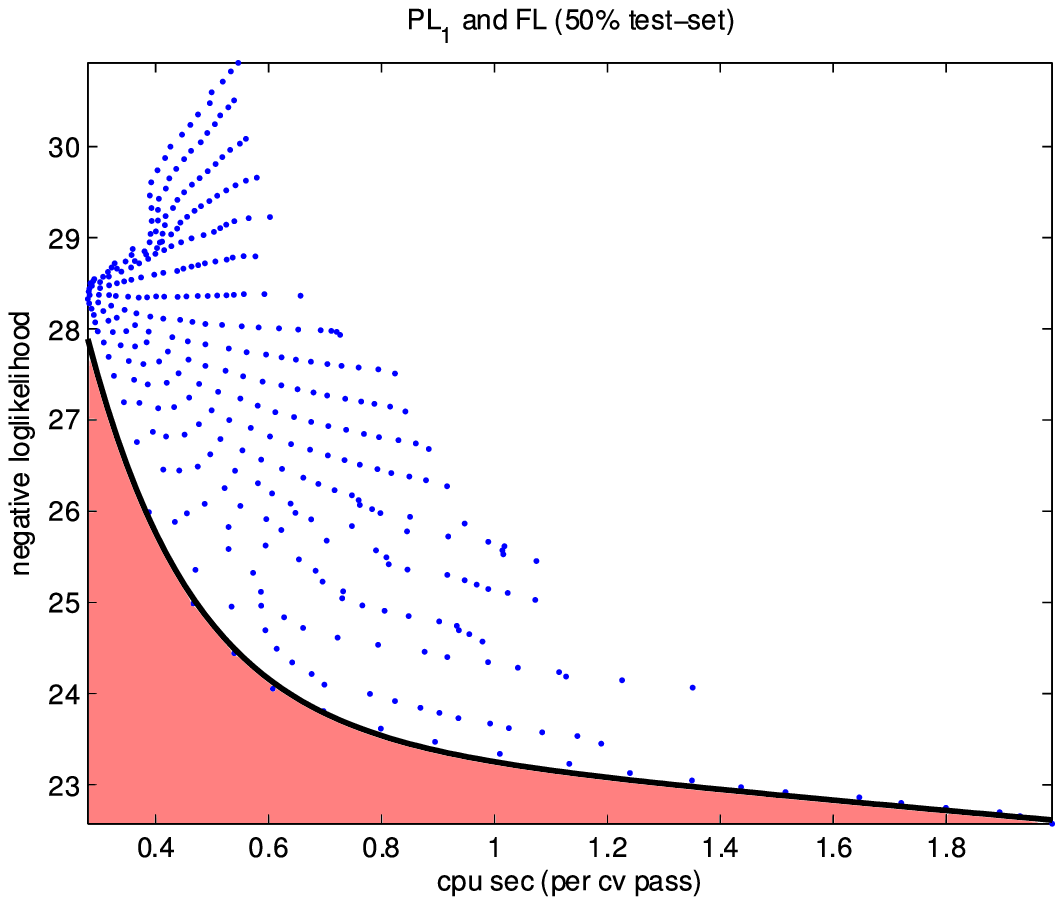}
\end{tabular}
\vspace{-.05in}
\caption{ Scatter plot representing complexity and negative loglikelihood
(left:train, right:test) of scl functions for CRFs with regularization
parameter $\sigma^2=1/2$. The points represent different stochastic
combinations of full and pseudo likelihood components. The shaded region
represents impossible accuracy/complexity demands.}
\label{fig:tradeoff}
\end{figure}

\subsection{Text Chunking}\label{sec:chunking}

This experiment consists of using sequential MRFs to divide sentences into
``text chunks,'' i.e., syntactically correlated sub-sequences, such as noun and
verb phrases. Chunking is an crucial step towards full parsing.  For
example\footnote{Taken from the CoNLL-2000 shared task site,
\url{http://www.cnts.ua.ac.be/conll2000/chunking/}.}, the sentence:
\begin{quote}
\small
He reckons the current account deficit will narrow to only \# 1.8 billion in September.
\end{quote}
could be divided as:
\begin{quote}
\small
[NP \textcolor{red}{He} ]
[VP \textcolor{green}{reckons} ]
[NP \textcolor{red}{the current account deficit} ]
[VP \textcolor{green}{will narrow} ]
[PP \textcolor{blue}{to} ]
[NP \textcolor{red}{only \# 1.8 billion} ]
[PP \textcolor{blue}{in} ]
[NP \textcolor{red}{September} ]. 
\end{quote}
where NP, VP, and PP indicate noun phrase, verb phrase, and prepositional phrase.

\begin{figure}
\centering
\includegraphics[trim=0.0mm 0.0mm 0.0mm 0.0mm,clip,width=.40\textwidth]{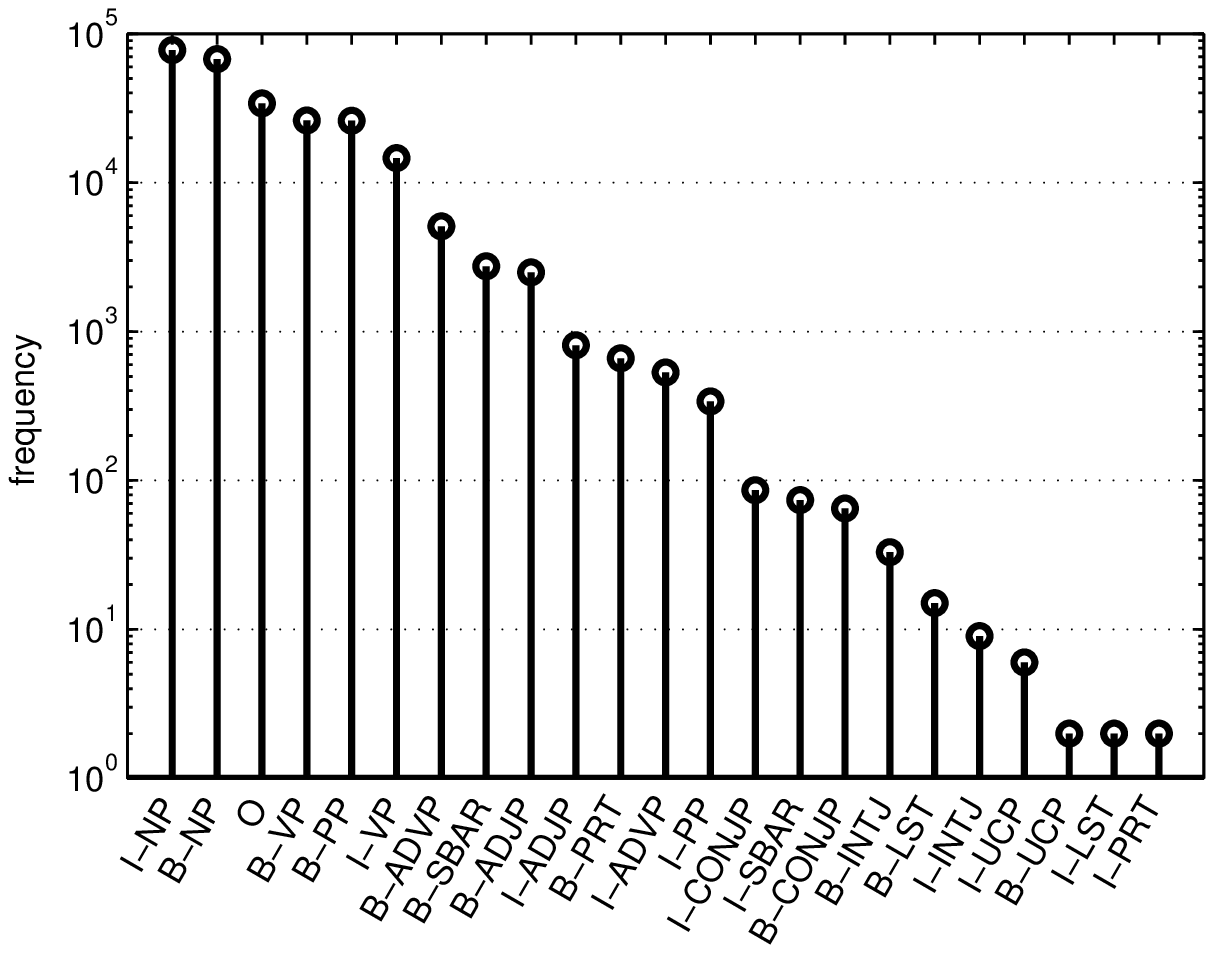}
\vspace{-1.5em}
\caption{ Label counts in CoNLL-2000 dataset.}
\label{fig:conll2000labels}
\end{figure}
We used the publicly available CoNLL-2000 shared task dataset.  It consists of
labeled partitions of a subset of the Wall Street Journal (WSJ) corpus.  Our
training sets consisted of sampling 100 sentences without replacement from the
the CoNLL-2000 training set (211,727 tokens from WSJ sections 15-18).  The test
set was the same as the CoNLL-2000 testing
partition (47,377 tokens from  WSJ section 20).  Each of the possible 21,589
tokens, i.e., words, numbers, punctuation, etc., are tagged by one of 11 chunk
types and an O label indicating the token is not part of any chunk.  Chunk
labels are prepended with flags indicating that the token begins (B-) or is
inside (I-) the phrase. Figure~\ref{fig:conll2000labels} lists all labels and
respective frequencies.  In addition to labeled tokens, the dataset contains a
part-of-speech (POS) column.  These tags were automatically generated by the Brill tagger and must be incorporated into any model/feature set accordingly.

In the following, we explore this task using various scl selection polices on two related, but fundamentally different sequential MRFs: Boltzmann chain MRFs and CRFs.

\subsubsection{Boltzmann Chain MRF}\label{sec:chunk_boltzchain}
Boltzmann chains are a generative MRF that are closely related to hidden Markov models (HMM).  See \citep{MacKay1996} for a discussion on the relationship between Boltzmann chain MRFs and HMMs. We consider SCL components of the form  $\Pr(X_2,Y_2|Y_1,Y_3)$,  $\Pr(X_2,X_3,Y_2,Y_3|Y_1,Y_4)$ which we refer to as first and second order pesudo likelihood (with higher order components generalizing in a straightforward manner).

\begin{figure}
\centering
\begin{tikzpicture}[scale=1.5,auto,swap]
	\node[vertexobs] (y0) at (0,0){\small$y_0$};
	\foreach \pre/\cur/\A/\B in {0/1//, 1/2//, 2/3/A_{ij}/B_{jk}, 3/4//} {
		\node[vertex] (y\cur)  at (\cur,0)  {$y_\cur$};
		\path[edge] (y\pre) -- node[above] {\small $\A$} (y\cur);
		\node[vertex] (x\cur)  at (\cur,-1)  {$x_\cur$};
		\path[edge] (y\cur) -- node[right] {\small $\B$} (x\cur);
	}
\end{tikzpicture}
\caption{ Graphical representation of a four token Boltzmann chain. $A$, $B$
are positive weight matrices and represent preference in particular
state-to-state transitions and state-to-feature emissions. Only the start state
is conditioned upon while all others are generative.}
\label{fig:boltzchaingm}
\end{figure}
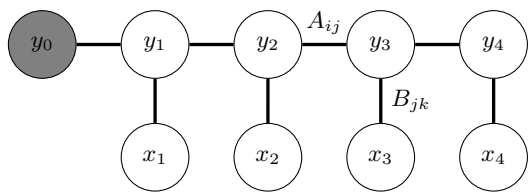

The nature of the Boltzmann chain constrains our feature set to only encode the
particular token present at each position, or time index.  In doing so we avoid
having to model additional dependencies across time steps and dramatically
reduce computational complexity.  Although scl is precisely motivated by high
treewidth graphs, we wish to include the full likelihood for demonstrative
purposes--in practice, this is often not possible.  Although POS tags are
available we do not include them in these features since the dependence they
share on neighboring tokens and other POS tags is unclear.  For these reasons
our time-sliced feature vector, $x_i$, has only a single-entry one and
cardinality matching the size of the vocabulary (21,589 tokens).  

As is common practice, we curtail overfitting through a $L_2$ regularizer,
$\exp\{-\frac{1}{2\sigma^{2}} ||\theta||^2_2\}$, which is is strong when
$\sigma^2$ is small and weak when $\sigma^2$ is large.  We consider $\sigma^2$
a hyper-parameter and select it through cross-validation, unless noted
otherwise.  More often though, we show results for several representative
$\sigma^2$ to demonstrate the roles of $\lambda$ and $\beta$ in
$\hat{\theta}_n^{msl}$.

Figures \ref{fig:chunk_boltzchain_pl1fl_cont} and
\ref{fig:chunk_boltzchain_pl1fl_beta} depict train and test negative
log-likelihood, i.e., perplexity, for the scl estimator
$\hat{\theta}_{100}^{msl}$ with a pseudo/full likelihood selection policy
(PL1/FL).  As is our convention, weight $\beta$ and selection probability
$\lambda$ correspond to the higher order component, in this case full
likelihood.  The lower order pseudo likelihood component is always selected and
has weight $1-\beta$. As expected the test set perplexity dominates the
train-set perplexity.  As was the situation in Sec.~\ref{sec:localsent}, we
note that the lower order component serves to regularize the full-likelihood,
as evident by the abnormally large $\sigma^2$.

We next demonstrate the effect of using a 1st order/2nd order pseudo likelihood
selection policy (PL1/PL2).  Recall, our notion of pseudo likelihood never
entails conditioning on $x$, although in principle it could. Figures
\ref{fig:chunk_boltzchain_pl1pl2_cont} and
\ref{fig:chunk_boltzchain_pl1pl2_beta} show how the policy responds to varying
both $\lambda$ and $\beta$.  Figure \ref{fig:chunk_boltzchain_complexity}
depicts the empirical tradeoff between accuracy and complexity.
Figure~\ref{fig:chunk_boltzchain_heuristic} highlights the effectiveness of the
$\beta$ heuristic. See captions for additional comments.

\begin{figure}
\centering
\begin{tabular}{cc}
\includegraphics[trim=0mm 0mm 0mm 0mm,clip,totalheight=.25\textheight]{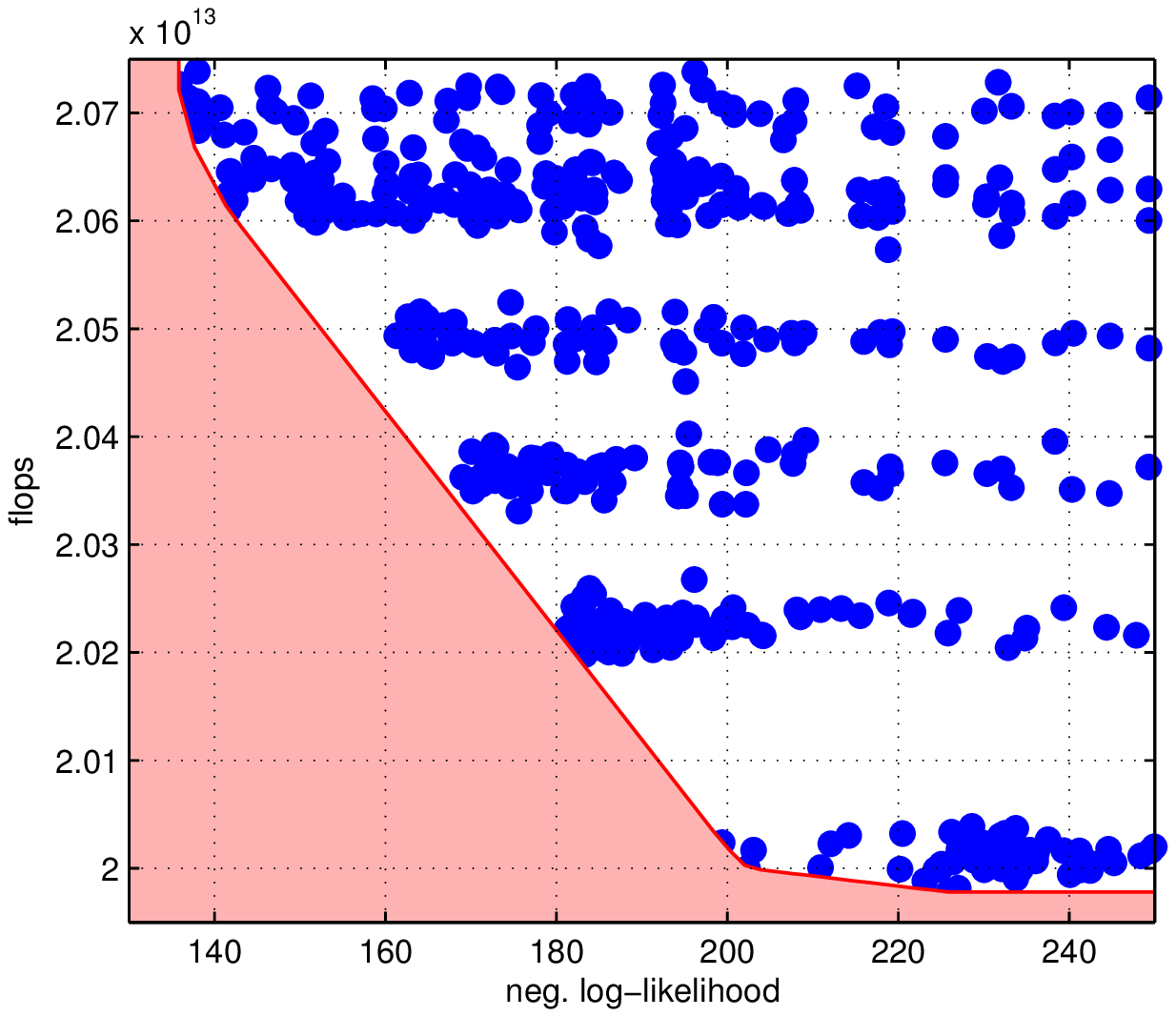}  &  \includegraphics[trim=0mm 0mm 0mm 0mm,clip,totalheight=.25\textheight]{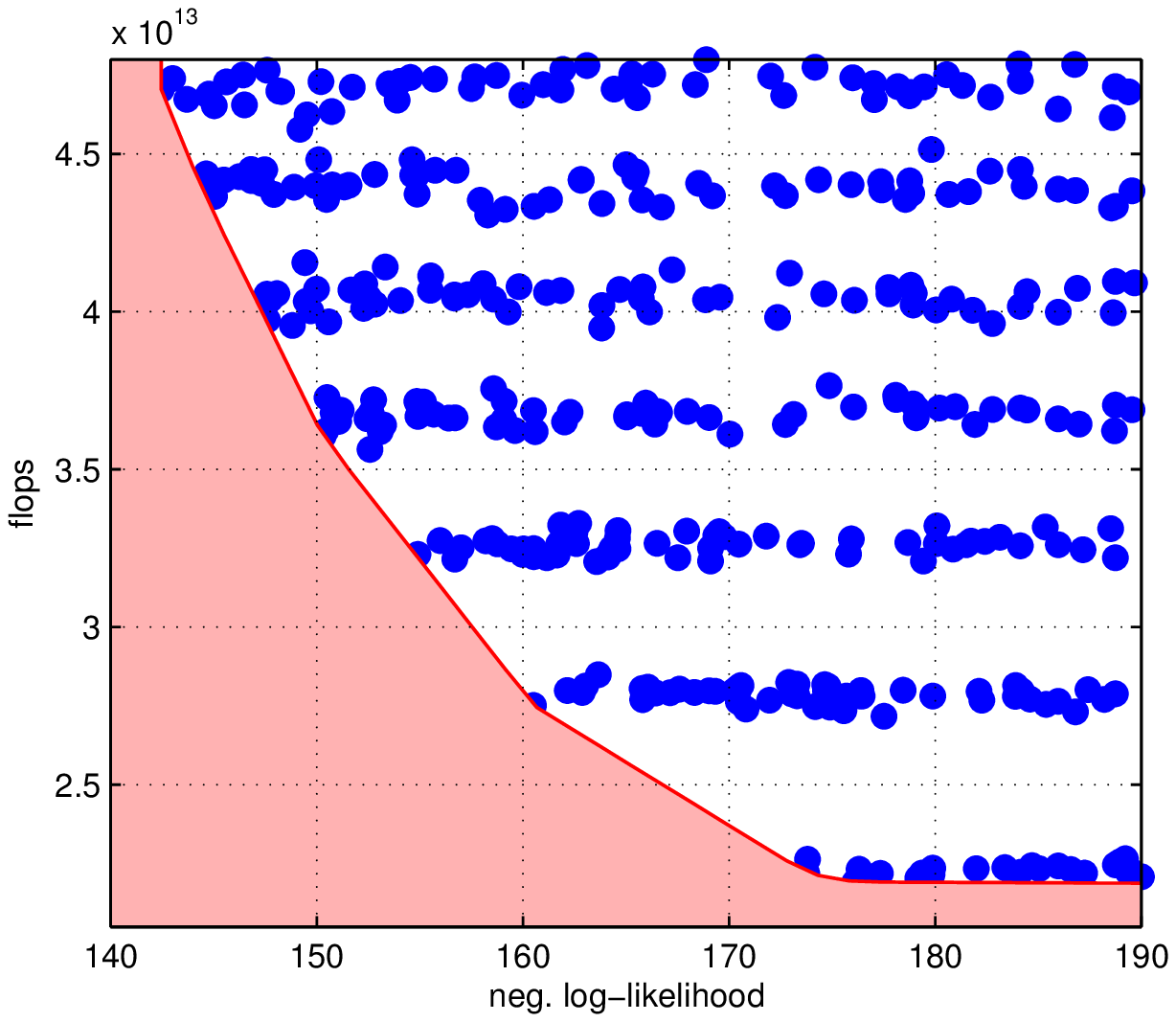} \end{tabular}
\caption{
	Accuracy and complexity tradeoff for the Boltzmann chain MRF with PL1/FL
	(left) and PL1/PL2 (right) selection policies.  Each point represents the negative loglikelihood (perplexity) and the 
	number of flops required to evaluate the composite likelihood and its
	gradient under a particular instantiation of the selection policy.  The
	shaded region represents empirically unobtainable combinations of
	computational complexity and accuracy.  
}\label{fig:chunk_boltzchain_complexity}
\end{figure}

\begin{figure}[htb!]
\vspace{-3.00em}
\hspace{-0.75cm}
\begin{tabular}{cccc}
\includegraphics[trim= 0.0mm 10.2mm 0mm 9.5mm,clip,totalheight=.157\textheight]{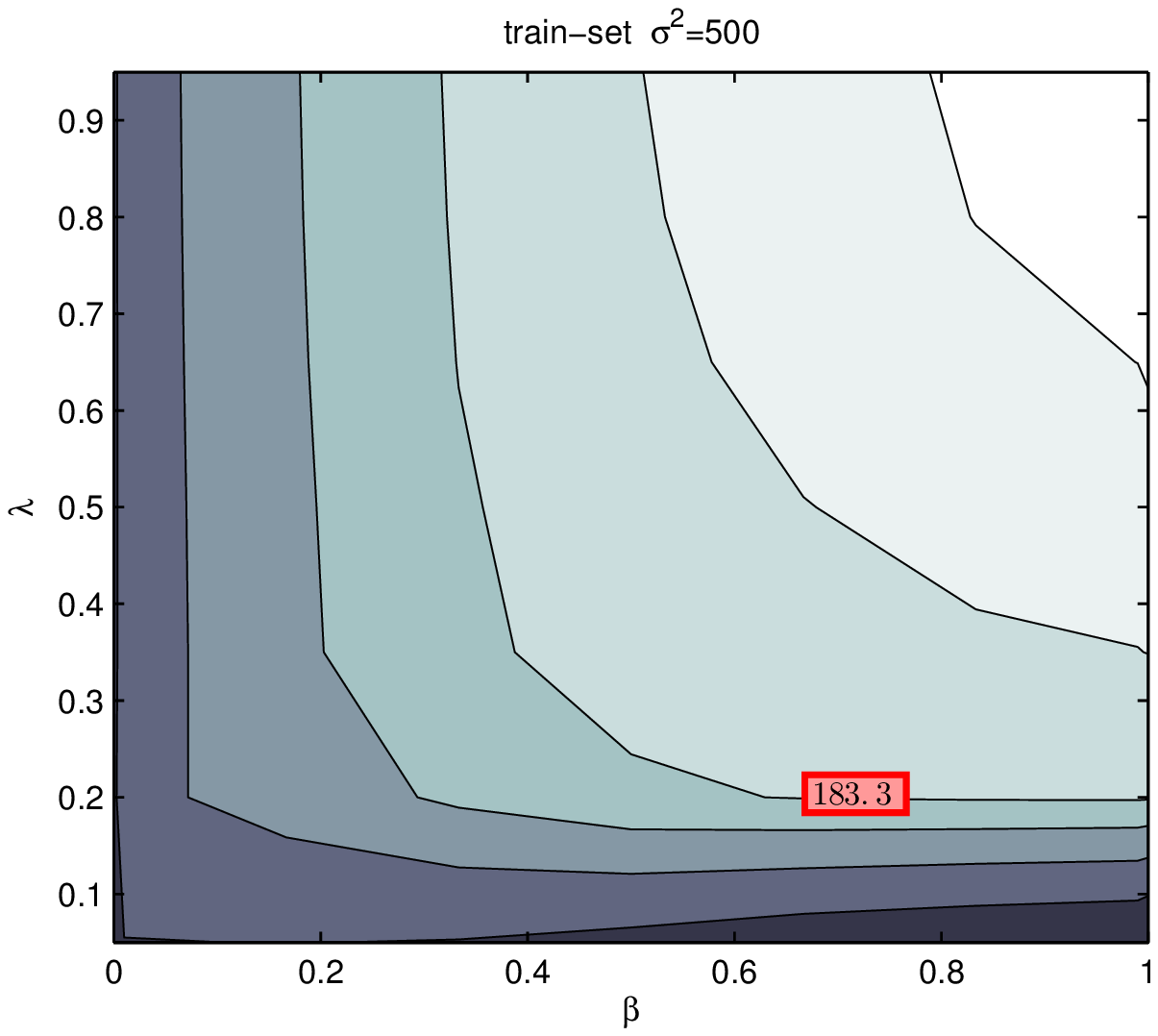}  &  \hspace{-5.5mm}
\includegraphics[trim=12.0mm 10.2mm 0mm 9.5mm,clip,totalheight=.157\textheight]{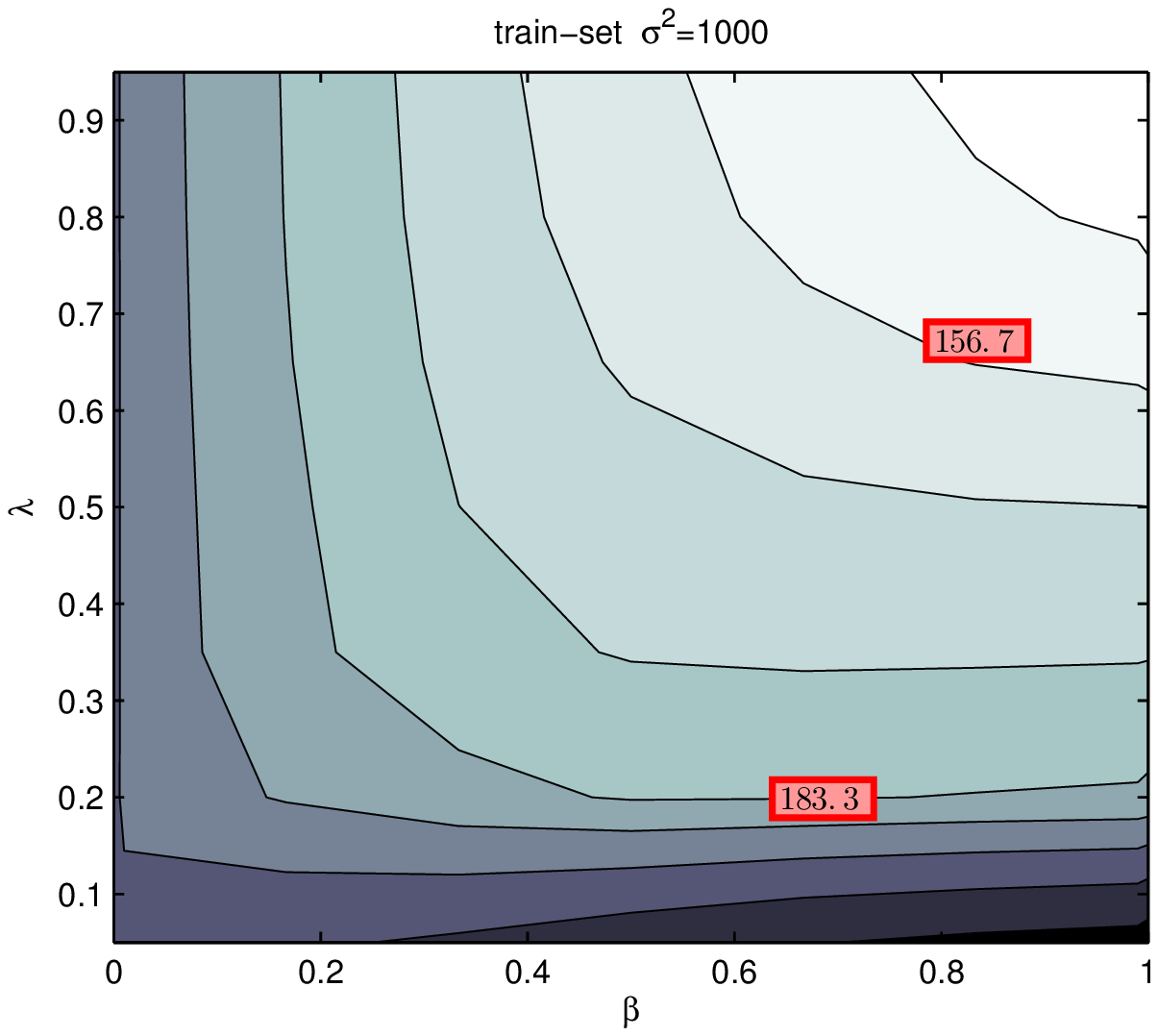} &  \hspace{-5.5mm}
\includegraphics[trim=12.0mm 10.2mm 0mm 9.5mm,clip,totalheight=.157\textheight]{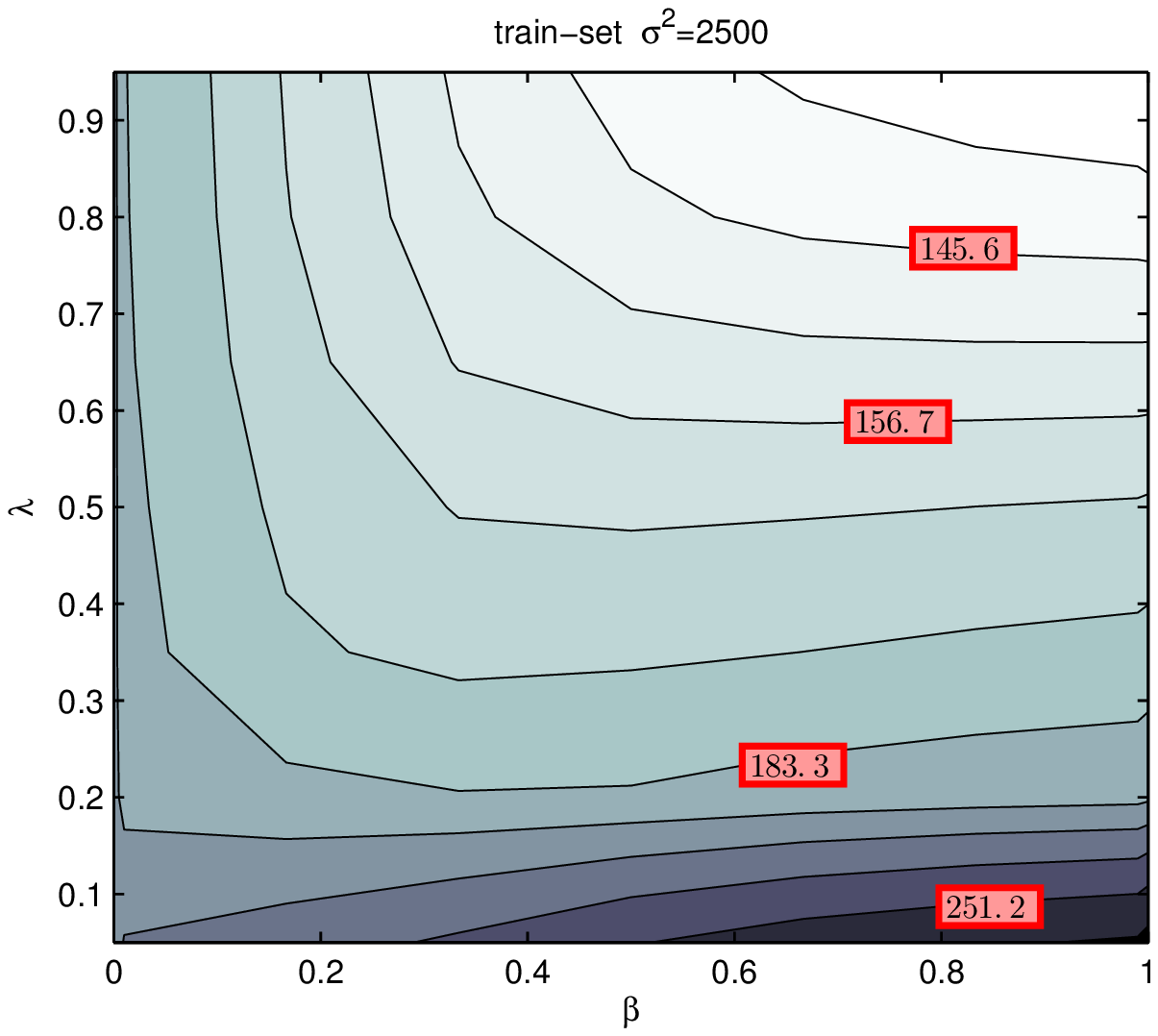} &  \hspace{-5.5mm}
\includegraphics[trim=12.0mm 10.2mm 0mm 9.5mm,clip,totalheight=.157\textheight]{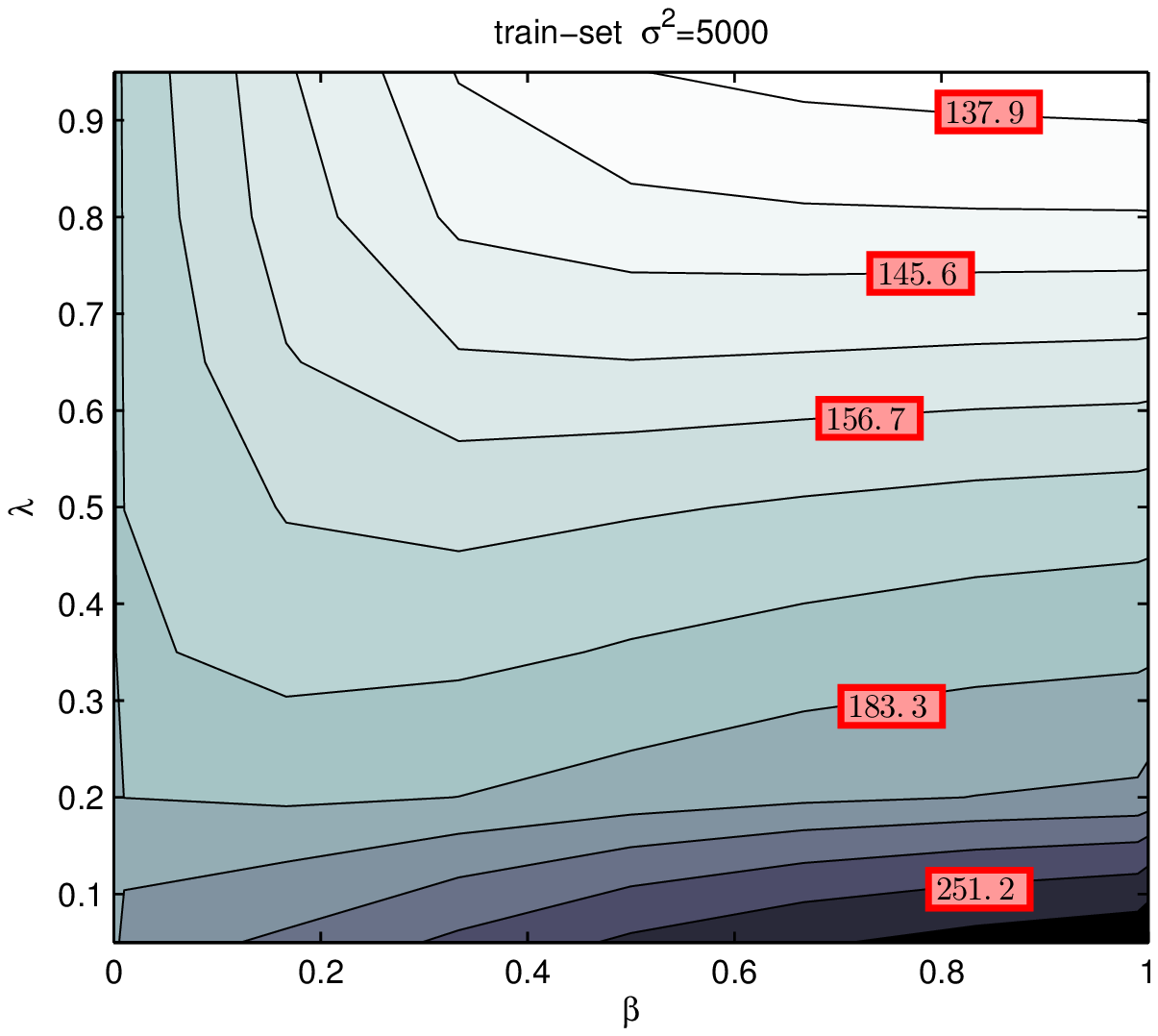} \\
\includegraphics[trim= 0.0mm 0mm 0mm 9.5mm,clip,totalheight=.175\textheight]{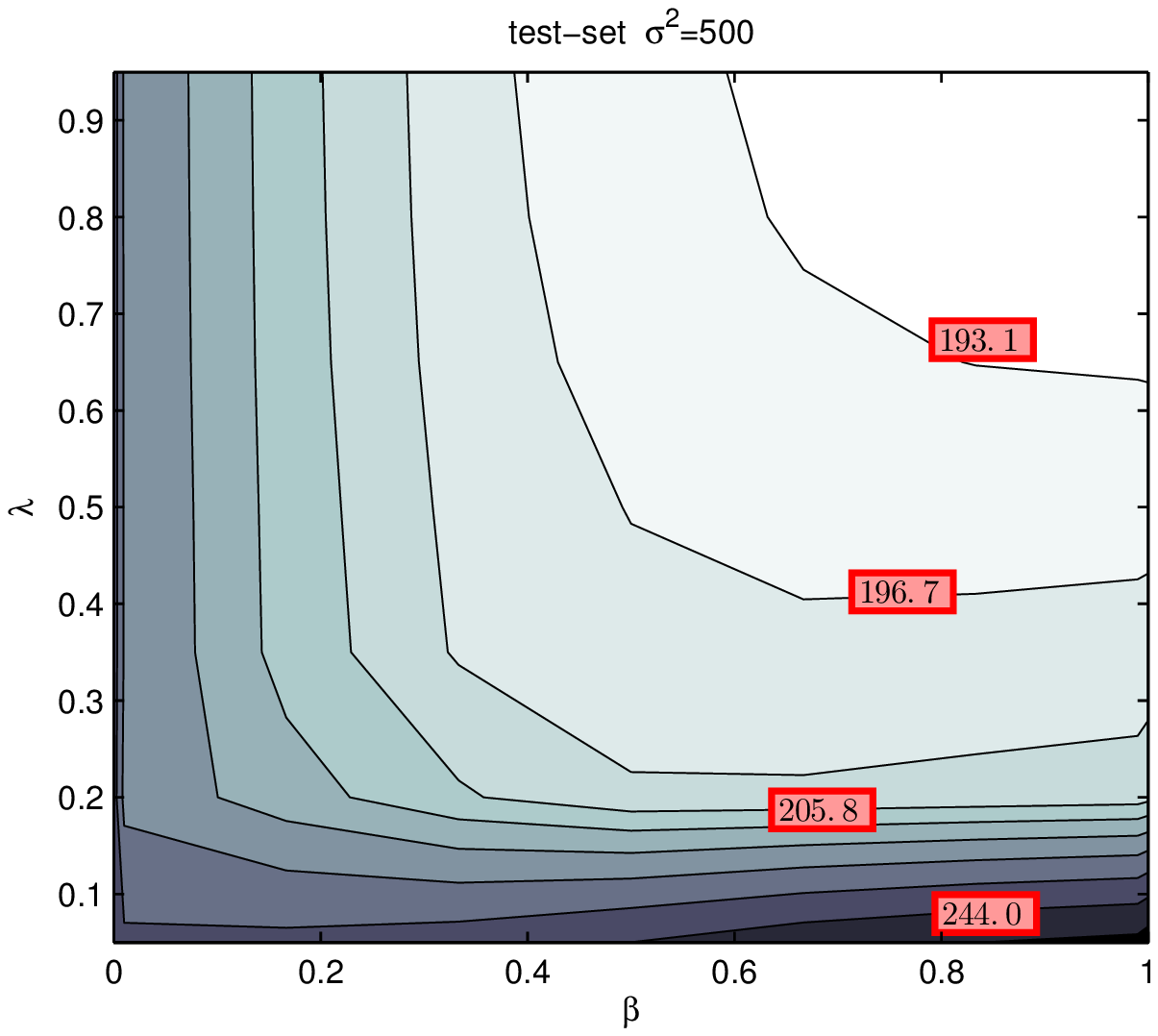}     &  \hspace{-5.5mm}
\includegraphics[trim=12.0mm 0mm 0mm 9.5mm,clip,totalheight=.175\textheight]{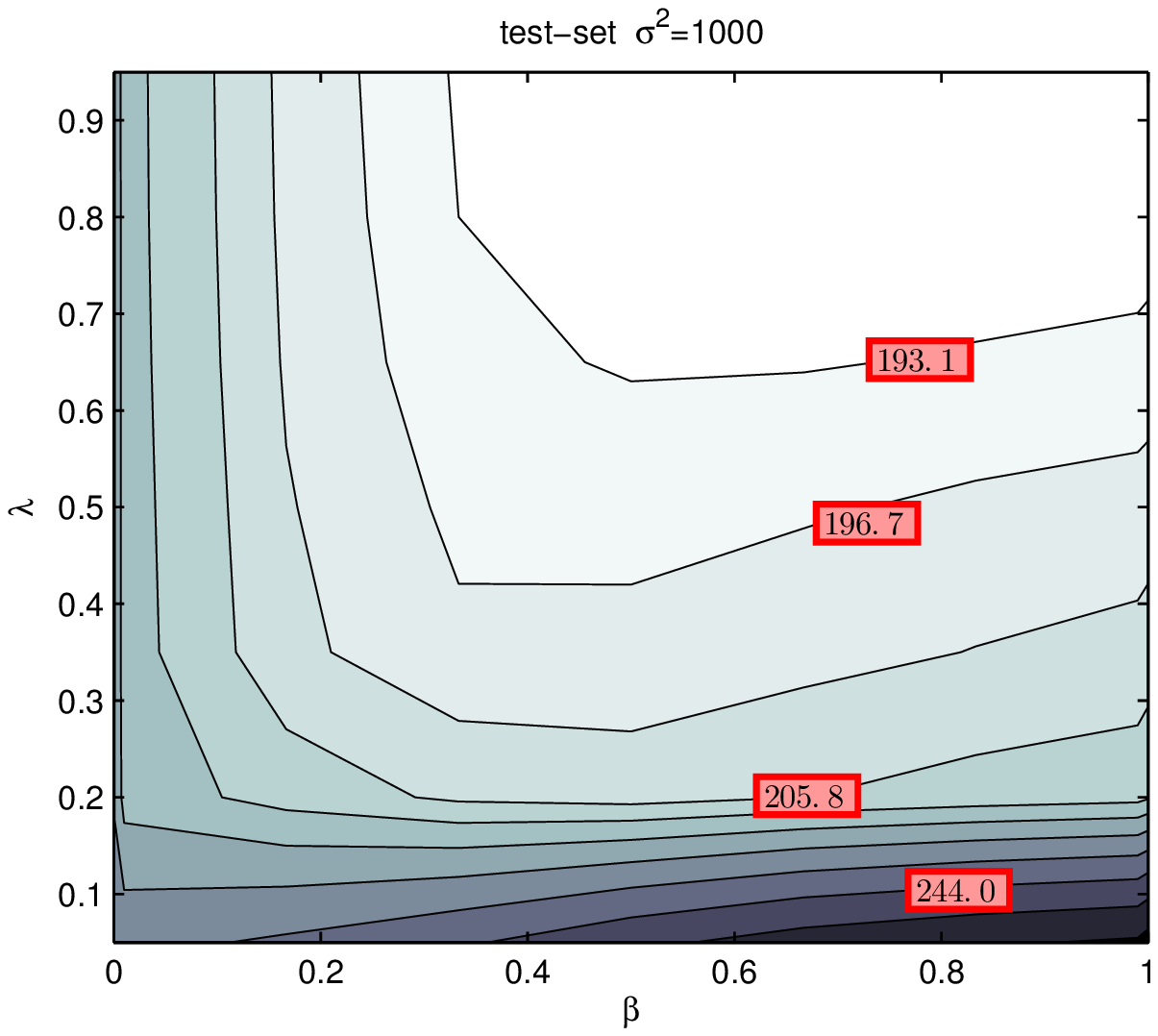}    &  \hspace{-5.5mm}
\includegraphics[trim=12.0mm 0mm 0mm 9.5mm,clip,totalheight=.175\textheight]{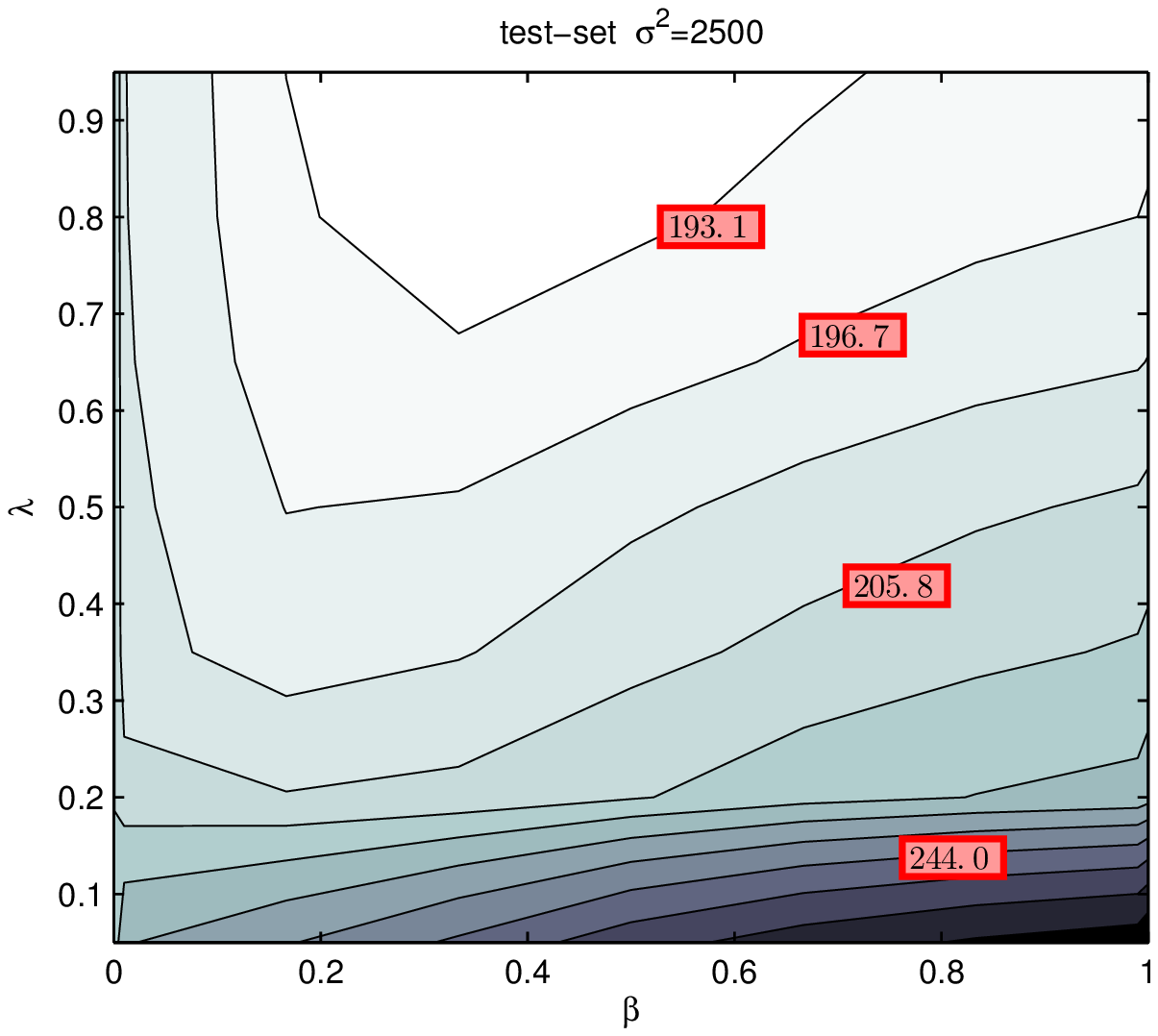}    &  \hspace{-5.5mm}
\includegraphics[trim=12.0mm 0mm 0mm 9.5mm,clip,totalheight=.175\textheight]{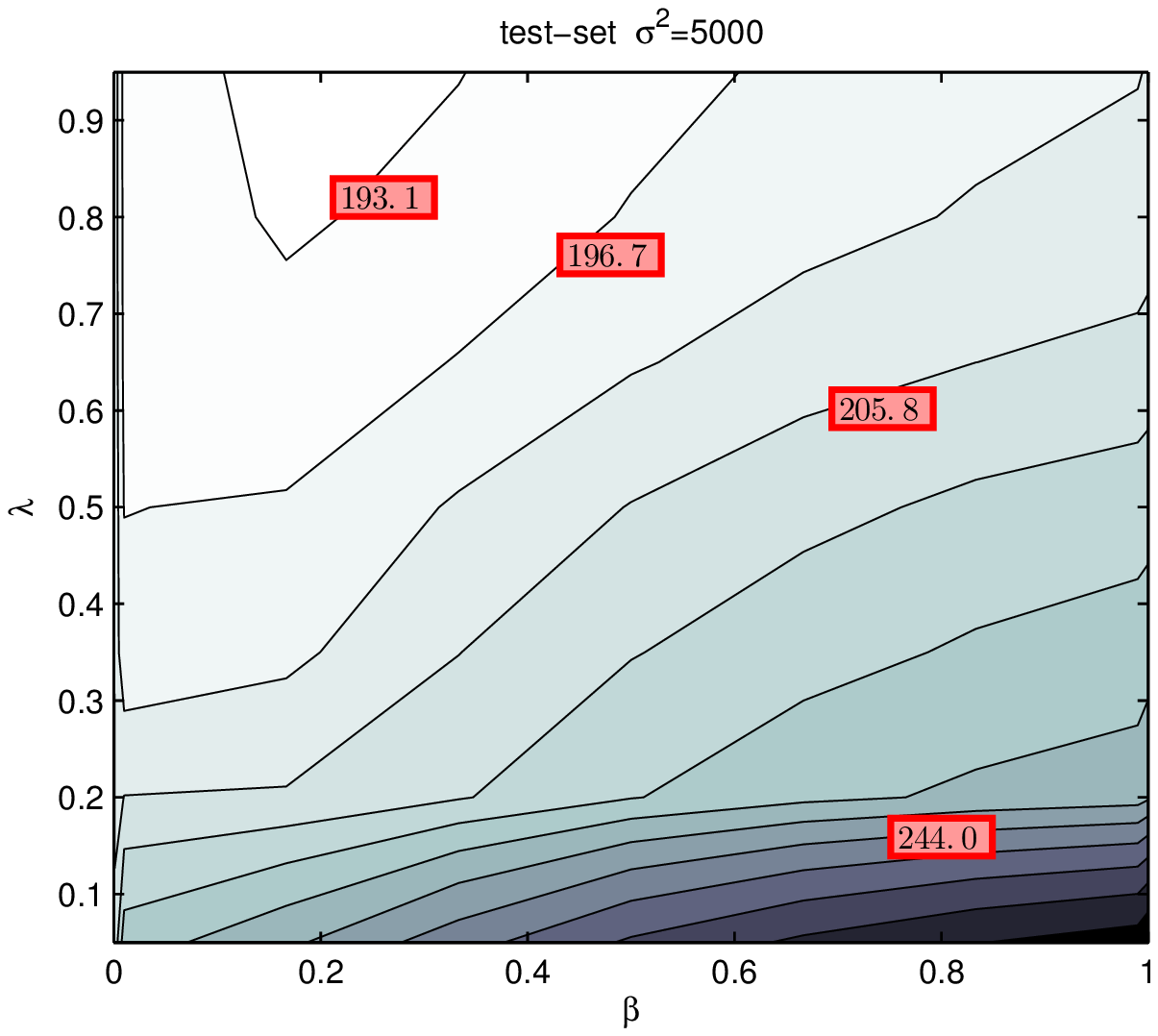}
\end{tabular}
\vspace{-1.5em}
\caption{
	Train set (top) and test set (bottom) negative log-likelihood (perplexity)
	for the Boltzmann chain MRF with pseudo/full likelihood selection policy
	(PL1/FL).  The x-axis, $\beta$, corresponds to relative weight placed on FL
	and and the y-axis, $\lambda$, corresponds to the probability of selecting
	FL.  PL1 is selected with probability 1 and weight $1-\beta$.  Contours and
	labels are fixed across columns.  Results averaged over several
	cross-validation folds, i.e., resampling both the train set and the PL1/FL
	policy.  Columns from left to right correspond to weaker regularization,
	$\sigma^2=\{500, 1000, 2500, 5000\}$.  The best achievable test set
	perplexity is about 190.
	\vspace{1em}
	\newline
	Unsurprisingly the test set perplexity dominates the train set perplexity
	at each $\sigma^2$ (column).  For a desired level of accuracy (contour)
	there exists a computationally favorable regularizer.  Hence
	$\hat{\theta}^{msl}_n$ acts as both a regularizer and mechanism for
	controlling accuracy and complexity.
}\label{fig:chunk_boltzchain_pl1fl_cont}

\hspace{-0.75cm}
\begin{tabular}{cccc}
\includegraphics[trim= 0.0mm 10.2mm 0mm 8.2mm,clip,totalheight=.160\textheight]{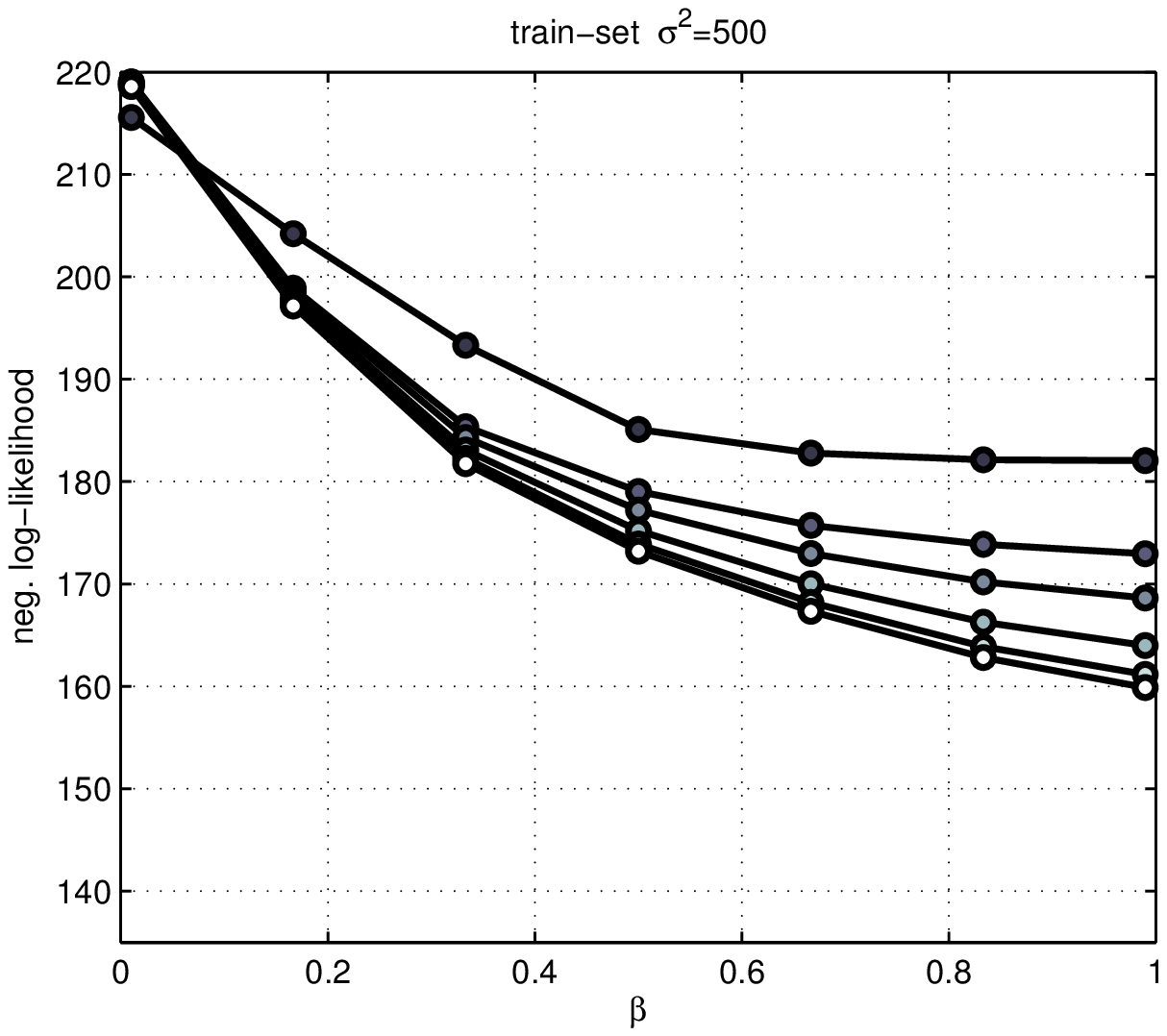}  &   \hspace{-5.5mm}
\includegraphics[trim=12.5mm 10.2mm 0mm 8.2mm,clip,totalheight=.160\textheight]{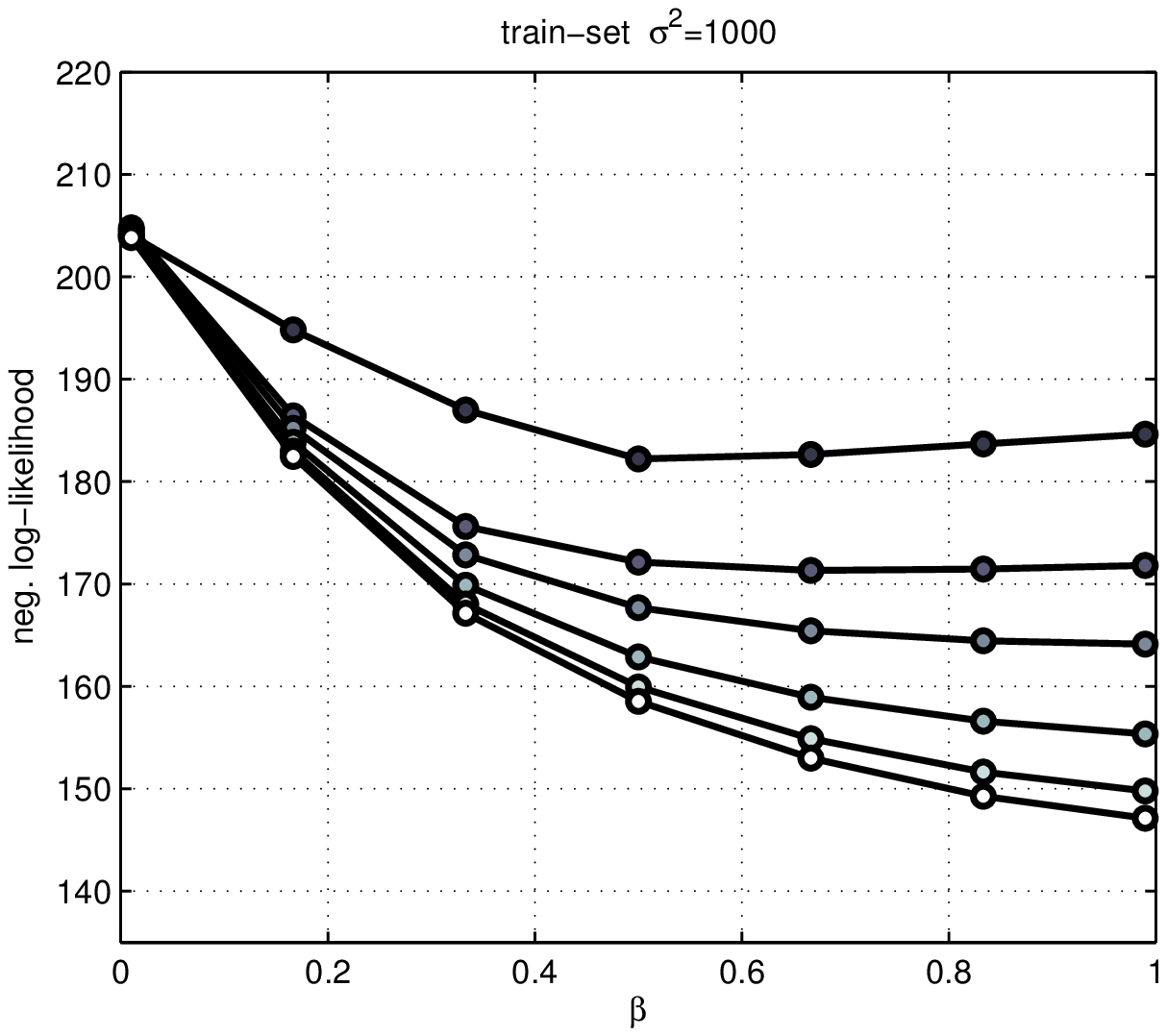} &   \hspace{-5.5mm}
\includegraphics[trim=12.5mm 10.2mm 0mm 8.2mm,clip,totalheight=.160\textheight]{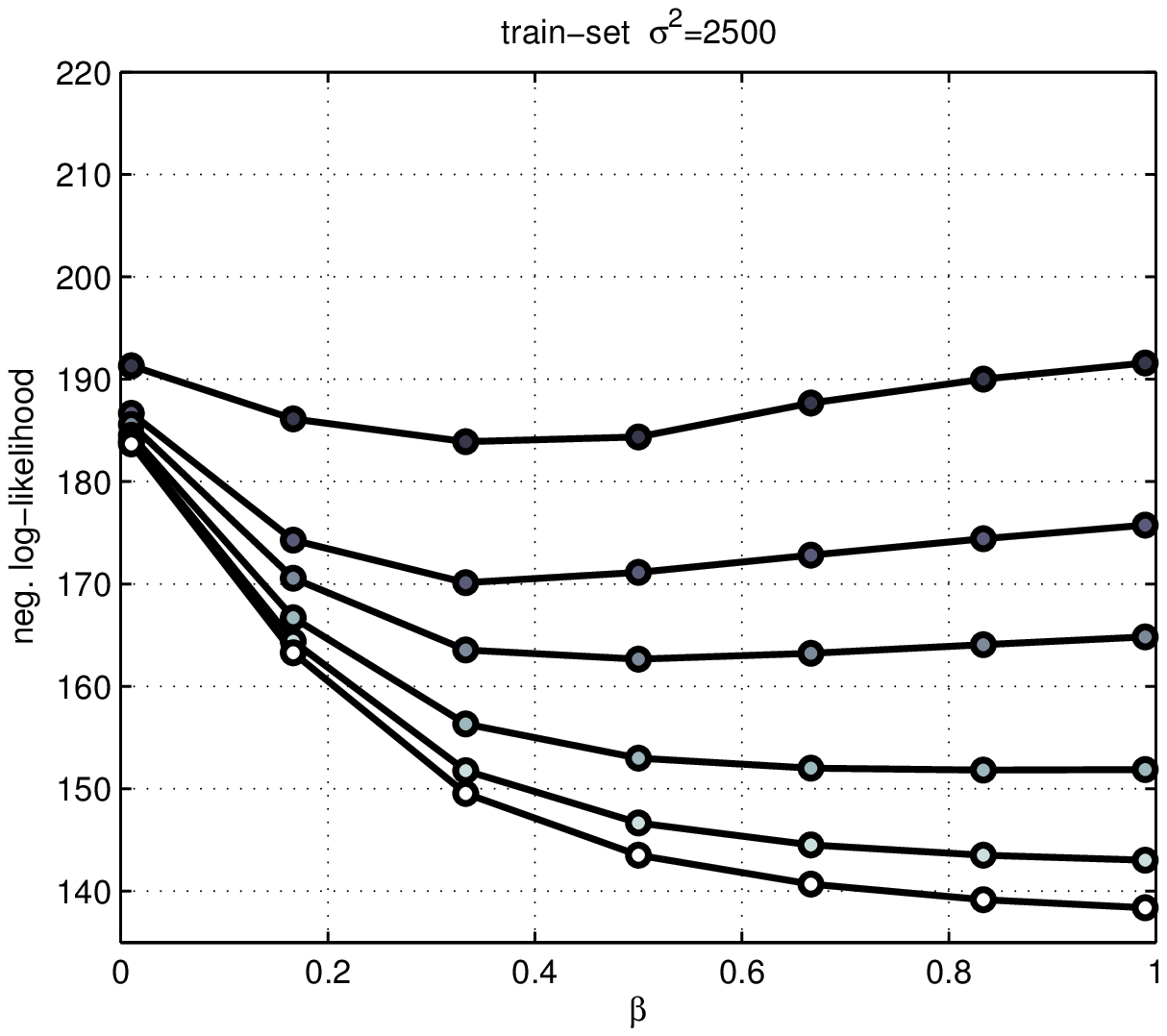} &   \hspace{-5.5mm}
\includegraphics[trim=12.5mm 10.2mm 0mm 8.2mm,clip,totalheight=.160\textheight]{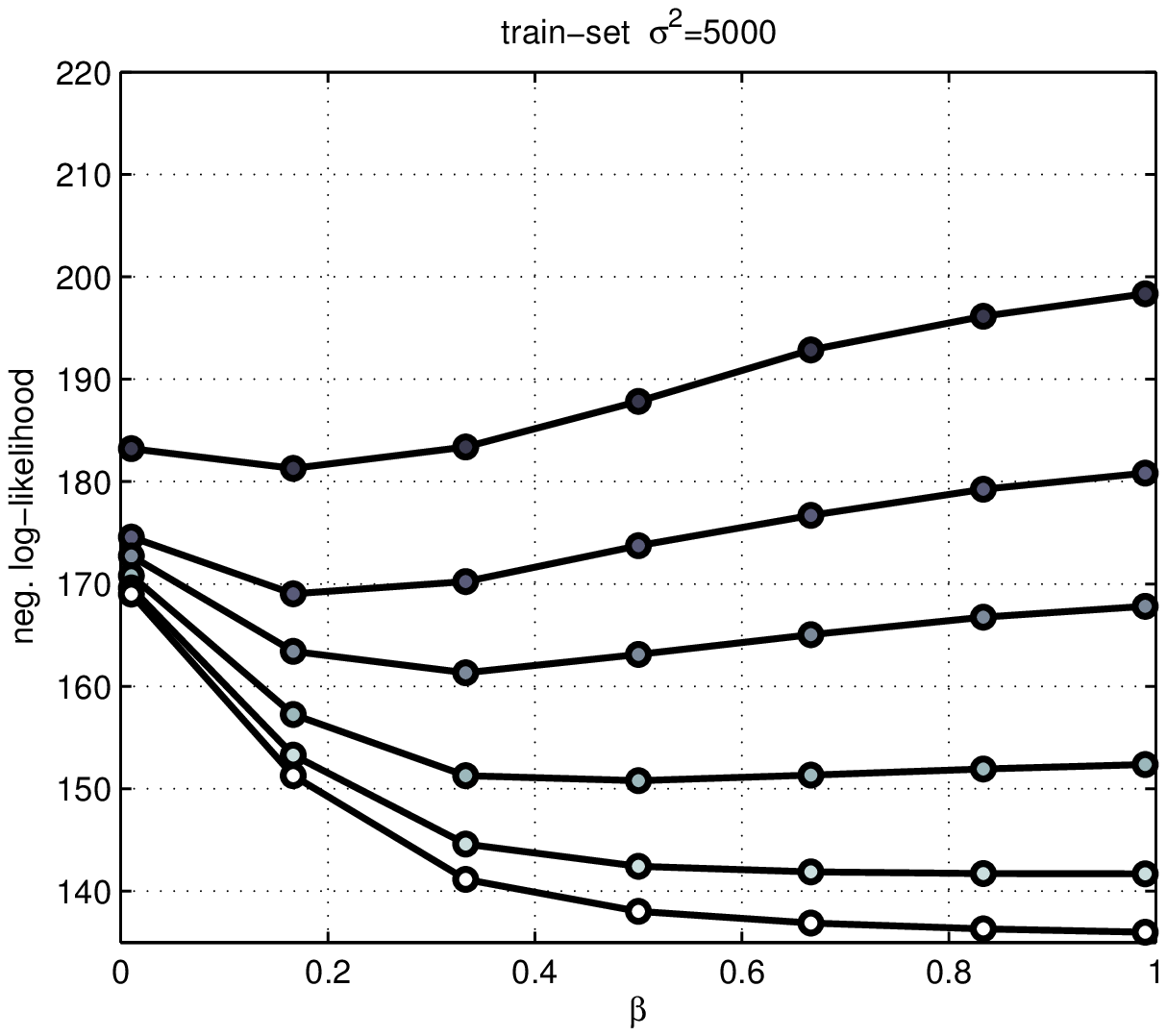} \\
\includegraphics[trim= 0.0mm 0mm 0mm 8.2mm,clip,totalheight=.178\textheight]{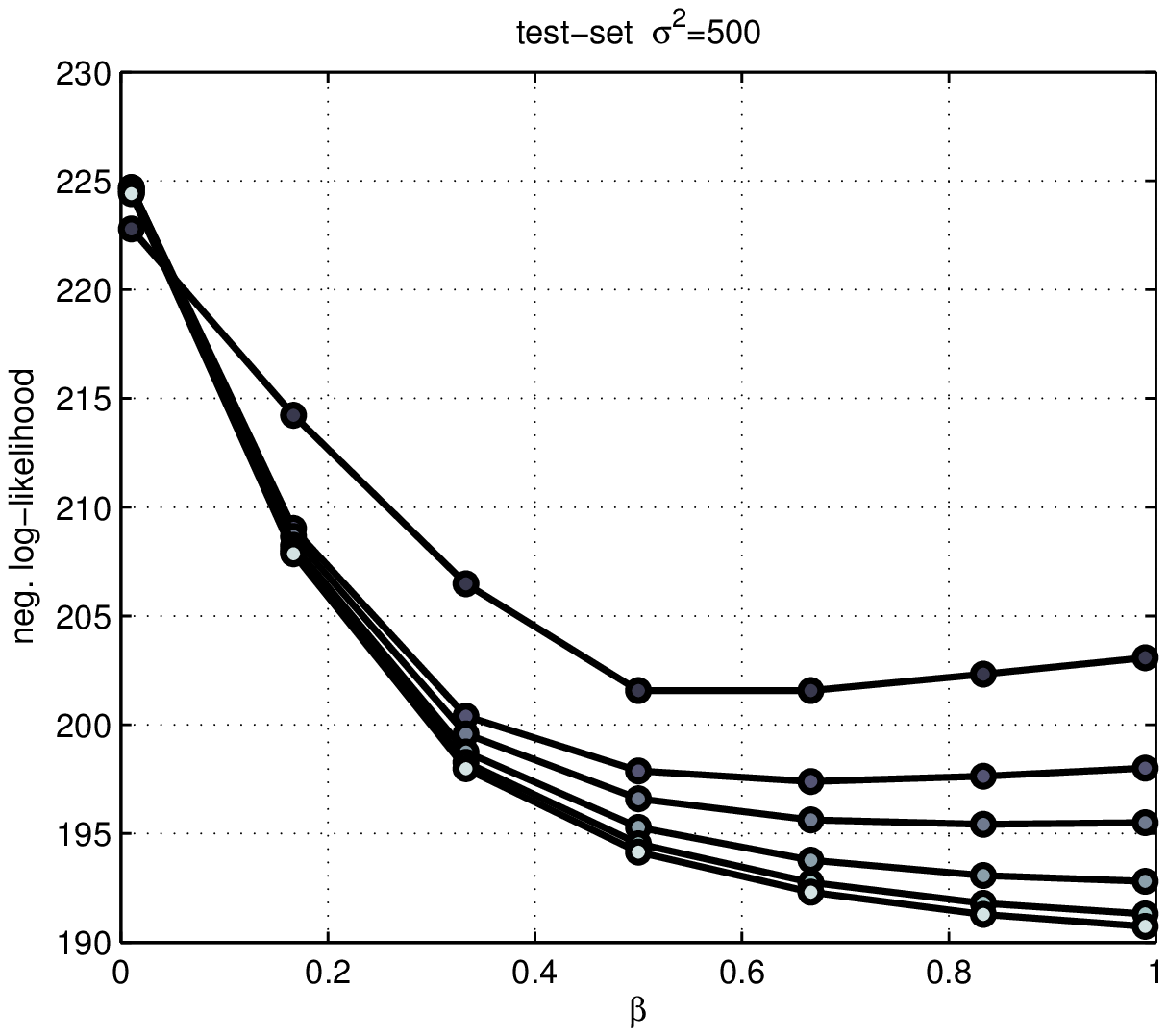}  &   \hspace{-5.5mm}
\includegraphics[trim=12.5mm 0mm 0mm 8.2mm,clip,totalheight=.178\textheight]{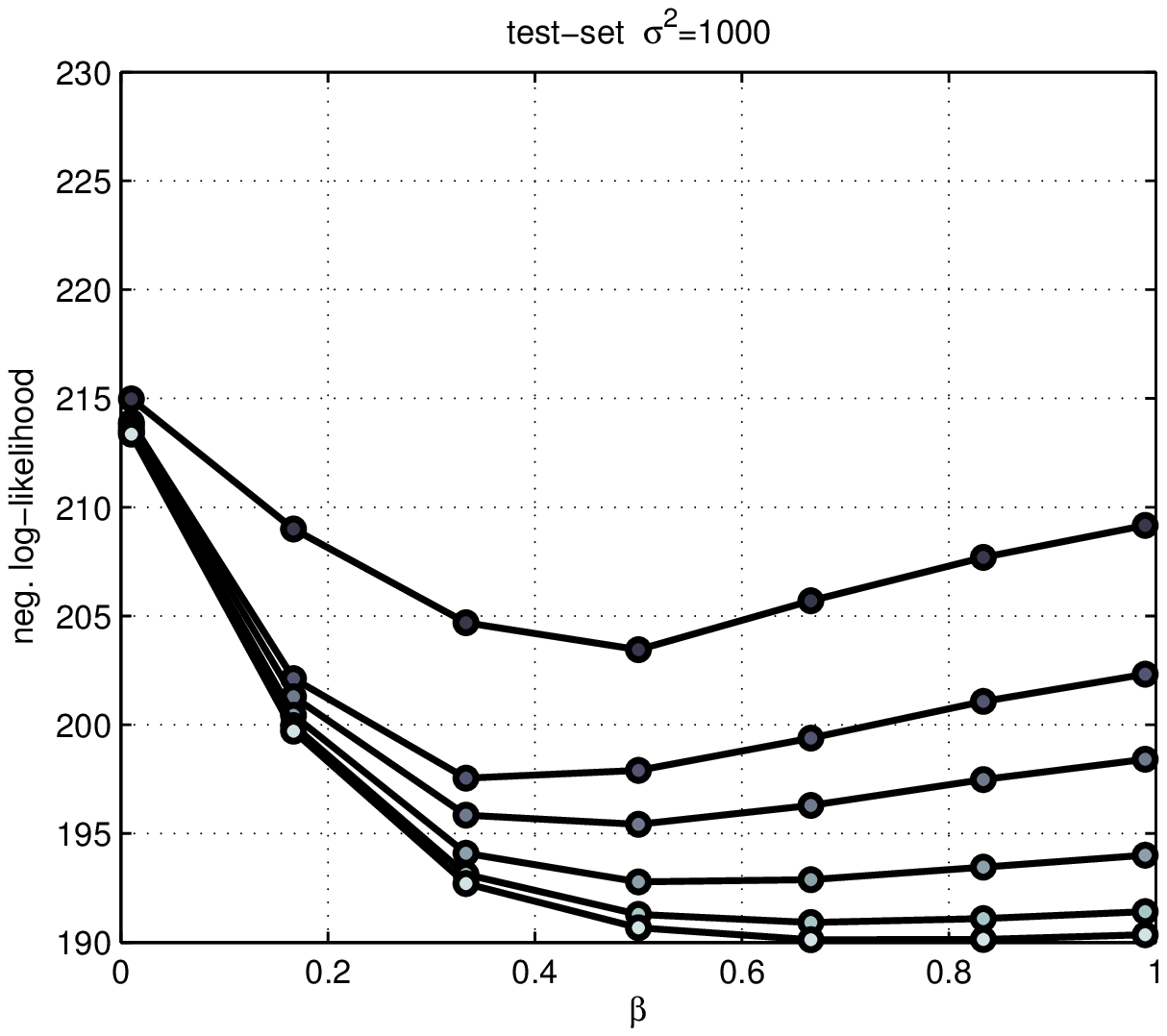} &   \hspace{-5.5mm}
\includegraphics[trim=12.5mm 0mm 0mm 8.2mm,clip,totalheight=.178\textheight]{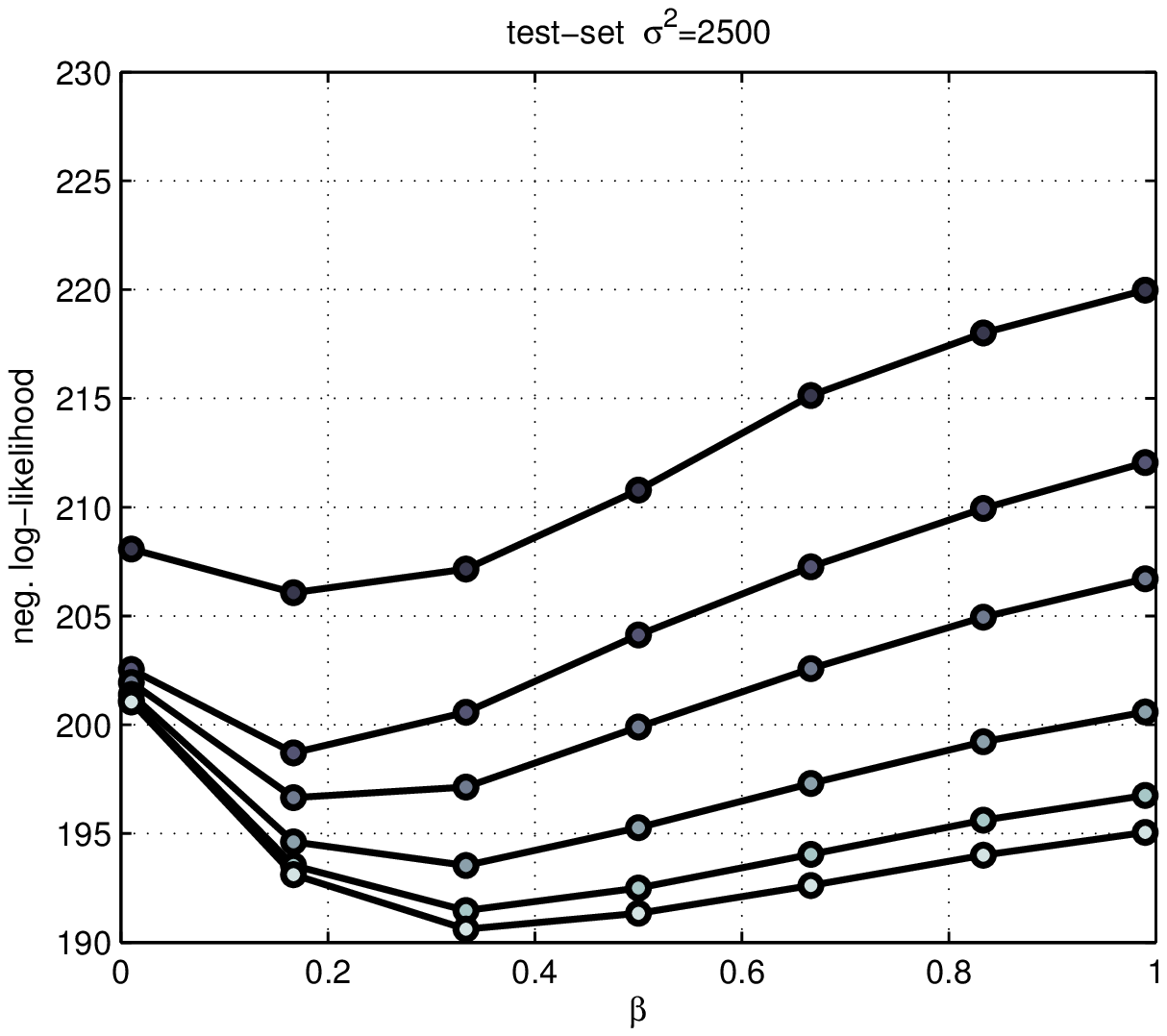} &   \hspace{-5.5mm}
\includegraphics[trim=12.5mm 0mm 0mm 8.2mm,clip,totalheight=.178\textheight]{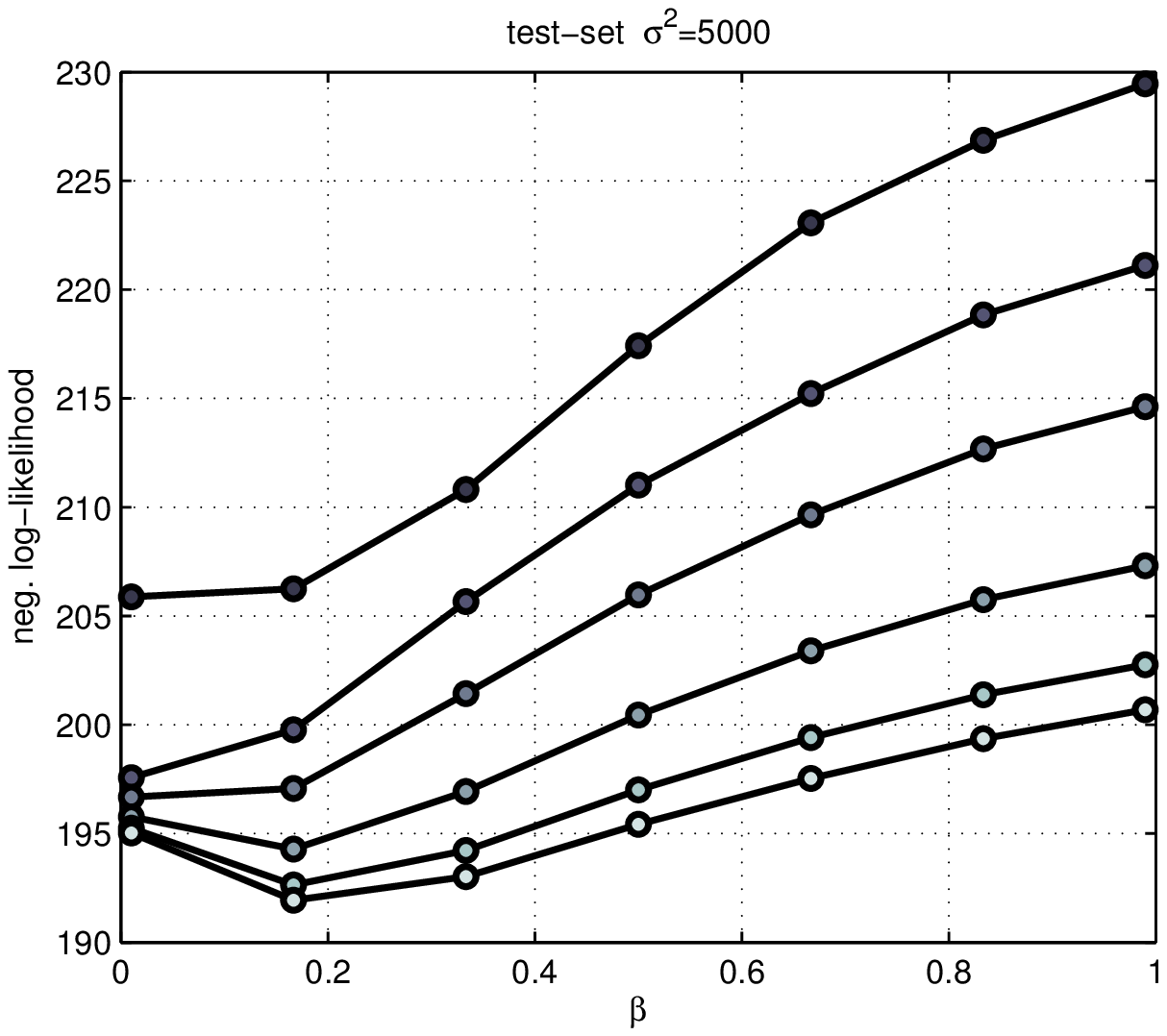} 
\end{tabular}
\vspace{-1.5em}
\caption{
	Train set and test set perplexities for the Boltzmann chain MRF with PL1/FL
	selection policy (see above layout description).  The x-axis is again
	$\beta$ and the y-axis perplexity.  Lighter shading indicates FL is
	selected with increasing frequency.  Note that as the regularizer is
	weakened the the range in perplexity spanned by $\lambda$ increases and the
	lower bound decreases. This indicates that the approximating power of
	$\hat{\theta}^{msl}_n$ increases when unencumbered by the regularizer and
	highlights its secondary role as a regularizer.
}\label{fig:chunk_boltzchain_pl1fl_beta}

\end{figure}

\begin{figure}[ht!]
\vspace{-3.0em}
\hspace{-0.75cm}
\begin{tabular}{cccc}
\includegraphics[trim= 0.0mm 10.2mm 0mm 9.5mm,clip,totalheight=.157\textheight]{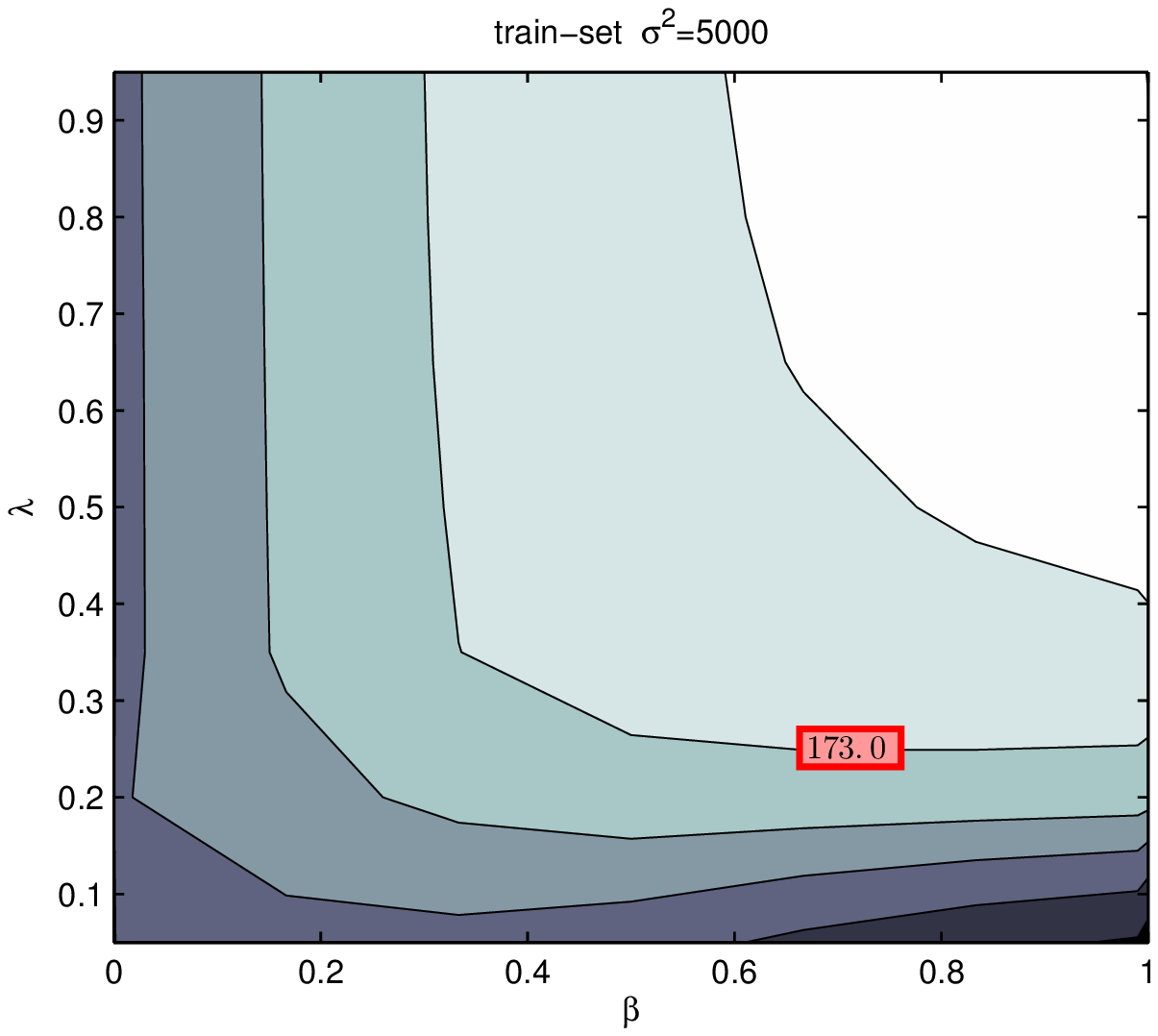}  &  \hspace{-5.5mm}
\includegraphics[trim=12.0mm 10.2mm 0mm 9.5mm,clip,totalheight=.157\textheight]{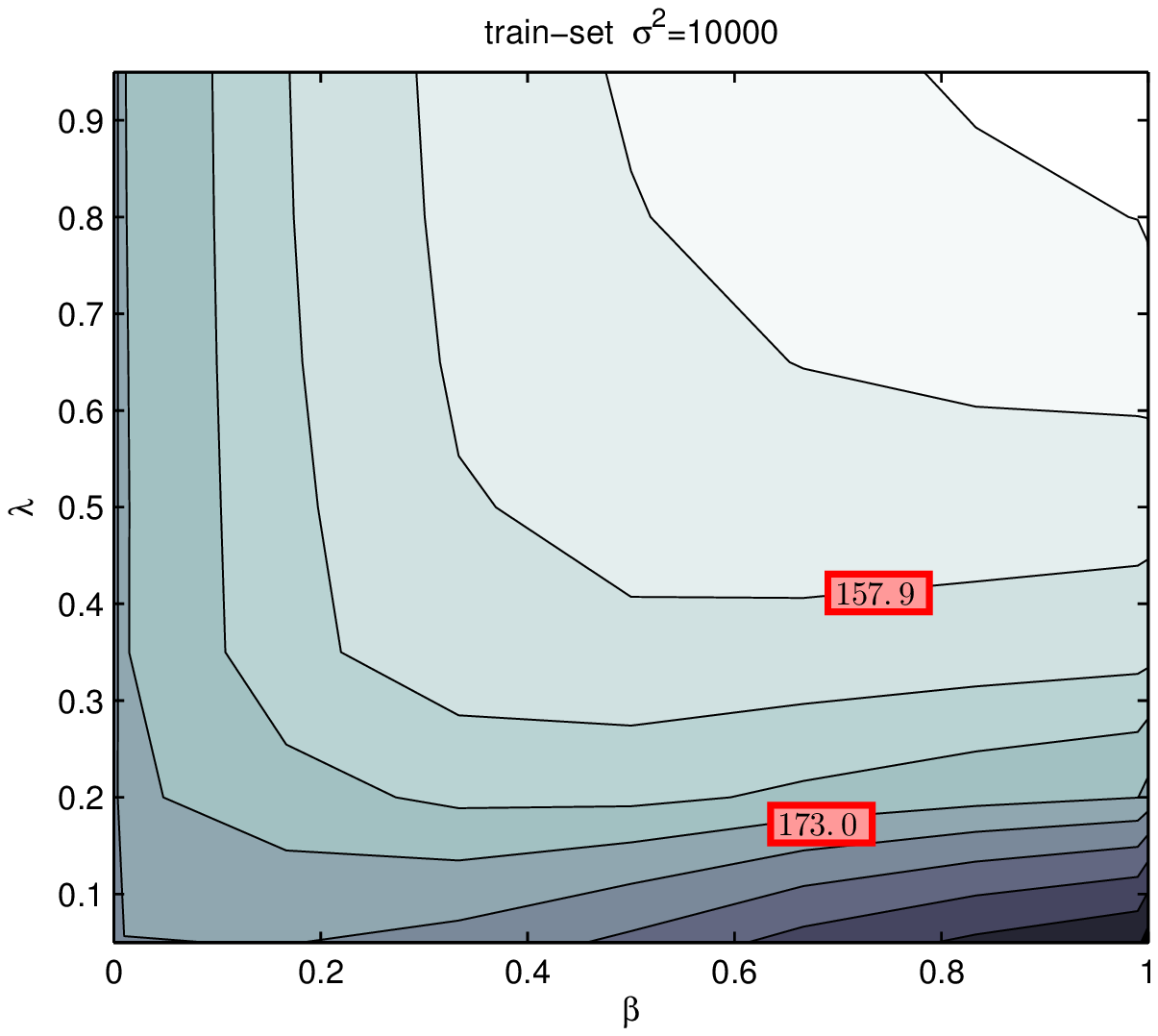} &  \hspace{-5.5mm}
\includegraphics[trim=12.0mm 10.2mm 0mm 9.5mm,clip,totalheight=.157\textheight]{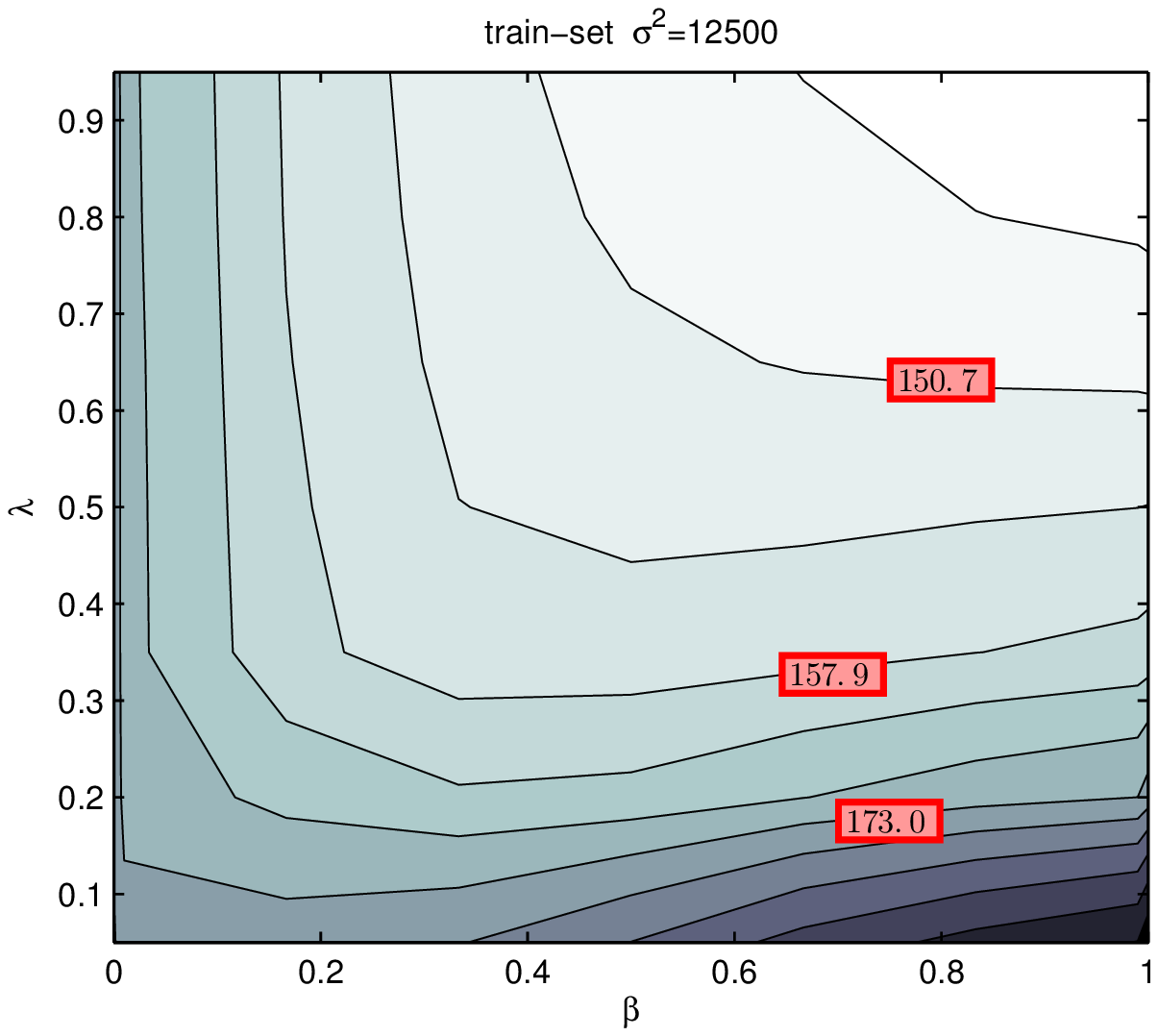} &  \hspace{-5.5mm}
\includegraphics[trim=12.0mm 10.2mm 0mm 9.5mm,clip,totalheight=.157\textheight]{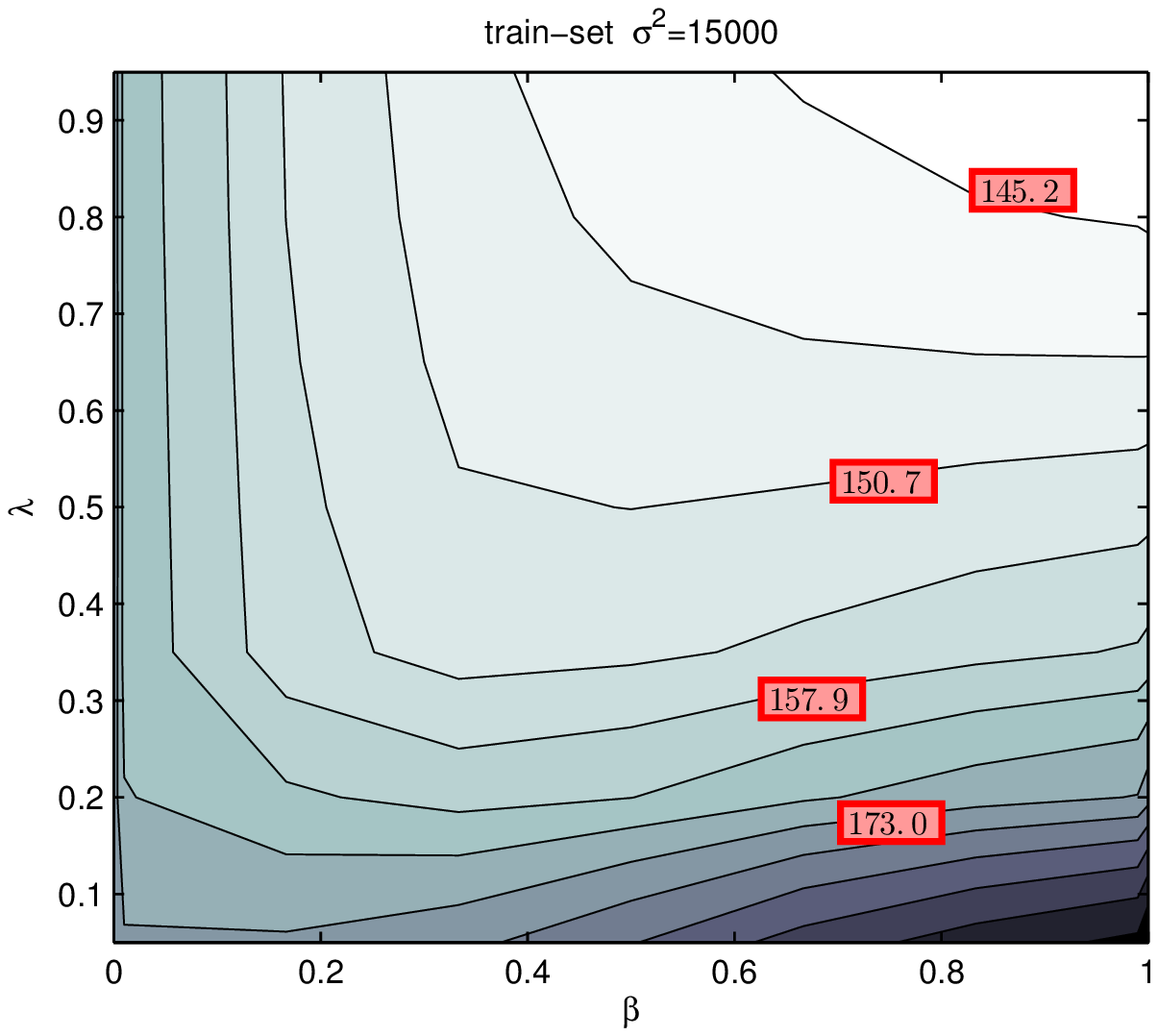} \\
\includegraphics[trim= 0.0mm 0mm 0mm 9.5mm,clip,totalheight=.175\textheight]{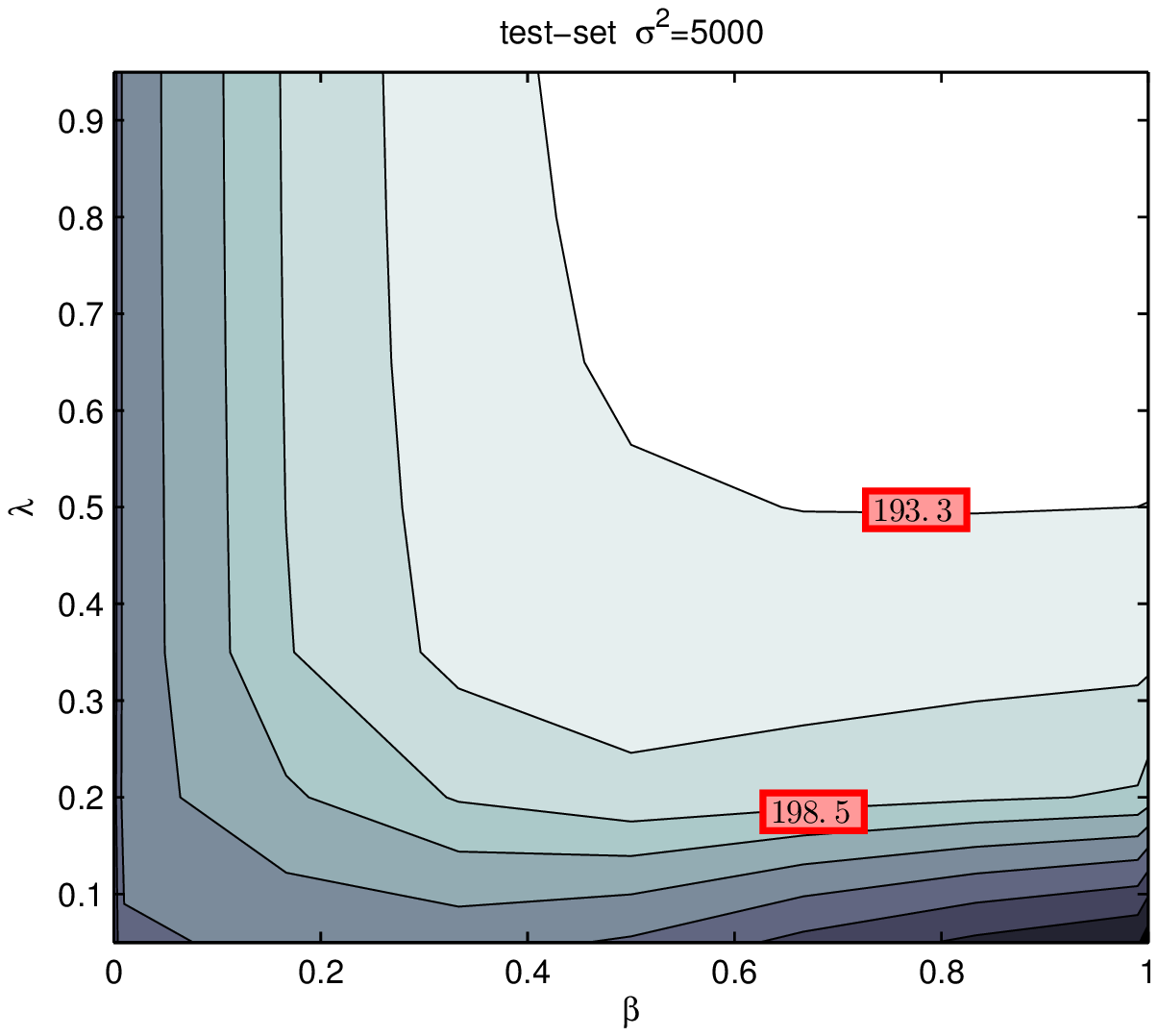}     &  \hspace{-5.5mm}
\includegraphics[trim=12.0mm 0mm 0mm 9.5mm,clip,totalheight=.175\textheight]{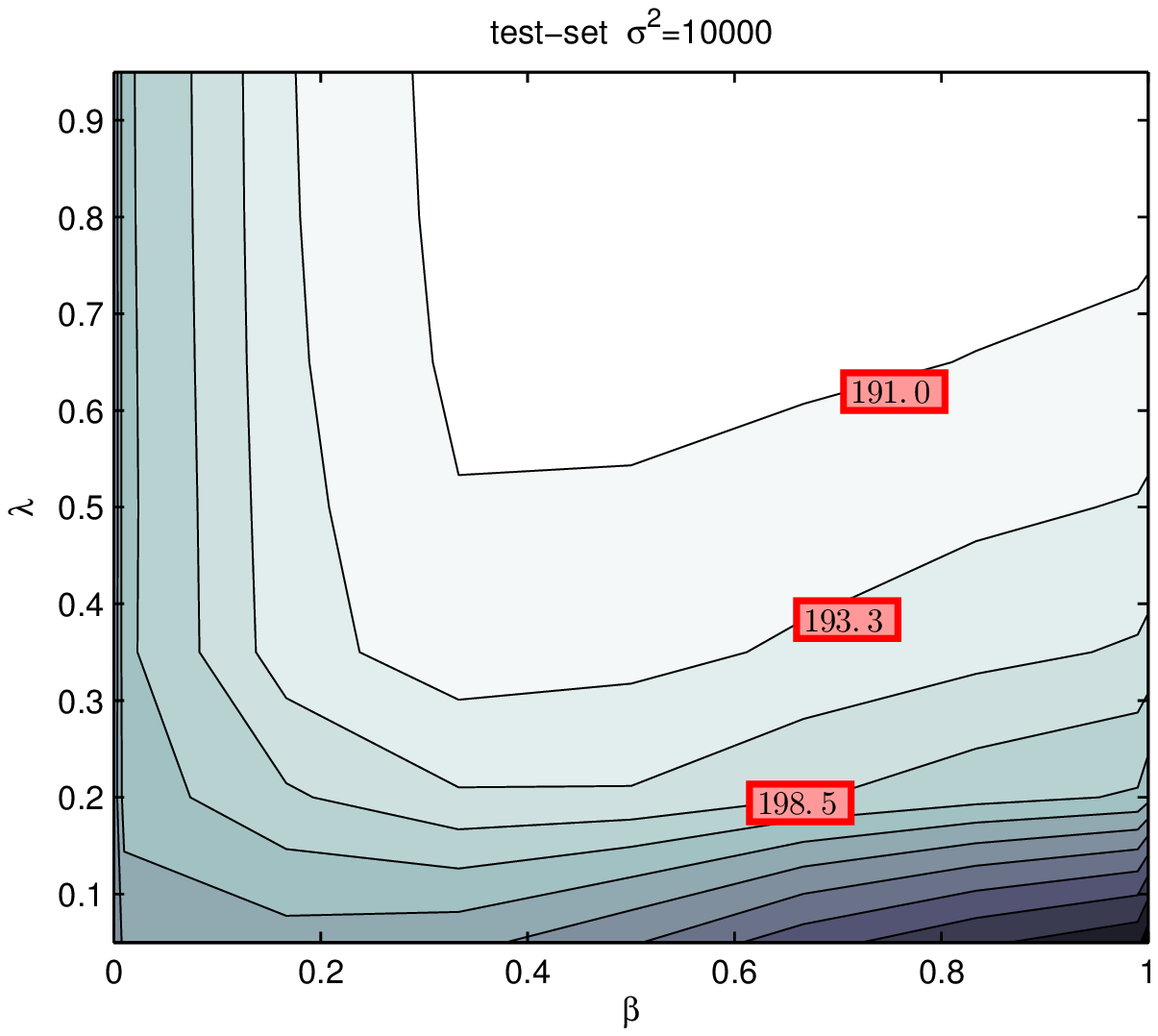}    &  \hspace{-5.5mm}
\includegraphics[trim=12.0mm 0mm 0mm 9.5mm,clip,totalheight=.175\textheight]{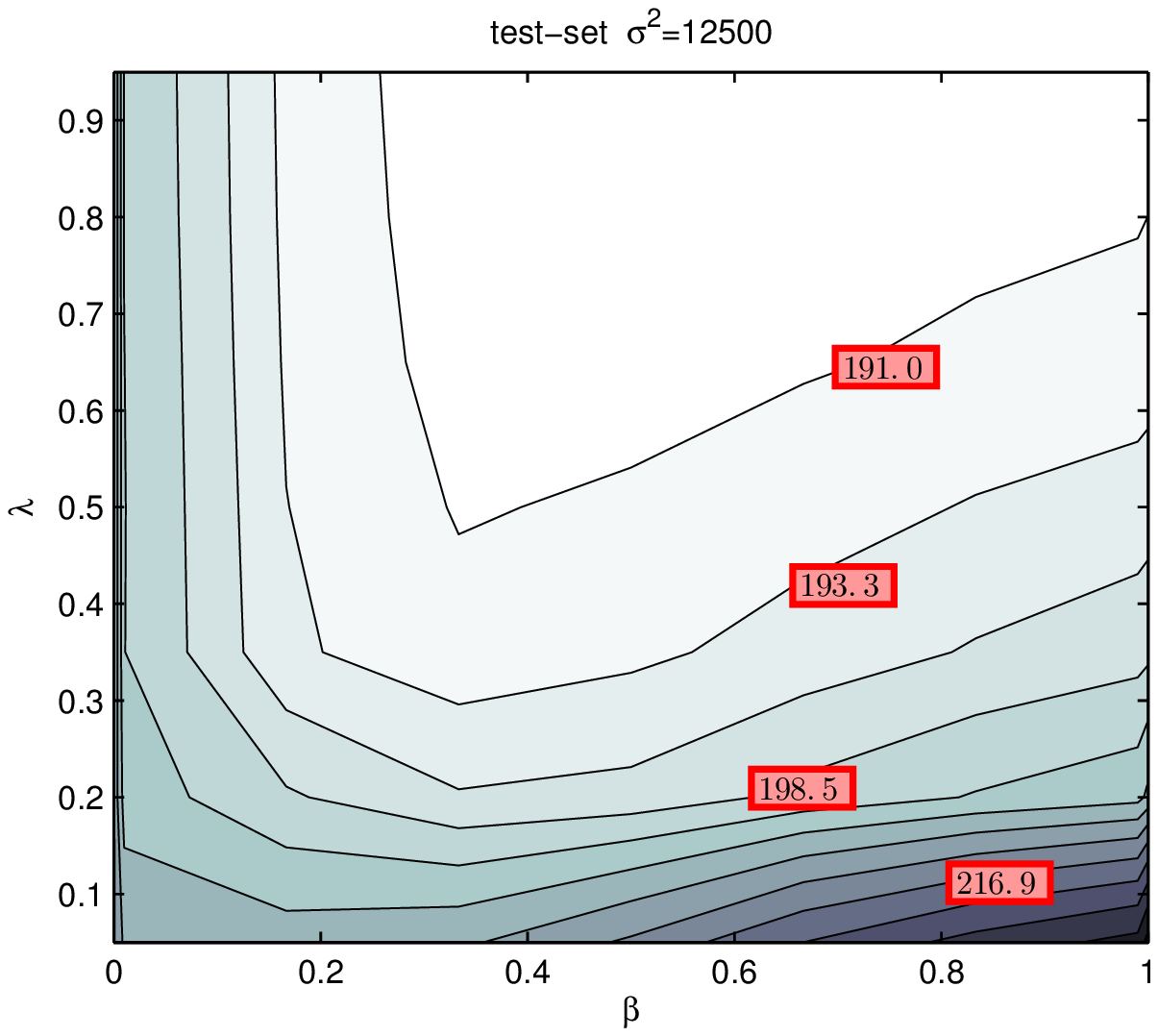}    &  \hspace{-5.5mm}
\includegraphics[trim=12.0mm 0mm 0mm 9.5mm,clip,totalheight=.175\textheight]{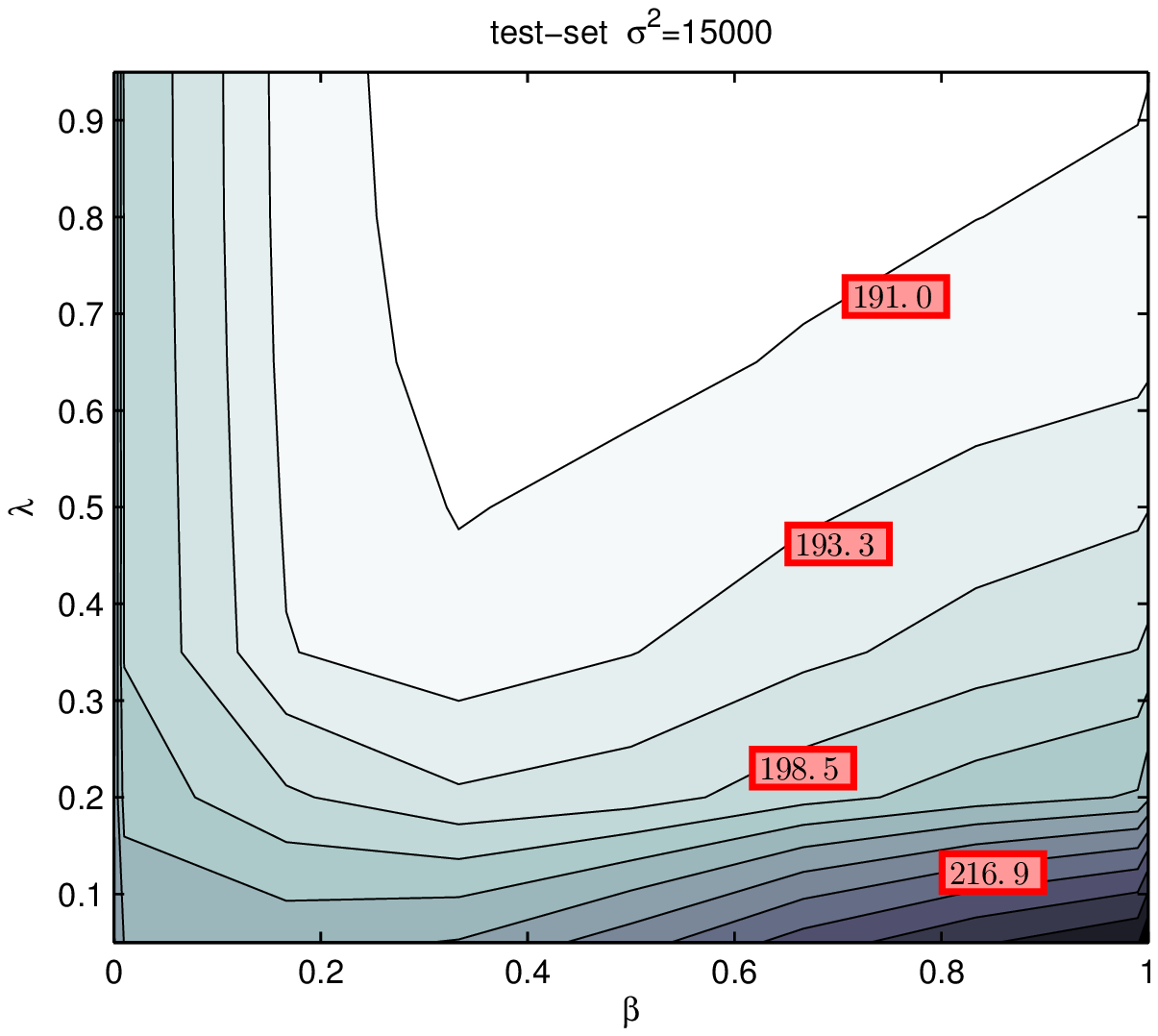}
\end{tabular}
\vspace{-1.5em}
\caption{
	Train set (top) and test set (bottom) perplexity for the Boltzmann chain
	MRF with 1st/2nd order pseudo likelihood selection policy (PL1/PL2).
	The x-axis corresponds to PL2 weight and the y-axis the probability of its
	selection.  PL1 is selected with probability 1 and weight $1-\beta$.
	Columns from left to right correspond to $\sigma^2=\{5000, 10000, 12500,
	15000\}$.  See Figure~\ref{fig:chunk_boltzchain_pl1fl_cont} for more
	details.  The best achievable test set perplexity is about 189.5.
	\vspace{1em}
	\newline
	In comparing these results to PL1/FL, we note that the test set contours
	exhibit less perplexity for larger areas.  In particular, perplexity is
	lower at smaller $\lambda$ values, meaning a computational saving over
	PL1/FL at a given level of accuracy.
}\label{fig:chunk_boltzchain_pl1pl2_cont}

\vspace{0.5em}
\hspace{-0.75cm}
\begin{tabular}{cccc}
\includegraphics[trim= 0.0mm 10.2mm 0mm 8.2mm,clip,totalheight=.160\textheight]{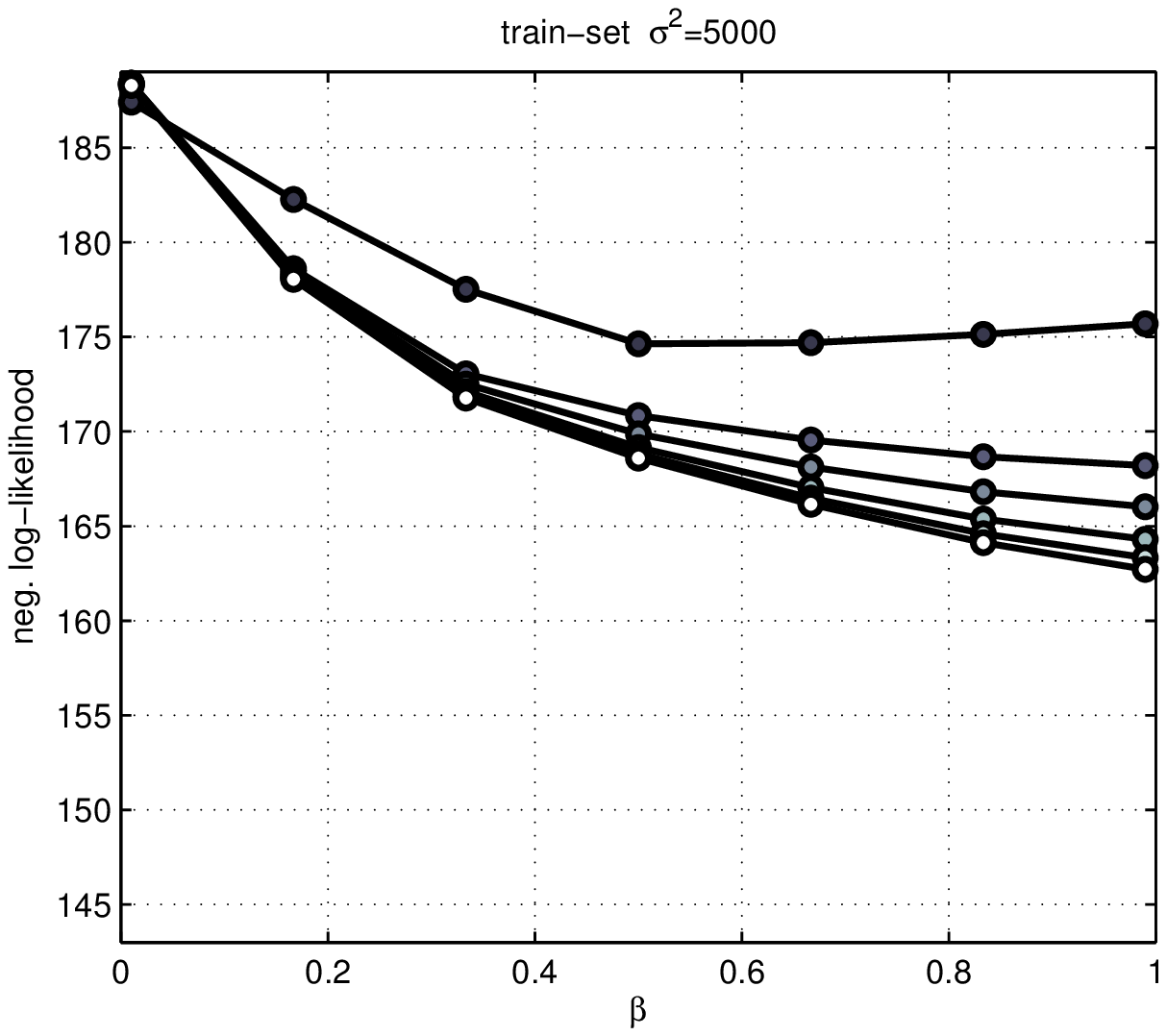}  &   \hspace{-5.5mm}
\includegraphics[trim=12.5mm 10.2mm 0mm 8.2mm,clip,totalheight=.160\textheight]{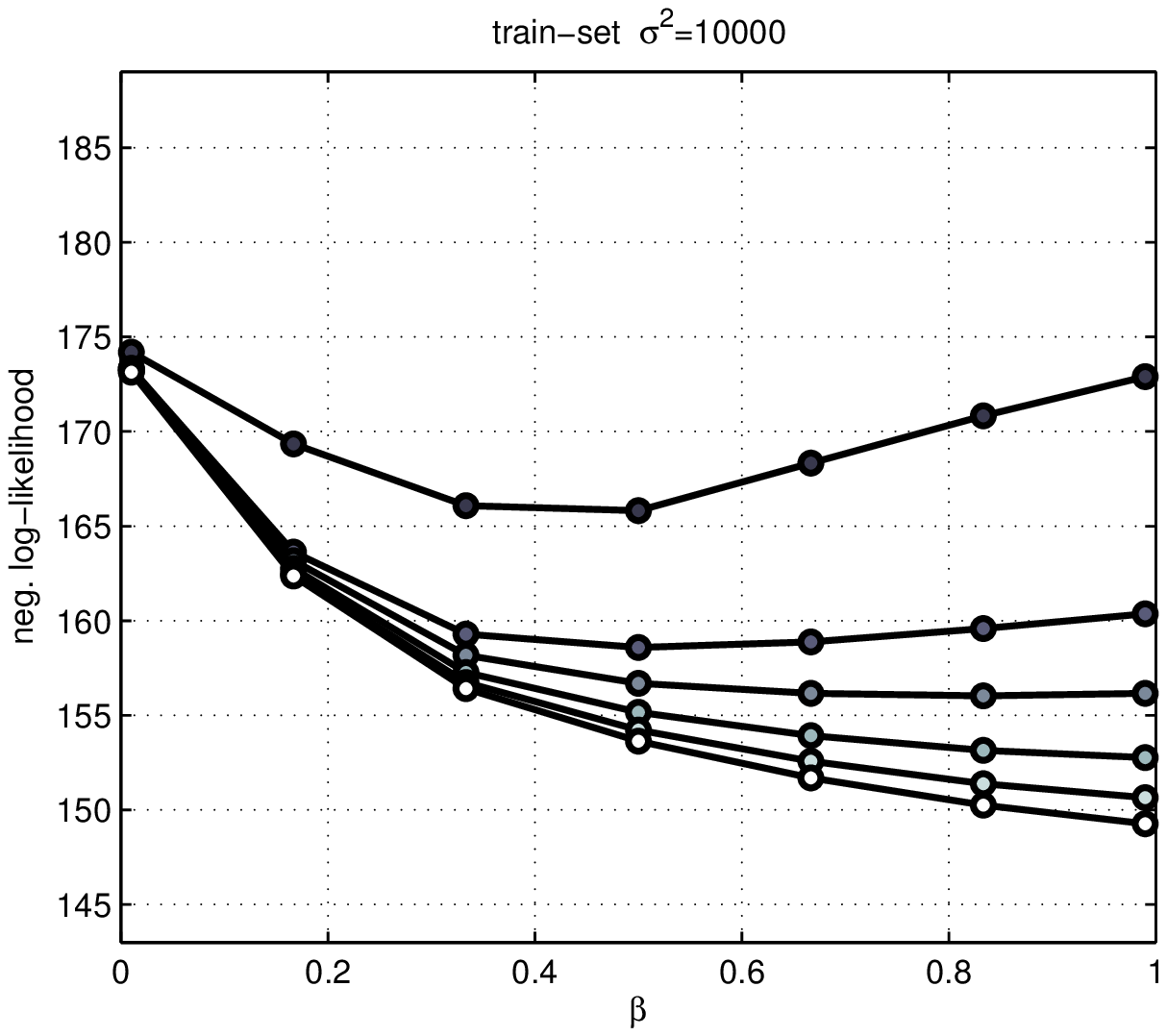} &   \hspace{-5.5mm}
\includegraphics[trim=12.5mm 10.2mm 0mm 8.2mm,clip,totalheight=.160\textheight]{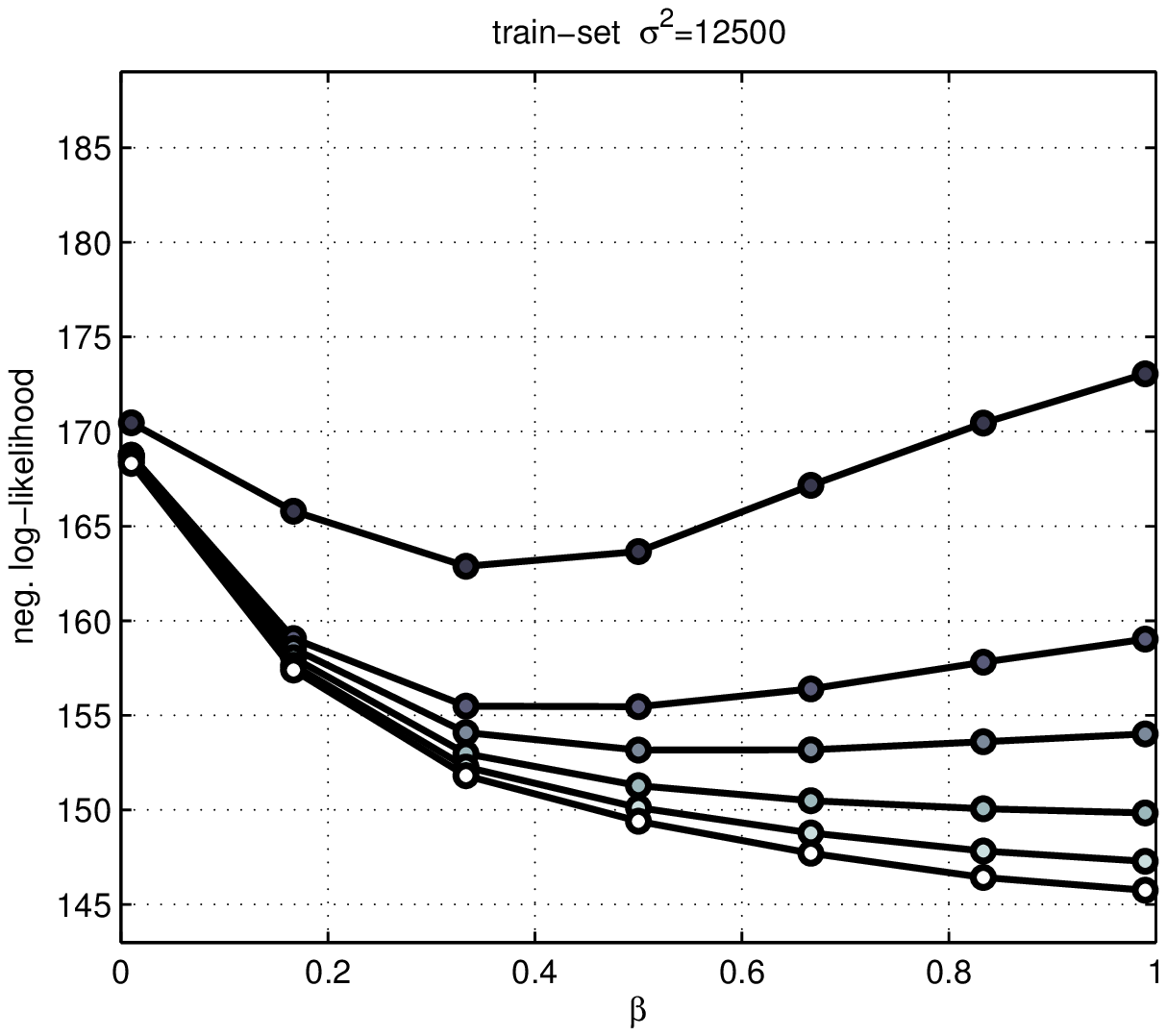} &   \hspace{-5.5mm}
\includegraphics[trim=12.5mm 10.2mm 0mm 8.2mm,clip,totalheight=.160\textheight]{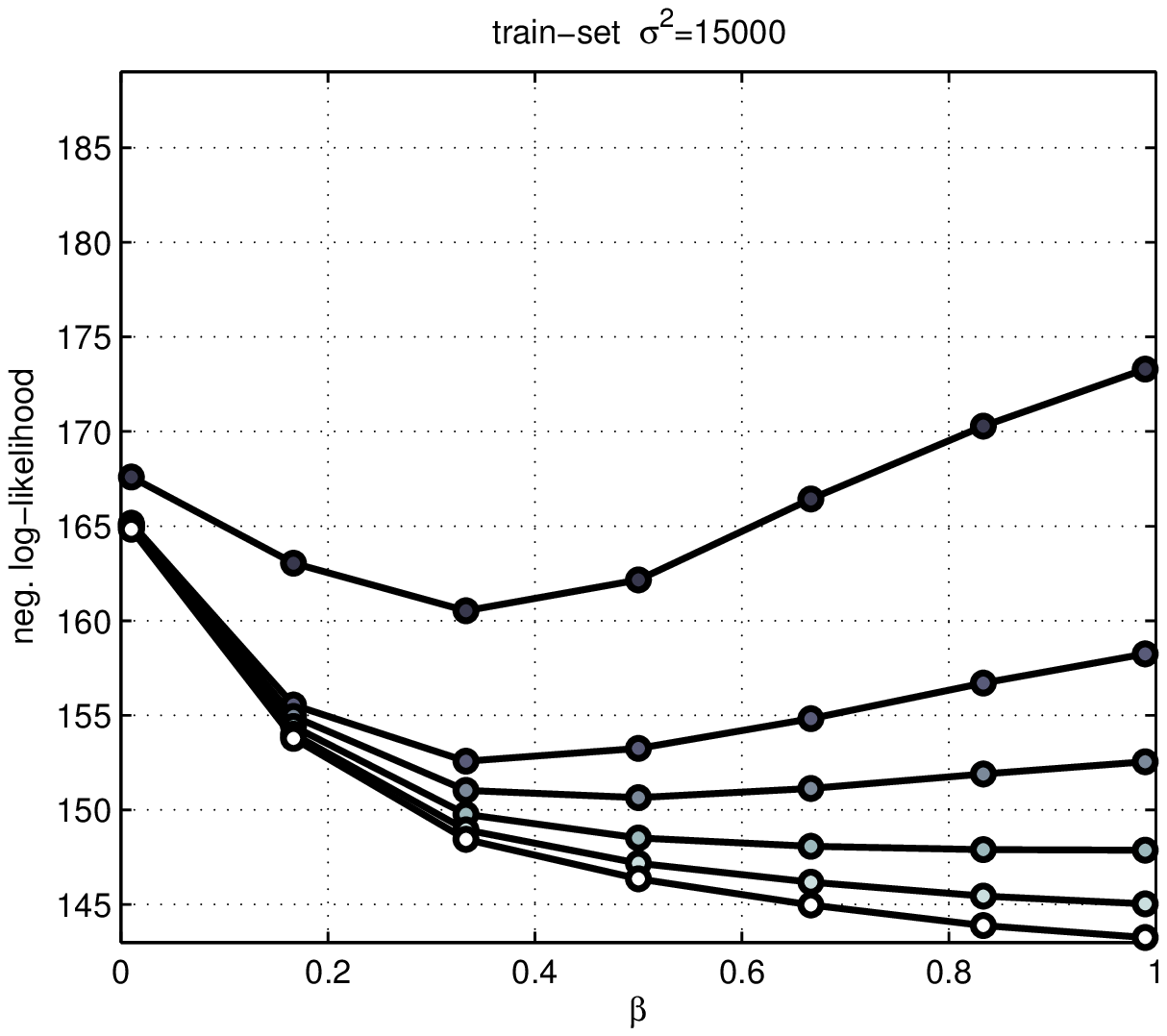} \\
\includegraphics[trim= 0.0mm 0mm 0mm 8.2mm,clip,totalheight=.178\textheight]{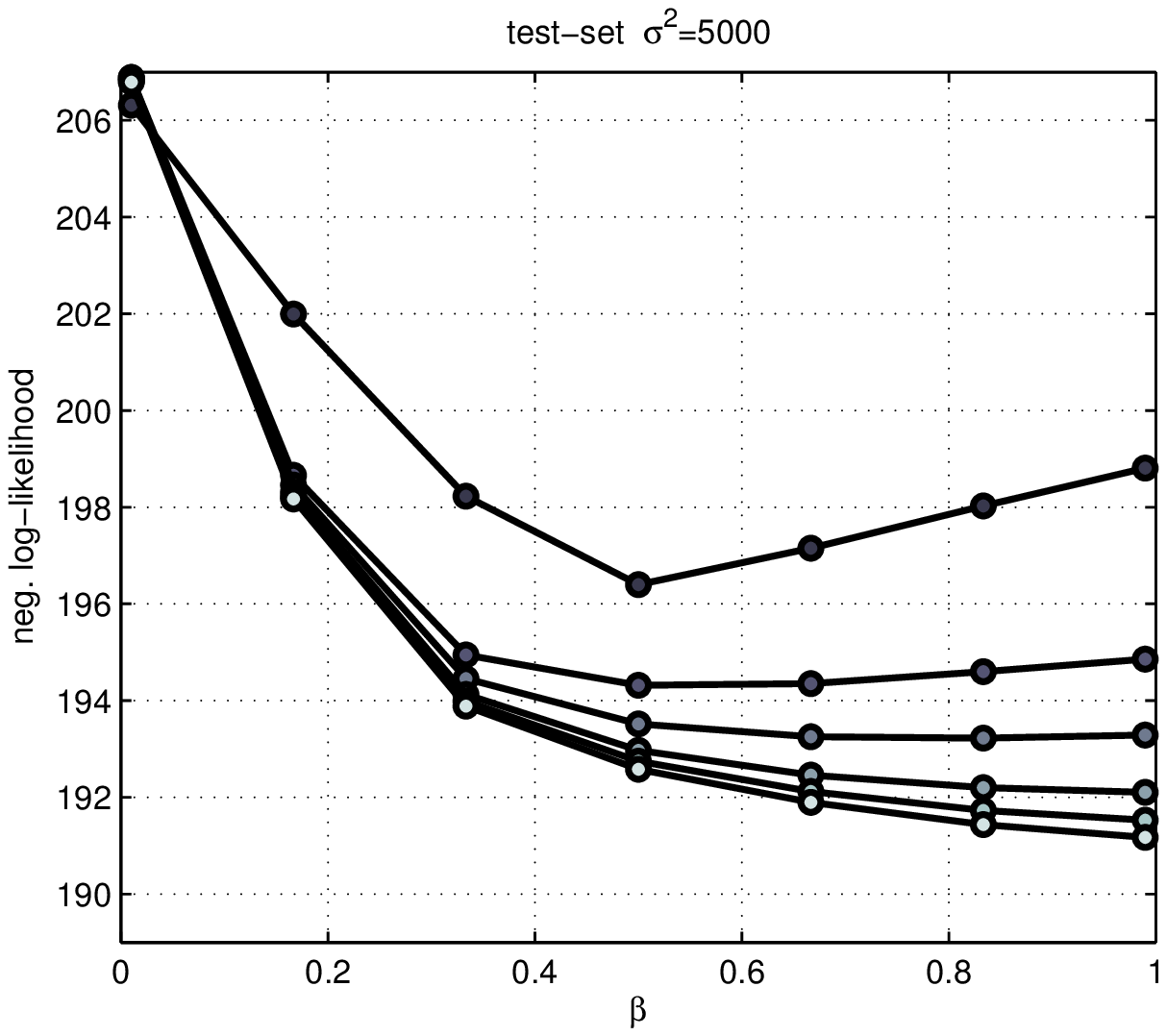}     &   \hspace{-5.5mm}
\includegraphics[trim=12.5mm 0mm 0mm 8.2mm,clip,totalheight=.178\textheight]{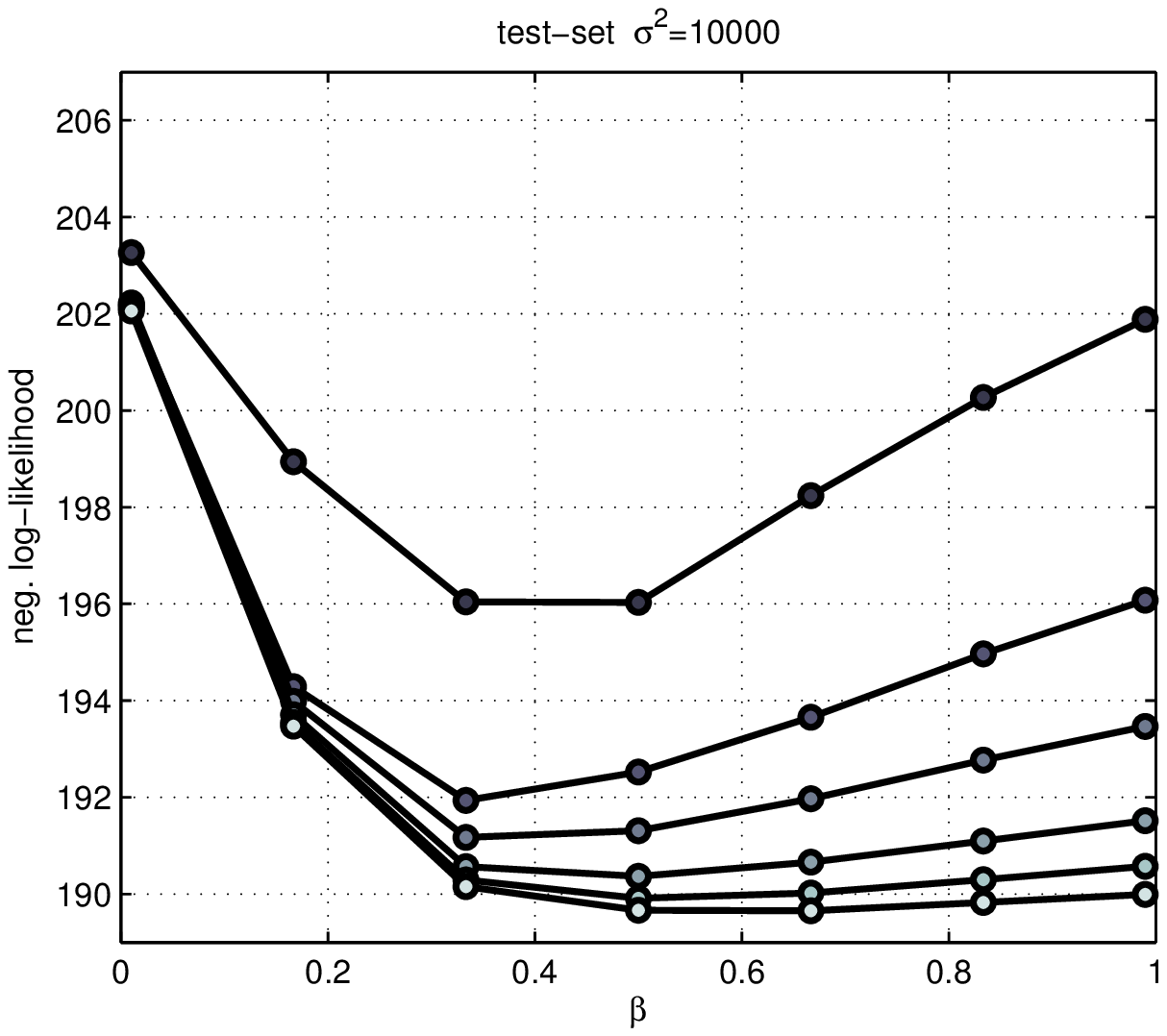}    &   \hspace{-5.5mm}
\includegraphics[trim=12.5mm 0mm 0mm 8.2mm,clip,totalheight=.178\textheight]{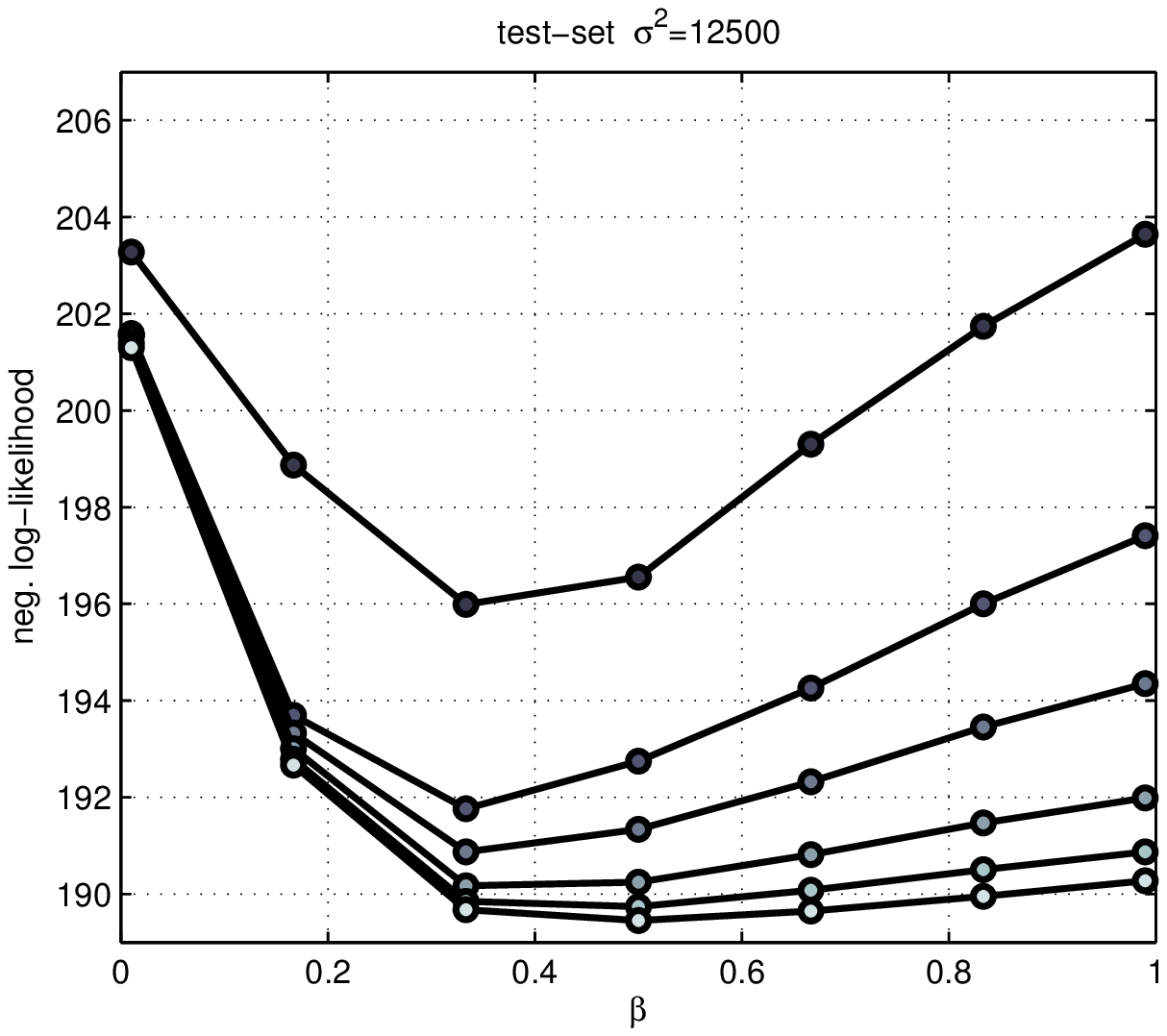}    &   \hspace{-5.5mm}
\includegraphics[trim=12.5mm 0mm 0mm 8.2mm,clip,totalheight=.178\textheight]{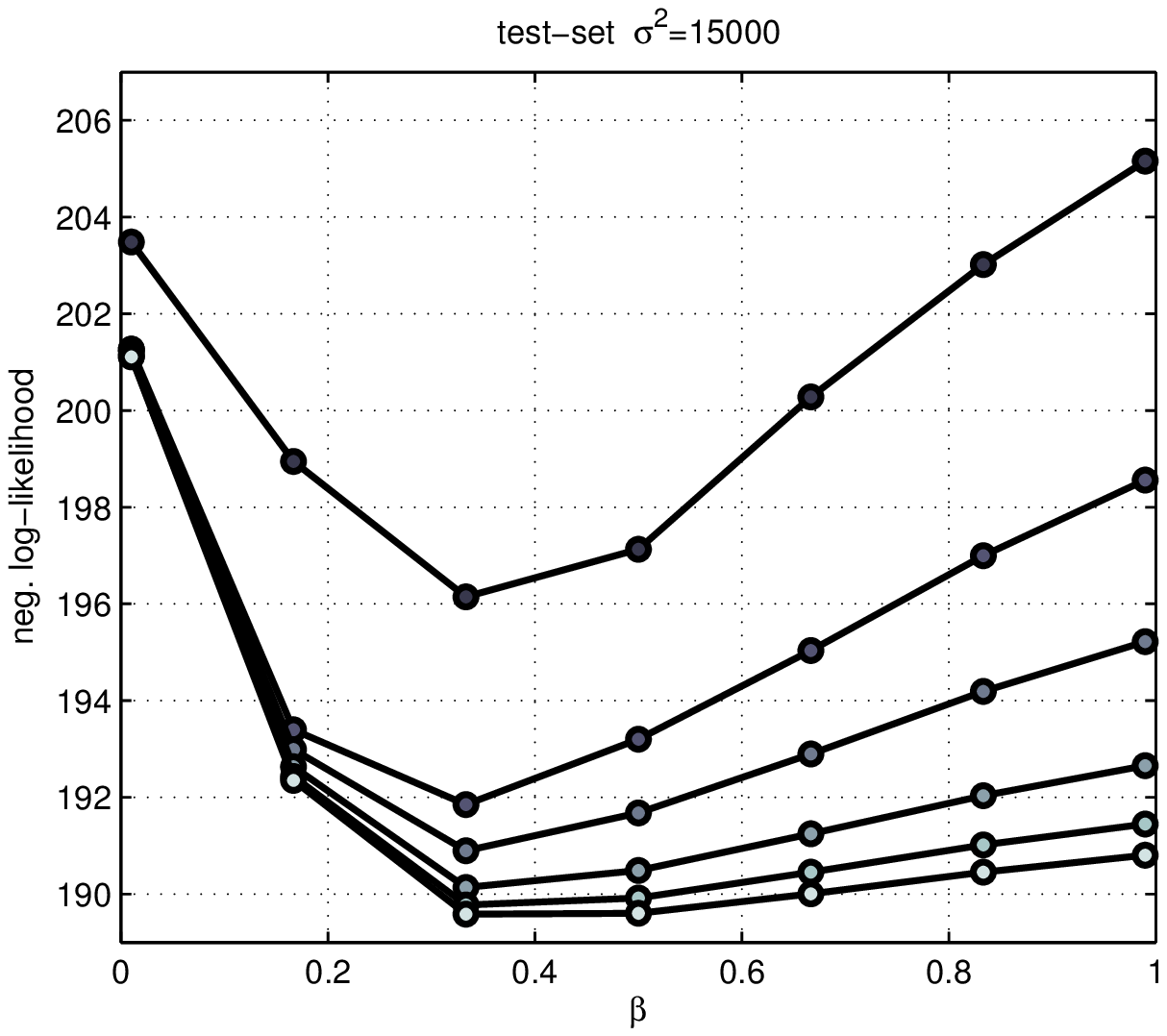} 
\end{tabular}
\vspace{-1.5em}
\caption{
	Train (top) and test (bottom) perplexities for the Boltzmann chain MRF with
	PL1/PL2 selection policy (x-axis:PL2 weight, y-axis:perplexity; see above
	and previous).
		\vspace{1em}
	\newline
	PL1/PL2 outperforms PL1/FL test perplexity at $\sigma^2=5000$ and continues
	to show improvement with weaker regularizers.  This is perhaps surprising
	since the previous policy includes FL as a special case, i.e.,
	$(\lambda,\beta)=(1,1)$.  We speculate that the regularizer's indirect
	connection to the training samples precludes it from preventing certain
	types of overfitting.  See Sec.~\ref{sec:winwin} for more discussion.
}\label{fig:chunk_boltzchain_pl1pl2_beta}

\vspace{-0.5em}
\end{figure}

\begin{figure}[ht!]
\hspace{-0.5cm}
\begin{tabular}{ccc}
\includegraphics[trim=0mm 5.5mm 0mm 0.2mm,clip,totalheight=.20\textheight]{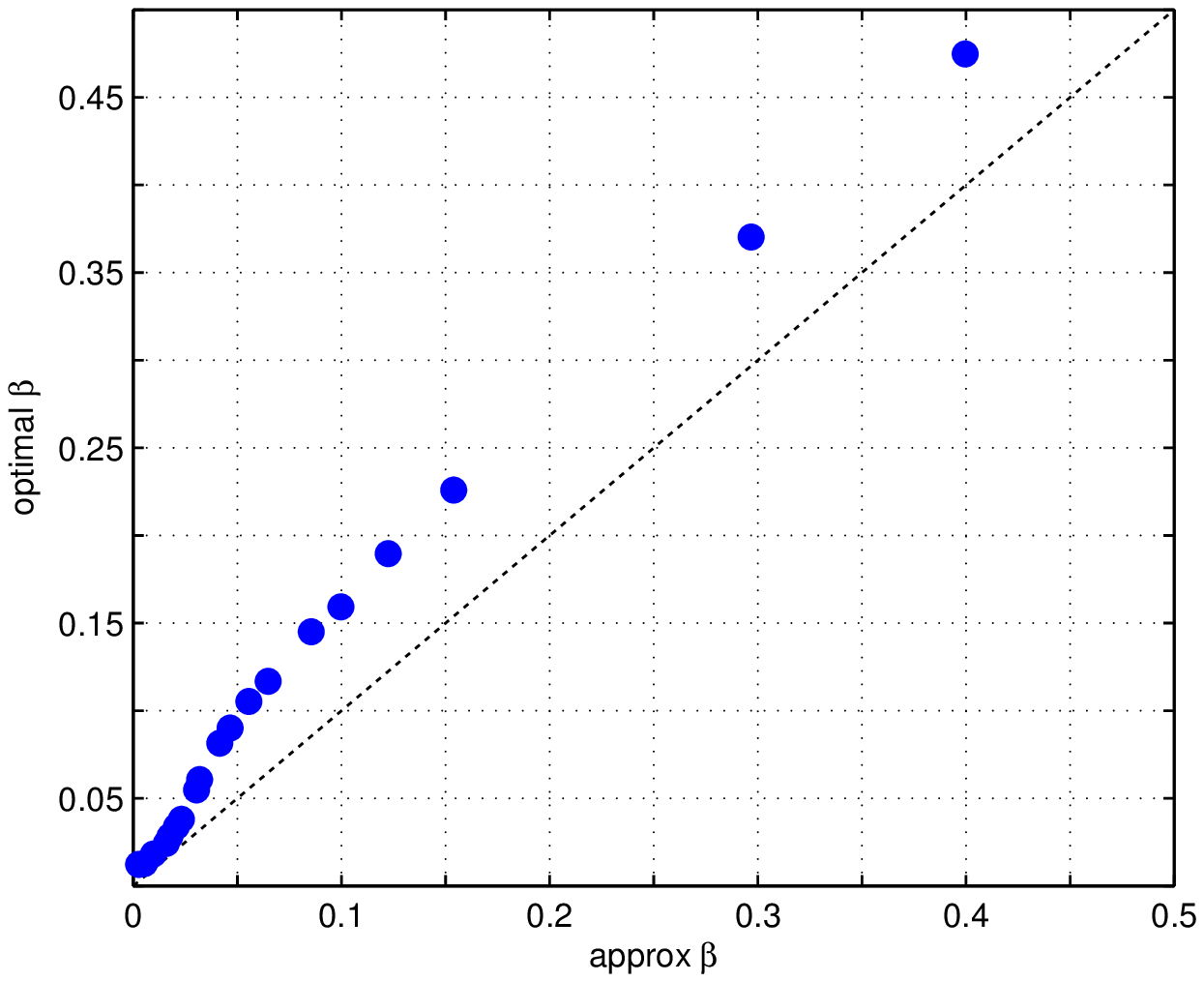}  &  \hspace{-3.5mm}
\includegraphics[trim=5.5mm 5.5mm 0mm 6.2mm,clip,totalheight=.20\textheight]{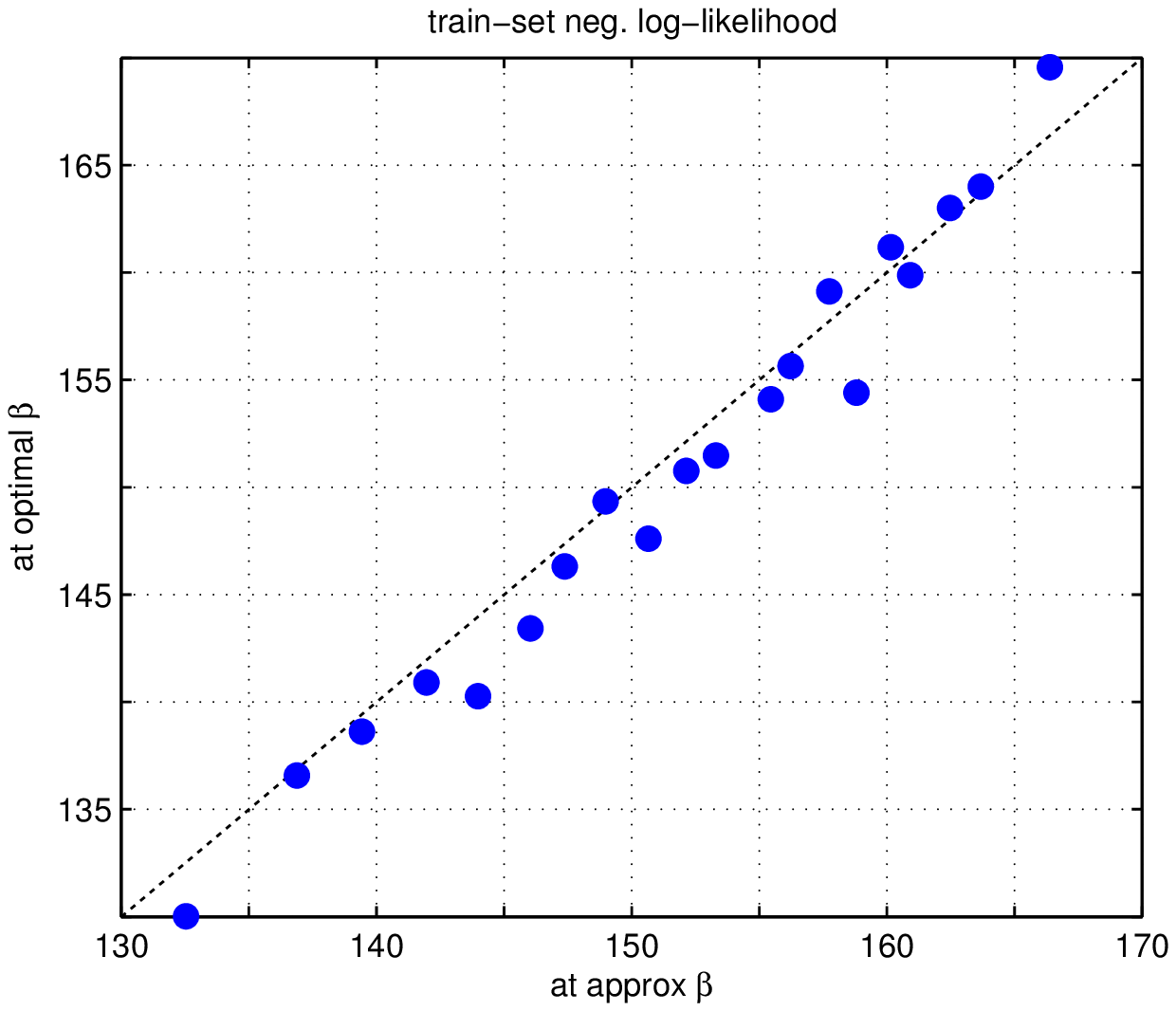}   &  \hspace{-5.5mm}
\includegraphics[trim=5.5mm 5.5mm 0mm 6.2mm,clip,totalheight=.20\textheight]{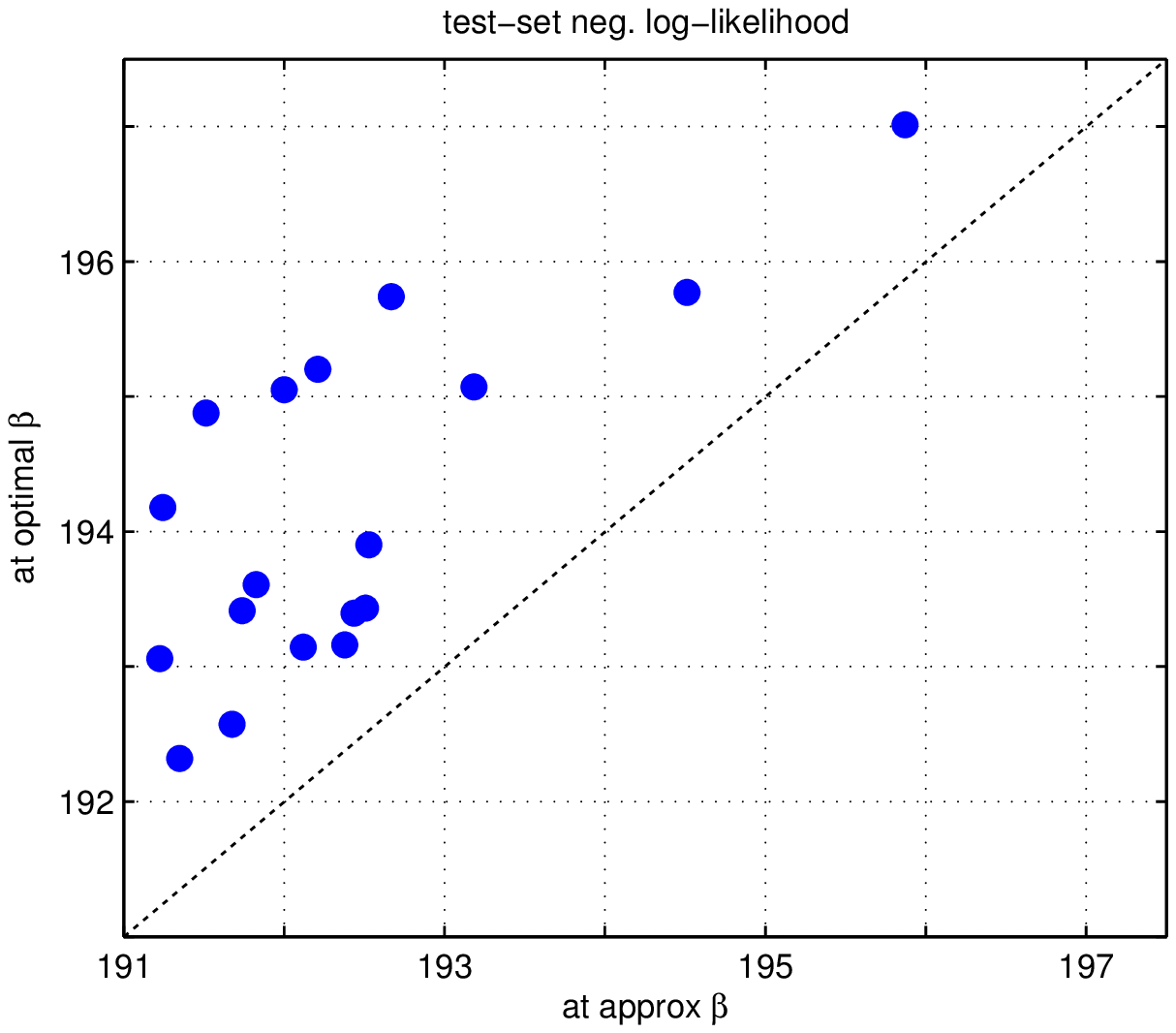}   \\
\includegraphics[trim=0mm 0.0mm 0mm 0.2mm,clip,totalheight=.21\textheight]{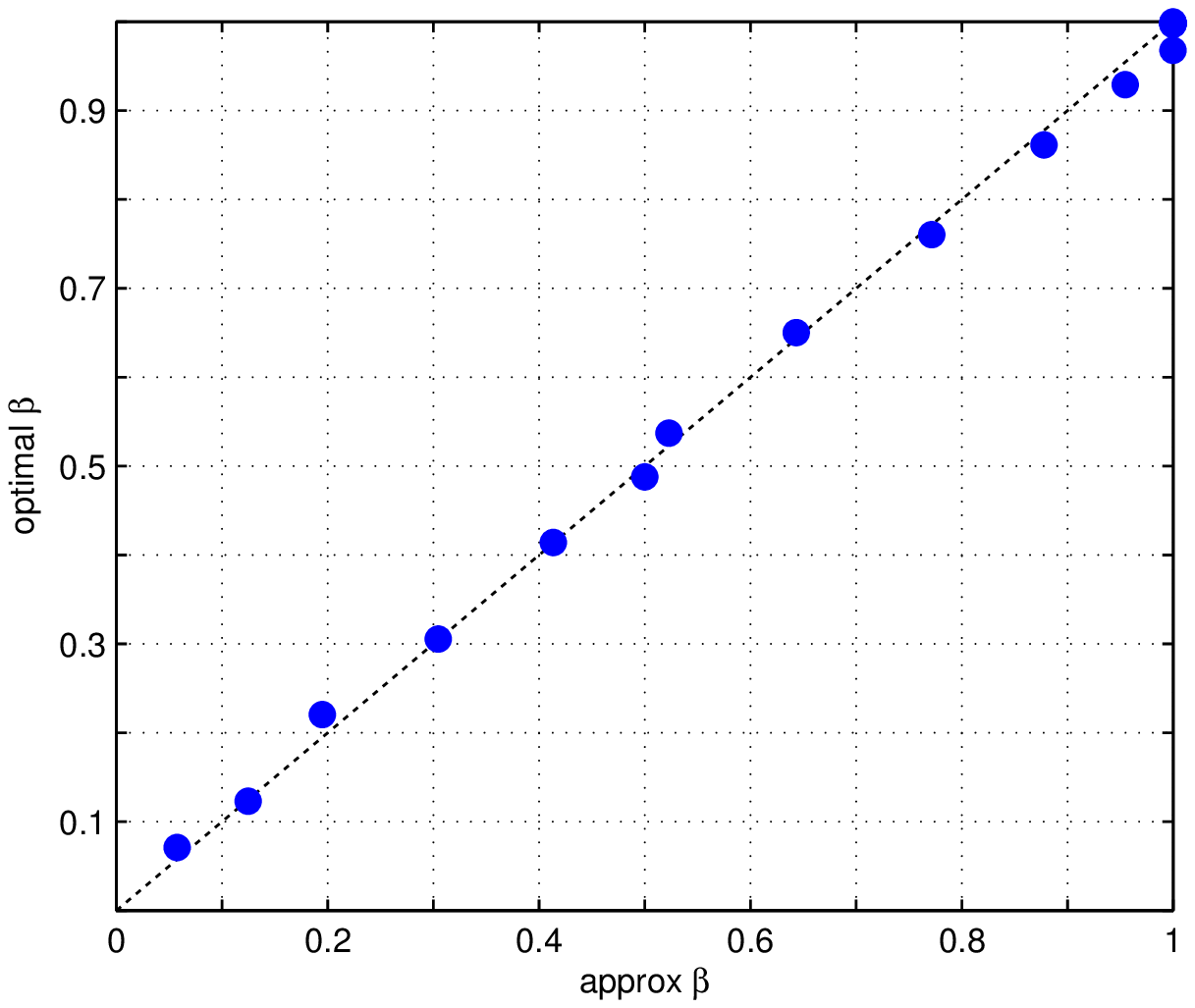} &  \hspace{-3.5mm}
\includegraphics[trim=7.6mm 0.0mm 0mm 6.2mm,clip,totalheight=.211\textheight]{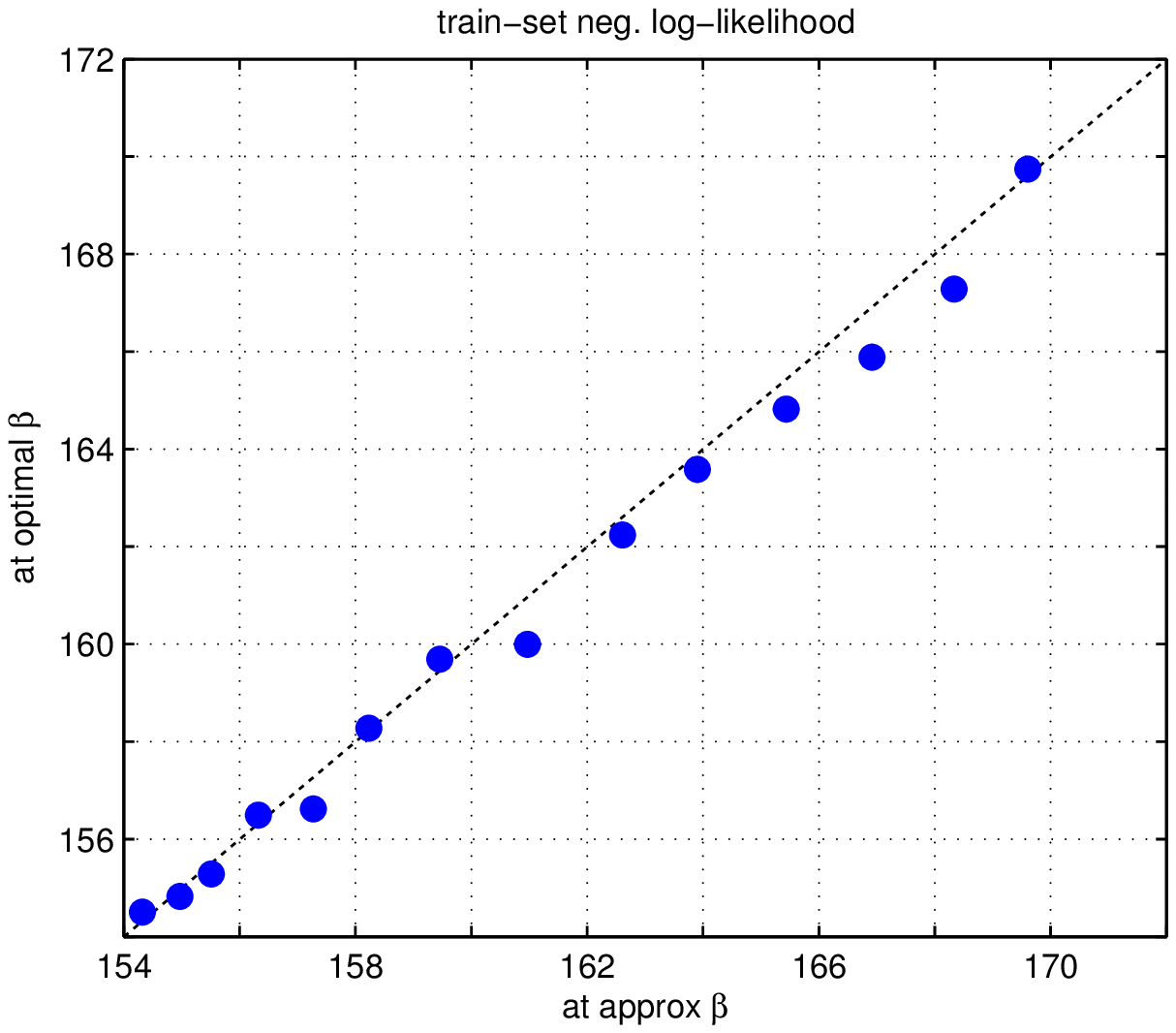} &  \hspace{-5.5mm}
\includegraphics[trim=5.8mm 0.0mm 0mm 6.2mm,clip,totalheight=.211\textheight]{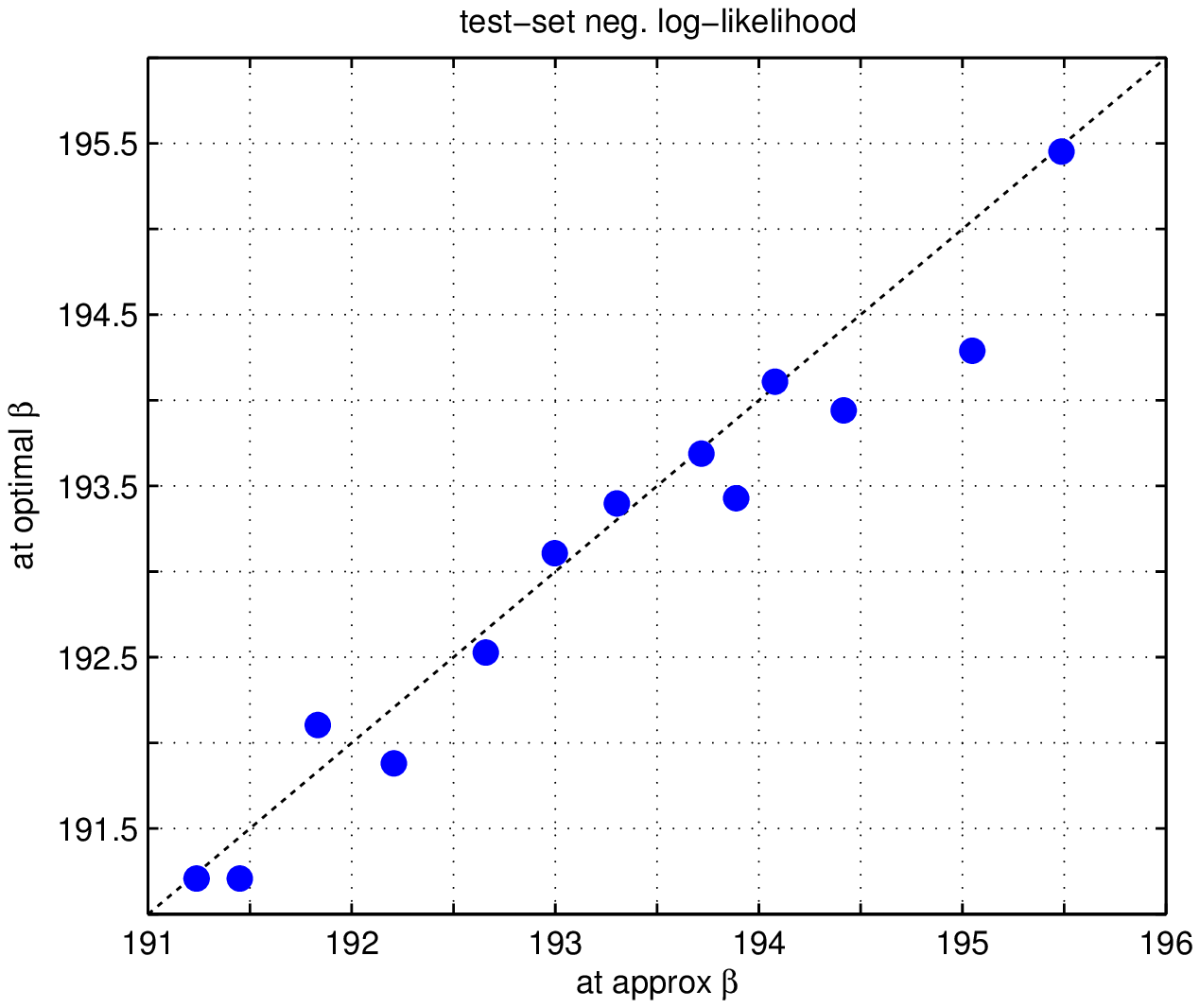}
\end{tabular}
\vspace{-1.5em}
\caption{
	Demonstration of the effectiveness of the $\beta$ heuristic, i.e., using
	$\hat{\theta}^{msl}$ as a plug-in estimate for $\theta_0$ to periodically
	re-estimate $\beta$ during gradient descent.  Results are for the Boltzmann
	chain with PL1/FL (top) and PL1/PL2 (bottom) selection policies.  The
	x-axis is the value at the heuristically found $\beta$ and the y-axis the
	value at the optimal $\beta.$ The optimal $\beta$ was found be evaluating
	over a $\beta$ grid and choosing that with the smallest train set
	perplexity.  The first column depicts the best performing $\beta$ against
	the heuristic $\beta$.  The second and third columns depict the training
	and testing perplexities (resp.) at the best performing $\beta$ and
	heuristically found $\beta$.  For all three columns, we assess the
	effectiveness of the heuristic by its nearness to the diagonal (dashed
	line).
	\vspace{1em}
	\newline
	For the PL1/PL2 policy the heuristic closely matched the optimal (all
	bottom row points are on diagonal).  The heuristic out-performed the
	optimal on the test set and had slightly higher perplexity on the training
	set.  It is a positive result, albeit somewhat surprising, and is
	attributable to either coarseness in the grid or improved generalization by
	accounting for variability in $\hat{\theta}^{msl}$.
}\label{fig:chunk_boltzchain_heuristic}
\end{figure}

\subsubsection{CRFs}\label{sec:chunk_crf}
Conditional random fields are the discriminative counterpart of Boltzmann
chains (cf.\ Figures~\ref{fig:crfgm} and \ref{fig:boltzchaingm}).  Since $x$ is
not jointly modeled with $y$, we are free to include features with
non-independence across time steps without significantly increasing the
computational complexity.  Here our notion of pseudo likelihood is more
traditional, e.g., $\Pr(Y_2|Y_1,Y,3,X_2)$ and $\Pr(Y_2,Y_3|Y_1,Y,4,X_2,X_3)$
are valid 1st and 2nd order pseudo likelihood components.

We employ a subset of the features outlined in \cite{Sha2003} which proved
competitive for the CoNLL-2000 shared task.  Our feature vector was based on
seven feature categories, resulting in a total of 273,571 binary features
(i.e., $\sum_i f_i(x_t)=7$).  The feature categories consisted of word
unigrams, POS unigrams, word bigrams (forward and backward), and POS bigrams
(forward and backward) as well as a stopword indicator (and its complement) as
defined by \cite{lewis04rcv}.  The set of possible feature/label pairs is much
larger than our set--we use only those features supported by the CoNLL-2000
dataset, i.e., those which occur at least once.  Thus we modeled 297,041
feature/label pairs and 847 transitions for a total of 297,888
parameters.  As before, we use the $L_2$ regularizer,
$\exp\{-\frac{1}{2\sigma^{2}} ||\theta||^2_2\}$, which is is stronger when
$\sigma^2$ is small and weak when $\sigma^2$ is large.

We demonstrate learning on two selection policies: pseudo/full likelihood
(Figures \ref{fig:chunk_crf_pl1fl_cont} and \ref{fig:chunk_crf_pl1fl_beta}) and
1st/2nd order pseudo likelihood (Figures \ref{fig:chunk_crf_pl1pl2_cont} and
\ref{fig:chunk_crf_pl1pl2_beta}).  In both selection polices we note a
significant difference from the Boltzmann chain, $\beta$ has less impact on
both train and test perplexity.  Intuitively, this seems reasonable as the
component likelihood range and variance are constrained by the conditional
nature of CRFs.  Figure \ref{fig:chunk_crf_complexity} demonstrates the
empirical accuracy/complexity tradeoff and Figure \ref{fig:chunk_crf_heuristic}
depicts the effectiveness of the $\beta$ heuristic.  See captions for further
comments.

\begin{figure}
\centering
\begin{tabular}{ccc}
\includegraphics[trim=0mm 0mm 0mm 0mm,clip,totalheight=.25\textheight]{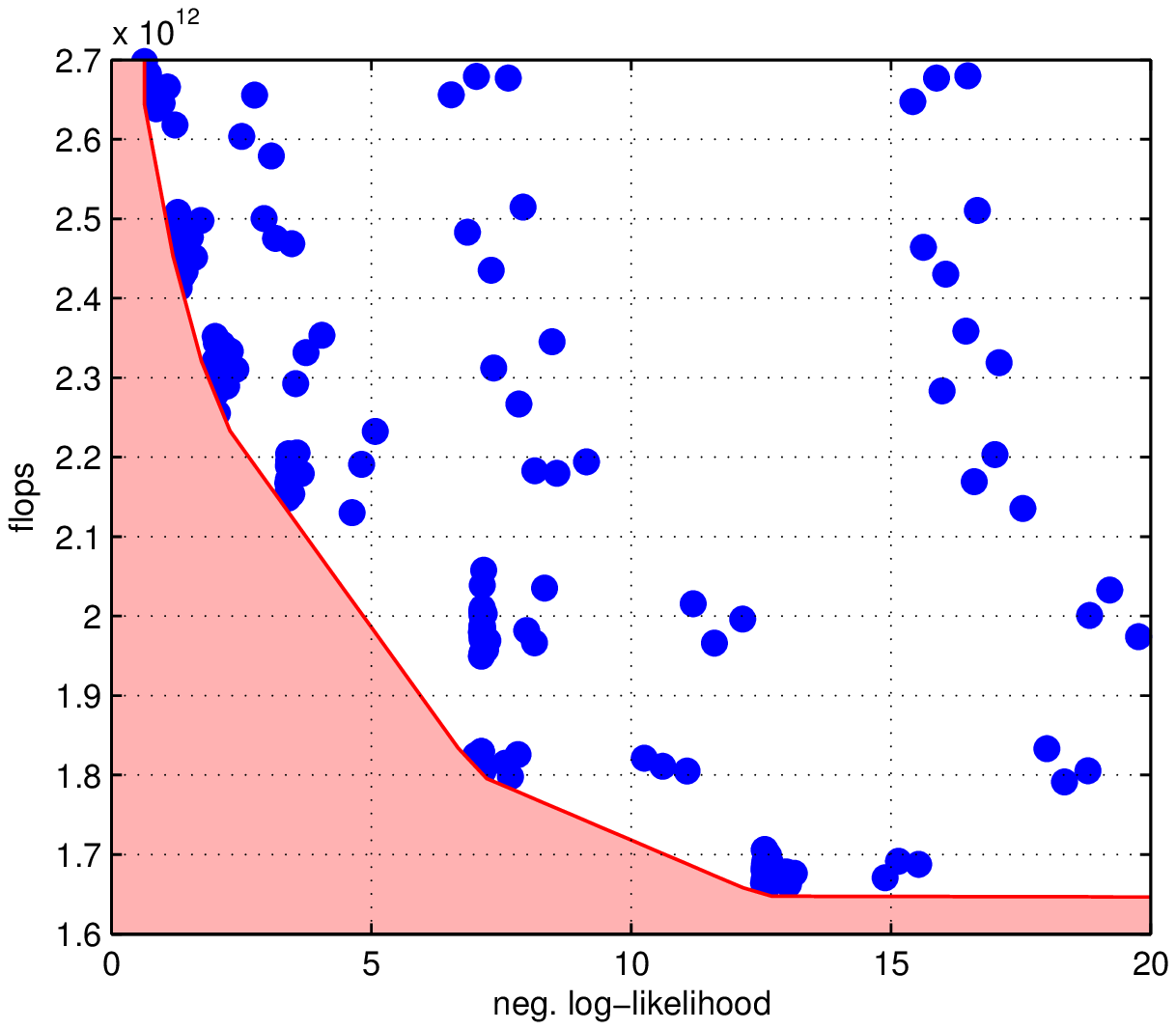} & \includegraphics[trim=0mm 0mm 0mm 0mm,clip,totalheight=.25\textheight]{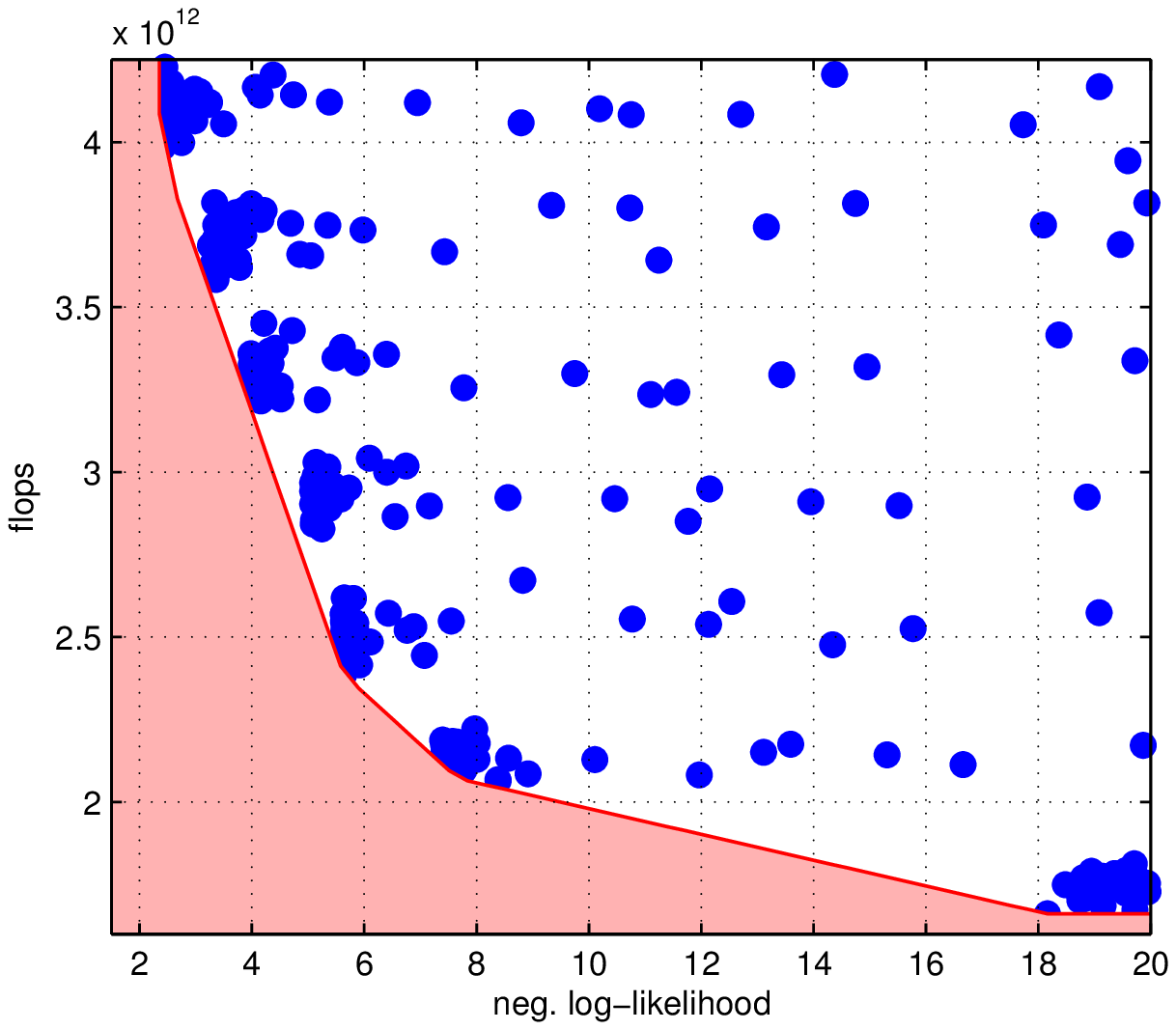} \end{tabular}
\caption{
	Accuracy and complexity tradeoff for the CRF with PL1/FL (left) and PL1/PL2
	(right) selection policies.  Each point represents the negative loglikelihood (perplexity) and the number of flops
	required to evaluate the composite likelihood and its gradient under a particular instance of the selection policy.  The shaded region represents empirically unobtainable combinations of computational complexity and accuracy.  $\sigma^2$.  
}\label{fig:chunk_crf_complexity}
\end{figure}

\begin{figure}[ht!]
\vspace{-3.0em}
\hspace{-0.75cm}
\begin{tabular}{cccc}
\includegraphics[trim= 0.0mm 10.2mm 0mm 9.5mm,clip,totalheight=.157\textheight]{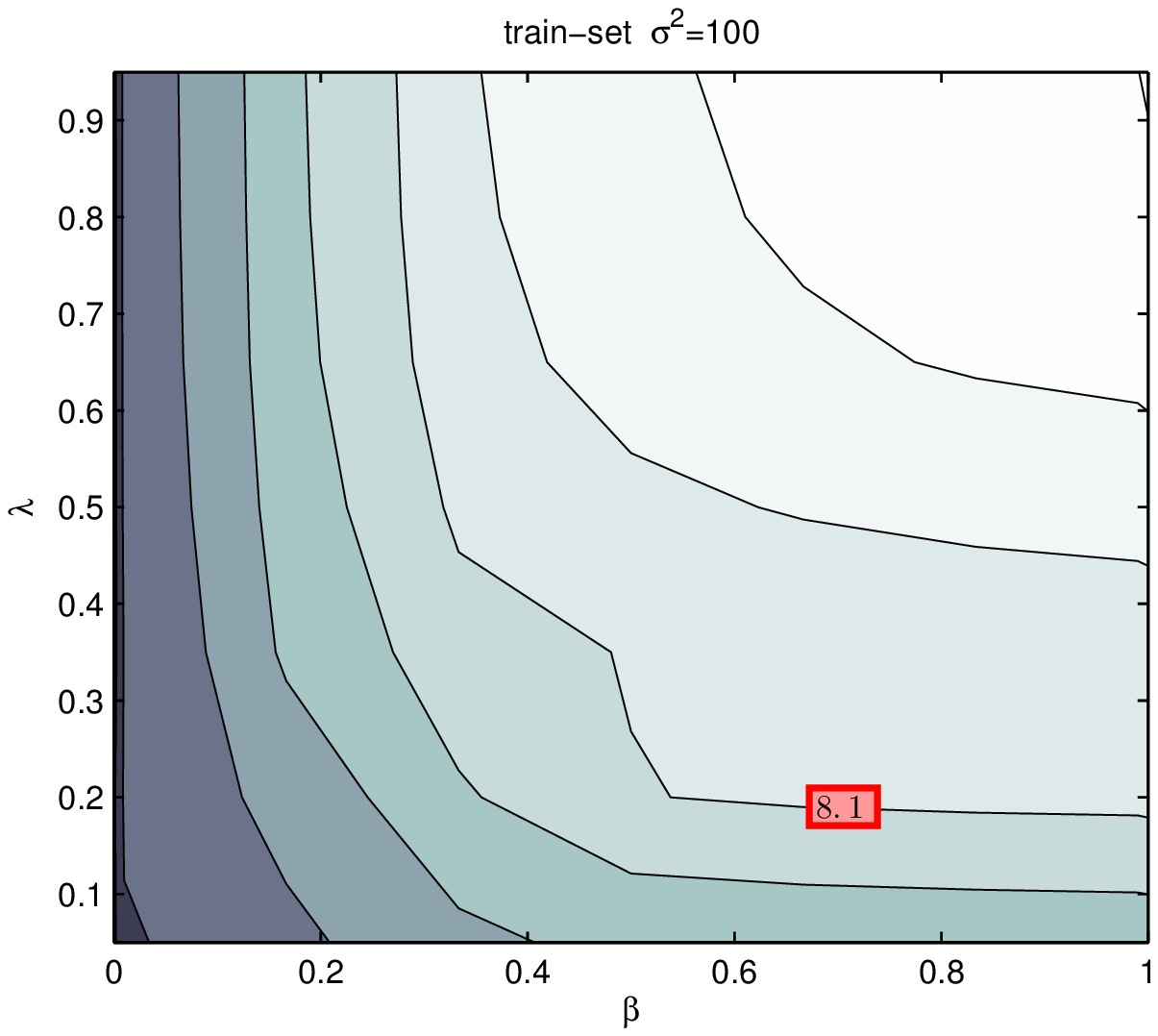}  &  \hspace{-5.5mm}
\includegraphics[trim=12.0mm 10.2mm 0mm 9.5mm,clip,totalheight=.157\textheight]{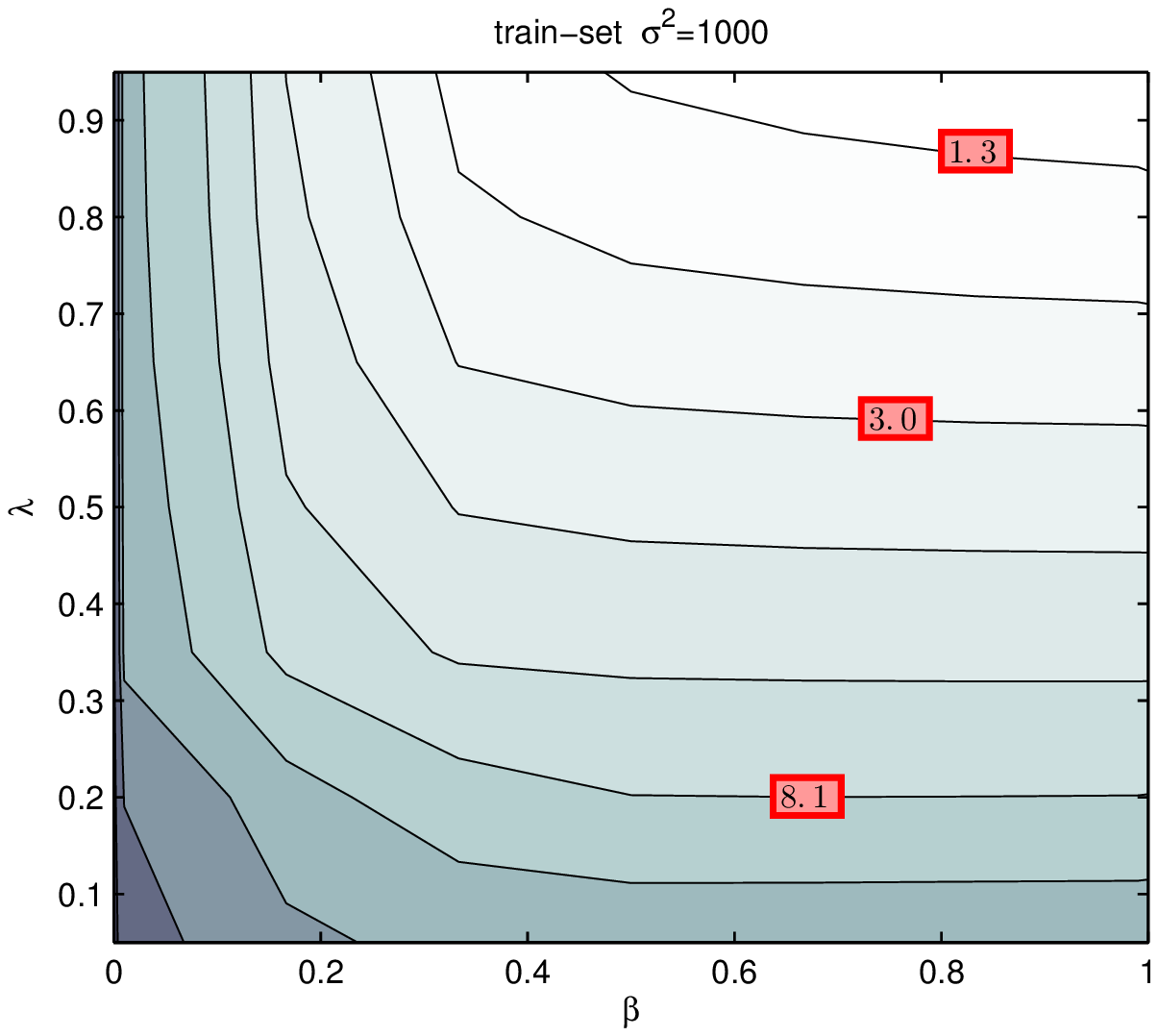} &  \hspace{-5.5mm}
\includegraphics[trim=12.0mm 10.2mm 0mm 9.5mm,clip,totalheight=.157\textheight]{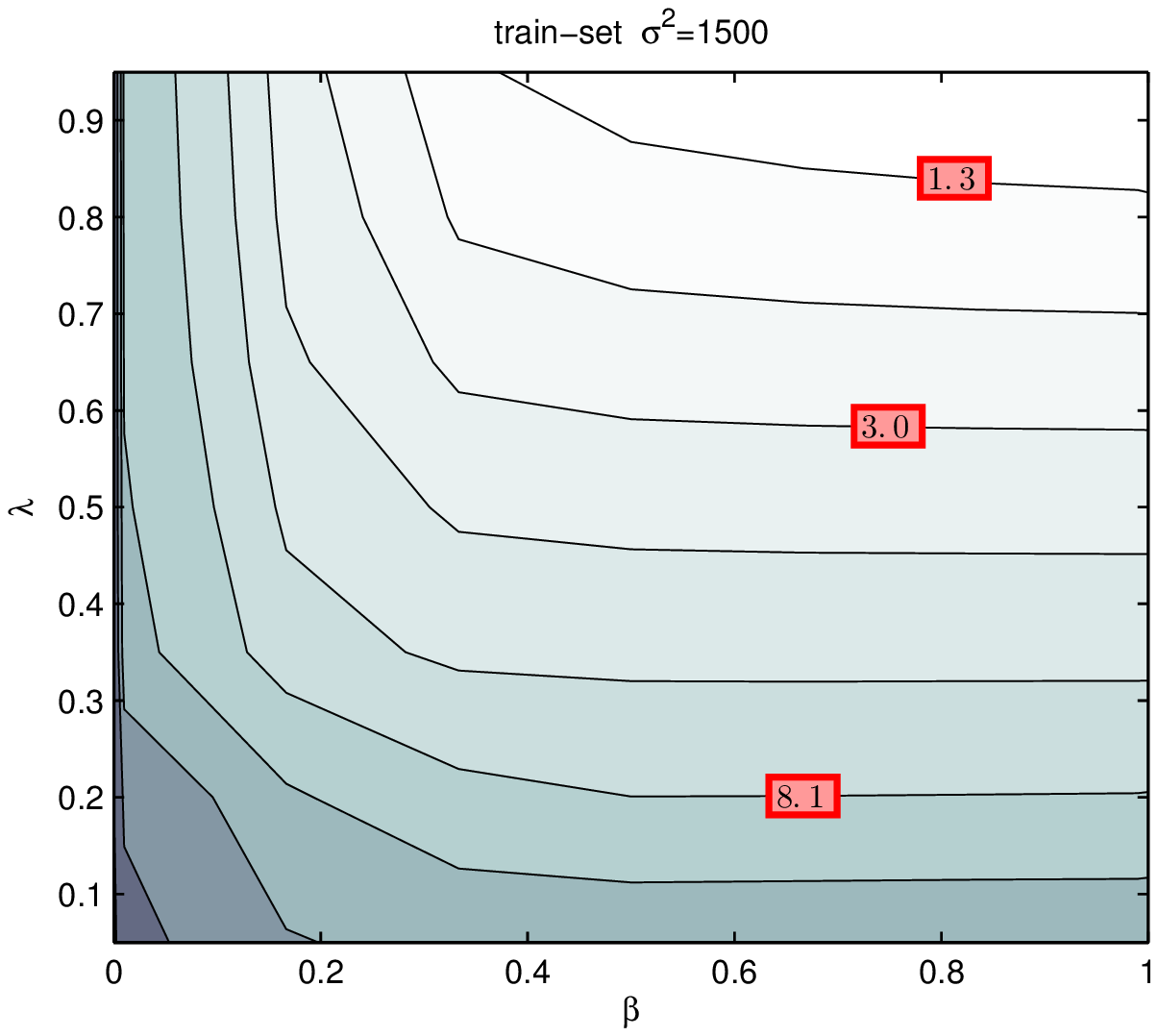} &  \hspace{-5.5mm}
\includegraphics[trim=12.0mm 10.2mm 0mm 9.5mm,clip,totalheight=.157\textheight]{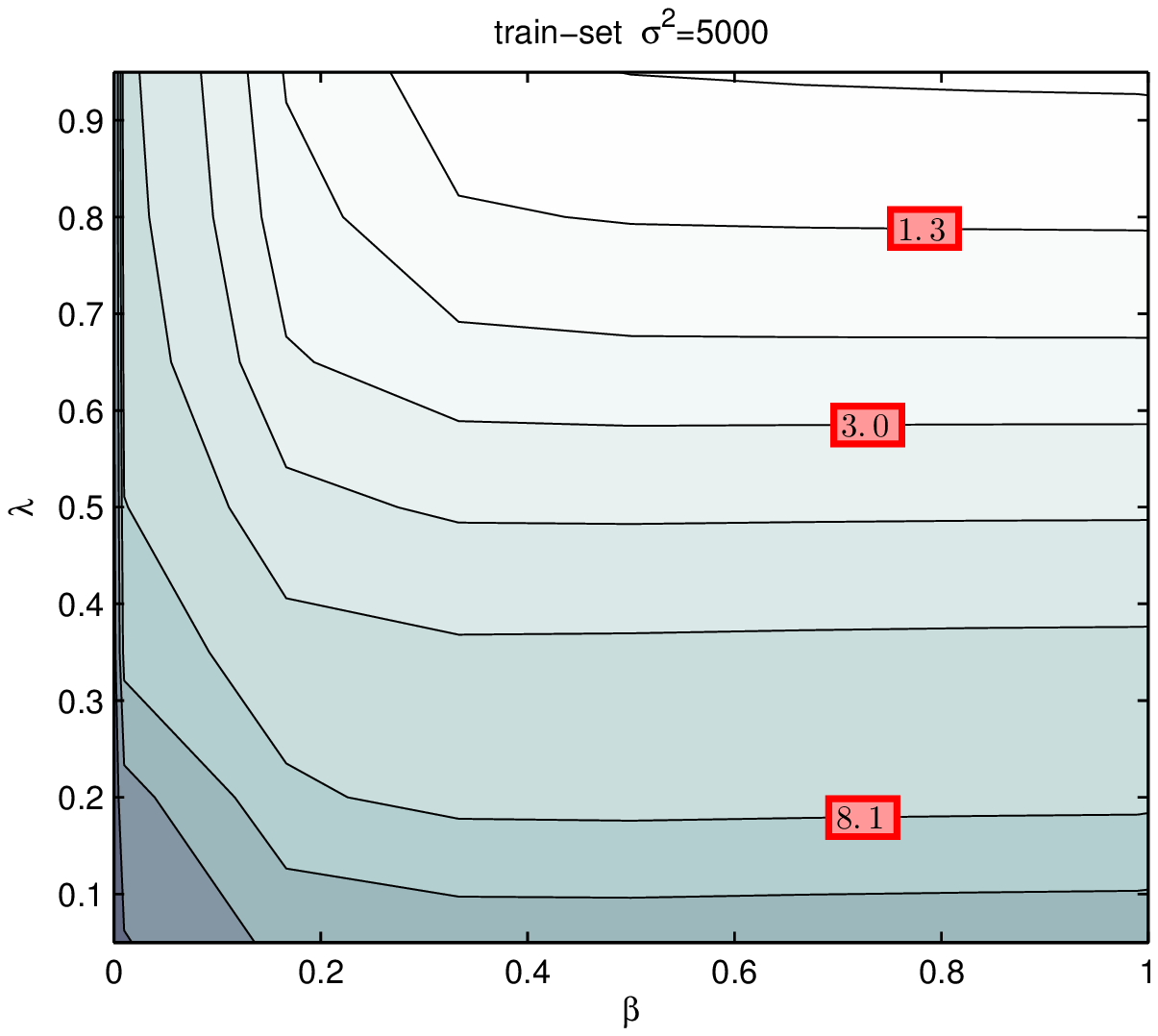} \\
\includegraphics[trim= 0.0mm 0mm 0mm 9.5mm,clip,totalheight=.175\textheight]{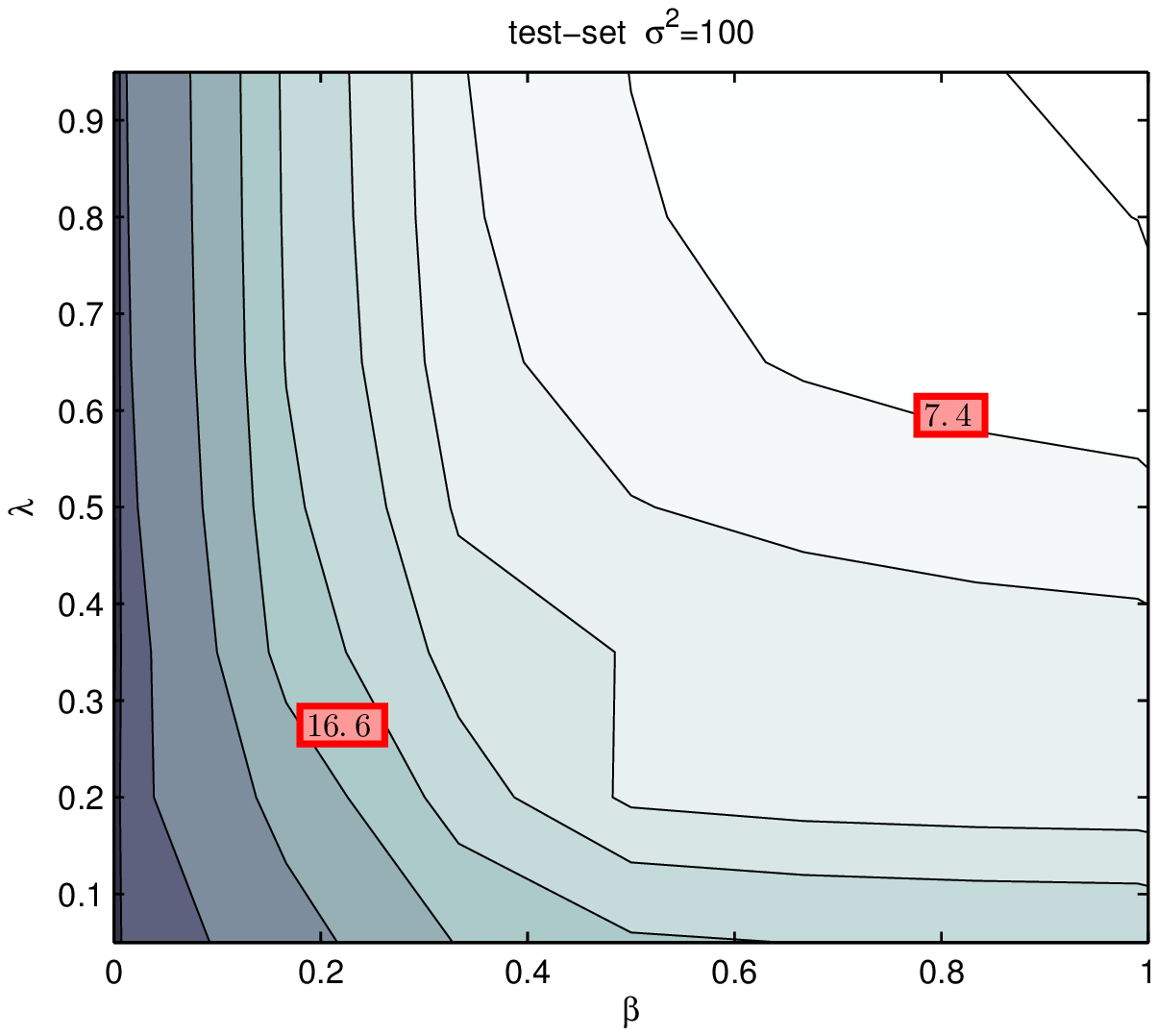}     &  \hspace{-5.5mm}
\includegraphics[trim=12.0mm 0mm 0mm 9.5mm,clip,totalheight=.175\textheight]{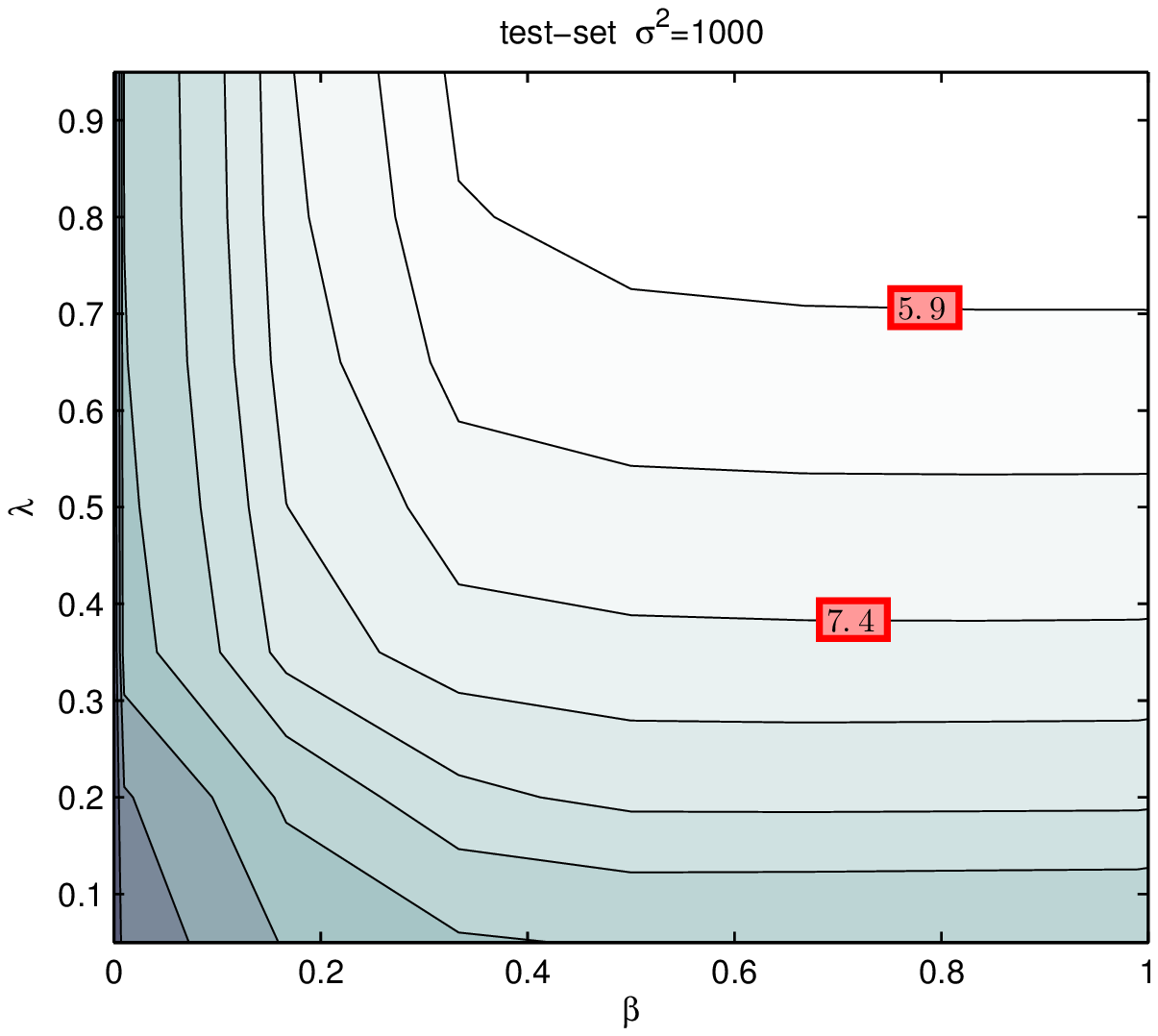}    &  \hspace{-5.5mm}
\includegraphics[trim=12.0mm 0mm 0mm 9.5mm,clip,totalheight=.175\textheight]{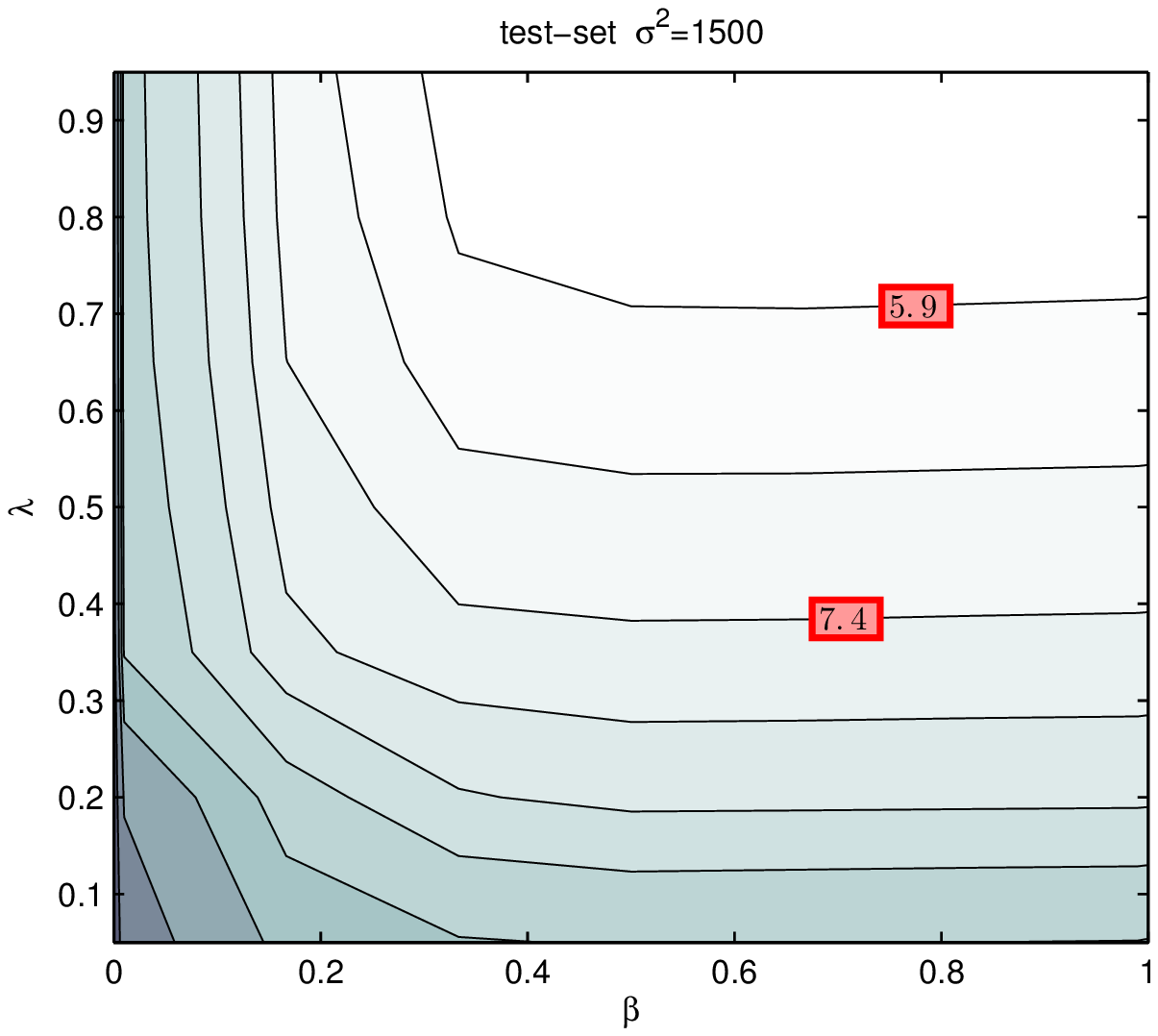}    &  \hspace{-5.5mm}
\includegraphics[trim=12.0mm 0mm 0mm 9.5mm,clip,totalheight=.175\textheight]{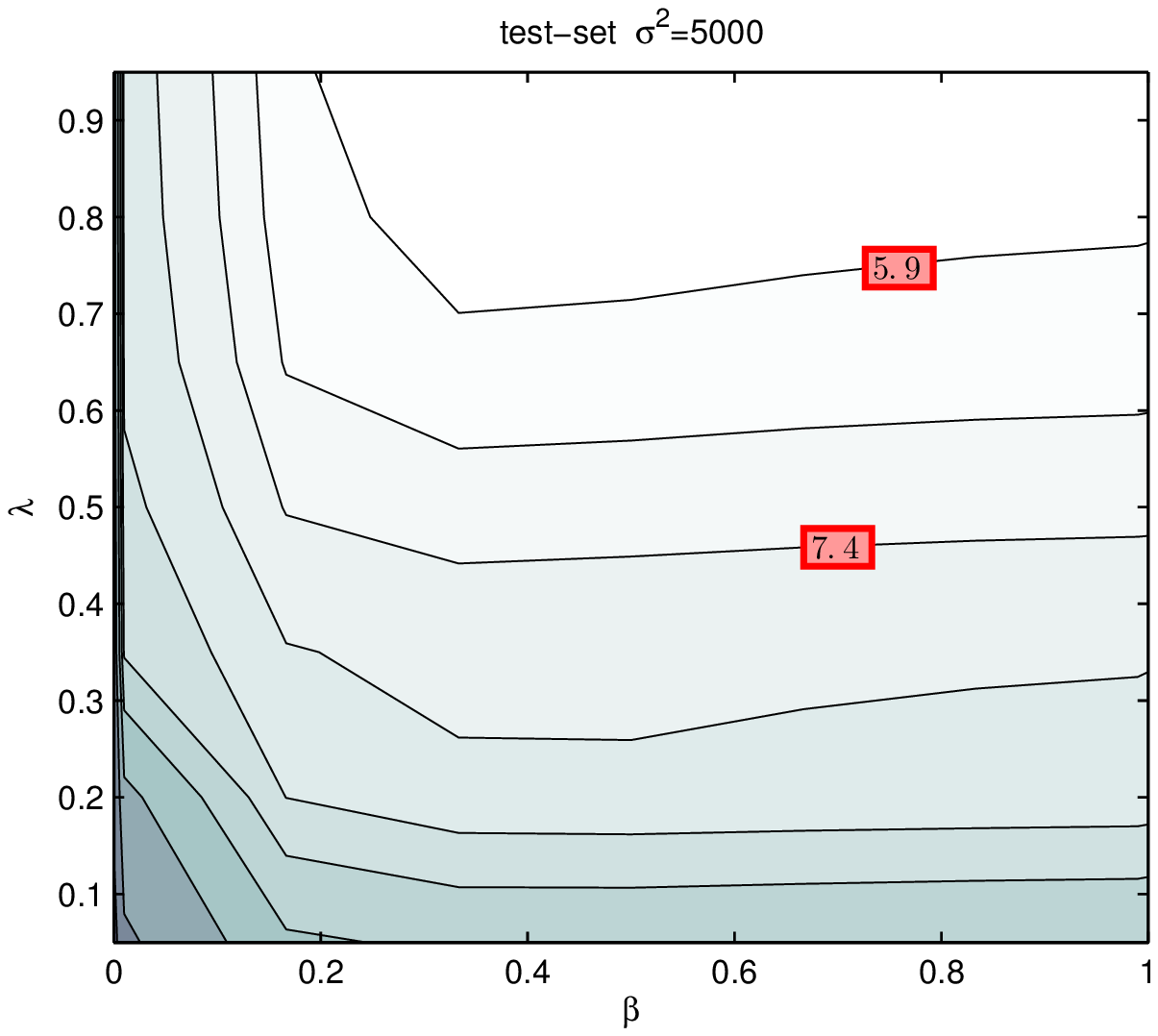}
\end{tabular}
\vspace{-1.5em}
\caption{
	Train set (top) and test set (bottom) perplexity for the CRF with
	pseudo/full likelihood selection policy (PL1/FL).  The x-axis corresponds
	to FL weight and the y-axis the probability of its selection.  PL1 is
	selected with probability 1 and weight $1-\beta$.  Columns from left to
	right correspond to $\sigma^2=\{5000, 10000, 12500, 15000\}$.  See
	Figure~\ref{fig:chunk_boltzchain_pl1fl_cont} for more details.  The best
	achievable test set perplexity is about 5.5.
	\vspace{1em}
	\newline
	Although we cannot directly compare CRFs to its generative counterpart, we
	observe some strikingly different trends.  It is immediately clear that the
	CRF is less sensitive to the relative weighting of components than is the
	Boltzmann chain.  This is partially attributable to a smaller range of the
	objective--the CRF is already conditional hence the per-component
	perplexity range is reduced.}\label{fig:chunk_crf_pl1fl_cont}

\vspace{0.5em}
\hspace{-0.75cm}
\begin{tabular}{cccc}
\includegraphics[trim= 0.0mm 9.1mm 0mm 8.2mm,clip,totalheight=.162\textheight]{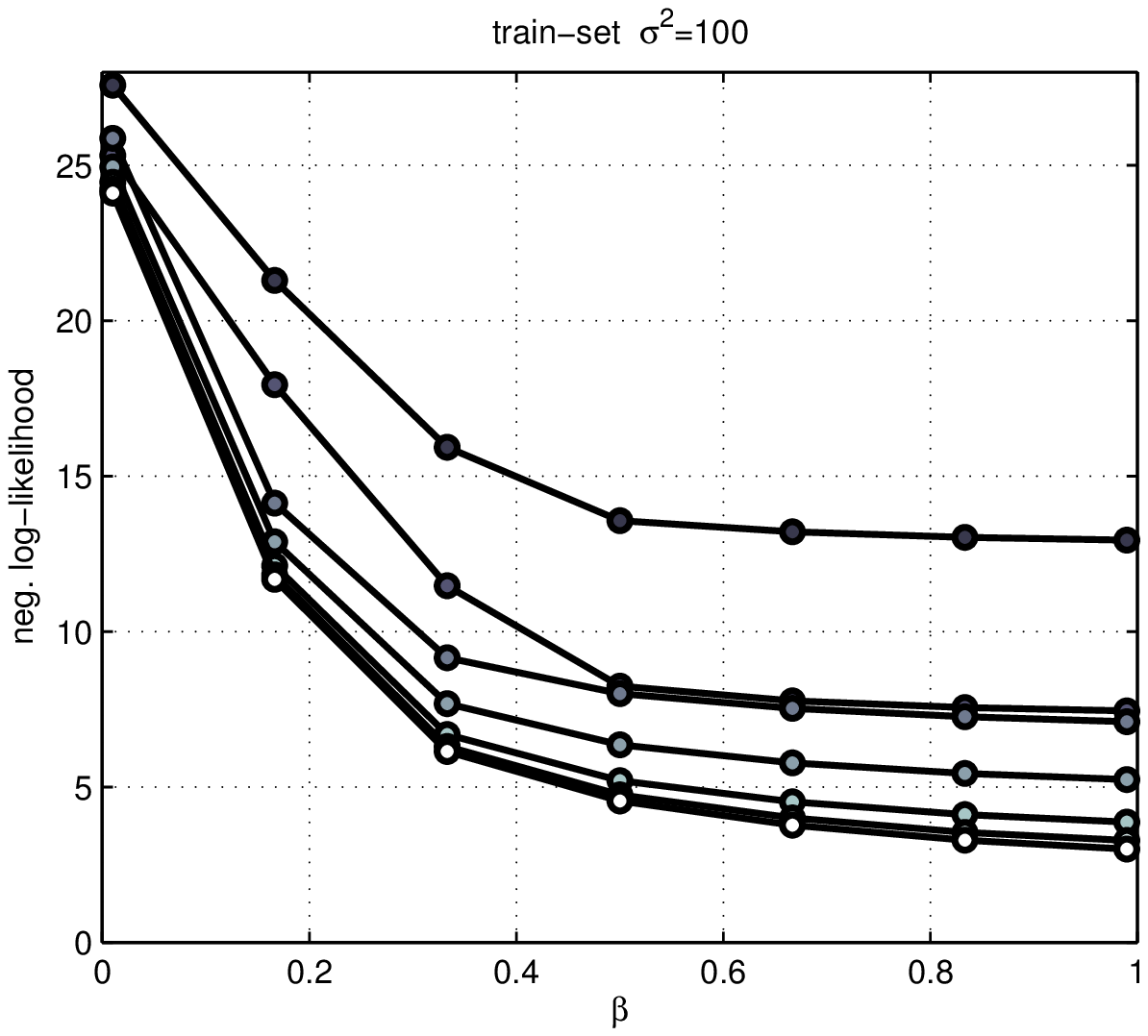}  &   \hspace{-5.5mm}
\includegraphics[trim=11.0mm 9.1mm 0mm 8.2mm,clip,totalheight=.162\textheight]{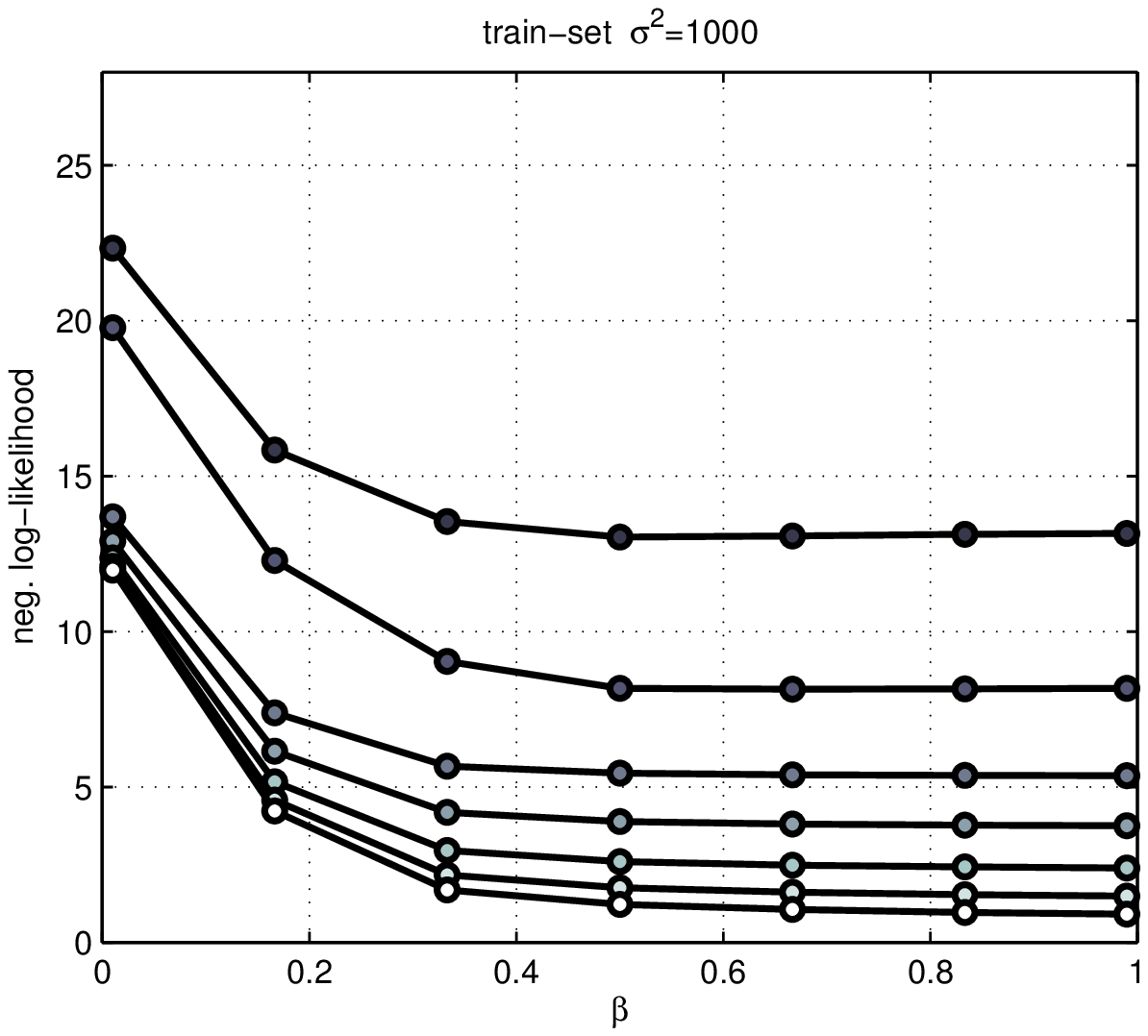} &   \hspace{-5.5mm}
\includegraphics[trim=11.0mm 9.1mm 0mm 8.2mm,clip,totalheight=.162\textheight]{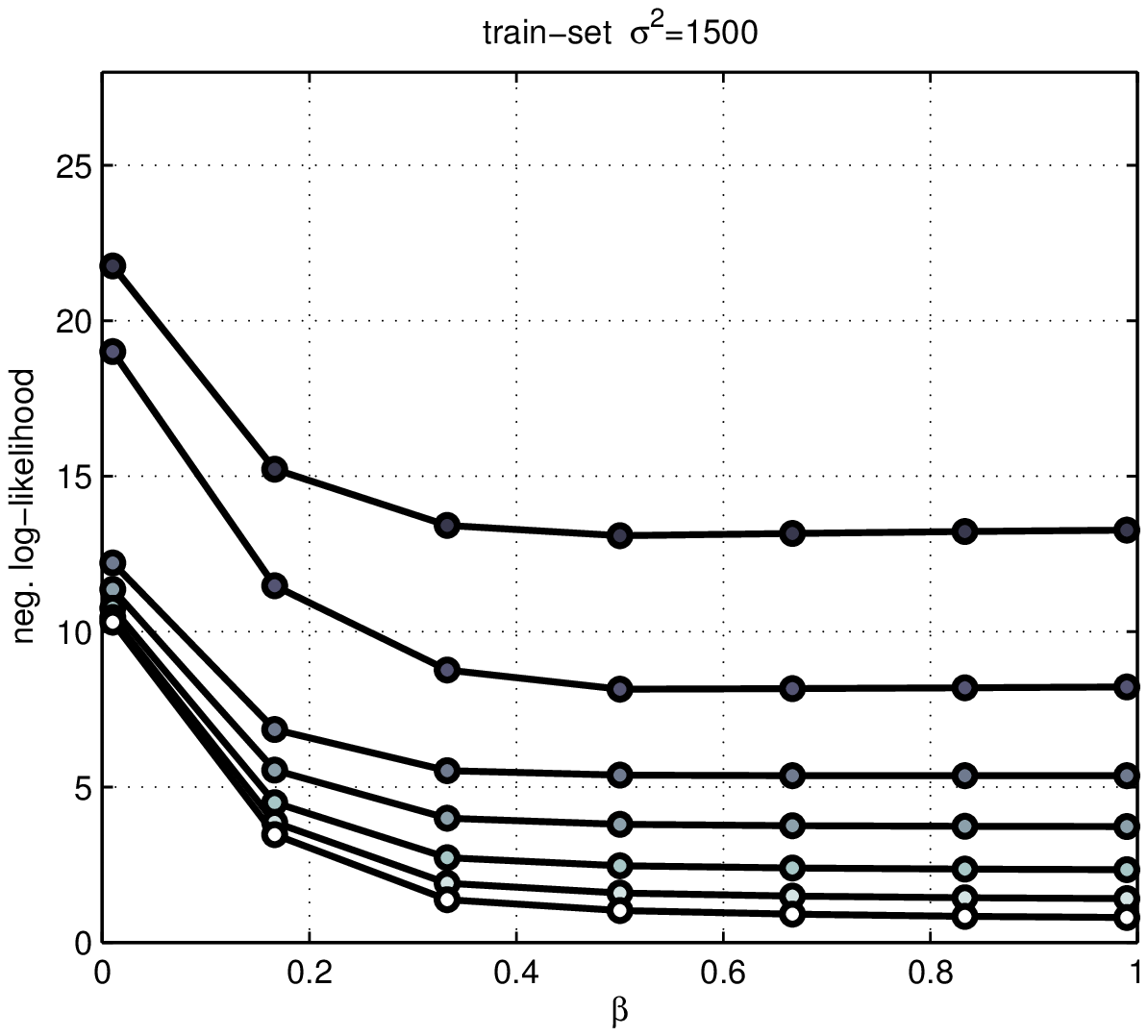} &   \hspace{-5.5mm}
\includegraphics[trim=11.0mm 9.1mm 0mm 8.2mm,clip,totalheight=.162\textheight]{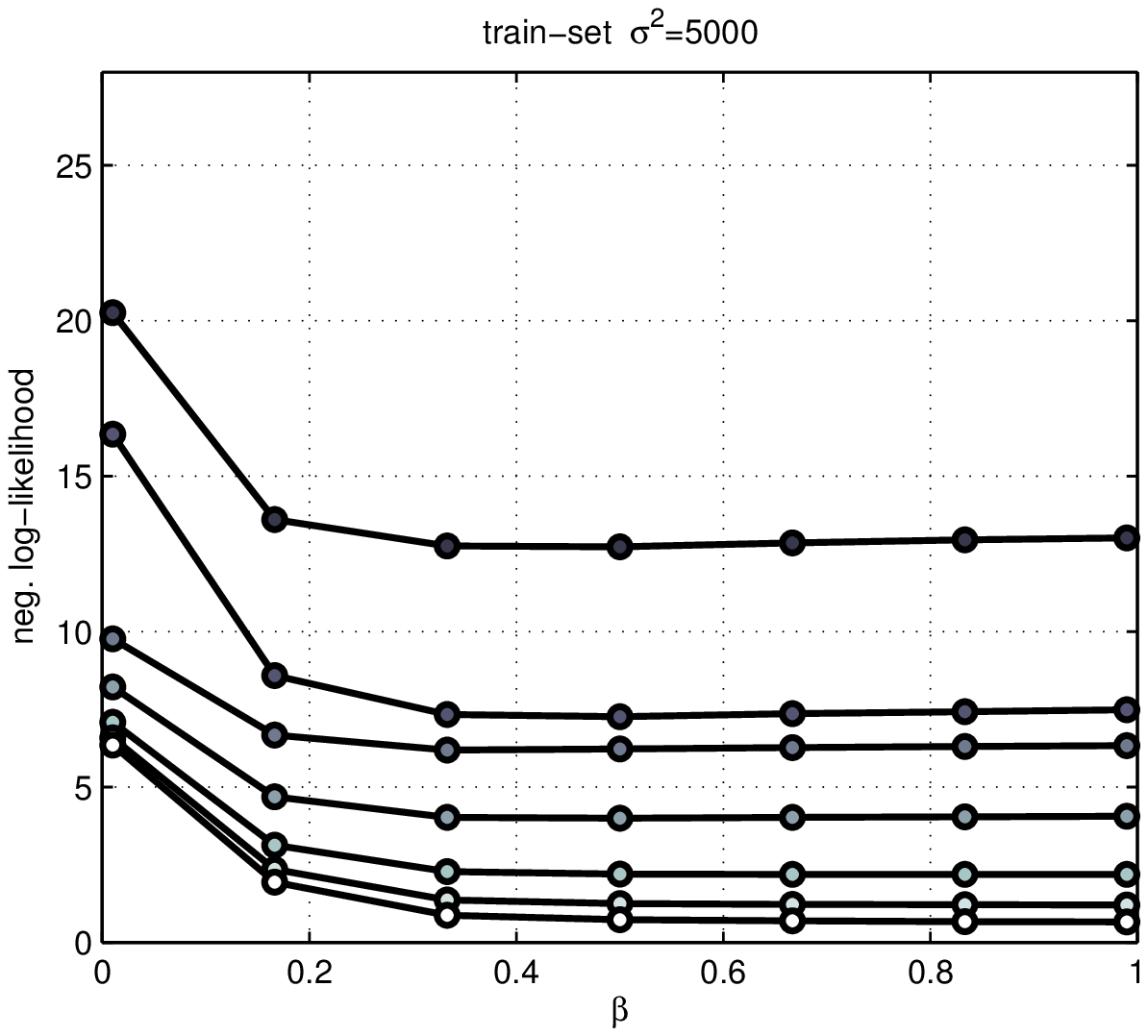} \\
\includegraphics[trim= 0.0mm 0mm 0mm 8.2mm,clip,totalheight=.178\textheight]{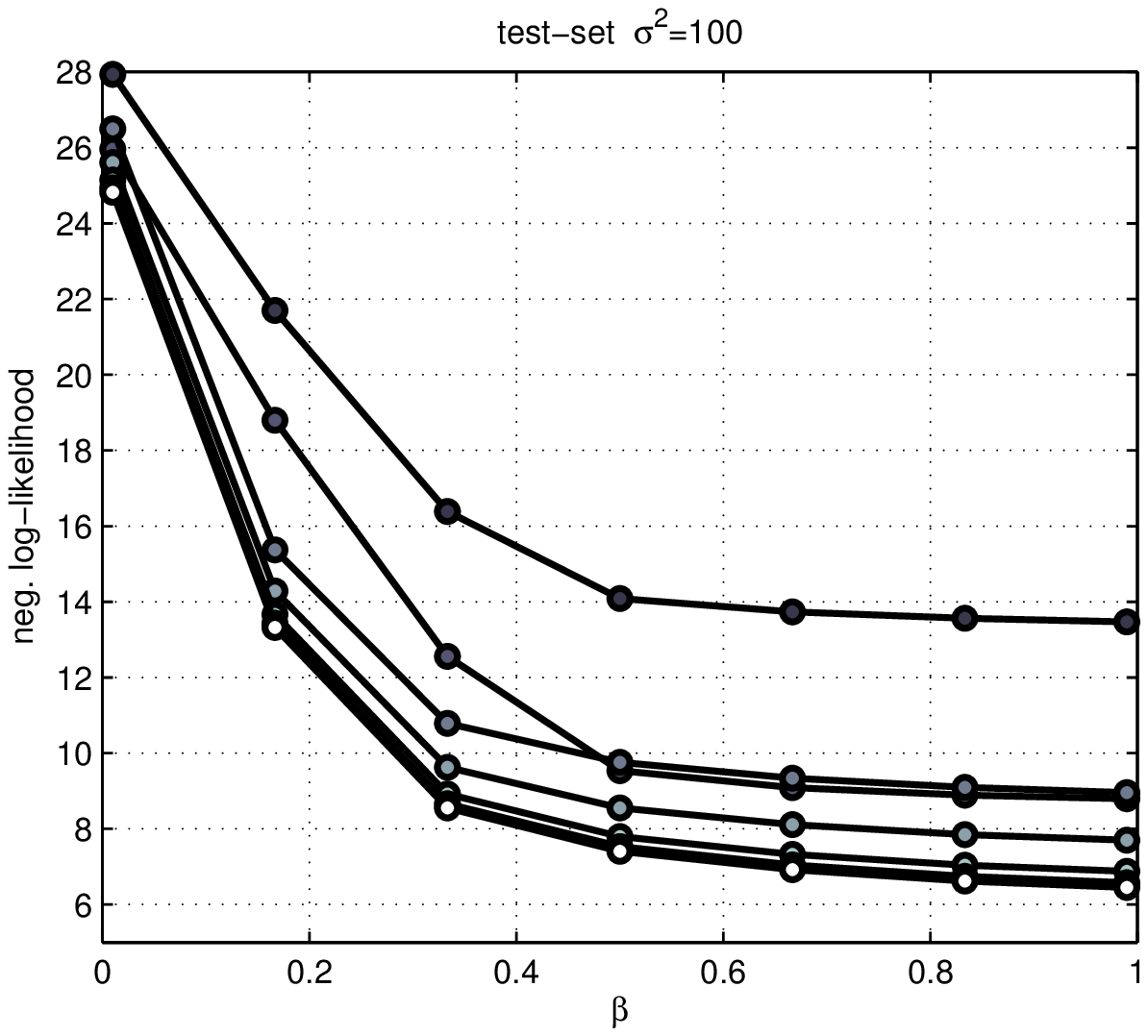}    &   \hspace{-5.5mm}
\includegraphics[trim=11.0mm 0mm 0mm 8.2mm,clip,totalheight=.178\textheight]{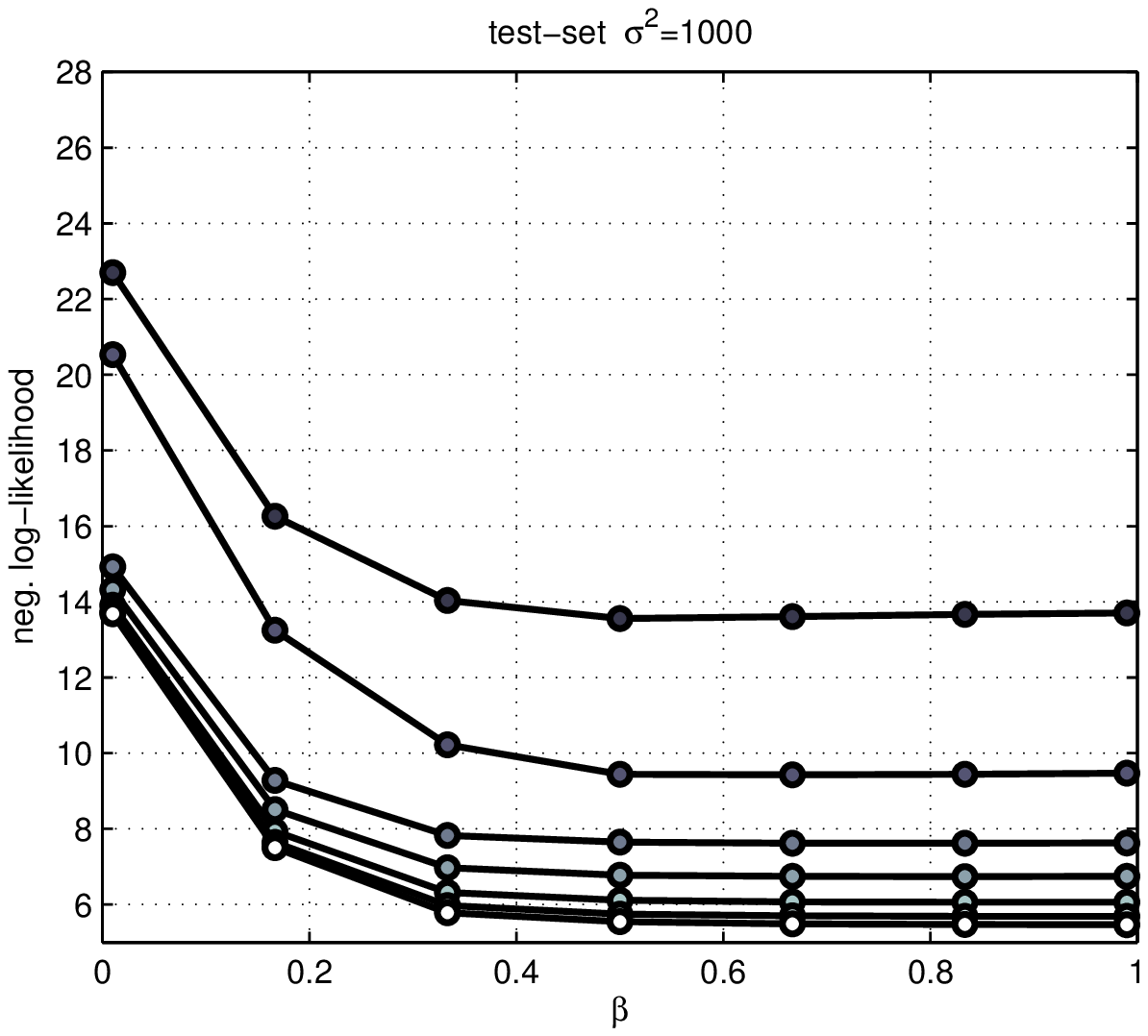}   &   \hspace{-5.5mm}
\includegraphics[trim=11.0mm 0mm 0mm 8.2mm,clip,totalheight=.178\textheight]{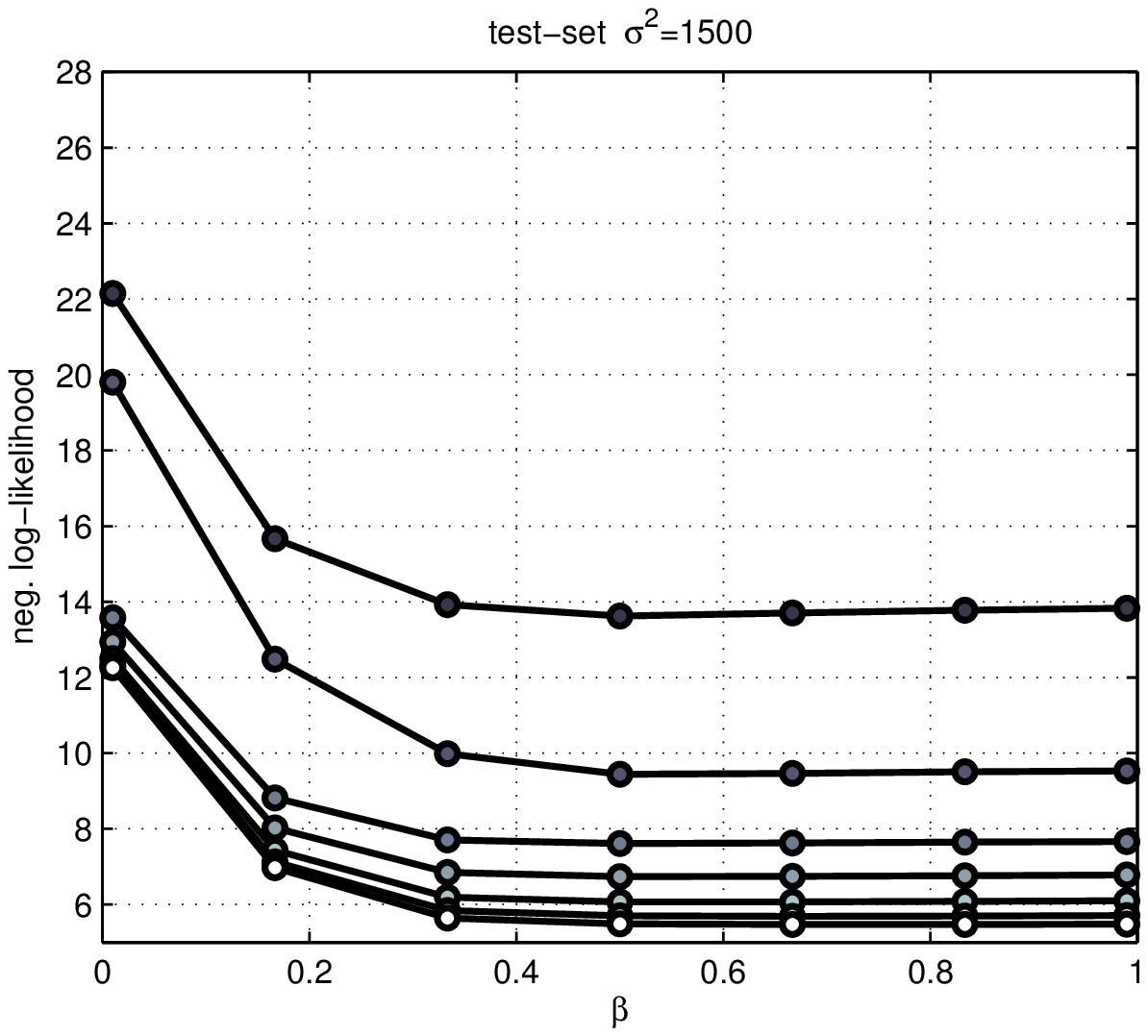}   &   \hspace{-5.5mm}
\includegraphics[trim=11.0mm 0mm 0mm 8.2mm,clip,totalheight=.178\textheight]{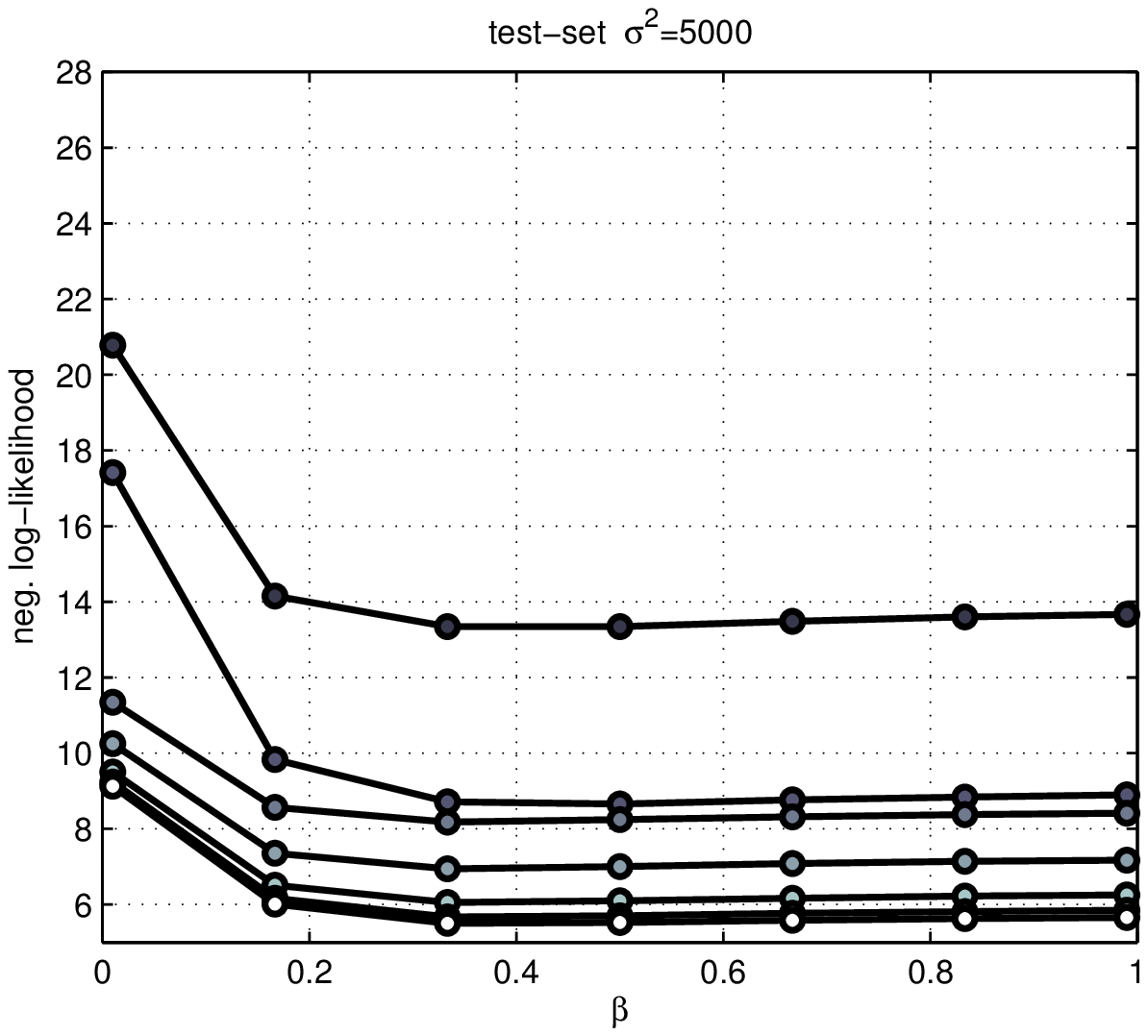} 
\end{tabular}
\vspace{-1.5em}
\caption{
	Train (top) and test (bottom) perplexities for a CRF with PL1/FL selection
	policy (x-axis:FL weight, y-axis:perplexity; see above and
	Fig.~\ref{fig:chunk_boltzchain_pl1fl_beta}).
	\vspace{1em}
	\newline
	Perhaps more evidently here than above, we note that the significance of a
	particular $\beta$ is less than that of the Boltzmann chain. However, for
	large enough $\sigma^2$, the optimal $\beta\ne 1$. This indicates the dual
	role of PL1 as a regularizer. Moreover, the left panel calls attention to
	the interplay between $\beta$, $\lambda$, and $\sigma^2$.  See
	Sec.~\ref{sec:interplay} for more discussion.
}
\label{fig:chunk_crf_pl1fl_beta}

\vspace{-1.5em}
\end{figure}

\begin{figure}[ht!]
\hspace{-0.75cm}
\begin{tabular}{cccc}
\includegraphics[trim= 0.0mm 10.2mm 0mm 9.5mm,clip,totalheight=.157\textheight]{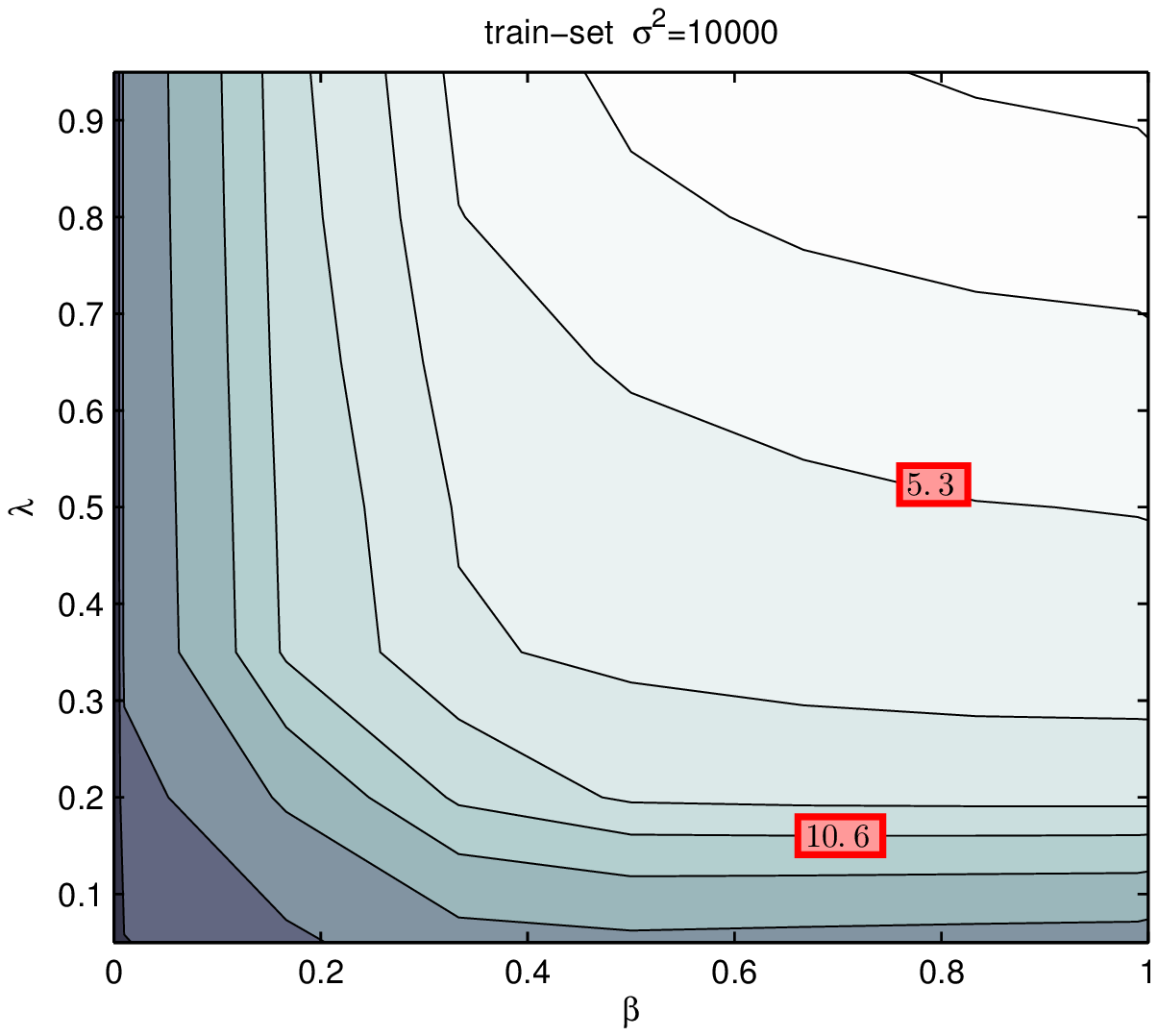} &  \hspace{-5.5mm}
\includegraphics[trim=12.0mm 10.2mm 0mm 9.5mm,clip,totalheight=.157\textheight]{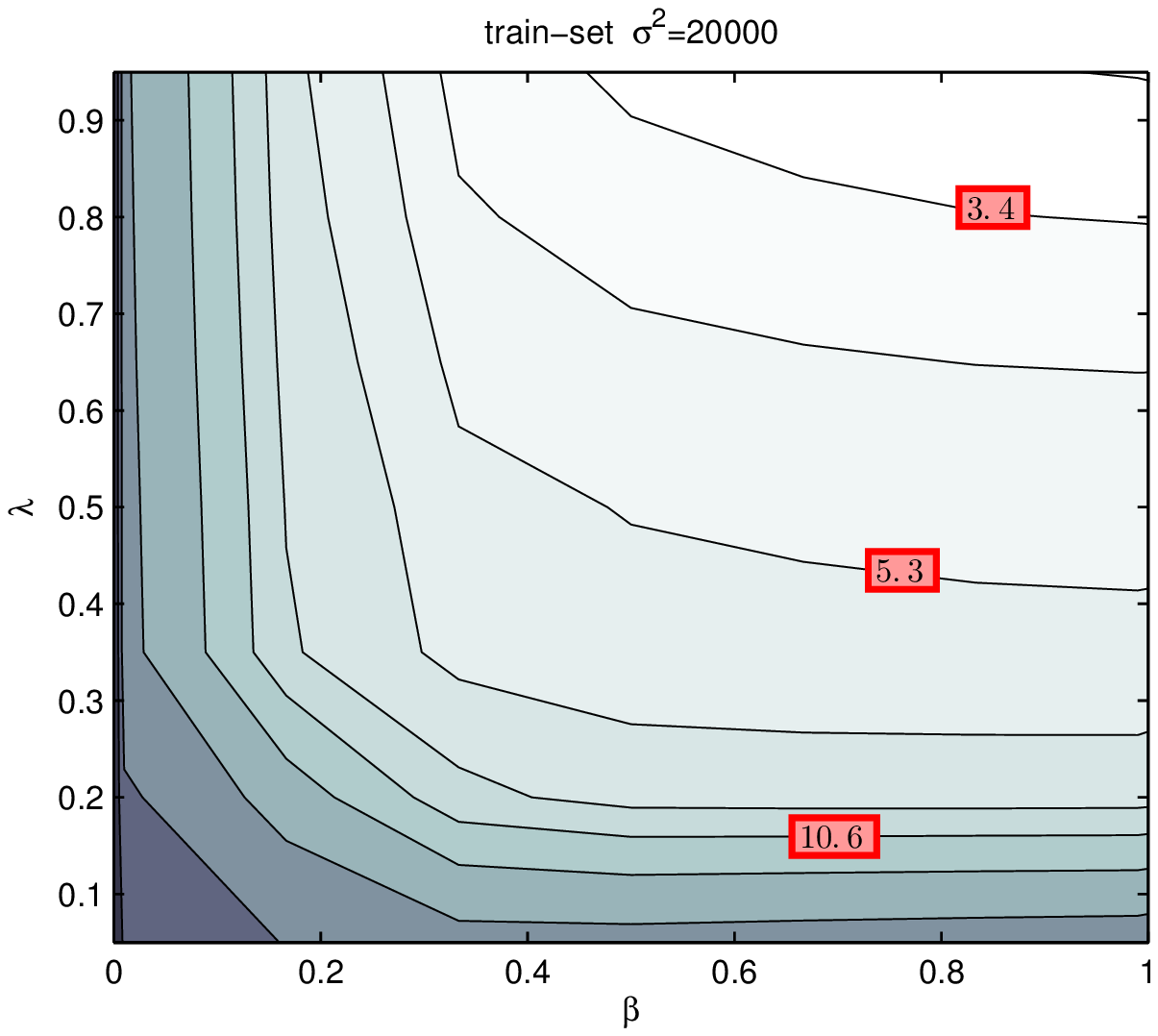} &  \hspace{-5.5mm}
\includegraphics[trim=12.0mm 10.2mm 0mm 9.5mm,clip,totalheight=.157\textheight]{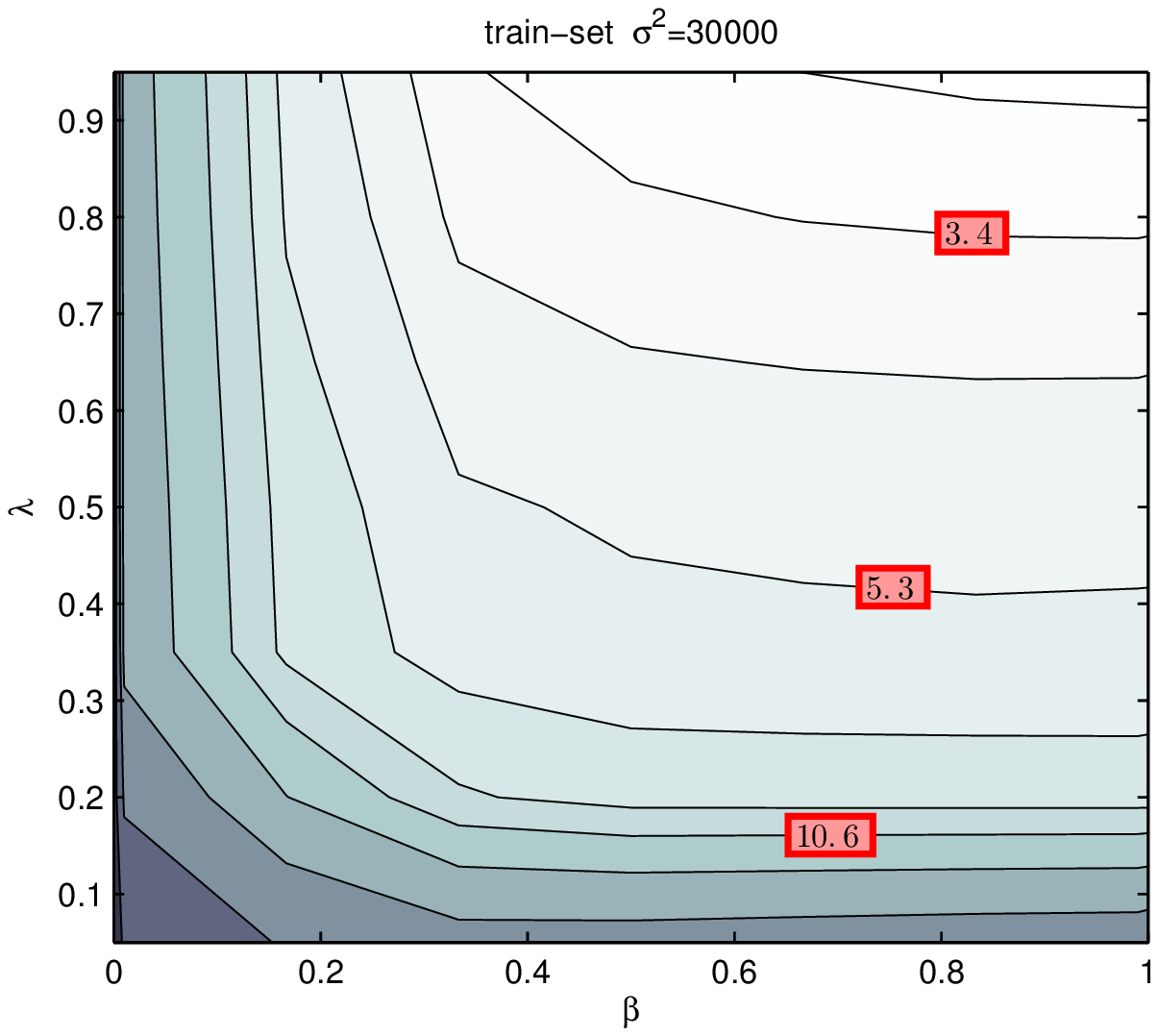} &  \hspace{-5.5mm}
\includegraphics[trim=12.0mm 10.2mm 0mm 9.5mm,clip,totalheight=.157\textheight]{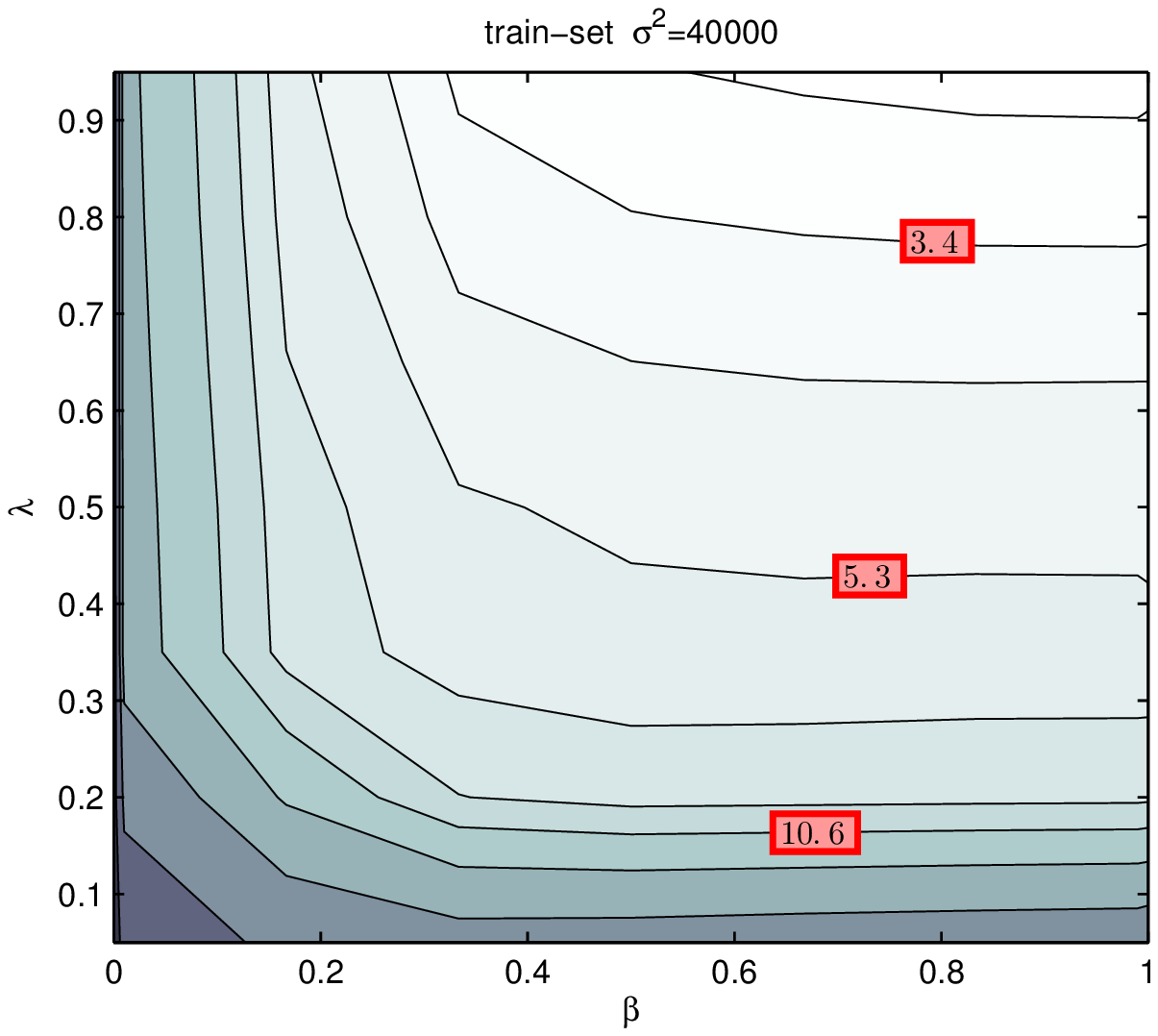} \\
\includegraphics[trim= 0.0mm 0mm 0mm 9.5mm,clip,totalheight=.175\textheight]{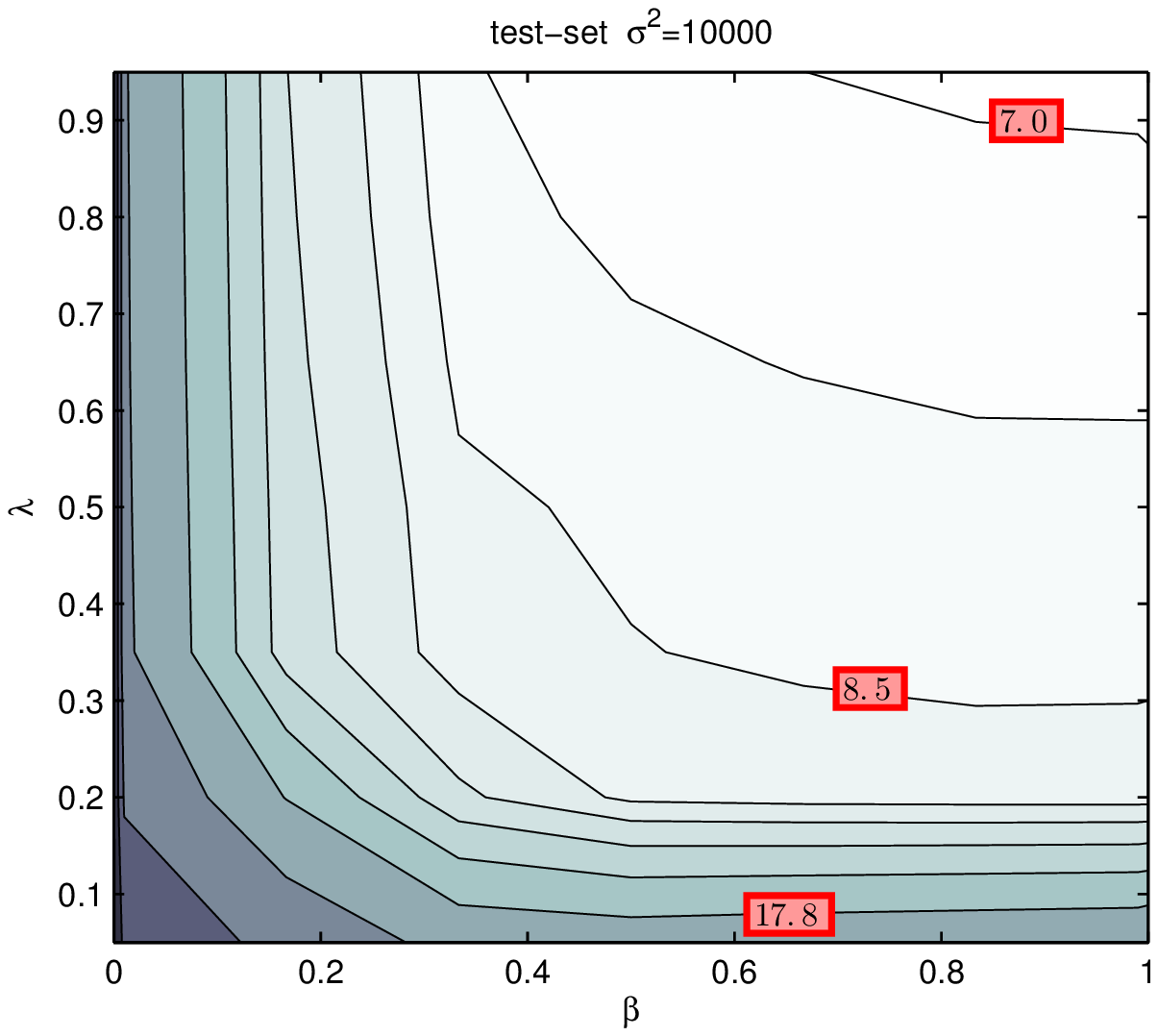}    &  \hspace{-5.5mm}
\includegraphics[trim=12.0mm 0mm 0mm 9.5mm,clip,totalheight=.175\textheight]{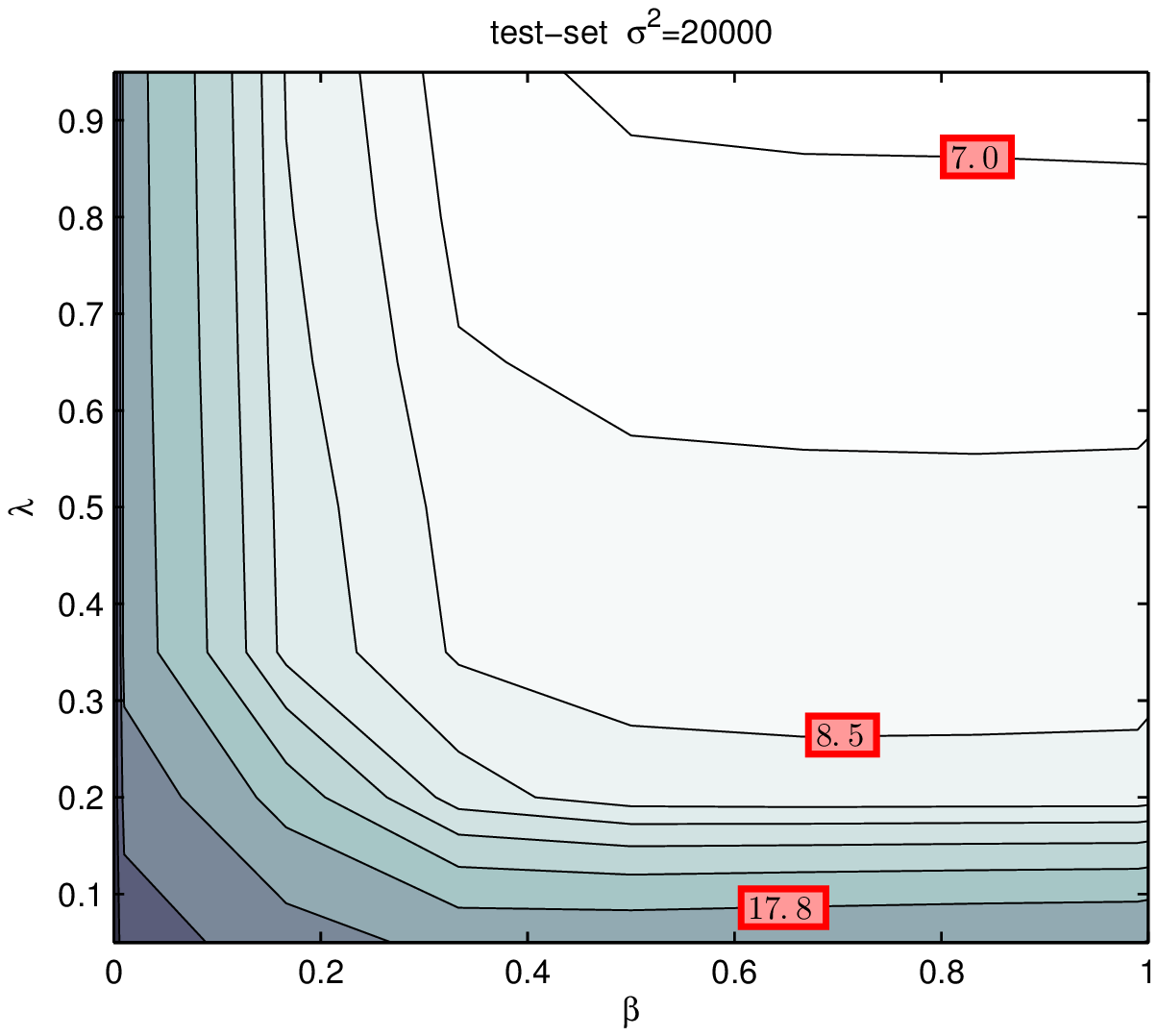}    &  \hspace{-5.5mm}
\includegraphics[trim=12.0mm 0mm 0mm 9.5mm,clip,totalheight=.175\textheight]{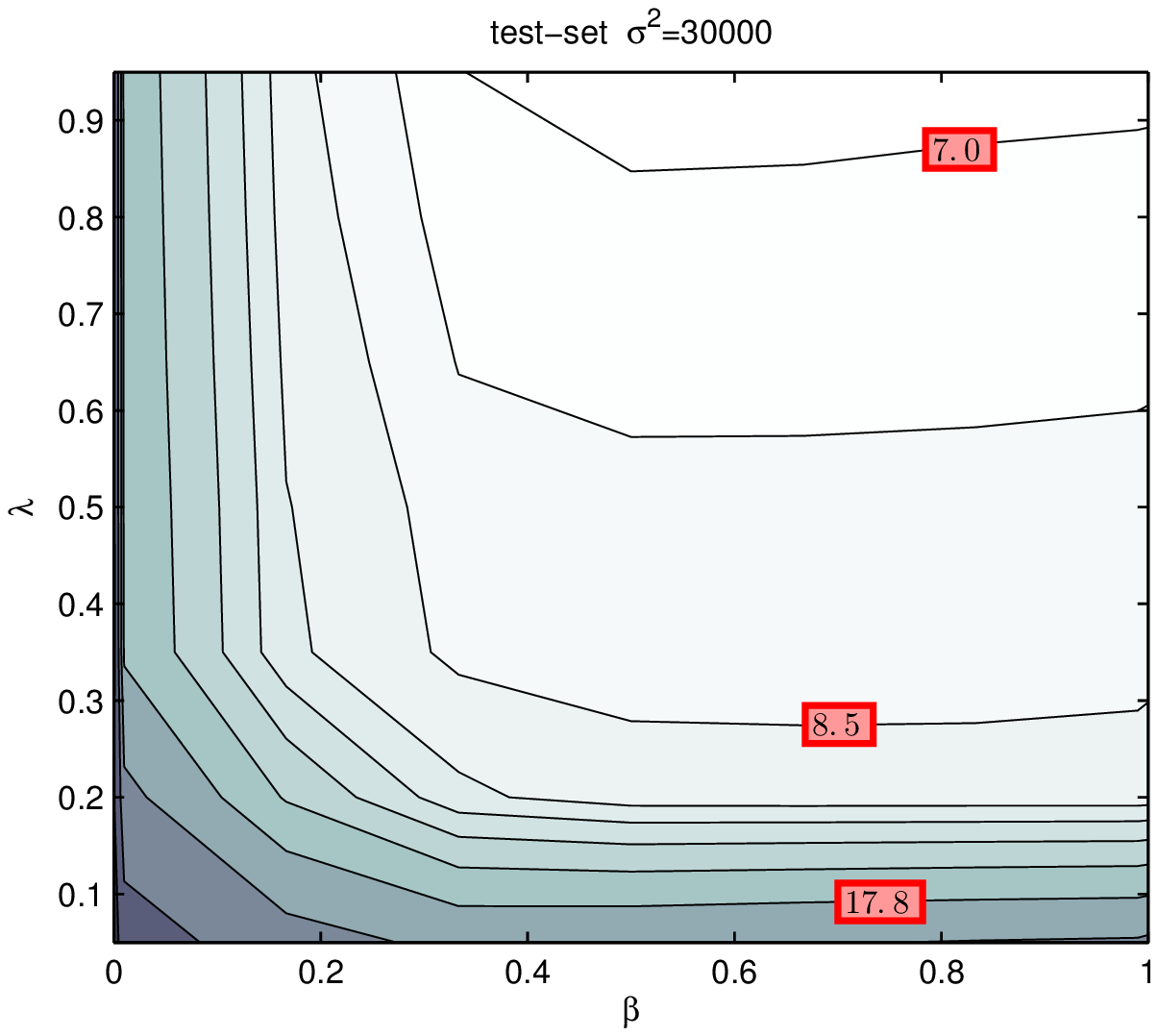}    &  \hspace{-5.5mm}
\includegraphics[trim=12.0mm 0mm 0mm 9.5mm,clip,totalheight=.175\textheight]{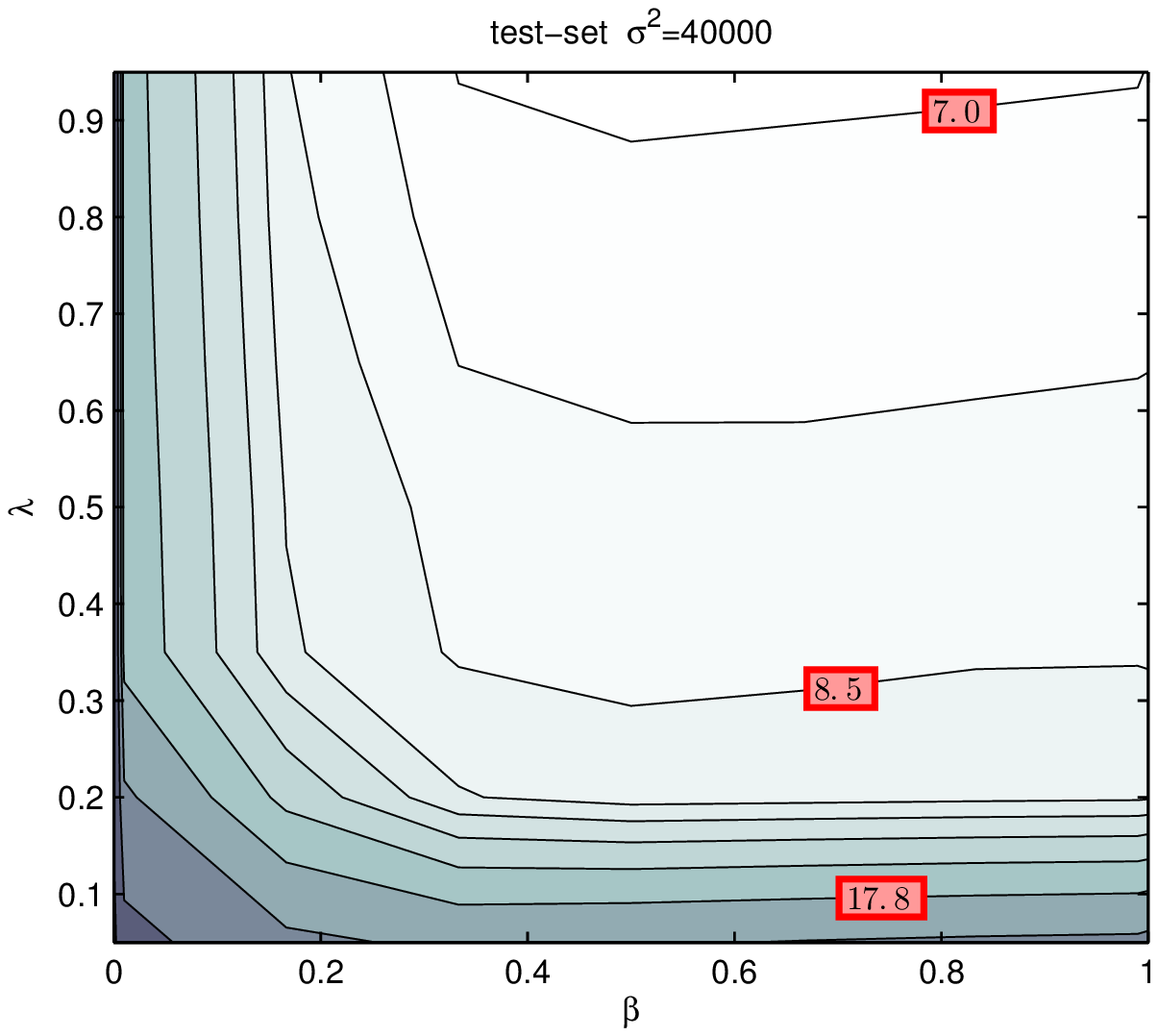} 
\end{tabular}
\vspace{-1.5em}
\caption{
	Train set (top) and test set (bottom) perplexity for a CRF with 1st/2nd
	order pseudo likelihood selection policy (PL1/PL2).  The x-axis, $\beta$,
	represents the relative weight placed on PL2 and the y-axis, $\lambda$, the
	probability of selecting PL2.  PL1 is selected with probability 1.  Columns
	from left to right correspond to weaker regularization, $\sigma^2=\{10000,
	20000, 30000, 40000\}$.  See Figure~\ref{fig:chunk_crf_pl1fl_cont} for more
	details.
}\label{fig:chunk_crf_pl1pl2_cont}

\vspace{0.5em}
\hspace{-0.75cm}
\begin{tabular}{cccc}
\includegraphics[trim=  0.0mm 9.1mm 0mm 8.2mm,clip,totalheight=.162\textheight]{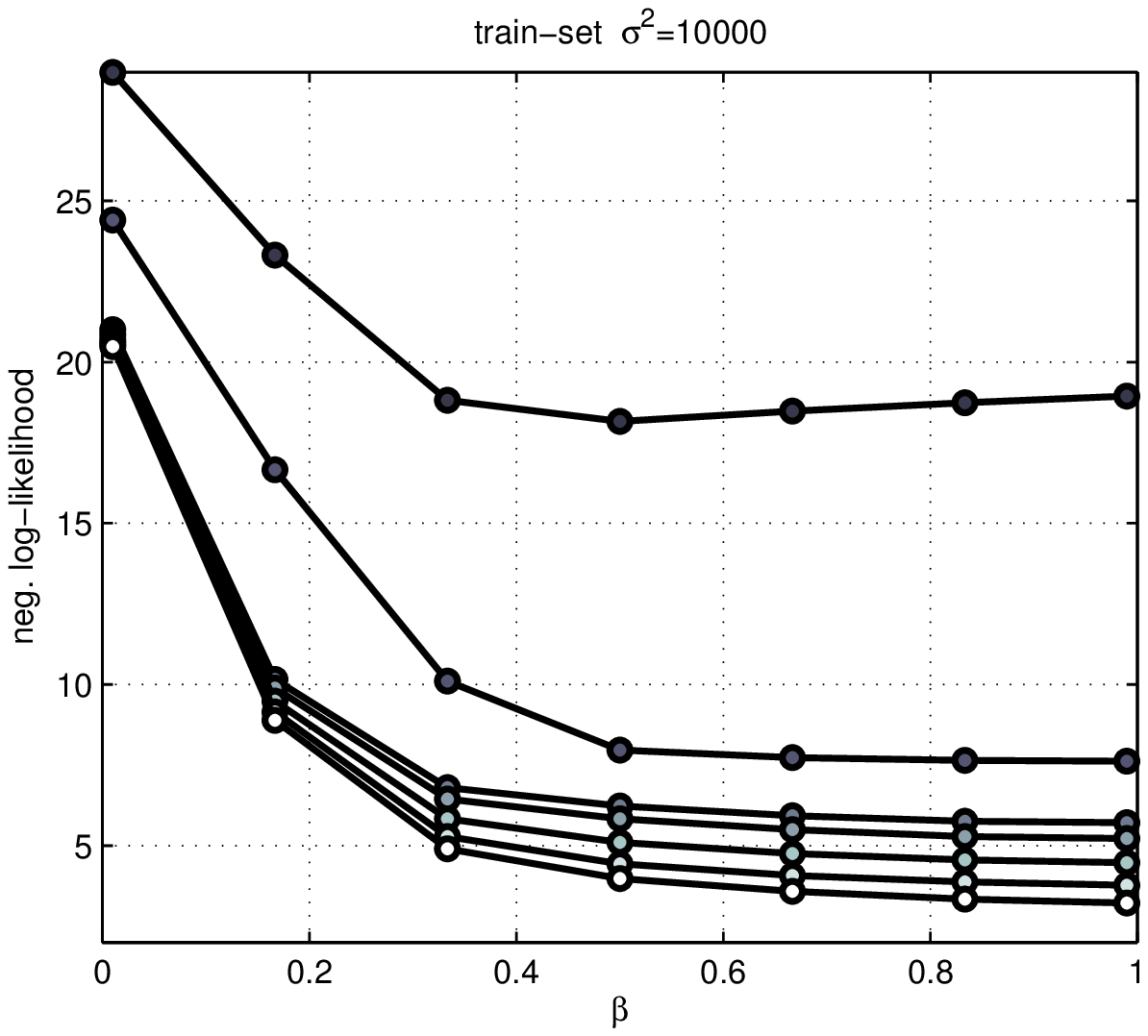} &   \hspace{-5.5mm}
\includegraphics[trim= 11.0mm 9.1mm 0mm 8.2mm,clip,totalheight=.162\textheight]{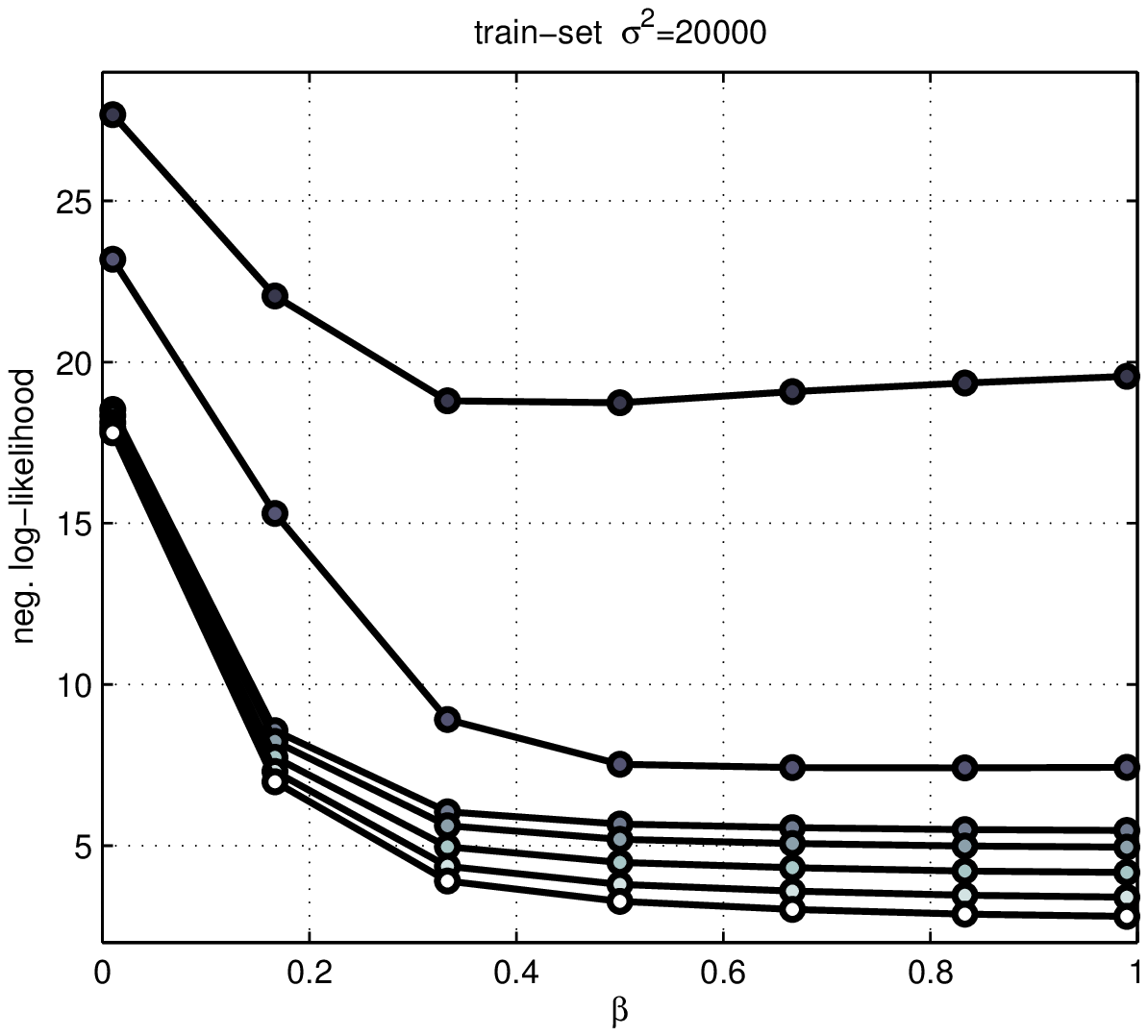} &   \hspace{-5.5mm}
\includegraphics[trim= 11.0mm 9.1mm 0mm 8.2mm,clip,totalheight=.162\textheight]{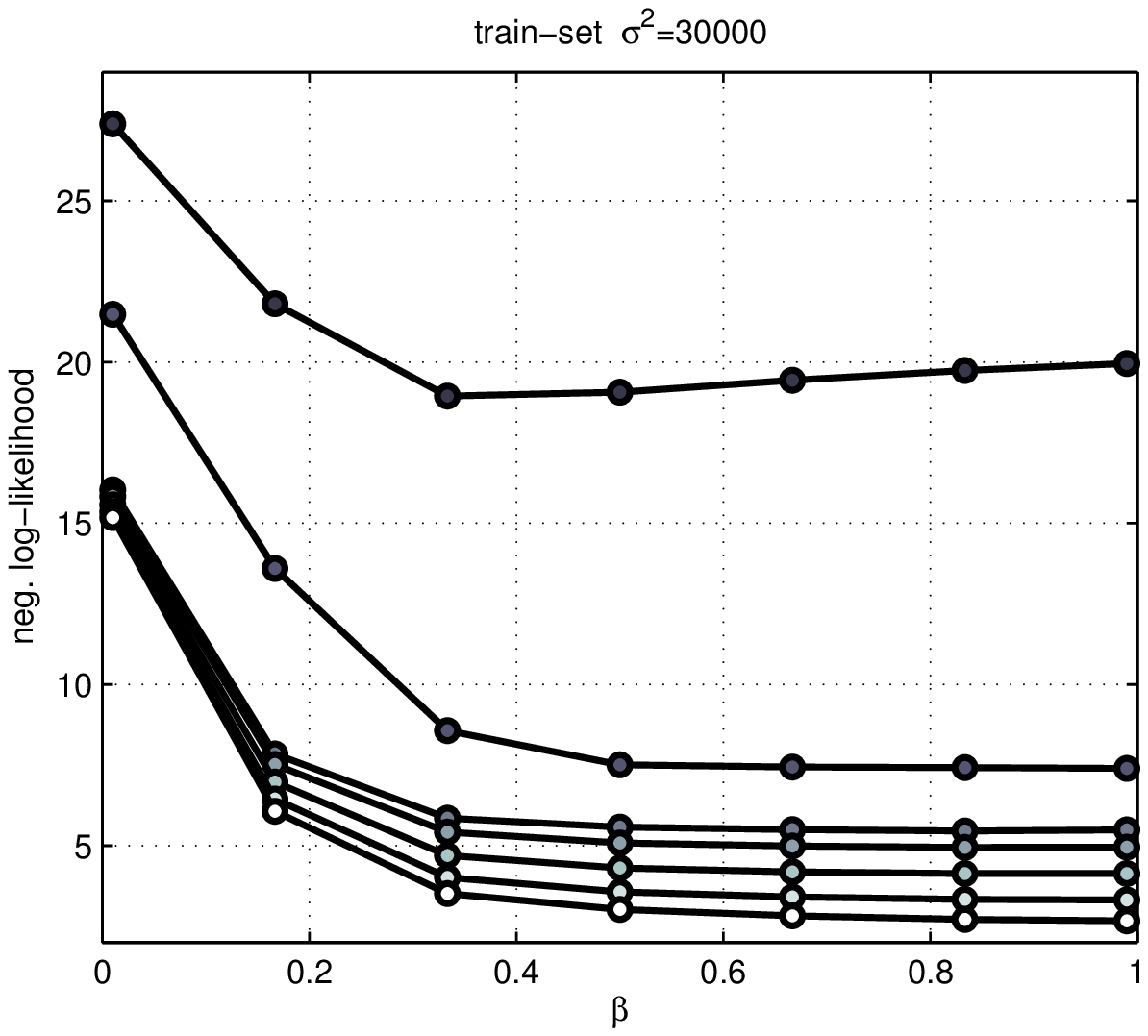} &   \hspace{-5.5mm}
\includegraphics[trim= 11.0mm 9.1mm 0mm 8.2mm,clip,totalheight=.162\textheight]{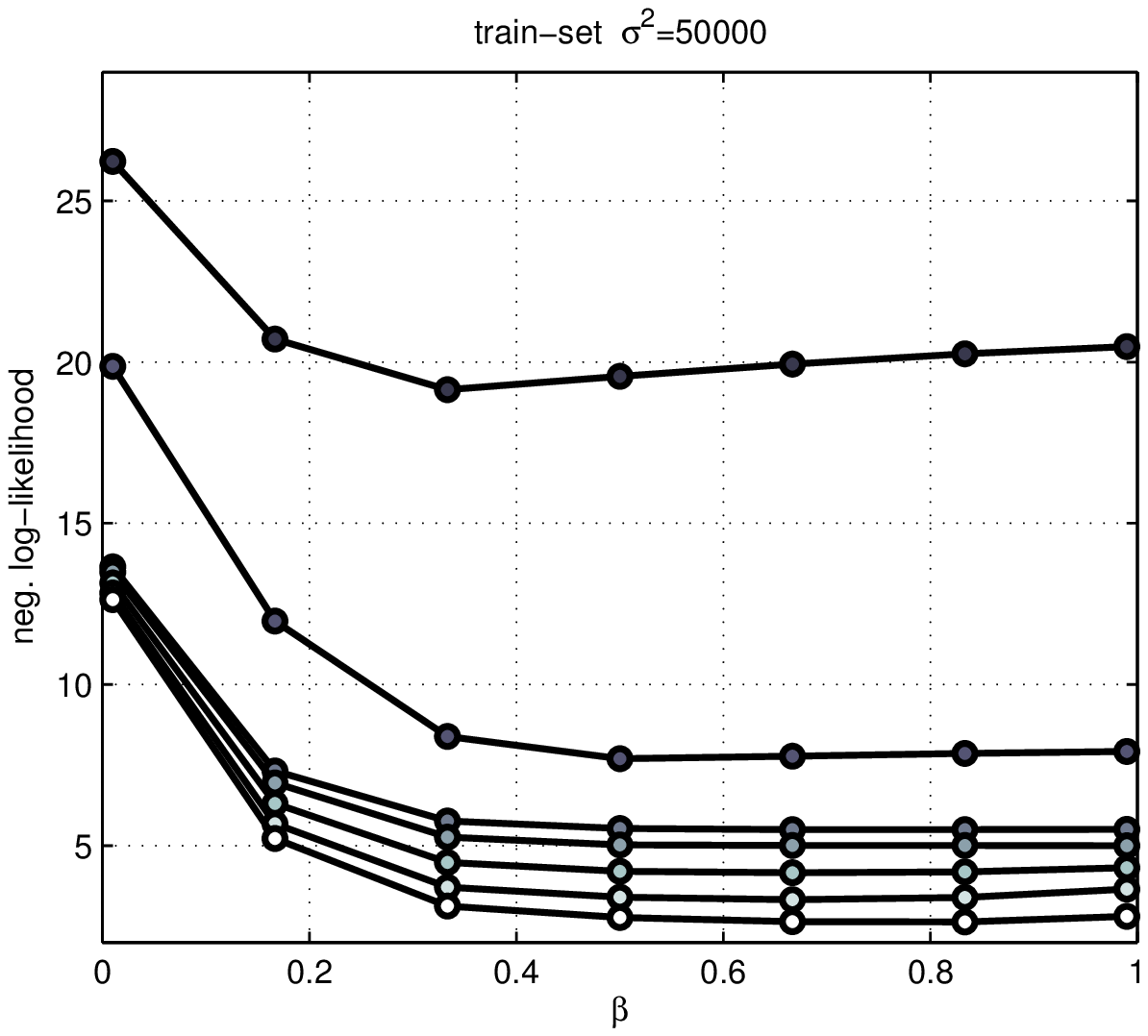} \\
\includegraphics[trim=  0.0mm 0mm 0mm 8.2mm,clip,totalheight=.178\textheight]{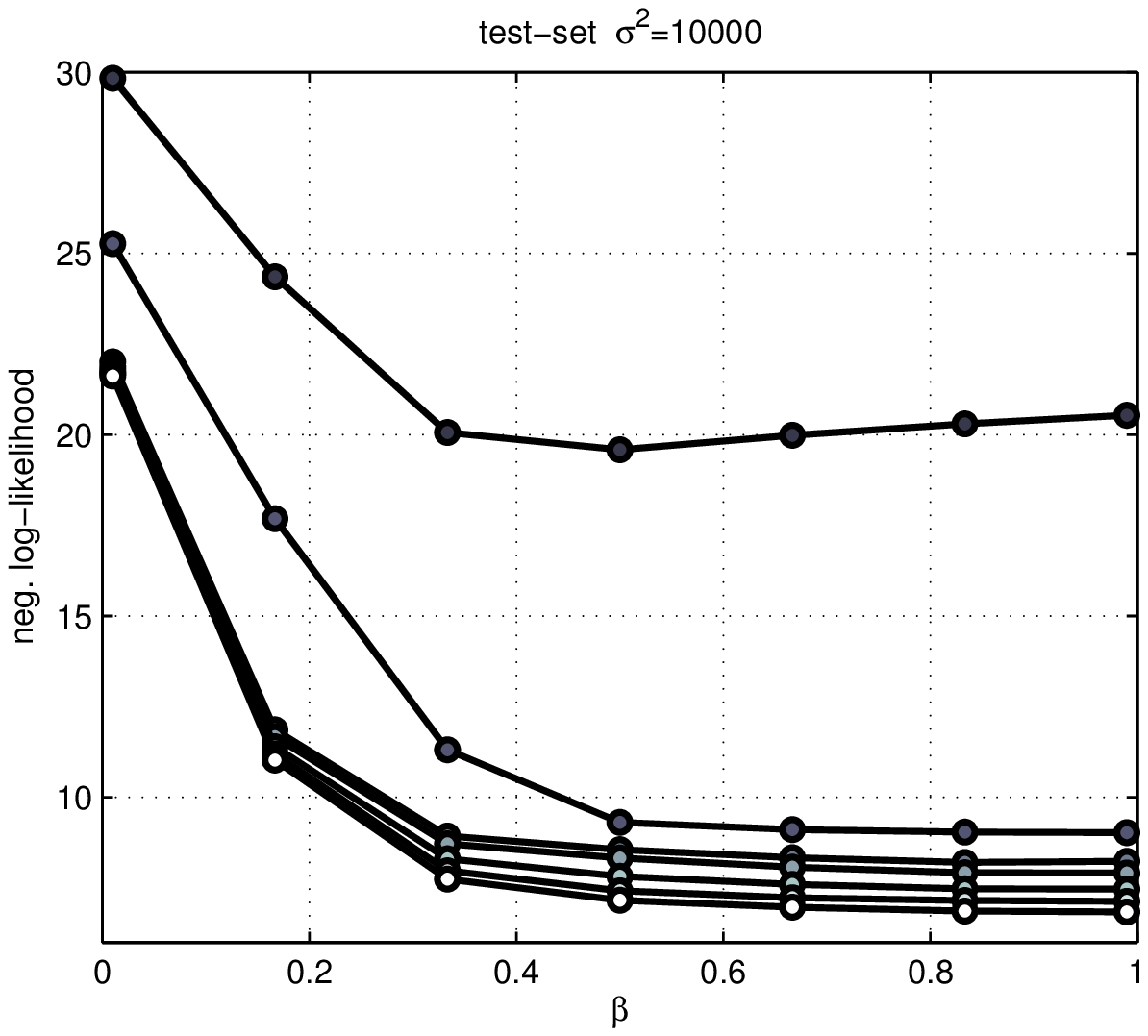}   &   \hspace{-5.5mm}
\includegraphics[trim= 11.0mm 0mm 0mm 8.2mm,clip,totalheight=.178\textheight]{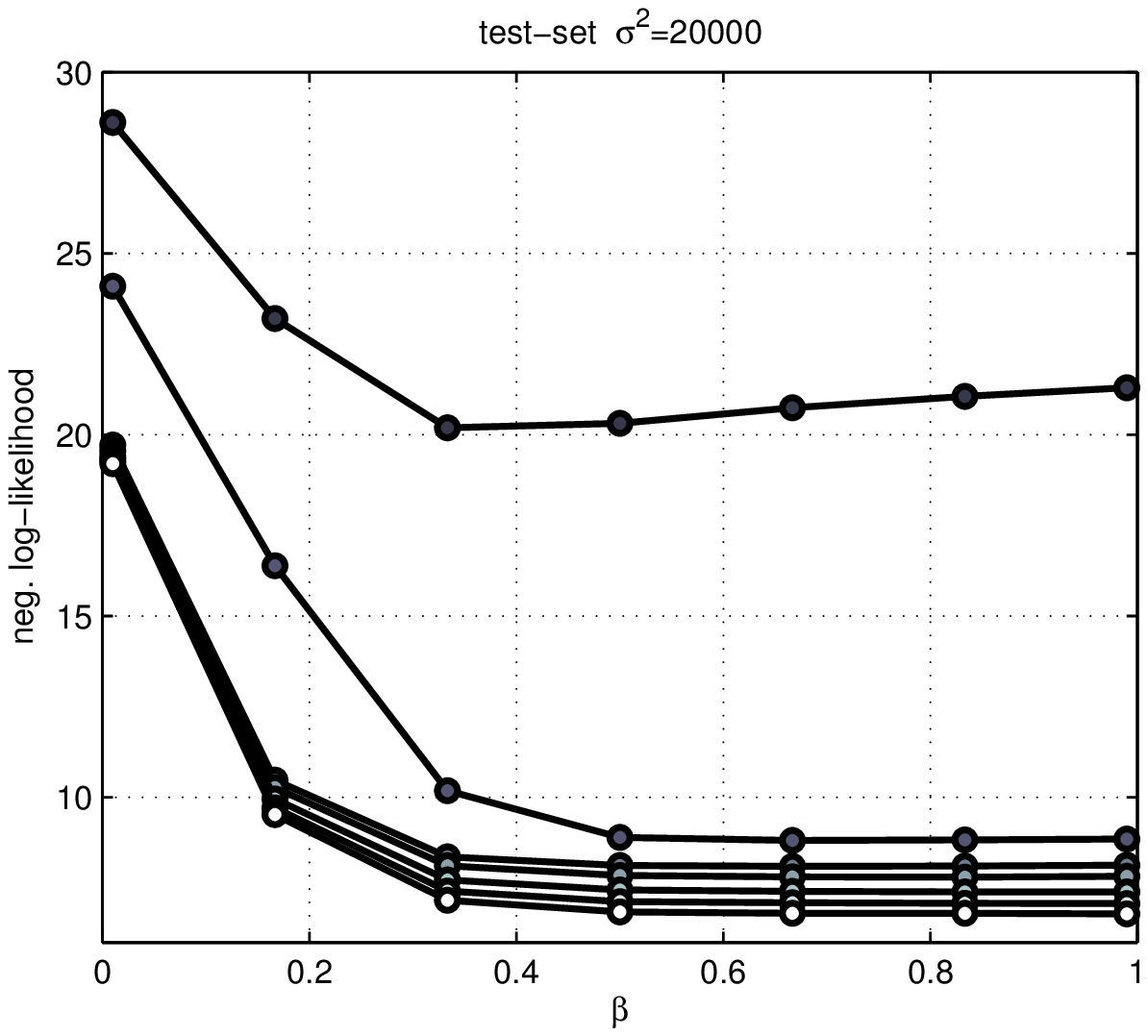}   &   \hspace{-5.5mm}
\includegraphics[trim= 11.0mm 0mm 0mm 8.2mm,clip,totalheight=.178\textheight]{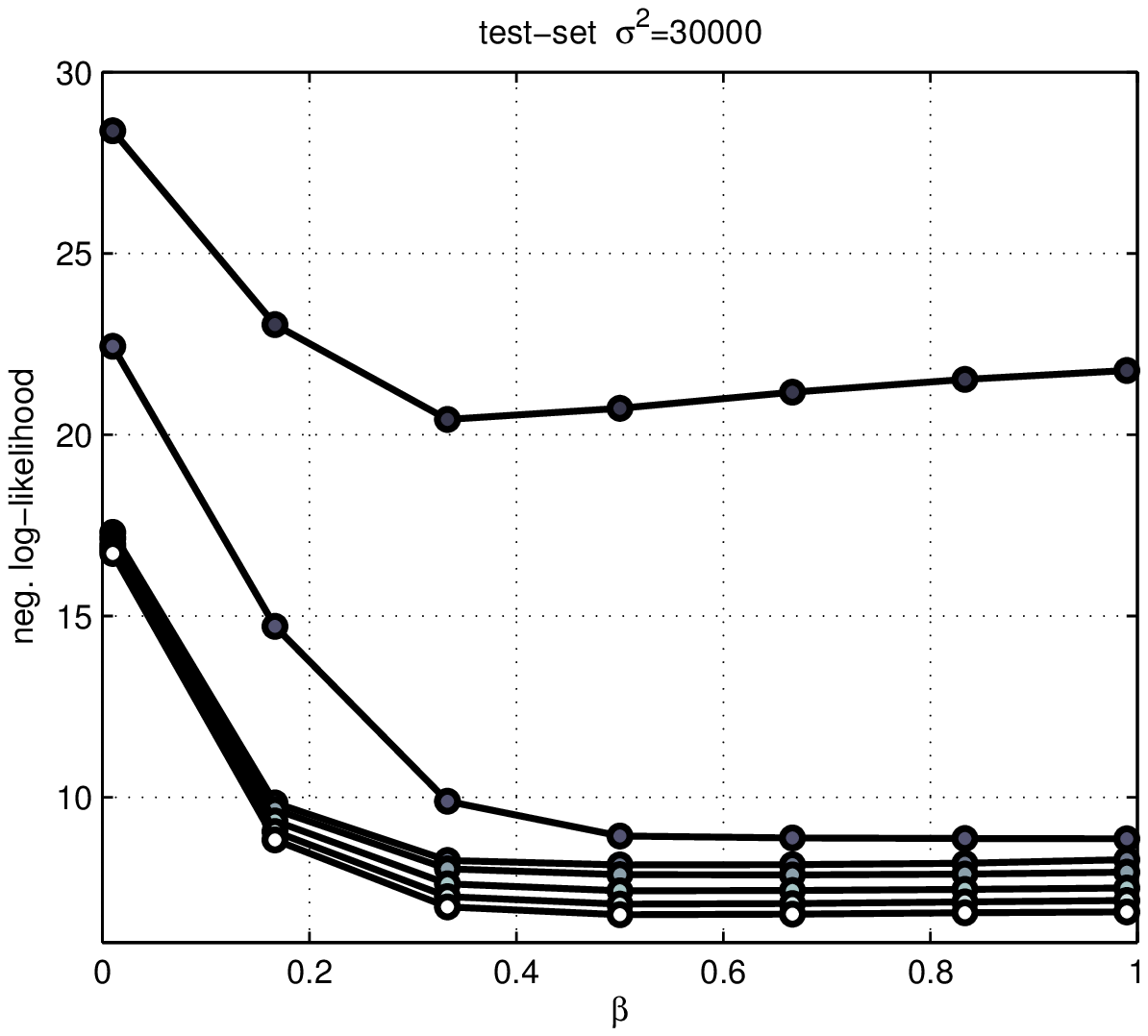}   &   \hspace{-5.5mm}
\includegraphics[trim= 11.0mm 0mm 0mm 8.2mm,clip,totalheight=.178\textheight]{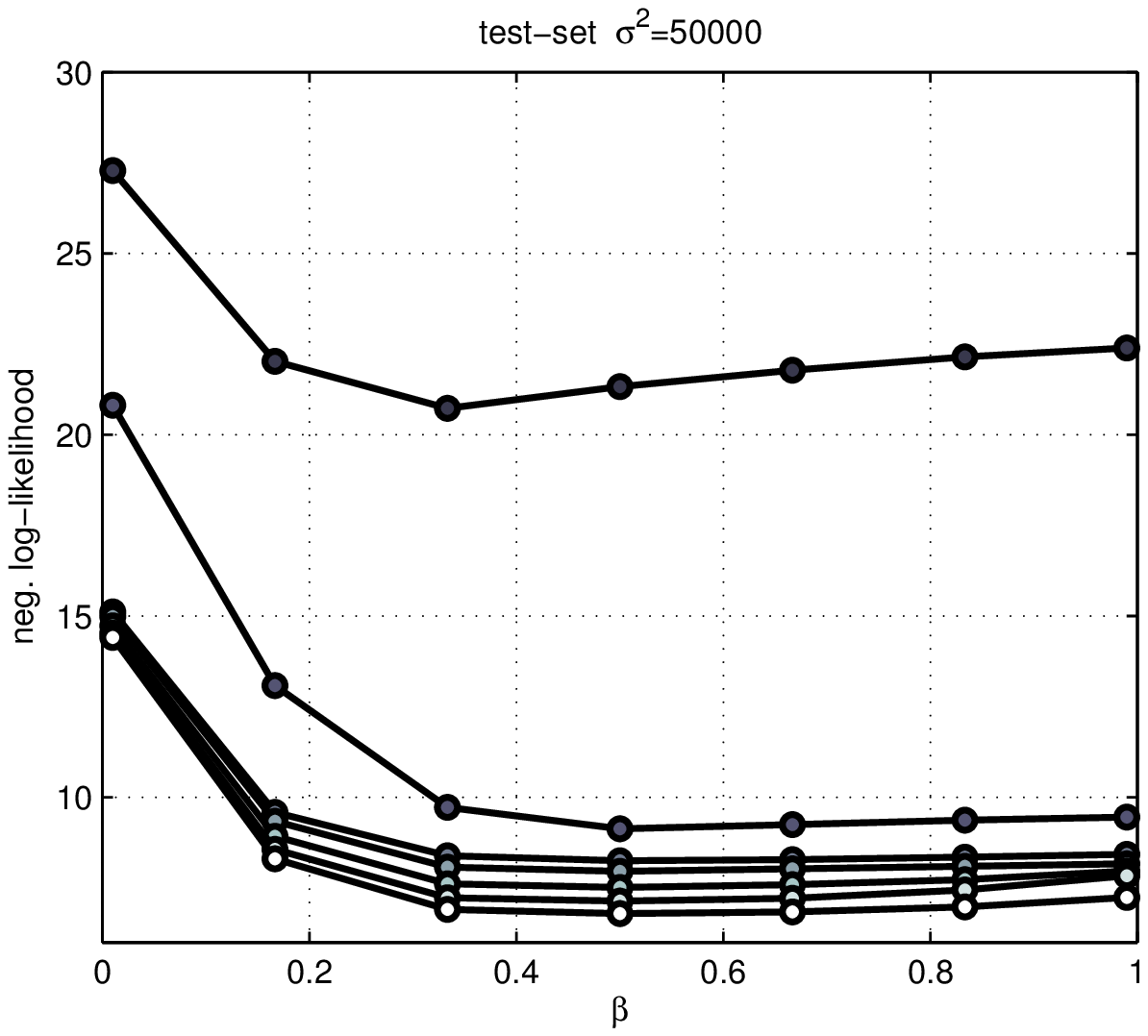} 
\end{tabular}
\vspace{-1.5em}
\caption{
	Train (top) and test (bottom) perplexities for a CRF with PL1/PL2 selection
	policy (x-axis:PL2 weight, y-axis:perplexity; see above and
	Fig.~\ref{fig:chunk_boltzchain_pl1fl_beta}).
		\vspace{1em}
	\newline
	Although increasing $\lambda$ only brings minor improvement to both the
	training and testing perplexities, it is worth noting that the test
	perplexity meets that of the PL1/FL.  Still though, the overall lack of
	resolution here suggests that smaller values of $\lambda$ would better span
	a range of perplexities and at reduced computational cost.
}\label{fig:chunk_crf_pl1pl2_beta}

\end{figure}

\begin{figure}[ht!]
\hspace{-0.5cm}
\begin{tabular}{ccc}
\includegraphics[trim=0mm 5.5mm 0mm 0.2mm,clip,totalheight=.20\textheight]{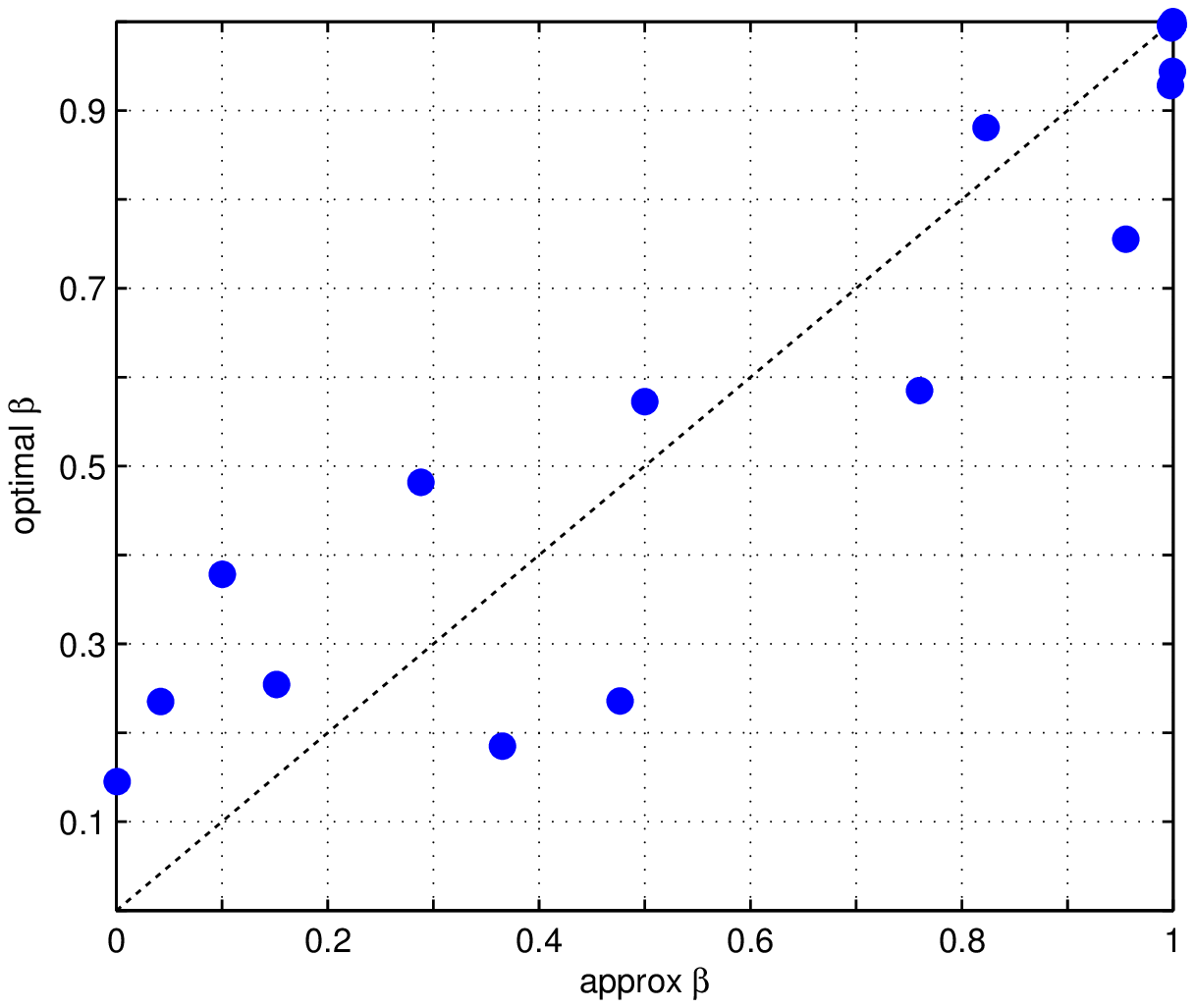}  &  \hspace{-5.0mm}
\includegraphics[trim=5mm 5.5mm 0mm 6.2mm,clip,totalheight=.20\textheight]{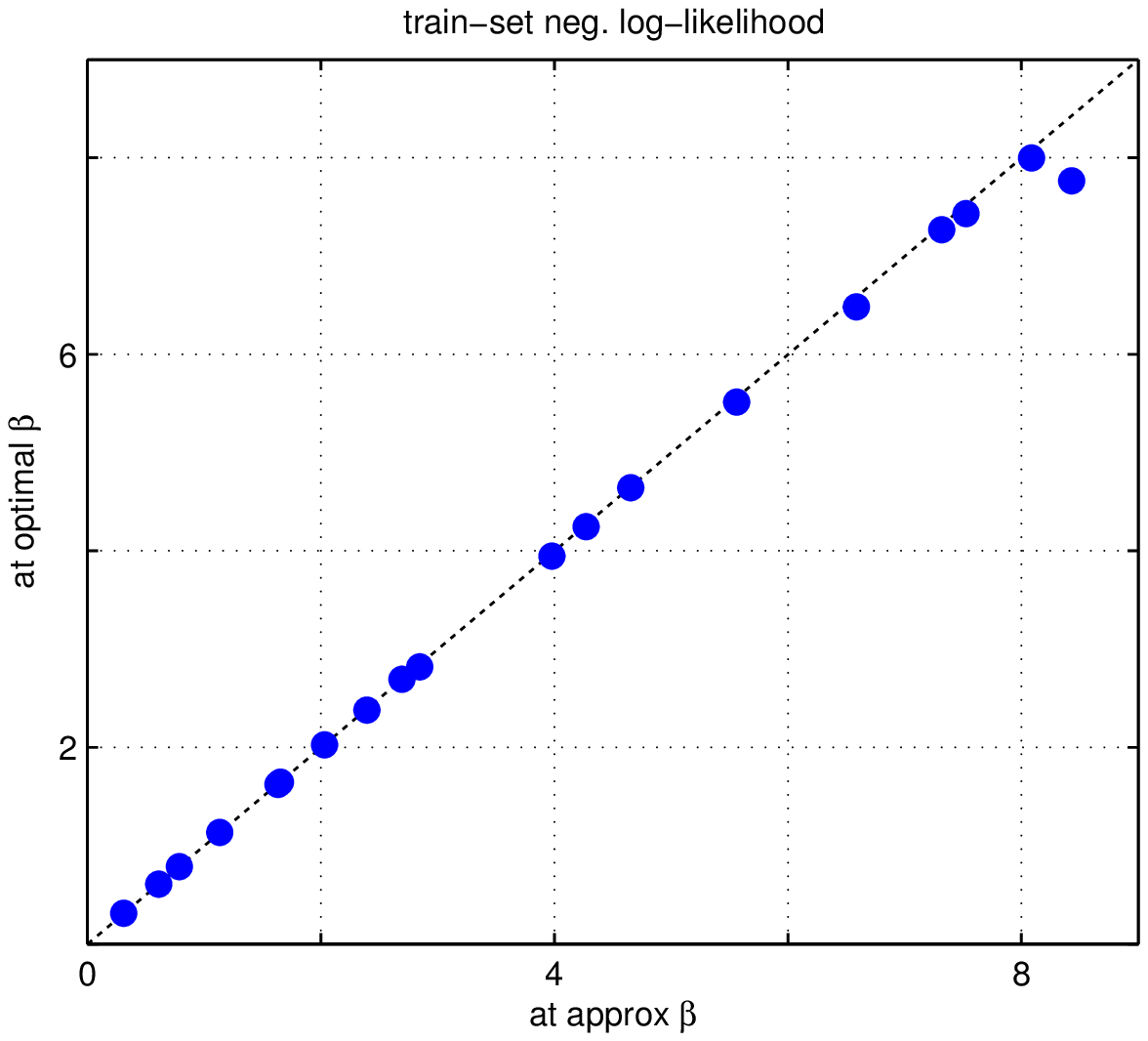}     &  \hspace{-5.5mm}
\includegraphics[trim=5mm 5.5mm 0mm 6.2mm,clip,totalheight=.20\textheight]{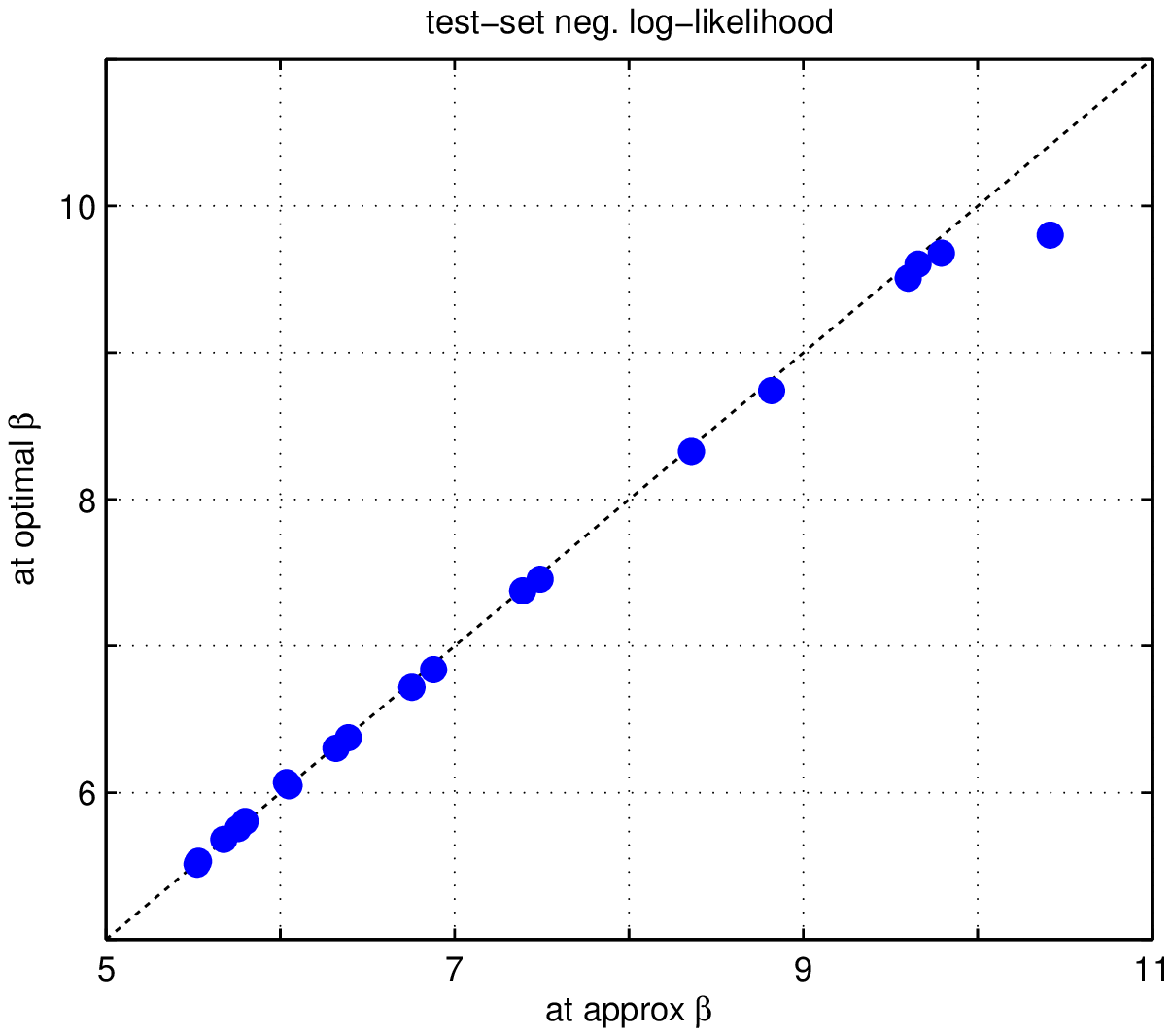}     \\
\includegraphics[trim=0mm 0.0mm 0mm 0.2mm,clip,totalheight=.21\textheight]{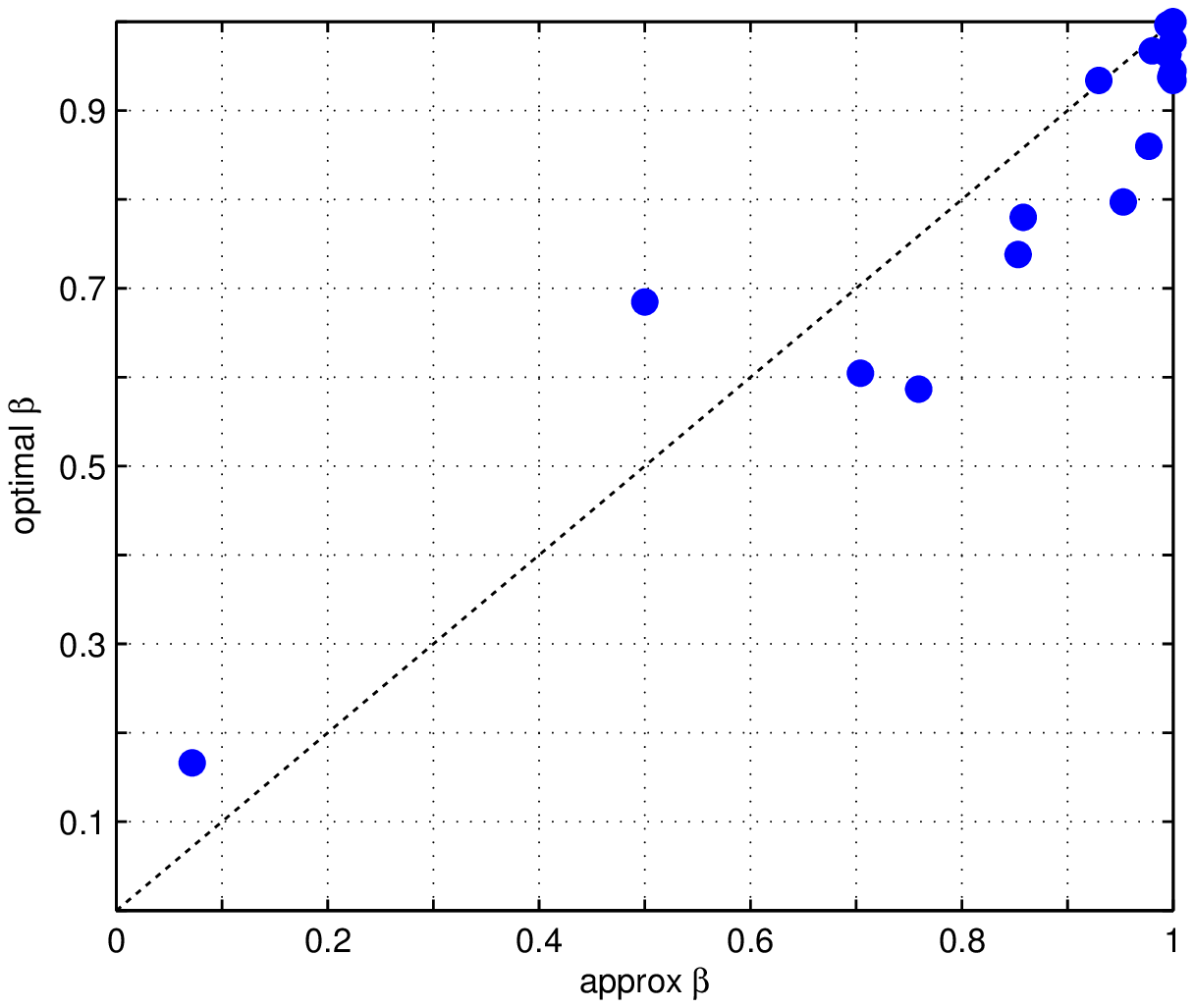} &  \hspace{-5.0mm}
\includegraphics[trim=5mm 0.0mm 0mm 6.2mm,clip,totalheight=.21\textheight]{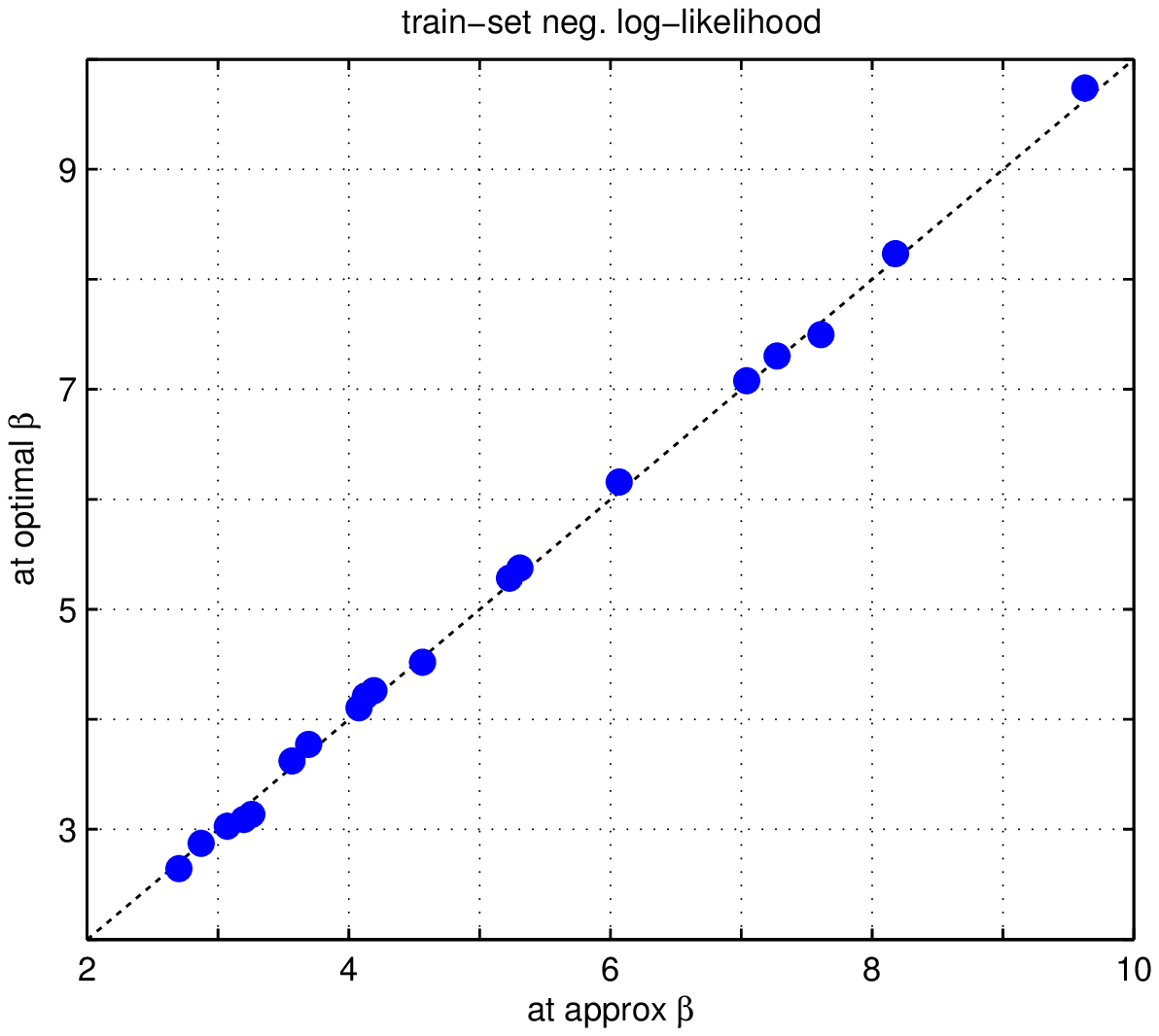}    &  \hspace{-5.5mm}
\includegraphics[trim=5mm 0.0mm 0mm 6.2mm,clip,totalheight=.21\textheight]{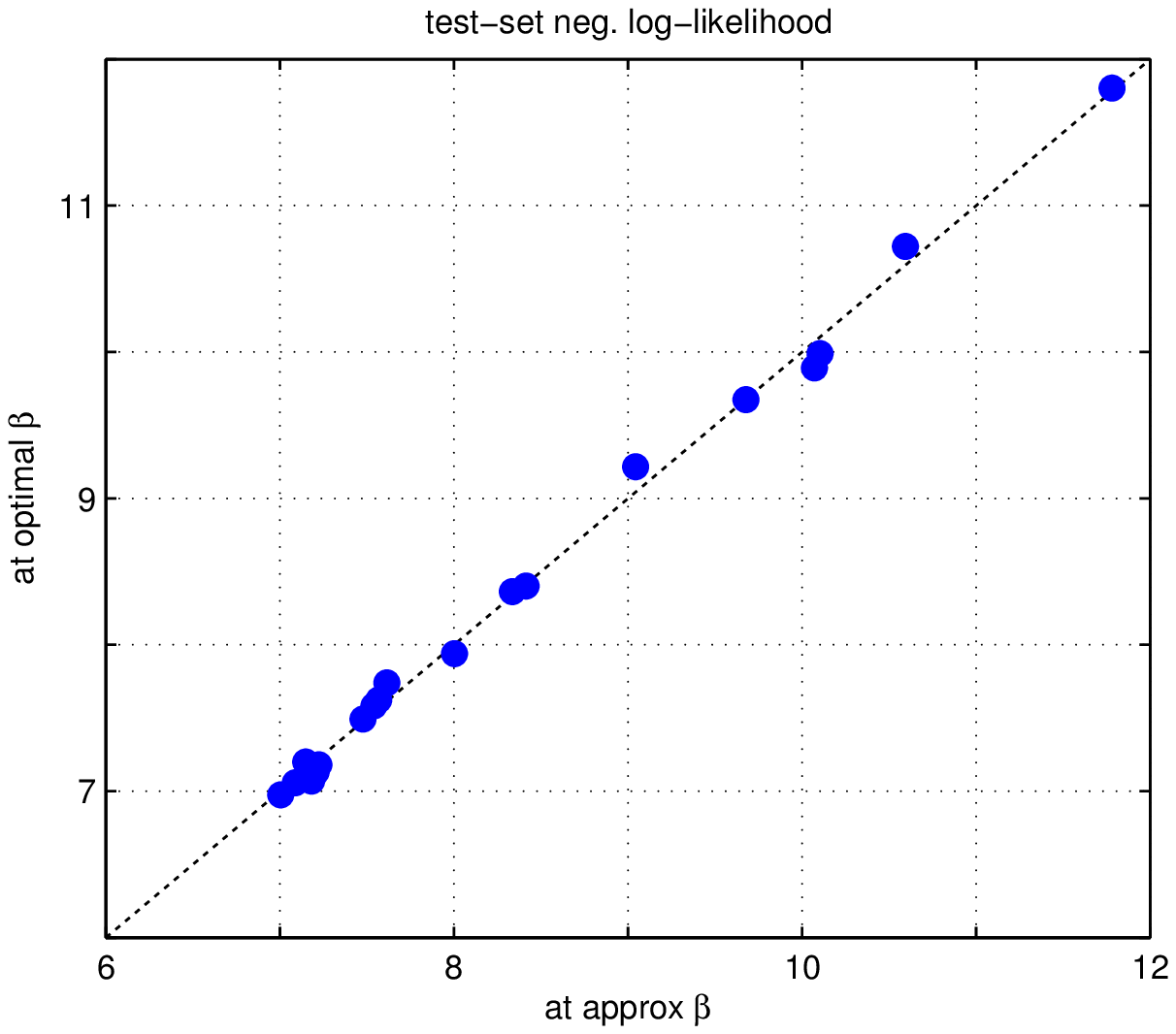}
\end{tabular}
\vspace{-1.5em}
\caption{
	Demonstration of the effectiveness of the $\beta$ heuristic.  Results are
	for the CRF with PL1/FL (top) and PL1/PL2 (bottom) selection policies.  The
	x-axis is the value at the heuristically found $\beta$ and the y-axis the
	value at the optimal $\beta.$ The first column depicts the best performing
	$\beta$ against the heuristic $\beta$.  The second and third columns depict
	the training and testing perplexities (resp.) at the best performing
	$\beta$ and heuristically found $\beta$.  For all three columns, we assess
	the effectiveness of the heuristic by its nearness to the diagonal (dashed
	line).  See Fig.~\ref{fig:chunk_boltzchain_heuristic} for more details.
	\vspace{1em}
	\newline
	The optimal and heuristic $\beta$ match train and test perplexities for
	both policies.  The actual $\beta$ value however does not seem to match as
	well as the Boltzmann chain.  However, if we note the flatness of the
	$\beta$ grid (cf. Fig.~\ref{fig:chunk_crf_pl1fl_beta} and
	\ref{fig:chunk_crf_pl1pl2_beta}) this result is unsurprising and can be
	disregarded as an indication of the heuristic's performance.
}\label{fig:chunk_crf_heuristic}
\end{figure}

\subsection{Complexity/Regularization Win-Win}\label{sec:winwin}
It is interesting to contrast the test loglikelihood behavior in the case of
mild and stronger $L_2$ regularization. In the case of weaker or no
regularization, the test loglikelihood shows different behavior than the train
loglikelihood. Adding a lower order component such as pseudo likelihood acts as
a regularizer that prevents overfitting. Thus, in cases that are prone to
overfitting reducing higher order likelihood components improves both
performance as well as complexity. This represents a win-win situation in
contrast to the classical view where the mle has the lowest variance and adding
lower order components reduces complexity but increases the variance.

In Figure \ref{fig:localsent_crf_cont} we note this phenomenon when comparing
$\sigma^2=1$ to $\sigma^2=10$ across the selection policies PL1/FL and PL1/PL2.
That is, the weaker regularization and more restrictive selection policy, i.e.,
PL1/PL2, is able to achieve comparable test set perplexity.

For the text chunking experiments, we observe a striking win-win when using the
Boltzmann chain MRF, Figures \ref{fig:chunk_boltzchain_pl1fl_cont} and
\ref{fig:chunk_boltzchain_pl1pl2_cont}. Notice that as regularization is
decreased (comparing from left to right), the contours are pulled  closer to
the x-axis.  This means that we are achieving the same perplexity at reduced
levels of computational complexity.  The CRF however, only exhibits the win-win
to a minor extent.  We delve deeper into why this is might be the case in the
following section.

\subsection{$\lambda$, $\sigma^2$ Interplay}\label{sec:interplay}

Throughout these experiments we fixed $\sigma^2$ and either swept over
$(\lambda,\beta)$ or used the heuristic to evaluate $(\lambda,\beta(\lambda))$.
Motivated by the sometimes weak win-win (cf.\ Section \ref{sec:winwin}) we now
consider how the optimal $\sigma^2$ changes as a function of $\lambda$.  In
Figure \ref{fig:chunk_crf_lamsig1} we used the $\beta$ heuristic to evaluate
train and test perplexity over a $(\lambda,\sigma^2)$ grid.  We used CRFs and
the text chunking task as outlined in Section \ref{sec:chunk_crf}.

For the PL1/FL policy, we observe that for small enough $\lambda$ the optimal
$\sigma^2$, i.e., the $\sigma^2$ with smallest test perplexity, has
considerable range. At some point there are enough samples of the higher-order
component to stabilize the choice of regularizer, noting that it is still
weaker than the optimal full likelihood regularizer.  Conversely, the PL1/PL2
regularizer has an essentially constant optimal regularizer which is relatively
much weaker.

\begin{figure}[ht!]
\centering
\begin{tabular}{cc}
\includegraphics[trim=0mm 0mm 0mm 0mm,clip,totalheight=.22\textheight]{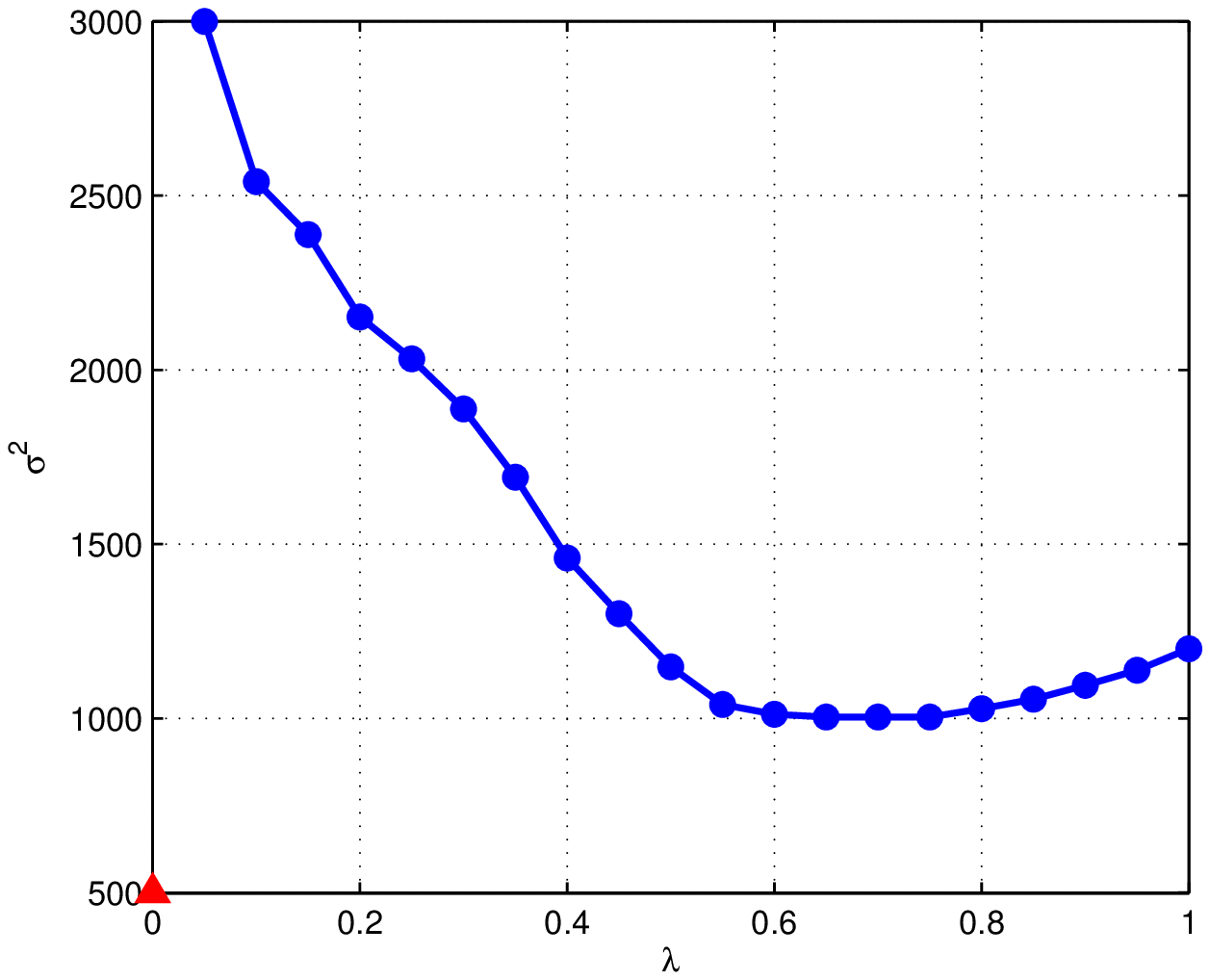} &
\includegraphics[trim=0mm 0mm 0mm 0mm,clip,totalheight=.22\textheight]{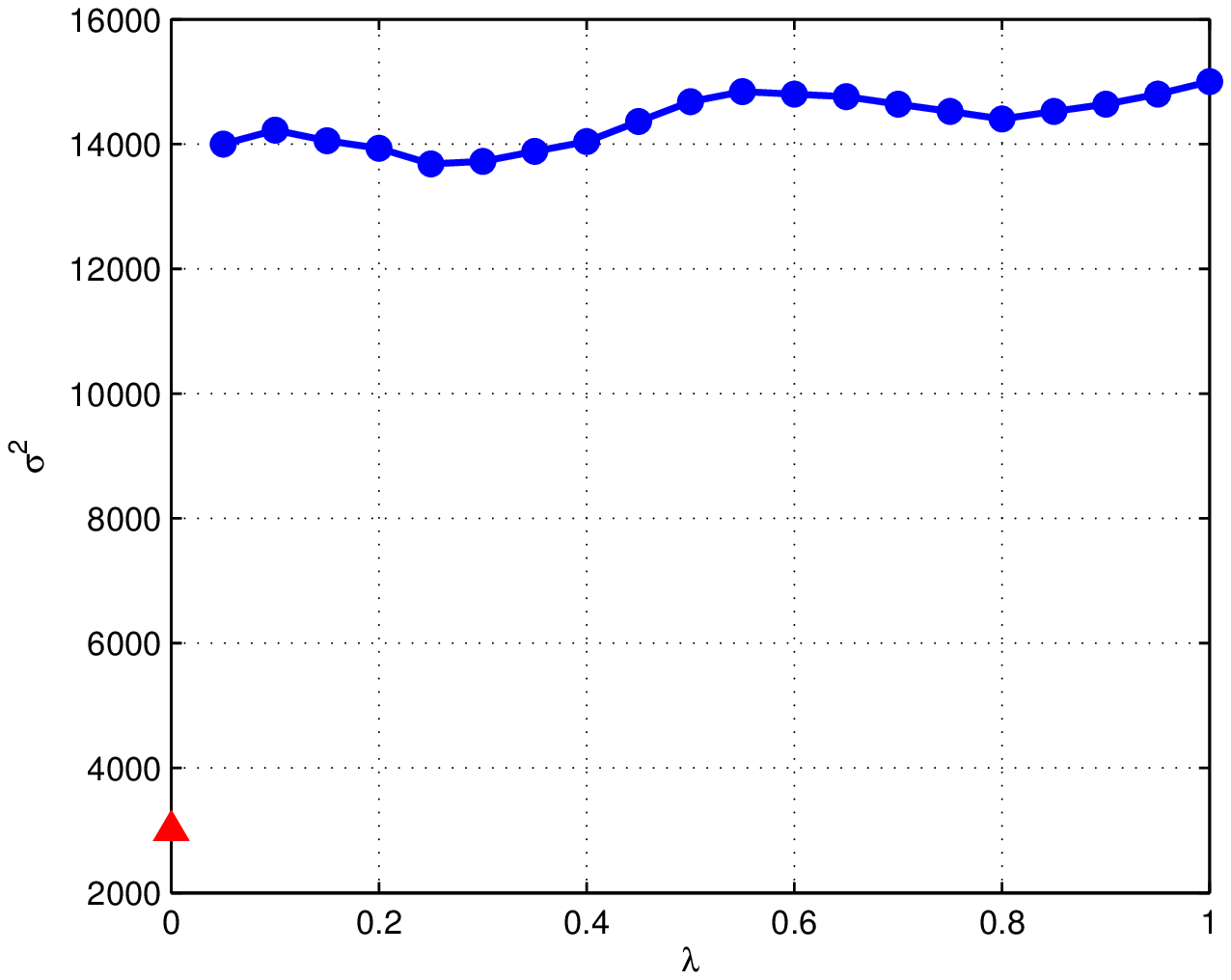}
\end{tabular}
\caption{
	Optimal regularization parameter as a function of
	$(\lambda,\hat{\beta}(\lambda))$ for PL1/FL (left) and PL1/PL2 (right) CRF
	selection policies. In the left figure, PL1/FL, $\lambda$ represents the
	probability of including FL into the objective.  A few FL samples add
	uncertainty to the objective thus a weaker regularizer is preferable.  As
	more FL samples are incorporated, this effect diminishes but still acts to
	regularize since the full likelihood (only) best regularization is
	$\sigma^2=500$ (red triangle).  The right figure, PL1/PL2, exhibits only a
	minor change as $\lambda$ (the probability of incorporating PL2) is
	increased.  It is however, best served by a much weaker regularizer than
	PL2 alone (red triangle).
}\label{fig:chunk_crf_lamsig1}
\end{figure}

As a result, we believe that the lack of win-win for the chunking CRF follows
from two effects.  In the case of the PL1/FL policy the contour plots are
misleading since there is no single $\sigma^2$ that performs well across all
$\lambda\in [0,1]$.  For the PL1/PL2 there is simply little change in
regularization necessary across $\lambda$.

\section{Discussion}
The proposed estimator family facilitates computationally efficient estimation
in complex graphical models. In particular, different $(\beta,\lambda)$ parameterizations of the
stochastic composite likelihood enables the resolution of the complexity-accuracy
tradeoff in a domain and problem specific manner. The framework is generally
suited for Markov random fields, including conditional graphical models and is
theoretically motivated. When the model is prone to overfit, stochastically
mixing lower order components with higher order ones acts as a regularizer and
results in a win-win situation of improving test-set accuracy and reducing
computational complexity at the same time.

{
	\bibliographystyle{plain}
	\bibliography{../../common/groupPapers,../../common/externalPapers}
}

\appendix

\newtheorem{propn}{Proposition}
\newtheorem{lemn}[theorem]{Lemma}
\newtheorem{corn}{Corollary}

\newpage
\section{Proofs} \label{sec:proofs}

The proofs below generalize the classical consistency and asymptotic
efficiency of the mle \citep{Ferguson1996} and the corresponding results for $m$-estimators \citep{Vaart1998}. They follow similar lines as the proofs in \citep{Ferguson1996} and \citep{Vaart1998}, with the necessary modifications due to the stochasticity of the scl function. We assume below that $p_{\theta}(X)>0$ and that $X$ is a discrete and finite RV.

The following lemma generalizes Shannon's inequality \citep{Cover2005} for the KL divergence. We will use it to prove consistency of the SCL estimator.
\begin{lemn} \label{prop:compKL}  
Let $(A_1,B_1),\ldots,(A_k,B_k)$ be a sequence of $m$-pairs that ensures identifiability of $p_{\theta},\theta\in\Theta$ and $\alpha_1,\ldots,\alpha_k$ positive constants. Then
\begin{align}
\sum_{j=1}^k \alpha_k\, D(p_{\theta}(X_{A_j}|X_{B_j})\,||\,p_{\theta'}(X_{A_j}|X_{B_j})) \geq 0\end{align}
where equality holds iff $\theta=\theta'$.
\end{lemn}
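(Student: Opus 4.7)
The plan is to split the statement into two parts—non-negativity of the sum, and characterization of equality—and handle each with a short argument. Non-negativity is a direct consequence of Shannon/Gibbs inequality applied termwise; the equality half is where the identifiability hypothesis does the real work.

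For non-negativity, I would first note that each summand has the form $\alpha_j D(p_\theta(X_{A_j}|X_{B_j}) \,||\, p_{\theta'}(X_{A_j}|X_{B_j}))$, which under the usual convention is the conditional KL divergence
\[
\sum_{x_{B_j}} p_\theta(x_{B_j}) \sum_{x_{A_j}} p_\theta(x_{A_j}|x_{B_j}) \log \frac{p_\theta(x_{A_j}|x_{B_j})}{p_{\theta'}(x_{A_j}|x_{B_j})}.
\]
Applying the classical Shannon inequality to each inner sum (each ordinary KL between conditional distributions is non-negative), taking the expectation over $x_{B_j}$ preserves non-negativity, and finally multiplying by the positive constant $\alpha_j$ and summing over $j$ gives the claim.

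For the equality characterization, the "if" direction is immediate: $\theta = \theta'$ trivially makes every summand zero. For the "only if" direction, since each summand is non-negative and $\alpha_j > 0$ for every $j$, the sum equals zero iff each individual conditional KL vanishes. An individual conditional KL vanishes iff $p_\theta(X_{A_j}|X_{B_j} = x_{B_j}) = p_{\theta'}(X_{A_j}|X_{B_j} = x_{B_j})$ for every $x_{B_j}$ with positive marginal probability under $p_\theta$; and because the standing assumption is $p_\theta(x) > 0$ on the whole sample space, every $x_{B_j}$ has positive marginal. Hence the full collection of conditionals $\{p_\theta(X_{A_j}|X_{B_j})\}_{j=1}^k$ coincides with $\{p_{\theta'}(X_{A_j}|X_{B_j})\}_{j=1}^k$. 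Invoking Definition~\ref{def:identifiability}, the injectivity of the map from this collection of conditionals to the joint $p_\theta(X)$ forces $p_\theta(X) = p_{\theta'}(X)$, which under the (standard, implicit) identifiability of the parametric family in $\theta$ yields $\theta = \theta'$.

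The main obstacle is really a matter of interpretation rather than technique: one must read Definition~\ref{def:identifiability} as asserting that matching conditionals across all selected $m$-pairs forces matching joints, so that the chain conditional equality $\Rightarrow$ joint equality $\Rightarrow$ parameter equality actually closes. Once this is granted, the rest is a bookkeeping application of Gibbs' inequality and the positivity of the $\alpha_j$'s and of $p_\theta$.
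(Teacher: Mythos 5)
Your proof is correct and follows essentially the same route as the paper's: termwise non-negativity of each conditional KL divergence via Gibbs'/Jensen's inequality, then reduction of the equality case to the simultaneous vanishing of every summand and an appeal to Definition~\ref{def:identifiability}. Your treatment is in fact slightly more careful than the paper's at the final step, where you note explicitly that passing from $p_{\theta}=p_{\theta'}$ to $\theta=\theta'$ uses the (implicit) identifiability of the parametric family itself, a point the paper's one-line conclusion glosses over.
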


\begin{proof}
The inequality follows from applying Jensen's inequality for each conditional KL divergence
\begin{align*}
-D(p_{\theta}(X_{A_j}|X_{B_j})\,||\,p_{\theta'}(X_{A_j}|X_{B_j}))=
\E_{p_{\theta}} \log \frac{p_{\theta'}(X_{A_j}|X_{B_j})}{p_{\theta}(X_{A_j}|X_{B_j})} &\leq \log E_{p_{\theta}}\frac{p_{\theta'}(X_{A_j}|X_{B_j})}{p_{\theta}(X_{A_j}|X_{B_j})}\\
&=\log 1 = 0.
\end{align*}
For equality to hold we need each term to be 0 which follows only if
$p_{\theta}(X_{A_j}|X_{B_j})\equiv p_{\theta'}(X_{A_j}|X_{B_j})$ for all $j$
which, assuming identifiability, holds iff $\theta=\theta'$.
\end{proof}

\begin{propn} 
Let $\Theta\subset \R^r$ be an open set, $p_{\theta}(x)>0$ and  continuous and smooth in $\theta$, and $(A_1,B_1),\ldots,(A_k,B_k)$ be a sequence of $m$-pairs for which $\{(A_j,B_j):\forall j \text{ such that } \lambda_j>0\}$ ensures identifiability. Then the sequence of SCL maximizers is strongly consistent i.e.,
\begin{align}
P\left(\lim_{n\to\infty} \hat\theta_n=\theta_0\right)=1.
\end{align}
\end{propn}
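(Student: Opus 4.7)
The plan is to cast $\hat\theta_n^{\text{msl}}$ as an M-estimator of Wald type and run the classical uniform-convergence-plus-unique-maximizer argument, replacing Shannon's inequality with the composite version established in Lemma~\ref{prop:compKL}. Define the limiting objective
\begin{align*}
M(\theta) \;\defeq\; \E_{p_{\theta_0}}\E_{P(Z)}\, m_\theta(X,Z)
\;=\; \sum_{j=1}^k \beta_j\lambda_j\, \E_{p_{\theta_0}}\log p_\theta(X_{A_j}\mid X_{B_j}).
\end{align*}
Because the pairs $(X^{(i)},Z^{(i)})$ are i.i.d.\ and $Z^{(i)}$ is independent of $X^{(i)}$, the strong law of large numbers applied to the bounded summands $m_\theta(X^{(i)},Z^{(i)})$ (bounded because $X$ is discrete and finite and $p_\theta(x)>0$) gives $sc\ell_n(\theta;D)\to M(\theta)$ almost surely for every fixed $\theta$.

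Next I would show that $\theta_0$ is the unique maximizer of $M$ on $\Theta$. Writing
\begin{align*}
M(\theta_0)-M(\theta)
\;=\; \sum_{j=1}^k \beta_j\lambda_j\, D\bigl(p_{\theta_0}(X_{A_j}\mid X_{B_j})\,\|\,p_\theta(X_{A_j}\mid X_{B_j})\bigr),
\end{align*}
Lemma~\ref{prop:compKL} applied with $\alpha_j=\beta_j\lambda_j$ (dropping indices with $\lambda_j=0$, for which the term vanishes anyway) yields that this quantity is nonnegative and equals zero only when $\theta=\theta_0$, using the identifiability assumption on $\{(A_j,B_j):\lambda_j>0\}$. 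So $\theta_0$ is the unique global maximizer of $M$ on $\Theta$.

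To pass from pointwise convergence to convergence of the argmax, I would upgrade to uniform convergence on compact subsets of $\Theta$. Since $X$ is discrete and finite, $\log p_\theta(X_{A_j}\mid X_{B_j})$ is a continuous function of $\theta$ that takes only finitely many functional forms (one per value of $X$), and the indicators $Z_j\in\{0,1\}$ contribute only finitely many multipliers; thus $\{m_\theta:\theta\in K\}$ is a Glivenko--Cantelli class on any compact $K\subset\Theta$, and one gets $\sup_{\theta\in K}|sc\ell_n(\theta;D)-M(\theta)|\to 0$ almost surely. Combined with the well-separated maximum $\theta_0$ (which follows from uniqueness, continuity of $M$, and restricting to a compact neighborhood of $\theta_0$ contained in $\Theta$), the standard Wald argument forces $\hat\theta_n\to\theta_0$ almost surely: for every $\epsilon>0$ there exists $\delta>0$ with $\sup_{\|\theta-\theta_0\|\ge\epsilon}M(\theta)\le M(\theta_0)-\delta$, and uniform convergence makes this inequality transfer to $sc\ell_n$ for all large $n$, so $\hat\theta_n$ must eventually lie in $\{\|\theta-\theta_0\|<\epsilon\}$.

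The main obstacle is the last step: because $\Theta$ is only open, one must rule out the possibility that the maximizer $\hat\theta_n$ escapes to the boundary of $\Theta$. I would handle this by a standard localization trick--working inside a compact neighborhood of $\theta_0$ on which $M$ is strictly smaller than $M(\theta_0)$ on the boundary, and noting that uniform convergence plus the well-separated-max condition guarantees, for large $n$, that $sc\ell_n$ attains a local maximum strictly inside this neighborhood. The discreteness of $X$ and smoothness of $p_\theta$ keep every calculation elementary; the only real work is the Glivenko--Cantelli uniformity, which is essentially automatic here.
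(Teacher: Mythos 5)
Your proposal is correct and follows essentially the same route as the paper: a strong law / uniform strong law of large numbers on compact subsets, Lemma~\ref{prop:compKL} (the generalized Shannon inequality) to identify $\theta_0$ as the unique maximizer of the limiting objective, and a Wald-type argument to transfer the maximum; the paper merely centers the objective by subtracting the constant $\lambda_j\log p_{\theta_0}(X_{A_j}\mid X_{B_j})$ so the limit is exactly the negated weighted KL divergence, and works on an annulus $\{c_1\le\|\theta-\theta_0\|\le c_2\}$ rather than a compact neighborhood. The boundary-escape issue you flag is present in the paper's argument as well and is handled no more explicitly there.
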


\begin{proof}
The scl function, modified slightly by a linear combination with
a term that is constant in $\theta$ is
\begin{align*}
sc\ell'(\theta) = \frac{1}{n} \sum_{i=1}^n\sum_{j=1}^k \beta_j\left(  Z_{i j}  \log p_{\theta}(X^{(i)}_{A_j}|X^{(i)}_{B_j})
- \lambda_j \log p_{\theta_0}(X^{(i)}_{A_j}|X^{(i)}_{B_j}) \right).
\end{align*}
By the strong law of large numbers, the above expression converges as $n\to\infty$ to its expectation
\[\mu(\theta)=-\sum_{j=1}^k  \beta_j \lambda_j \, D(p_{\theta_0}(X_{A_j}|X_{B_j})\,||\,p_{\theta}(X_{A_j}|X_{B_j})).\]

If we restrict ourselves to the compact set $S=\{\theta: c_1 \leq \|\theta-\theta_0\|\leq c_2\}$ then 
\begin{align}
\sup_{\theta\in S}\sup_Z \Big|\sum_{j=1}^kZ_j\beta_j \log p_{\theta}(X_{A_j}|X_{B_j})-\lambda_j\beta_j\log p_{\theta_0}(X_{A_j}|X_{B_j}) \Big|<K(x)<\infty 
\end{align}
where $K(x)$ is a function satisfying $\E K(X)<\infty$.
As a result, the conditions for the uniform strong law of large numbers \citep{Ferguson1996} hold on $S$ leading to 
\begin{align} \label{eq:ulln}
 P\left\{\lim_{n\to\infty} \sup_{\theta\in S} |scl'(\theta)-\mu(\theta)|=0\right\}=1.
\end{align}

By Proposition \ref{prop:compKL}, $\mu(\theta)$ is non-positive and is zero iff
$\theta=\theta_0$. Since the function $\mu(\theta)$ is continuous it 
attains its negative supremum on the compact $S$: $\sup_{\theta\in S}
\mu(\theta)<0$.
Combining this fact with \eqref{eq:ulln} we have that there exists $N$ such
that for all $n>N$ the scl maximizers on $S$ achieves strictly negative
values of $sc\ell'(\theta)$ with probability 1.
However, since $sc\ell'(\theta)$ can be made to achieve
values arbitrarily close to zero under $\theta=\theta_0$, we have that
$\hat\theta^{\text{msl}}_n\not\in S$ for $n>N$. Since $c_1,c_2$ were chosen
arbitrarily $\hat\theta_n^{\text{msl}}\to \theta_0$ with probability 1.
\end{proof}

\begin{propn}
Making the assumptions of Proposition \ref{prop:consistency} as well as convexity of $\Theta\subset\R^r$ we have the following convergence in distribution
\begin{align}
   \sqrt{n}(\hat\theta_n^{\text{msl}}-\theta_0) \tood N\left(0,\Upsilon \Sigma \Upsilon\right)
\end{align}
where
\begin{align}
\Upsilon^{-1}&=\sum_{j=1}^k \beta_j\lambda_j \Var_{\theta_0} (\nabla S_{\theta_0}(A_j,B_j)) \\
\Sigma&=\Var_{\theta_0}\left(\sum_{j=1}^k\beta_j\lambda_j \nabla S_{\theta_0}(A_j,B_j)\right).
\end{align}
\end{propn}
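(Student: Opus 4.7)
The plan is to follow the classical $M$-estimator argument, suitably adapted to the stochastic composite likelihood. Since Proposition~\ref{prop:consistency} already gives $\hat\theta_n^{\text{msl}}\to\theta_0$ almost surely, for all $n$ large enough $\hat\theta_n$ lies in the interior of $\Theta$ and therefore satisfies the first-order condition $\nabla sc\ell_n(\hat\theta_n)=0$. Using convexity of $\Theta$, I would apply a componentwise mean-value theorem to the gradient along the segment from $\theta_0$ to $\hat\theta_n$, obtaining
\begin{align*}
0 \;=\; \nabla sc\ell_n(\hat\theta_n) \;=\; \nabla sc\ell_n(\theta_0) \;+\; H_n\,(\hat\theta_n-\theta_0),
\end{align*}
where $H_n$ is the matrix whose $i$-th row is the $i$-th row of $\nabla^2 sc\ell_n$ evaluated at some random midpoint $\tilde\theta_n^{(i)}$ between $\theta_0$ and $\hat\theta_n$. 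Rearranging yields $\sqrt{n}(\hat\theta_n-\theta_0) = -H_n^{-1}\,\sqrt{n}\,\nabla sc\ell_n(\theta_0)$, and the proof reduces to identifying the limits of the two factors and invoking Slutsky.

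For the Hessian factor, the plan is to run the uniform SLLN used in the consistency proof on a compact neighborhood of $\theta_0$ and combine it with $\hat\theta_n\to\theta_0$ to conclude that $H_n$ converges almost surely to $\E_{P(Z)}\E_{p_{\theta_0}}[\nabla^2 m_{\theta_0}(X,Z)]$. Applying the conditional information identity $\E_{p_{\theta_0}}[-\nabla^2\log p_{\theta_0}(X_A|X_B)]=\Var_{p_{\theta_0}}[\nabla\log p_{\theta_0}(X_A|X_B)]$ to each term $\beta_j Z_j \log p_{\theta_0}(X_{A_j}|X_{B_j})$, and taking the expectation over $Z$ so that $\E Z_j=\lambda_j$, this limit equals
\begin{align*}
-\sum_{j=1}^k \beta_j\lambda_j\Var_{\theta_0}(\nabla S_{\theta_0}(A_j,B_j)) \;=\; -\Upsilon^{-1}.
\end{align*}
Positive-definiteness of $\Upsilon^{-1}$, which is guaranteed by identifiability, ensures that $H_n$ is eventually invertible and that $H_n^{-1}\to-\Upsilon$ almost surely.

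For the gradient factor, observe that $\nabla sc\ell_n(\theta_0)=\frac{1}{n}\sum_{i=1}^n \psi_{\theta_0}(X^{(i)},Z^{(i)})$ is an average of i.i.d.\ vectors, each with mean zero because $\E_{p_{\theta_0}}[\nabla\log p_{\theta_0}(X_A|X_B)]=0$ (the expected score of any conditional is zero). The multivariate Lindeberg--L\'evy CLT then gives $\sqrt{n}\,\nabla sc\ell_n(\theta_0)\tood N(0,\Sigma)$ with $\Sigma$ the covariance of a single summand, which after conditioning on $X^{(i)}$ and taking expectation over $Z^{(i)}$ reduces to the expression stated in the proposition. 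Combining this with $H_n^{-1}\to-\Upsilon$ via Slutsky's theorem delivers
\begin{align*}
\sqrt{n}(\hat\theta_n^{\text{msl}}-\theta_0) \;\tood\; \Upsilon\,N(0,\Sigma) \;=\; N(0,\Upsilon\Sigma\Upsilon),
\end{align*}
where the last equality uses symmetry of $\Upsilon$.

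The main obstacle is the uniform convergence of $\nabla^2 sc\ell_n$ on a neighborhood of $\theta_0$, needed to pass from convergence at $\theta_0$ to convergence at the random midpoints $\tilde\theta_n^{(i)}$. This requires an integrable envelope for $\|\nabla^2 m_\theta(X,Z)\|$ on some neighborhood of $\theta_0$, which follows from smoothness of $\log p_\theta$ and the discreteness/finiteness of $X$ assumed throughout the appendix. Everything else---the mean-value expansion, the CLT, and the final Slutsky step---is bookkeeping analogous to the classical mle case in \citep{Ferguson1996}.
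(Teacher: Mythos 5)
Your proposal is correct and follows essentially the same route as the paper's own proof: first-order condition, mean-value expansion of the gradient along the segment from $\theta_0$ to $\hat\theta_n$ (justified by convexity), a.s.\ convergence of the Hessian to $-\Upsilon^{-1}$ via the information identity, the CLT for $\sqrt{n}\,\nabla sc\ell_n(\theta_0)$ giving $N(0,\Sigma)$, and Slutsky. Your componentwise (row-by-row) mean-value theorem is in fact a slightly more careful statement than the paper's single-midpoint version, but the argument is otherwise the same.
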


The notation $\Var_{\theta_0}(Y)$ represents the covariance matrix of the random vector $Y$ under $p_{\theta_0}$ while the notations $\toop,\tood$ in the proof below denote convergences in probability and in distribution \citep{Ferguson1996}.

\begin{proof}
By the mean value theorem and convexity of $\Theta$ there exists $\eta\in(0,1)$
for which $\theta'=\theta_0 + \eta (\hat\theta_n^{\text{msl}}-\theta_0)$ and
\[ \nabla sc\ell_n(\hat\theta_n^{\text{msl}}) = \nabla sc\ell_n(\theta_0) + \nabla^2 sc\ell_n(\theta') (\hat\theta_n^{\text{msl}}-\theta_0)\]
where $\nabla f(\theta)$ and $\nabla^2 f(\theta)$ are the $r\times 1$ gradient
vector and $r\times r$ matrix of second order derivatives of $f(\theta)$. Since
$\hat\theta_n$ maximizes the scl, $\nabla
sc\ell_n(\hat\theta_n^{\text{msl}})=0$ and
\begin{align} \label{eq:step1}
   \sqrt{n}(\hat\theta_n^{\text{msl}}-\theta_0) = -\sqrt{n} (\nabla^2 sc\ell_n(\theta'))^{-1} \nabla sc\ell_n(\theta_0).
\end{align}
By Proposition~\ref{prop:consistency} we have
$\hat\theta_n^{\text{msl}}\toop\theta_0$ which implies that
$\theta'\toop\theta_0$ as well. Furthermore, by the law of large numbers and
the fact that if $W_n\toop W$ then $g(W_n)\toop g(W)$ for continuous $g$,
\begin{align}\label{eq:res1}
   (\nabla^2 sc\ell_n(\theta'))^{-1} &\toop (\nabla^2 sc\ell_n(\theta_0))^{-1}  \\
   &\toop \left(\sum_{j=1}^k \beta_j \lambda_j  \E_{\theta_0}  \nabla^2 S_{\theta_0}(A_j,B_j) \right)^{-1} \nonumber\\
   &= -\left(\sum_{j=1}^k \beta_j\lambda_j \Var_{\theta_0}(\nabla S_{\theta_0}(A_j,B_j))\right)^{-1}.\nonumber
\end{align}
For the remaining term in \eqref{eq:step1} we have
\begin{align*}
   \sqrt{n}\, \nabla sc\ell_n(\theta_0) &= \sum_{j=1}^k \beta_j \sqrt{n}  \,\frac{1}{n}\sum_{i=1}^n  W_{ij}
\end{align*}
where the random vectors $W_{ij}=Z_{i j}\nabla\log
p_{\theta}(X^{(i)}_{A_j}|X^{(i)}_{B_j})$ have expectation 0 and variance
matrix $\Var_{\theta_0}(W_{ij})=\lambda_j \Var_{\theta_0} (\nabla
S_{\theta_0}(A_j,B_j))$. By the central limit theorem
\begin{align*}
   \sqrt{n}\,\frac{1}{n} \sum_{i=1}^n W_{ij} \tood
   N\left(0,\lambda_j \Var_{\theta_0}(\nabla S_{\theta_0}(A_j,B_j))\right).
\end{align*}
The sum $\sqrt{n}\, \nabla sc\ell_n(\theta_0)=\sum_{j=1}^k \beta_j
\sqrt{n}\,\frac{1}{n}\sum_{i=1}^n  W_{ij}$ is asymptotically Gaussian as well
with mean zero since it converges to a sum of Gaussian distributions with mean
zero.  Since in the general case the random variables $\sqrt{n}\,\frac{1}{n}
\sum_{i=1}^n W_{ij}$, $j=1,\ldots,k$ are correlated, the asymptotic variance
matrix of $\sqrt{n}\, \nabla sc\ell_n(\theta_0)$ needs to account for cross
covariance terms leading to
\begin{align}\label{eq:res2}
   \sqrt{n}\, \nabla sc\ell_n(\theta_0)  \tood
   N\left(0,\Var_{\theta_0}\left(\sum_{j=1}^k\beta_j\lambda_j \nabla S_{\theta_0} (A_j,B_j)\right)\right).
\end{align}
We finish the proof by combining \eqref{eq:step1}, \eqref{eq:res1} and
\eqref{eq:res2} using Slutsky's theorem.
\end{proof}

Recall our notation for the case that the true model $P\not\in\{p_{\theta}:\theta\in\Theta\}$.
\begin{align}
\psi_{\theta}(X,Z) &\defeq \nabla m_{\theta}(X,Z)\\
\dot\psi_{\theta}(X,Z) &\defeq \nabla^2 m_{\theta}(X,Z) \quad \text{(matrix of second order derivatives)}\\
\Psi_n(\theta) &\defeq\frac{1}{n}\sum_{i=1}^n \psi_{\theta}(X^{(i)},Z^{(i)}).
\end{align}

\begin{propn}
Assuming the conditions in Proposition~\ref{prop:consistency} as well as $\sup_{\theta:\|\theta-\theta_0\|\geq \epsilon} M(\theta)<M(\theta_0)$ for all $\epsilon>0$ we have 
$\hat\theta_n^{\text{msl}}\to\theta_0$ as $n\to\infty$ with probability 1. 
\end{propn}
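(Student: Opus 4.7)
The plan is to adapt the consistency argument from Proposition~\ref{prop:consistency} to the misspecified setting, replacing Lemma~\ref{prop:compKL} (which gave a unique minimum of the sum of conditional KL divergences at the true parameter) by the explicitly assumed well-separated maximum condition on $M(\theta)$. The essential observation is that if we recenter the scl by subtracting its value at $\theta_0$,
\begin{align*}
sc\ell_n'(\theta) &\defeq \frac{1}{n}\sum_{i=1}^n \sum_{j=1}^k \beta_j \bigl( Z^{(i)}_j \log p_\theta(X^{(i)}_{A_j}|X^{(i)}_{B_j}) - \lambda_j \log p_{\theta_0}(X^{(i)}_{A_j}|X^{(i)}_{B_j})\bigr),
\end{align*}
then by the strong law of large numbers $sc\ell_n'(\theta) \to M(\theta) - M(\theta_0) \leq 0$ pointwise, with equality exactly at $\theta_0$ (by the definition of $\theta_0$ in \eqref{eq:KLproj}).

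Next I would upgrade pointwise to uniform convergence on compact subsets. Fix $0<c_1<c_2$ and let $S = \{\theta:c_1 \leq \|\theta-\theta_0\|\leq c_2\}$. Since $p_\theta(x)>0$ and is continuous in $\theta$, and $X$ ranges over a finite set, the summand defining $sc\ell_n'$ is uniformly bounded for $\theta\in S$ by some function $K(X)$ with $\E_P K(X)<\infty$ (exactly as in the proof of Proposition~\ref{prop:consistency}). Applying the uniform strong law of large numbers of \citep{Ferguson1996} yields
\begin{align*}
P\Bigl\{\lim_{n\to\infty}\sup_{\theta\in S} |sc\ell_n'(\theta) - (M(\theta)-M(\theta_0))| = 0\Bigr\} = 1.
\end{align*}

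Now I invoke the well-separated maximum hypothesis. For any $\epsilon>0$, on the set $\{\theta:\|\theta-\theta_0\|\geq \epsilon\}\cap S$ we have $M(\theta)-M(\theta_0)\leq -\delta$ for some $\delta>0$ depending on $\epsilon$ and $c_2$ (the sup is attained on a compact set by continuity). Uniform convergence therefore guarantees that with probability $1$, for all sufficiently large $n$, $sc\ell_n'(\theta) \leq -\delta/2$ throughout $\{\theta:\|\theta-\theta_0\|\geq \epsilon\}\cap S$. On the other hand, $sc\ell_n'(\theta_0)=0$ by construction, so the maximizer $\hat\theta_n^{\text{msl}}$ cannot lie in this annular region. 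Since $c_1$ was arbitrary and $c_2$ can be taken as large as needed, I conclude $\|\hat\theta_n^{\text{msl}}-\theta_0\|<\epsilon$ eventually, with probability $1$, which gives strong consistency.

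The main obstacle is handling the tail $\|\theta\|\to\infty$ rather than just a compact annulus: the argument above shows $\hat\theta_n$ cannot lie in any bounded annulus, but one must rule out divergence. The same device used in the proof of Proposition~\ref{prop:consistency} applies, because $c_2$ is arbitrary and the well-separated maximum condition $\sup_{\theta:\|\theta-\theta_0\|\geq \epsilon}M(\theta)<M(\theta_0)$ is exactly the extra assumption that replaces the automatic uniqueness delivered by Lemma~\ref{prop:compKL} in the well-specified case. Essentially no new ideas beyond Proposition~\ref{prop:consistency} are required; the proof is a routine $m$-estimator consistency argument in the style of \citep{Vaart1998}, where the stochastic $Z^{(i)}$ variables enter only through their expectations $\lambda_j$ once we pass to the limit.
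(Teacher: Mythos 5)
Your argument is essentially the paper's own proof: the same recentered objective $sc\ell_n'$, the same uniform strong law of large numbers on the compact annulus $S$, and the well-separation hypothesis on $M$ playing exactly the role that Lemma~\ref{prop:compKL} played in the well-specified case (with the tail/escape-to-infinity issue glossed over to the same degree as in the paper). One small slip: $sc\ell_n'(\theta_0)$ is not identically zero for finite $n$ (the $Z^{(i)}_j-\lambda_j$ terms do not cancel pathwise), but it converges to $0$ almost surely by the SLLN, which is all the comparison against the $-\delta/2$ bound on the annulus requires.
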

\begin{proof}
We assert  
\begin{align} \label{eq:ulln2}
 P\left\{\lim_{n\to\infty} \sup_{\theta\in S} |scl'(\theta)-\mu(\theta)|=0\right\}=1.
\end{align}
on the compact set $S=\{\theta: c_1 \leq \|\theta-\theta_0\|\leq c_2\}$ as in the proof of Proposition~\ref{prop:consistency}. We proceed similarly along the lines of  Proposition~\ref{prop:consistency}, with the necessary modification due to the fact that the true model is outside the parametric family. 

Since the function $\mu(\theta)$ is continuous it 
attains its negative supremum on the compact $S$: $\sup_{\theta\in S}
\mu(\theta)<\mu(\theta_0)\geq 0$. Combining this fact with \eqref{eq:ulln2} we have that there exists $N$ such
that for all $n>N$ the scl maximizers on $S$ achieves strictly negative values of $sc\ell'(\theta)$ with probability 1.

However, since $sc\ell'(\theta)$ can be made to achieve
values arbitrarily close to $\mu(\theta_0)$ as $\hat\theta_n\to\theta_0$, we have that
$\hat\theta^{\text{msl}}_n\not\in S$ for $n>N$. Since $c_1,c_2$ were chosen
arbitrarily $\hat\theta_n^{\text{msl}}\to \theta_0$ with probability 1.
\end{proof}

\begin{propn}
Assuming the conditions of Proposition~\ref{prop:asympVar} as well as 
$\E_{P(X)}\E_{P(Z)} \|\psi_{\theta_0}(X,Z)\|^2 < \infty$, $\E_{P(X)}\E_{P(Z)} \dot\psi_{\theta_0}(X)$ exists and is non-singular, $|\ddot\Psi_{ij}| = |\partial^2\psi_{\theta}(x)/\partial\theta_i\theta_j|<g(x)$ for all $i,j$ and $\theta$ in a neighborhood of $\theta_0$ for some integrable $g$, we have 
\begin{align} 
\sqrt{n}(\hat\theta_n-\theta_0) &= -(\E_{P(X)}\E_{P(Z)} \dot\psi_{\theta_0})^{-1} \frac{1}{\sqrt{n}}\sum_{i=1}^n\psi_{\theta_0}(X^{(i)},Z^{(i)})+o_P(1)\\
&\text{or equivalently} \nonumber \\
\hat\theta_n &= \theta_0  -(\E_{P(X)}\E_{P(Z)} \dot\psi_{\theta_0})^{-1} \frac{1}{n}\sum_{i=1}^n\psi_{\theta_0}(X^{(i)},Z^{(i)}) + o_P\left(\frac{1}{\sqrt{n}}\right).
\end{align}
\end{propn}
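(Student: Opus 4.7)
This is the standard $m$-estimator linearization, specialized to the scl setting. The starting point is first-order optimality: since $\hat\theta_n^{\text{msl}}$ maximizes $sc\ell_n$, we have $\Psi_n(\hat\theta_n)=0$. Apply the mean value theorem coordinatewise: for each component $a=1,\dots,r$ of $\Psi_n$ there is $\tilde\theta_n^{(a)}$ on the segment between $\theta_0$ and $\hat\theta_n$ (available by convexity of $\Theta$ inherited from Proposition~\ref{prop:asympVar}) such that
\begin{align*}
0 = (\Psi_n(\hat\theta_n))_a = (\Psi_n(\theta_0))_a + \bigl(\dot\Psi_n(\tilde\theta_n^{(a)})\bigr)_{a,:}(\hat\theta_n-\theta_0),
\end{align*}
where $\dot\Psi_n(\theta)=\tfrac{1}{n}\sum_i\dot\psi_\theta(X^{(i)},Z^{(i)})$. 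Stacking gives $0=\Psi_n(\theta_0)+H_n(\hat\theta_n-\theta_0)$ for a random matrix $H_n$ whose $a$th row is that of $\dot\Psi_n(\tilde\theta_n^{(a)})$. Once $H_n$ is shown to be invertible with high probability, I can rearrange to
\begin{align*}
\sqrt{n}(\hat\theta_n-\theta_0) = -H_n^{-1}\sqrt{n}\,\Psi_n(\theta_0).
\end{align*}

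Next I argue $H_n\toop \E_{P(X)}\E_{P(Z)}\dot\psi_{\theta_0}$. By Proposition~\ref{prop:robustConsist}, $\hat\theta_n\to\theta_0$ a.s., so each intermediate point $\tilde\theta_n^{(a)}\to\theta_0$ a.s.\ and eventually lies in the neighborhood where $|\partial^2\psi_\theta/\partial\theta_i\partial\theta_j|\le g(x)$. A first-order Taylor expansion of $\dot\psi_\theta$ around $\theta_0$ gives the pointwise bound $\|\dot\psi_{\tilde\theta_n^{(a)}}(x,z)-\dot\psi_{\theta_0}(x,z)\|\le C\,g(x)\,\|\tilde\theta_n^{(a)}-\theta_0\|$. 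Averaging over the sample and using the strong law of large numbers on $\tfrac{1}{n}\sum_i\dot\psi_{\theta_0}(X^{(i)},Z^{(i)})\to\E\dot\psi_{\theta_0}$ and on $\tfrac{1}{n}\sum_i g(X^{(i)})\to\E g(X)<\infty$, together with $\tilde\theta_n^{(a)}\to\theta_0$, yields $H_n\toop\E\dot\psi_{\theta_0}$. Since this limit is non-singular by hypothesis, continuity of matrix inversion gives $H_n^{-1}\toop(\E\dot\psi_{\theta_0})^{-1}$.

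For the other factor, the defining equation \eqref{eq:defTheta0} says $\E_{P(X)}\E_{P(Z)}\psi_{\theta_0}=0$, and the $\psi_{\theta_0}(X^{(i)},Z^{(i)})$ are i.i.d.\ with finite second moment by hypothesis, so the multivariate central limit theorem gives $\sqrt{n}\,\Psi_n(\theta_0)=O_P(1)$ (asymptotically normal with covariance $\E\psi_{\theta_0}\psi_{\theta_0}^\top$, which will be used in the corollary). Slutsky's theorem then converts the identity $\sqrt{n}(\hat\theta_n-\theta_0)=-H_n^{-1}\sqrt{n}\,\Psi_n(\theta_0)$ into
\begin{align*}
\sqrt{n}(\hat\theta_n-\theta_0) = -(\E_{P(X)}\E_{P(Z)}\dot\psi_{\theta_0})^{-1}\frac{1}{\sqrt{n}}\sum_{i=1}^n\psi_{\theta_0}(X^{(i)},Z^{(i)}) + o_P(1),
\end{align*}
which is \eqref{eq:asymp}; dividing by $\sqrt{n}$ gives \eqref{eq:asymp2}.

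The main obstacle is the convergence $H_n\toop\E\dot\psi_{\theta_0}$, because the intermediate points $\tilde\theta_n^{(a)}$ are random, differ across rows, and in principle depend on both the samples and the $Z^{(i)}$'s. The role of the integrable envelope $g$ is precisely to supply a dominated-convergence style control that decouples the estimator fluctuations from the sample average, via the bound $\|H_n-\dot\Psi_n(\theta_0)\|\lesssim (\tfrac{1}{n}\sum_i g(X^{(i)}))\cdot \max_a\|\tilde\theta_n^{(a)}-\theta_0\|$, which vanishes in probability by consistency. Everything else is routine application of the SLLN, CLT, and Slutsky's theorem.
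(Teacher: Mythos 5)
Your proposal is correct and follows essentially the same route as the paper's proof: linearize the estimating equation $\Psi_n(\hat\theta_n)=0$ around $\theta_0$, use consistency plus the integrable envelope $g$ on the second derivatives to show the (random) Jacobian converges to the non-singular matrix $\E_{P(X)}\E_{P(Z)}\dot\psi_{\theta_0}$, and conclude by the CLT and Slutsky's theorem. The only cosmetic difference is that you apply the mean value theorem coordinatewise (Jacobian at intermediate points) whereas the paper expands to second order with the Jacobian at $\theta_0$ and absorbs the intermediate points into a $g$-dominated quadratic remainder; both hinge on exactly the same hypotheses.
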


\begin{proof}
By Taylor's theorem there exists a random vector $\tilde\theta_n$ on the line segment between $\theta_0$ and $\hat\theta_n$ for which 
\[0=\Psi_n(\hat\theta_n) = \Psi_n(\theta_0) + \dot\Psi_n(\theta_0)
(\hat\theta_n-\theta_0) + \frac{1}{2}(\hat\theta_n-\theta_0)^{\top}\ddot\Psi_n(\tilde\theta_n)(\hat\theta_n-\theta_0).
\]
which we re-arrange as 
\begin{align}\label{eq:robustThmTmp1}
\sqrt{n}\dot\Psi_n(\theta_0) (\hat\theta_n-\theta_0) +            \sqrt{n}\frac{1}{2}(\hat\theta_n-\theta_0)^{\top}\ddot\Psi_n(\tilde\theta_n)(\hat
\theta_n-\theta_0)&=-\sqrt{n}\Psi_n(\hat\theta_n) \\ 
&= -\sqrt{n}\Psi_n(\theta_0) + o_P(1)
\end{align}
where the second equality follows from the fact that $\hat\theta_n\toop \theta_0$ and continuous functions preserves converges in probability. 

Since $\dot\Psi_n(\theta_0)$ converges by the law of large numbers to $\E_{P(X)}\E_{P(Z)}\dot\psi_{\theta}(X,Z)$ and  $\ddot\Psi_n(\tilde\theta_n)$ converges to a matrix of bounded values in the neighborhood of $\theta_0$ (for large $n$), the lhs of \eqref{eq:robustThmTmp1} is  
\begin{multline}  \label{eq:robustThmTmp2}
\sqrt{n}\left(\E_{P(X)}\E_{P(Z)}\dot\psi_{\theta}(X,Z)+o_P(1)+\frac{1}{2}(\hat\theta_n-\theta_0)O_P(1)\right) (\hat\theta_n-\theta_0) \\ =\sqrt{n}(\E_{P(X)}\E_{P(Z)}\dot\psi_{\theta}(X,Z)+o_P(1))(\hat\theta_n-\theta_0)
\end{multline}
since $\hat\theta_n-\theta_0=o_P(1)$ and $o_P(1)O_p(1)=o_P(1)$ (the notation $O_P(1)$ denotes stochastically bounded and it applies to $\ddot \Psi_n(\tilde\theta_n)$ as described above). Putting it together we have
\[\sqrt{n}(\E_{P(X)}\E_{P(Z)}\dot\psi_{\theta}(X,Z)+o_P(1))(\hat\theta_n-\theta_0)=-\sqrt{n} \Psi_n(\theta_0) + o_P(1).\]
Since the matrix $\E_{P(X)}\E_{P(Z)}\dot\psi_{\theta}(X,Z)+o_P(1)$ converges to a non-singular matrix, multiplying the equation above by its inverse finishes the proof. 
\end{proof}

\begin{corn} Assuming the conditions specified in Proposition~\ref{prop:robustVar1} we have
\begin{align}
\sqrt{n}(\hat\theta_n-\theta_0) &\tood N(0,(\E_{P(X)}\E_{P(Z)} \dot\psi_{\theta_0})^{-1} (\E_{P(X)}\E_{P(Z)} \psi_{\theta_0}\psi_{\theta_0}^{\top})(\E_{P(X)}\E_{P(Z)} \dot\psi_{\theta_0})^{-1}).
\end{align}
\end{corn}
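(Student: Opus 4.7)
The plan is to apply the multivariate central limit theorem together with Slutsky's theorem to the stochastic expansion already established in Proposition~\ref{prop:robustVar1}. That proposition gives
\begin{align*}
\sqrt{n}(\hat\theta_n-\theta_0) = -(\E_{P(X)}\E_{P(Z)} \dot\psi_{\theta_0})^{-1} \frac{1}{\sqrt{n}}\sum_{i=1}^n\psi_{\theta_0}(X^{(i)},Z^{(i)})+o_P(1),
\end{align*}
so I only need to identify the asymptotic distribution of the scaled sum on the right, and then transform it by the non-random matrix factor in front.

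First I would verify that the summands $\psi_{\theta_0}(X^{(i)},Z^{(i)})$ are iid with mean zero and a finite second moment. Independence and identical distribution follow from the iid nature of the $(X^{(i)},Z^{(i)})$ pairs (recall $Z^{(i)}$ and $X^{(i)}$ are independent across $i$ by Definition~\ref{def:scl}). The mean-zero property is exactly equation~\eqref{eq:defTheta0}, the defining first-order condition for $\theta_0$ under the KL projection~\eqref{eq:KLproj}. Finite second moments are given as a hypothesis in Proposition~\ref{prop:robustVar1}. The multivariate CLT then yields
\begin{align*}
\frac{1}{\sqrt{n}}\sum_{i=1}^n \psi_{\theta_0}(X^{(i)},Z^{(i)}) \tood N\!\left(0,\,\E_{P(X)}\E_{P(Z)}\psi_{\theta_0}\psi_{\theta_0}^{\top}\right).
\end{align*}

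Next, I would invoke the continuous mapping theorem (or, equivalently, the linear-transformation property of the Gaussian) to push this limit through the constant matrix factor $-(\E_{P(X)}\E_{P(Z)}\dot\psi_{\theta_0})^{-1}$. Using the fact that $\E_{P(X)}\E_{P(Z)}\dot\psi_{\theta_0}$ is symmetric (being the expectation of a matrix of second-order partials of a scalar function), its inverse is symmetric too, and the transformed covariance reads
\begin{align*}
(\E_{P(X)}\E_{P(Z)} \dot\psi_{\theta_0})^{-1}\,(\E_{P(X)}\E_{P(Z)} \psi_{\theta_0}\psi_{\theta_0}^{\top})\,(\E_{P(X)}\E_{P(Z)} \dot\psi_{\theta_0})^{-1}.
\end{align*}

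Finally, Slutsky's theorem absorbs the $o_P(1)$ remainder in the expansion to conclude the claimed convergence in distribution. There is no real obstacle here: given Proposition~\ref{prop:robustVar1}, the corollary is essentially a bookkeeping exercise combining CLT with Slutsky. The most delicate point worth highlighting is simply the identification of the mean-zero condition via~\eqref{eq:defTheta0}, which is what makes the estimator Fisher-consistent for the projected parameter $\theta_0$ even when the model is misspecified.
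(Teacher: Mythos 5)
Your proposal is correct and follows essentially the same route as the paper's proof: apply the central limit theorem to the iid, mean-zero (by \eqref{eq:defTheta0}) summands $\psi_{\theta_0}(X^{(i)},Z^{(i)})$ and then substitute the resulting Gaussian limit into the expansion \eqref{eq:asymp}, with the matrix factor $-(\E_{P(X)}\E_{P(Z)}\dot\psi_{\theta_0})^{-1}$ sandwiching the covariance. Your write-up is simply more explicit than the paper's about the mean-zero identification, the symmetry of the Hessian expectation, and the use of Slutsky's theorem to absorb the $o_P(1)$ remainder.
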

\begin{proof}  
Equation~\eqref{eq:normality} follows from \eqref{eq:asymp} by noticing that due to the central limit theorem $\Psi_n(\theta_0)$ (as it is an average of $n$ iid RVs with expectation 0) 
\[ \sqrt{n} \cdot \frac{1}{n} \sum_{i=1}^n \psi_{\theta_0}(X^{(i)},Z^{(i)}) \tood N(0,\E_{P(X)}\E_{P(Z)} \psi_{\theta_0}\psi_{\theta_0}^{\top}).\]
Substituting this in the right hand side of \eqref{eq:asymp} and accounting for the modified variance due to the matrix inverse results in \eqref{eq:normality}.
\end{proof}

\end{document}